\documentclass{article} 
\usepackage{arxiv}

\usepackage{amsmath,amsfonts,bm}

\def\1{\bm{1}}

\DeclareMathAlphabet{\mathsfit}{\encodingdefault}{\sfdefault}{m}{sl}
\SetMathAlphabet{\mathsfit}{bold}{\encodingdefault}{\sfdefault}{bx}{n}

\usepackage{hyperref}
\usepackage{url}

\usepackage{minitoc}

\usepackage[utf8]{inputenc} 
\usepackage[T1]{fontenc}    
\usepackage{hyperref}       
\usepackage{url}            
\usepackage{booktabs}       
\usepackage{amsfonts}       
\usepackage{nicefrac}       
\usepackage{microtype}      
\usepackage{xcolor}         
\usepackage{caption}

\usepackage{placeins}
\usepackage{microtype}
\usepackage{graphicx}
\usepackage{subcaption}
\usepackage{booktabs}
\usepackage{hyperref}
\usepackage{multirow}

\usepackage{arydshln}
\usepackage{amsmath}
\usepackage{amssymb}
\usepackage{mathtools}
\usepackage{amsthm}

\usepackage{setspace}
\usepackage{amsmath, amssymb}
\usepackage{algorithm}
\usepackage{algorithmic}
\usepackage{bm}        
\usepackage{mathtools} 

\usepackage[inline]{enumitem}
\usepackage{wrapfig}

\theoremstyle{plain}
\newtheorem{theorem}{Theorem}[section]
\newtheorem{proposition}[theorem]{Proposition}
\newtheorem{lemma}[theorem]{Lemma}

\theoremstyle{definition}
\newtheorem{definition}[theorem]{Definition}

\theoremstyle{remark}
\newtheorem{remark}[theorem]{Remark}
\newtheorem{example}[theorem]{Example}

\newcommand{\ie}{\textit{i}.\textit{e}., }
\newcommand{\eg}{\textit{e}.\textit{g}., }

\title{Validity-Preserving Hierarchical RL for Joint Routing and Switch Placement in EDA}

\author{
\textbf{
Dorian Gailhard$^{1}$,
Ugo Lecerf$^{2}$,
Enzo Tartaglione$^{1}$,
Donatello Conte$^{2}$,
Lirida Naviner$^{1}$,
Jhony H. Giraldo$^{1}$
}\\~\\
$^{1}$LTCI, Télécom Paris, Institut Polytechnique de Paris, France
\qquad
$^{2}$Arteris IP\\
\texttt{$^{1}$\{name.surname\}@telecom-paris.fr}
\qquad
\texttt{$^{2}$\{name.surname\}@arteris.com}
}

\begin{document}

\maketitle

\begin{abstract}

Routing and switch placement are fundamental combinatorial optimization problems in chip design, requiring the joint optimization of routing topology and physical placement under strict structural, geometric and logical constraints. Existing approaches typically rely on carefully engineered heuristics that incorporate strong problem-specific biases to navigate the enormous space of possible designs.
In this work, we introduce a hierarchical reinforcement learning framework for joint routing and switch placement at the level of logical communication routes. Starting from a minimal routing graph, our method progressively constructs increasingly expressive solutions through three coupled operations: \textit{switch expansion}, \textit{switch placement}, and \textit{route refinement}. These operations preserve routing validity by construction, restricting exploration to feasible configurations where every communicating initiator-target pair has one assigned loop-free route. We explore the induced solution space using Gumbel Monte Carlo Tree Search, showing that neural-guided search substantially improves solution quality over non-learning optimization methods. Furthermore, pretraining across floorplans provides a strong initialization
for fine-tuning on unseen instances.
\end{abstract}

\section{Introduction}
\label{sec:intro}

Routing and placement are central optimization problems in modern chip design. Given a set of communicating components and their physical environment, routing must establish connections while optimizing objectives such as wirelength and congestion under physical and topological constraints \cite{marculescu2008outstanding, HU20011}. These problems are combinatorial: the routing structure, the switch locations, and the individual communication routes interact, and even restricted Steiner-type formulations arising in Very-Large-Scale Integration (VLSI) design are computationally hard \cite{9057662, ihler1999class}.

Classical approaches address the complexity stemming from Electronic Design Automation (EDA) through carefully designed combinatorial optimization procedures and domain-specific heuristics \cite{chu2005fast, HU20011, 5219222}. Learning-based methods, and Reinforcement Learning (RL) in particular, provide an alternative in which the strategy used to explore a combinatorial design space can itself be learned. Recent successes in placement and routing demonstrate the potential of this approach for chip design \cite{mirhoseini2021graph, liu2021rest, chen2022reinforcement, du2023hubrouter, liu2024neuralsteiner, du2025oarest}.

Routing and switch placement are tightly coupled: \textit{switch locations
determine which routing structures are effective, while the routing structure
determines which switch locations are useful}.
Optimizing these decisions sequentially can therefore sacrifice solution quality by committing to decisions before accounting for their effect on the
other problem.
We instead optimize routing and switch placement jointly. We deliberately study a simplified physical model that captures the interaction between shared routing topology, switch placement, and communication routes while abstracting away constraints such as routing layers, capacities, vias, and detailed design rules. We view this formulation as a first step toward an optimization core that can subsequently be extended with richer physical constraints and cost models.

In this work, we introduce a hierarchical construction process that progressively builds solutions through three operations:
\textit{i) switch expansion}, which introduces a new switch;
\textit{ii) switch placement}, which assigns its physical location; and
\textit{iii) route refinement}, which updates the routes affected by the expansion.
The resulting hierarchy restricts exploration to a subset of feasible
solutions while ensuring that a globally optimal solution remains reachable,
thereby reducing the search space without prescribing how it should be explored.
We formulate this construction process as a sequential decision problem and learn graph policies to guide exploration using Gumbel Monte Carlo Tree Search (MCTS) \cite{danihelka2022policy}. Finally, we train a policy jointly across floorplans and study whether the resulting initialization can accelerate optimization on previously unseen instances.
Our contributions are as follows:
\begin{itemize}[leftmargin=0.5cm]
\itemsep0em

\item We introduce a hierarchical graph formulation for joint routing and
switch placement that restricts search to feasible configurations while
ensuring that a globally optimal solution remains reachable
(Section~\ref{sec:method}).

\item We combine this formulation with a learned graph policy and Gumbel MCTS, and show that the resulting search can exploit additional compute to progressively improve solution quality and outperform the evaluated non-learning baselines (Section \ref{sec:implementation} and Table \ref{tab:routing_results}).

\item We demonstrate that learned priors can be transferred across floorplans: a policy trained jointly on multiple instances provides a transferable search prior that accelerates optimization when fine-tuned on previously unseen floorplans (Figure \ref{fig:transfer}).

\end{itemize}

\section{Related Work}
\label{sec:related_work}

\noindent\textbf{Classical methods.}
Classical physical-design methods rely on optimized combinatorial and continuous optimization procedures.
Analytical placement methods such as RePlAce~\cite{cheng2018replace} optimize differentiable placement objectives under density constraints, while physical routing commonly relies on Steiner-tree construction~\cite{9057662,HU20011,chu2005fast}, shortest-path and maze-routing procedures~\cite{5219222}, and iterative rip-up-and-reroute~\cite{1585385,5390304,9120211}.
Application-specific Network-on-Chip (NoC) synthesis similarly considers the joint design of communication architectures for a given application.
\cite{ahonen2004topology} optimize application-specific network topologies using communication requirements and physical wirelength information, while \cite{morgan2013unified} jointly optimize topology selection, core mapping, and traffic routing through a multi-objective genetic algorithm.
More broadly, classical NoC synthesis methods rely on combinatorial optimization and domain-specific heuristics to navigate large design spaces~\cite{marculescu2008outstanding}.

\noindent\textbf{Learning-based methods.}
Machine learning has been applied to several physical-design problems. AlphaChip~\cite{mirhoseini2021graph} formulates macro placement as a sequential decision problem and uses RL to optimize placement quality. Subsequent works have incorporated routing information into learning-based physical-design pipelines.
\cite{cheng2021joint} use routing results to evaluate placement quality, while \cite{cheng2022policy} combine RL-based placement with a conditional generative routing model. Other approaches have explored black-box optimization for macro placement~\cite{shi2023macro} and combinations of RL and tree search~\cite{geng2024reinforcement}.

Learning has also been applied directly to routing and Steiner-tree construction.
\cite{liao2020deep} formulate physical routing as a sequential RL problem, while REST~\cite{liu2021rest} uses RL to construct rectilinear Steiner minimum trees. For obstacle-aware routing, \cite{chen2022reinforcement} combine RL with MCTS for Steiner-point selection. Other approaches learn intermediate geometric structures: HubRouter~\cite{du2023hubrouter} generates hubs that guide pin-hub connections, while NeuralSteiner~\cite{liu2024neuralsteiner} predicts candidate Steiner points.
More recently, OAREST~\cite{du2025oarest} uses RL for obstacle-avoiding
rectilinear Steiner minimum tree construction and introduces a restricted
representation shown to retain an optimal solution.

Our setting differs from these Steiner-tree formulations by optimizing a shared communication infrastructure for multiple communication pairs rather than a tree connecting the pins of a single net. Our formulation jointly optimizes the topology and placement of intermediate switches and the route assigned to each communication pair, allowing communications to share physical infrastructure. It is also related to application-specific NoC synthesis, but focuses on a simplified geometric setting with fixed communicating components and directly constructs the intermediate infrastructure. Within this setting, our hierarchical construction provides a structured search space that preserves feasible assigned routes and contains a globally optimal solution under our model.

\section{Problem Formulation}
\label{sec:problem}

\noindent \textbf{Notations.}
We use calligraphic letters (\eg $\mathcal{V}$) for sets, with cardinality
$|\mathcal{V}|$.
Bold lowercase letters denote vectors (\eg $\mathbf{p}$).
For a point $\mathbf{p}\in\mathbb{R}^2$, we write
$\mathbf{p}=[x,y]$ for its horizontal and vertical coordinates.

\noindent \textbf{Basic definitions.}
A directed graph $G=(\mathcal{V},\mathcal{E})$ consists of a set of
vertices $\mathcal{V}$ and a set of directed edges
$\mathcal{E}\subseteq\mathcal{V}\times\mathcal{V}$.
An edge $(u,v)\in\mathcal{E}$ is directed from $u$ to $v$.
A directed path from $v_0$ to $v_k$ is a sequence of vertices
$
    \pi=(v_0,v_1,\ldots,v_k)
$
such that $(v_j,v_{j+1})\in\mathcal{E}$ for all
$j\in\{0,\ldots,k-1\}$.
We denote by
$
    E(\pi)=\{(v_j,v_{j+1}) : 0\leq j<k\}
$
the set of edges traversed by the directed path $\pi$.
A path is \emph{simple} if it contains no repeated vertex, \ie
$v_j\neq v_\ell$ for all $j\neq\ell$.

\subsection{Problem Setting}

We consider the joint optimization of routing and switch placement for a
fixed set of communicating components.
Let $\mathcal{I}$ and $\mathcal{T}$ denote the sets of initiators and
targets, representing the source and destination endpoints of communication
requests, respectively, such as processing, memory, or other IP blocks.
Their positions are fixed within a rectangular floorplan
$\Omega=[0,W]\times[0,H]\subset\mathbb{R}^2$.
The communication requirements are specified by a set
$\mathcal{R}\subseteq\mathcal{I}\times\mathcal{T}$ of communication pairs,
\ie initiator-target pairs for which a communication route must exist.
We additionally consider a set of rectangular blockages
$\mathcal{B}\subset\Omega$, whose interiors cannot contain switches or be
traversed by wires.

Given these inputs, we jointly determine the number and positions of
intermediate switches, the routing structure connecting them, and the route
assigned to each communication pair.
Let $\mathcal{S}$ denote the set of switches, with
$|\mathcal{S}|\leq S_{\max}$, and let
$\mathbf{p}_v=[x_v,y_v]\in\Omega\setminus\mathcal{B}$
denote the position of each switch $v\in\mathcal{S}$.
Together with the fixed initiators and targets, these switches define a
directed routing graph
$\mathcal{G}=(\mathcal{V},\mathcal{E})$, where
$\mathcal{V}=\mathcal{I}\cup\mathcal{T}\cup\mathcal{S}$.
For each communication pair $(i,t)\in\mathcal{R}$, the routing solution
assigns a simple directed path
$\pi_{i,t}=(v_0,\ldots,v_k)$, $v_0=i$, $v_k=t$,
that traverses at least one switch, with all intermediate nodes belonging to
$\mathcal{S}$.
The routing graph is induced by these paths, such that
$
    \mathcal{E}
    =
    \bigcup_{(i,t)\in\mathcal{R}} E(\pi_{i,t}).
$
We make several simplifying assumptions: initiator, target, and switch dimensions and pin-level constraints are ignored, and routing density, congestion, and bandwidth constraints are not modeled.

\subsection{Objective}

We consider rectilinear routing, where wires are composed exclusively of
horizontal and vertical segments.
For two nodes $u,v$ with positions $\mathbf{p}_u$ and $\mathbf{p}_v$, let
$\Gamma_{\mathcal{B}}(u,v)$ denote the set of rectilinear paths between
$\mathbf{p}_u$ and $\mathbf{p}_v$ that do not intersect the interior of any
blockage.
We define the obstacle-avoiding rectilinear distance as
$
    d_{\mathcal{B}}(u,v)
    =
    \min_{\gamma\in\Gamma_{\mathcal{B}}(u,v)}
    \operatorname{len}(\gamma).
$
When an unobstructed Manhattan-shortest path exists, this reduces to
$
    d_{\mathcal{B}}(u,v)
    =
    |x_u-x_v|+|y_u-y_v|.
$

Both the physical extent of the interconnect and the lengths of individual
communication paths are important considerations in on-chip network
design~\cite{ahonen2004topology,morgan2013unified}.
We therefore distinguish between the total physical wirelength of the
routing graph and the route lengths of individual communications.
The total wirelength is
$
    L_{\mathrm{wire}}
    =
    \sum_{\{u,v\}:\,(u,v)\in\mathcal E\,\lor\,(v,u)\in\mathcal E}
    d_{\mathcal B}(u,v),
$
where each physical connection is counted once, regardless of its traversal
direction or the number of communication pairs using it.
For a route $\pi_{i,t}=(v_0=i,\ldots,v_k=t)$, its length is
$
    \operatorname{len}(\pi_{i,t})
    =
    \sum_{j=0}^{k-1} d_{\mathcal{B}}(v_j,v_{j+1}),
$
and the total route length is
$
    L_{\mathrm{route}}
    =
    \sum_{(i,t)\in\mathcal{R}} \operatorname{len}(\pi_{i,t}).
$
The wirelength encourages compact and shared wires, while the route length
discourages long communication paths between individual initiator--target pairs. Let $\Pi_{i,t}(\mathcal S)$ denote the set of simple directed paths from
$i$ to $t$ that traverse at least one switch and whose intermediate vertices
belong to $\mathcal S$, \ie
$
\Pi_{i,t}(\mathcal S)
=
\left\{
\pi_{i,t}=(v_0,\ldots,v_k)
\;\middle|\;
v_0=i,\;
v_k=t,\;
k\geq 2,\;
v_j\in\mathcal S\ \forall j\in\{1,\ldots,k-1\},\;
v_j\neq v_\ell\ \forall j\neq \ell
\right\}$: the joint routing and switch-placement problem is defined as
\begin{equation}
\label{eq:problem}
\min_{(\mathcal S,\mathbf p,\pi)\in\mathcal F}
\;
L_{\mathrm{wire}}+\lambda L_{\mathrm{route}},
\qquad
\mathcal F
=
\left\{
\begin{aligned}
&\left(
\mathcal S,
\{\mathbf p_v\}_{v\in\mathcal S},
\{\pi_{i,t}\}_{(i,t)\in\mathcal R}
\right)
\ \bigm|\
\mathbf p_v\in\Omega\setminus\mathcal B
\ \forall v\in\mathcal S
\\
&|\mathcal S|\leq S_{\max};
\quad
\pi_{i,t}\in\Pi_{i,t}(\mathcal S)
\ \forall(i,t)\in\mathcal R
\end{aligned}
\right\}.
\end{equation}
In our experiments, we set $\lambda=\tfrac{1}{2}$.
The problem is closely related to the rectilinear Steiner tree problem,
which is NP-hard~\cite{garey1977rectilinear}.

\begin{figure}[t]
    \centering
    \includegraphics[width=\linewidth]{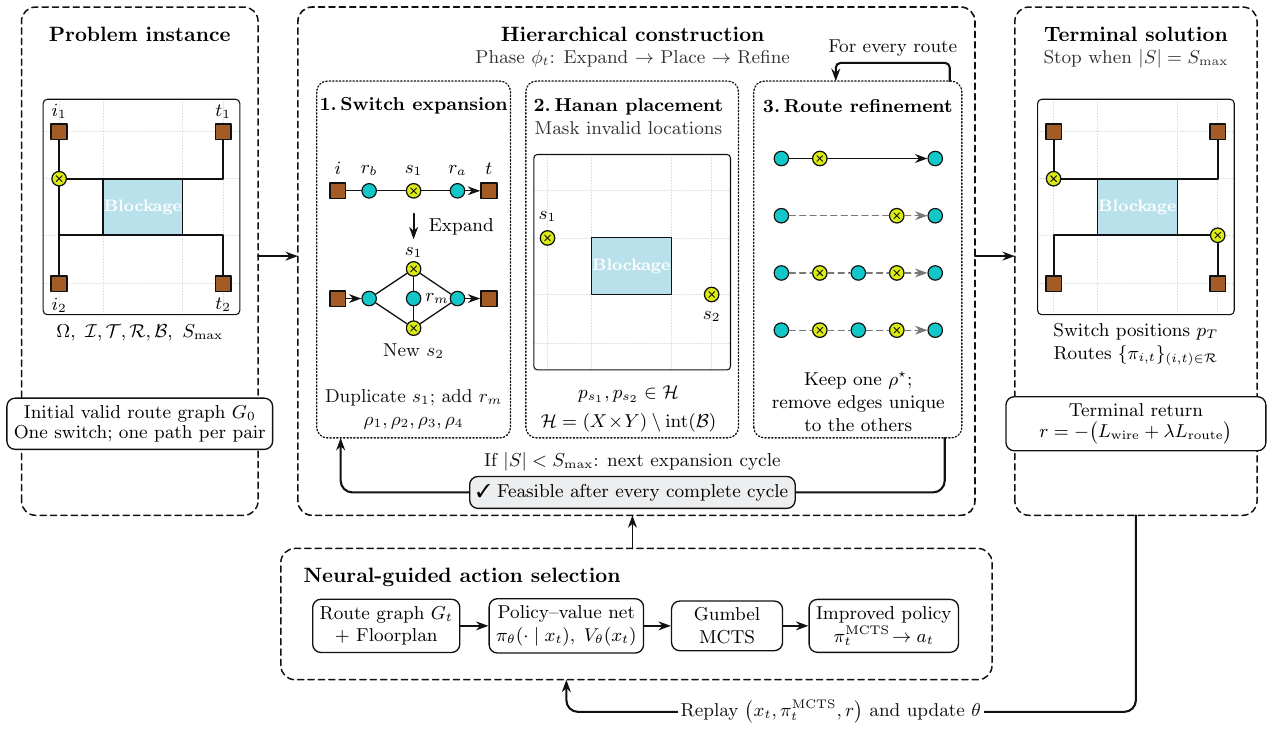}
    \caption{Overview of our routing and switch-placement method. Gumbel MCTS guides repeated switch expansion, Hanan-grid placement, and route refinement while preserving routing feasibility.}
    \label{fig:pipeline}
\end{figure}

\section{Method} \label{sec:method}

\textbf{Overview.}
We formulate joint routing and switch placement as an iterative construction process over a graph representation.
Starting from a minimal feasible routing solution containing a single switch, the routing graph is progressively expanded by introducing new switches, placing them on an extended Hanan grid~\cite{hanan1966steiner}, and updating the routes affected by each introduction.
Restricting switch placement to this finite set of candidate locations does
not exclude a globally optimal solution, but replaces the continuous
placement space with a finite one.
Each complete refinement step returns a feasible routing configuration, so that search is restricted to valid solutions rather than arbitrary routing graphs.
This process is repeated up to a predefined switch budget.
We formulate these decisions as a Markov Decision Process and learn a graph policy and value function using Gumbel MCTS~\cite{danihelka2022policy}, with solution quality evaluated according to the objective in Equation~\eqref{eq:problem}.
Figure~\ref{fig:pipeline} summarizes our proposed method.
Complete proofs of all propositions are provided in Appendix~\ref{app:proofs}.

\subsection{Route-Node Representation}

The routing graph defined in Section~\ref{sec:problem} specifies the physical connectivity induced by the communication routes.
However, an edge may be shared by several communication pairs.
Representing these assignments as variable-size edge attributes is inconvenient for our construction process.
We therefore make individual route assignments explicit in the graph by introducing \emph{route nodes}.

\begin{definition}[Route-node conversion]
Let $\mathcal{G}=(\mathcal{V},\mathcal{E})$ be a routing graph induced by a collection of communication routes.
For each edge $(u,v)\in\mathcal{E}$, let
$
    \mathcal{R}(u,v)
    =
    \{(i,t)\in\mathcal{R} : (u,v)\in E(\pi_{i,t})\}
$
denote the communication pairs whose routes traverse that edge.
We replace $(u,v)$ by one two-edge path
$
    u \rightarrow r_{i,t}^{u,v} \rightarrow v
$
for each $(i,t)\in\mathcal{R}(u,v)$, where
$r_{i,t}^{u,v}$ is a route node associated with communication pair $(i,t)$.
\end{definition}

The resulting representation makes each use of a physical connection by a communication pair explicit.
Route assignments can therefore be modified through local graph operations rather than variable-size edge attributes.
Figure~\ref{fig:couples_as_nodes} illustrates the conversion.
The transformed graph is bipartite between physical nodes
(initiators, targets, and switches) and route nodes.
Each route node has exactly one incoming and one outgoing edge and is associated with a single communication pair.
Moreover, for every communication pair traversing a switch, an incoming route node is paired with an outgoing route node of the same communication pair.
These structural properties are invariants preserved by the construction operations introduced below.

\begin{figure}[t]
\centering
\begin{subfigure}{0.45\textwidth}
\centering
\includegraphics[width=\linewidth]{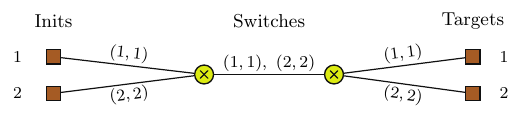}
\caption{Routing graph with communication pairs stored as edge attributes.}
\end{subfigure}
\hfill
\begin{subfigure}{0.45\textwidth}
\centering
\includegraphics[width=\linewidth]{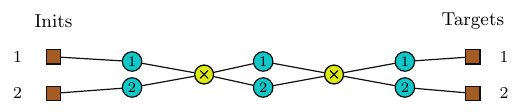}
\caption{Graph after conversion to route nodes. Route nodes labeled \textit{(1)} correspond to pair $(1,1)$ and those labeled \textit{(2)} to pair $(2,2)$.}
\end{subfigure}
\caption{Route-node conversion. Each communication pair traversing an edge is represented explicitly by a route node.}
\label{fig:couples_as_nodes}
\end{figure}

\subsection{Candidate Switch Locations}

\begin{wrapfigure}{R}{0.33\textwidth}
    \centering
    \vspace{-13pt}
    \includegraphics[width=\linewidth]{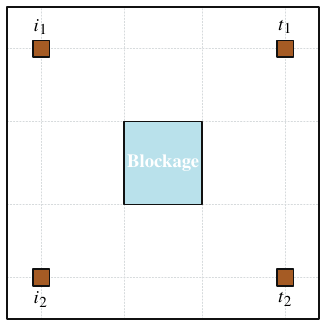}
    \caption{Extended Hanan grid induced by initiators, targets, and
    blockage corners.}
    \label{fig:hanan_grid}
    \vspace{-7pt}
\end{wrapfigure}

Switch positions are initially continuous variables over the floorplan.
To obtain a finite placement space, we restrict candidate locations to an
extended \emph{Hanan grid}, a classical construction for rectilinear routing
\cite{hanan1966steiner}.
Let $\mathcal{P}$ denote the positions of all initiators and targets, and let
$\mathcal{C}_{\mathcal{B}}$ denote the corners of the rectangular blockages.
We define the sets of horizontal and vertical coordinates
$
    \mathcal{X} =
    \{x(p) : p\in\mathcal{P}\}
    \cup
    \{x(c) : c\in\mathcal{C}_{\mathcal{B}}\},
$
$
    \mathcal{Y} =
    \{y(p) : p\in\mathcal{P}\}
    \cup
    \{y(c) : c\in\mathcal{C}_{\mathcal{B}}\}.
$
The set of candidate switch locations is then
\begin{equation}
    \mathcal{H}
    =
    \{(x,y)\in \mathcal{X}\times \mathcal{Y} :
    (x,y)\notin\operatorname{int}(\mathcal{B})\}.
\end{equation}
Figure~\ref{fig:hanan_grid} illustrates the resulting grid.

We have the following result :

\begin{proposition}[Optimal Switch Placement]
\label{prop:hanan_placement}
Under rectilinear routing, and for fixed positions of initiators, targets, and blockages, there exists an optimal solution in which every switch is placed at an intersection of the extended Hanan grid $\mathcal{H}$.
\end{proposition}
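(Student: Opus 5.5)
We would prove Proposition~\ref{prop:hanan_placement} by a shifting argument: starting from any optimal solution, we move switch coordinates onto $\mathcal{X}$ and $\mathcal{Y}$ one at a time without increasing the objective.

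\textbf{Setup.} Fix an optimal solution $(\mathcal S,\mathbf p,\pi)\in\mathcal F$ of~\eqref{eq:problem}; existence of a minimizer should follow by a compactness/lower-semicontinuity argument over the finitely many combinatorial structures. Freeze the combinatorial part, namely the switch set $\mathcal S$, the routes $\pi_{i,t}$, and hence $\mathcal E$. The objective then becomes a function of the switch positions only:
\[
J(\mathbf p)=\sum_{\{u,v\}} w_{uv}\, d_{\mathcal B}(u,v), \qquad w_{uv}\ge 0,
\]
where $w_{uv}$ counts one for the physical wire plus $\lambda$ times the number of route traversals of $(u,v)$ or $(v,u)$. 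Feasibility, meaning that every $\pi_{i,t}$ lies in $\Pi_{i,t}(\mathcal S)$, does not depend on positions. So it suffices to show that $J$ admits a minimizer with all switch coordinates in $\mathcal X\times\mathcal Y$ that stays in $\Omega\setminus\operatorname{int}(\mathcal B)$.

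\textbf{Shifting step.} Suppose some switch has $x$-coordinate $\xi\notin\mathcal X$. Let $a<\xi<b$ be the consecutive values of $\mathcal X\cup\{0,W\}$ around $\xi$, and let $\mathcal S_\xi$ be all switches with $x$-coordinate exactly $\xi$. Translate all of $\mathcal S_\xi$ horizontally by $\delta\in(a-\xi,\,b-\xi)$. We need two facts.

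First, feasibility is preserved. The open strip $(a,b)\times[0,H]$ contains no terminal coordinate and no vertical blockage side. Every blockage therefore either covers the full width of the strip over its $y$-range or misses the strip. Hence a point of the strip at a given height is in $\operatorname{int}(\mathcal B)$ independently of its $x$ in $(a,b)$, and the moved switches stay outside blockage interiors.

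Second, the cost behaves well. For $u,v\in\mathcal S_\xi$, the distance $d_{\mathcal B}(u,v)$ is unchanged under the common translation, because the strip structure is $x$-invariant. For $u\in\mathcal S_\xi$ and $v$ with $x_v\notin(a,b)$, or $v$ a switch at a different $x$ inside the strip, we claim $d_{\mathcal B}(u,v)$ is a concave piecewise-linear function of $\delta$. The reason is that any obstacle-avoiding rectilinear path from $\mathbf p_u$ leaves the strip region through its left or right side or moves vertically to $v$ within it. The distance is thus a minimum of finitely many functions of the form $c\pm\delta$, each being the length of a fixed horizontal segment plus a $\delta$-independent remainder. Summing with nonnegative weights, $J$ is concave on the open interval, and it is continuous up to the endpoints. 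Its minimum over the closed interval is therefore attained at $\delta=a-\xi$ or $\delta=b-\xi$, and the value there is at most $J$ at $\delta=0$. If an endpoint is $0$ or $W$ rather than an element of $\mathcal X$, we argue that $J$ is then monotone toward the interior value, or we include the floorplan boundary in the grid; we would check how the paper treats this.

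\textbf{Iteration and main obstacle.} After a move, $\mathcal S_\xi$ lands on a value of $\mathcal X$, so the number of distinct off-grid $x$-coordinates strictly decreases. Repeat until none remain, then apply the symmetric argument to $y$-coordinates with $\mathcal Y$. Moving $y$ does not disturb $x$, so the result has every switch in $\mathcal H$ with objective no larger than the optimum, hence optimal. The step we expect to be hardest is the concavity (min-of-affine) claim for $d_{\mathcal B}$ in the presence of blockages, in particular handling pairs of switches that both lie inside the strip but at different $x$. We would first try to prove it via the standard fact that shortest obstacle-avoiding rectilinear paths can be taken on the extended Hanan grid lines, plus the position of the endpoints. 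This reduces $d_{\mathcal B}$ to a minimum over finitely many grid-routed paths, each affine in $\delta$ on the strip.
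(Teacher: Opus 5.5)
Your plan is the paper's: freeze the topology, write $J=\sum w_{uv}d_{\mathcal B}(u,v)$, translate each group sharing an off-grid coordinate, and iterate on $x$ and then on $y$. The shifting step fails as written, however, because you cut the floorplan only at $\mathcal X\cup\{0,W\}$. Other switches can then lie inside $(a,b)$ at $x$-coordinates different from $\xi$, and for such a $v$ the claimed concavity is false. If the heights between $y_u$ and $y_v$ are free in the strip, then $d_{\mathcal B}(u,v)=|\xi+\delta-x_v|+|y_u-y_v|$, a convex V with its kink at $\delta=x_v-\xi$. This is also why your proposed fallback breaks: grid-routed paths through $v$'s vertical line are affine only on each side of $x_v$, not on the whole strip.

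Here is a concrete failure. Take no blockages, $I=(0,0)$, $T=(10,0)$, and one route $I\to v_1\to u\to v_2\to T$ with $v_1=(2,0)$, $u=(5,0)$, $v_2=(7,0)$. This is optimal with value $10(1+\lambda)$, because a simple route uses each physical connection once, so $L_{\mathrm{wire}}\ge\operatorname{len}(\pi_{I,T})\ge 10$. Here $\mathcal X=\{0,10\}$ and $\mathcal S_5=\{u\}$. Moving $u$ to $x=0$ or $x=10$ gives $14(1+\lambda)$ or $16(1+\lambda)$. So the minimum over the closed interval need not be at an endpoint, and your iteration, which assumes $\mathcal S_\xi$ always lands on $\mathcal X$, stalls.

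A smaller slip: for $u,v\in\mathcal S_\xi$ separated by a blockage band spanning the strip, $d_{\mathcal B}(u,v)$ is not translation invariant. The path must leave the strip, so its length changes by $\pm2\delta$ or $0$. This is still a minimum of affine functions, so it does not hurt concavity.

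The missing idea is the paper's refinement. Take $a<\xi<b$ consecutive in $\widehat{\mathcal X}=\mathcal X\cup\{x_s:s\in\mathcal S\}$, so the open slab contains no switch outside $\mathcal S_\xi$. Every connection from the group to an outside vertex then has a shortest realization running horizontally to $x=a$ or $x=b$ and then vertically along that line. This is feasible because inside the slab blockages are full-width bands, and the original path already crossed every intermediate height there. Intra-group connections are either a free vertical segment or leave the slab.

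Freezing these realizations gives a feasible upper bound $\widetilde J(\delta)=J(0)+c\,\delta$, affine and tight at $\delta=0$. Hence one endpoint is no worse, and no concavity of the true $d_{\mathcal B}$ is needed. That endpoint may be another switch's coordinate rather than an element of $\mathcal X$, in which case the groups merge. The number of distinct off-grid coordinates still drops, so the process terminates. In your concavity language: $J$ is concave between consecutive kinks, and $\delta=0$ is a global minimizer strictly inside such a piece, so $J$ is constant on it and you may slide to the piece's end.

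For the boundary question, use your monotonicity option rather than adding $0,W$ to the grid, which would change $\mathcal H$. Left of $\min\mathcal X$ (and right of $\max\mathcal X$) there are no blockages or terminals, so sliding an extreme group inward cannot increase $J$.
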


Thus, restricting switch placement to $\mathcal{H}$ preserves an
optimal solution while reducing the continuous placement problem to a
finite set of candidate locations.
Therefore, in Equation \eqref{eq:problem}, we can replace the search space $\mathbf{p}_v\in\Omega\setminus\mathcal{B},
\ \forall v\in\mathcal{S}$ by $\mathbf{p}_v\in\mathcal{H},
\ \forall v\in\mathcal{S}$.

\subsection{Switch Expansion and Route Refinement}

In this section, we describe how the routing graph is locally expanded and refined.
Given a switch $s_1$, switch expansion introduces a new switch $s_2$ and
temporarily introduces routing alternatives associated with $s_1$ and $s_2$.
Route refinement subsequently resolves these alternatives independently for
each affected communication pair.
Those two operations are formally defined as follows.

\begin{definition}[Switch expansion]
\label{def:switch_expansion}
Let $s_1$ be the switch selected for expansion, and let
$\mathcal{N}_b(s_1)$ and $\mathcal{N}_a(s_1)$ denote the sets of route nodes
immediately preceding and following $s_1$, respectively.
Switch expansion introduces a new switch $s_2$.
For every $r_b\in\mathcal{N}_b(s_1)$, we add the edge
$r_b\rightarrow s_2$, and for every $r_a\in\mathcal{N}_a(s_1)$, we add the
edge $s_2\rightarrow r_a$.

For each communication pair $(i,t)$ traversing $s_1$, we additionally
introduce a route node $r_m^{i,t}$ associated with $(i,t)$ and add the edges
$
    s_1\rightarrow r_m^{i,t}\rightarrow s_2,
    \,\,
    s_2\rightarrow r_m^{i,t}\rightarrow s_1.
$
\end{definition}

\begin{remark}
Informally, the expansion duplicates the selected switch and introduces, for
each affected communication pair, an additional route node between the two
switches. If $r_b$ and $r_a$ denote the route nodes immediately before and
after the expanded switch, this exposes four local routing alternatives:
\[
\begin{array}{ll}
\rho_1: r_b\rightarrow s_1\rightarrow r_a,
&
\rho_2: r_b\rightarrow s_2\rightarrow r_a,
\\[1mm]
\rho_3: r_b\rightarrow s_1\rightarrow r_m\rightarrow s_2\rightarrow r_a,
&
\rho_4: r_b\rightarrow s_2\rightarrow r_m\rightarrow s_1\rightarrow r_a.
\end{array}
\]
Route refinement subsequently selects one of these four alternatives.
\end{remark}

\begin{definition}[Route refinement]
\label{def:route_refinement}
For each communication pair affected by the expansion, let
$\{\rho_1,\rho_2,\rho_3,\rho_4\}$ denote its four local routing alternatives,
where each $\rho_j$ is identified with its set of edges.
Given a selected alternative $\rho^\star$, route refinement removes all
edges belonging exclusively to the unselected alternatives:
$
    \mathcal{E}
    \leftarrow
    \mathcal{E}
    \setminus
    \left(
        \bigcup_{\rho_j\neq\rho^\star}
        E(\rho_j)
        \setminus E(\rho^\star)
    \right).
$
Route nodes left disconnected by this operation are removed.
\end{definition}

\begin{figure}[t]
\centering
\includegraphics[width=\linewidth]{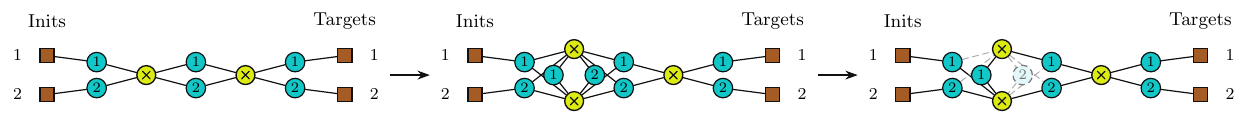}
\caption{Switch expansion and route refinement. Expansion of a switch (leftmost yellow node in the first part of the figure)
introduces a second switch and exposes four local routing alternatives for
each affected communication pair. Route refinement selects one alternative,
restoring a unique local route while leaving the remainder of the
communication path unchanged.}
\label{fig:expansion_refinement}
\end{figure}

Figure~\ref{fig:expansion_refinement} illustrates the expansion and
subsequent refinement process.
Although the expanded intermediate graph contains multiple alternatives for
the affected communication pairs, refinement restores a feasible routing
configuration.

\begin{proposition}[Validity preservation]
\label{prop:validity_preservation}
Assume the routing configuration before expansion assigns exactly one simple
directed path to every communication pair.
After applying switch expansion and route refinement
(Definitions~\ref{def:switch_expansion} and~\ref{def:route_refinement}),
the resulting routing configuration again assigns exactly one simple
directed path to every communication pair.
\end{proposition}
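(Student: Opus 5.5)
The proof works pair by pair. It tracks, for each communication pair $(i,t)$, the route it traces in the route-node representation, and separates the pairs that traverse $s_1$ from those that do not. The key observation is that route nodes carry pair labels: every route node is associated with a single communication pair and has exactly one incoming and one outgoing edge. The route of $(i,t)$ is therefore recovered by following the route nodes labelled $(i,t)$ from $i$ to $t$.

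\textbf{Pairs not traversing $s_1$.} Expansion only adds edges incident to $s_2$ and edges incident to route nodes $r_m^{i,t}$ created for pairs traversing $s_1$. Refinement only deletes edges in the alternatives $\rho_j$ of affected pairs. Neither operation touches a route node labelled with an unaffected pair. Hence the path of such a pair is unchanged, and it remains the unique simple path assigned to it.

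\textbf{Pairs traversing $s_1$.} Since $\pi_{i,t}$ is simple, $s_1$ appears exactly once on it, say as $\ldots\to r_b\to s_1\to r_a\to\ldots$, where $r_b,r_a$ are the unique route nodes of $(i,t)$ adjacent to $s_1$.

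\emph{After expansion.} Restricted to edges incident to route nodes of $(i,t)$, the graph consists of the prefix up to $r_b$, the suffix from $r_a$, and the local edges $r_b\to s_1$, $r_b\to s_2$, $s_1\to r_a$, $s_2\to r_a$, $s_1\to r_m\to s_2$, $s_2\to r_m\to s_1$. The $(i,t)$-labelled walks from $r_b$ to $r_a$ are exactly $\rho_1,\dots,\rho_4$.

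\emph{After refinement.} Refinement keeps $E(\rho^\star)$ and deletes every local edge that lies outside it. A direct check of the four cases shows the surviving local edges are precisely those of $\rho^\star$:
\begin{itemize}
\item $\rho_1$: only $r_b\to s_1\to r_a$ survives, and $r_m$ becomes disconnected and is removed.
\item $\rho_2$: the same with $s_2$ in place of $s_1$.
\item $\rho_3$: only $r_b\to s_1\to r_m\to s_2\to r_a$ survives.
\item $\rho_4$: the symmetric case.
\end{itemize}
A subtlety is that edges such as $r_b\to s_1$ are pair-specific, because $r_b$ belongs to $(i,t)$ alone. Removing them therefore cannot affect any other pair. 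This is what lets refinement act independently per pair. Afterwards each route node of $(i,t)$ again has exactly one incoming and one outgoing edge, so the $(i,t)$ route is the unique walk obtained by splicing $\rho^\star$ into the unchanged prefix and suffix.

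\textbf{Simplicity.} The spliced walk replaces $s_1$ by $s_1$, by $s_2$, or by $s_1,s_2$ in some order. Since $s_2$ is new, it did not appear on $\pi_{i,t}$. Since $s_1$ appeared only once on $\pi_{i,t}$, it appears at most once in the new walk. All intermediate vertices are switches, and the walk still contains at least one switch, so it lies in $\Pi_{i,t}(\mathcal S\cup\{s_2\})$.

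\textbf{Main obstacle.} The delicate step is uniqueness. One must argue that the shared edges adjacent to $s_2$ do not create a second path for the pair. This is why the argument works in the route-node graph, where paths are followed via pair-labelled route nodes, rather than in the physical graph. The physical edge set is then recovered as the union of the per-pair paths, matching the definition of $\mathcal E$.
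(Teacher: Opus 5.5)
Your proof is correct and follows the paper's argument. You split pair by pair on whether $\pi_{i,t}$ traverses $s_1$, splice the selected $\rho^\star$ in place of $r_b\to s_1\to r_a$, and get simplicity because $s_2$ and $r_m$ are new while $s_1$ occurs once. You go further than the paper by checking that exactly $E(\rho^\star)$ survives among the local edges, and by noting that every route-node edge belongs to a single pair, which justifies the per-pair independence the paper only asserts.

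One harmless slip: after expansion, the $(i,t)$-labelled \emph{simple paths} (not walks) from $r_b$ to $r_a$ are exactly $\rho_1,\dots,\rho_4$, since $s_1\to r_m\to s_1$ and $s_2\to r_m\to s_2$ are directed cycles.
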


This proposition states that the feasibility of the constructed routing is an invariant of the expansion--refinement cycle, \ie starting from a minimal feasible routing with a single switch, any routing constructed using those two operations remains feasible.
However, the hierarchical construction cannot generate every feasible routing
configuration. Indeed, consider the following example:

\vspace{0.15cm}

\begin{example}
Consider three switches $s_1,s_2,s_3$ and three communication routes
$\pi_1=(s_1,s_2,s_3)$, $\pi_2=(s_2,s_3,s_1)$, $\pi_3=(s_3,s_1,s_2)$.
This configuration cannot be generated by the hierarchical construction following Definitions \ref{def:switch_expansion} and \ref{def:route_refinement}.
Indeed, consider the switch introduced last and assume, without loss of
generality, that it is $s_3$.
This switch must be introduced by expanding either $s_1$ or $s_2$.
For any route containing both the parent switch and $s_3$, these two
switches are consecutive immediately after the expansion. Since $s_3$ is
introduced last, no subsequent expansion can insert another switch between
them.
If $s_3$ is introduced by expanding $s_1$, this contradicts
$\pi_1=(s_1,s_2,s_3)$.
If it is introduced by expanding $s_2$, this contradicts
$\pi_3=(s_3,s_1,s_2)$.
Hence the configuration cannot be generated by the hierarchical
construction.
\end{example}

Nevertheless, the following result shows that excluding such configurations
does not sacrifice global optimality: at least one globally optimal
solution always belongs to the hierarchical search space.

\begin{proposition}[Optimality of the hierarchical search space]
\label{prop:hierarchical_optimality}
There exists a globally optimal solution to Equation~\eqref{eq:problem} that
can be obtained from the initial configuration through a finite sequence of
switch expansions, placements, and route refinements.
\end{proposition}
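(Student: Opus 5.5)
We plan to prove the statement in two stages. First, we normalize an arbitrary optimal solution, without increasing its cost, into a canonical form. Second, we show by induction on the number of switches that every canonical configuration can be generated from the single-switch initial configuration.

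By Proposition~\ref{prop:hanan_placement} we may restrict attention to optimal solutions $(\mathcal S,\mathbf p,\pi)$ with all $\mathbf p_v\in\mathcal H$. Among these we fix one with $|\mathcal S|$ minimal, and then with $\sum_{(i,t)}|\pi_{i,t}|$ minimal.

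\textbf{Normalization.} Let $U$ be the undirected graph underlying $\mathcal E$, with edge weights $d_{\mathcal B}$. $L_{\mathrm{wire}}$ counts each undirected edge once, independently of direction and multiplicity. Hence replacing a route $\pi_{i,t}$ by a simple path of $U$ between $i$ and $t$ that is no longer and passes through switches only never increases $L_{\mathrm{wire}}$, and does not increase $\lambda L_{\mathrm{route}}$. We therefore assume every $\pi_{i,t}$ is a shortest $i$--$t$ path in $U[\mathcal S\cup\{i,t\}]$.

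We fix a consistent tie-breaking rule, for instance lexicographic on switch indices. This makes the family of routes closed under taking sub-paths between switches: if two routes share switches $a$ and $b$, they use the same $a$--$b$ segment. Minimality of $|\mathcal S|$ additionally rules out degenerate switches. For example, a switch whose routes all pass through a single neighbour at the same location can be merged into that neighbour, which strictly reduces $|\mathcal S|$ without increasing cost.

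\textbf{Reverse-construction characterization.} Next we invert the expansion--refinement cycle of Definitions~\ref{def:switch_expansion} and~\ref{def:route_refinement}. A configuration is generatable whenever there is a switch $s$ (the one added last) and a parent switch $p$ with the following property: every route through $s$ either contains $s$ immediately adjacent to $p$ (options $\rho_3,\rho_4$), or avoids $p$, in which case $s$ takes $p$'s place (option $\rho_2$). Moreover, deleting $s$ must give a generatable configuration. Here deletion means contracting $s$ into $p$ in each route, and the contracted routes stay simple by the adjacency condition. This matches the obstruction in the Example, where the cyclic routes prevent any valid parent. The initial single-switch configuration is the base case. Placements are free to be re-chosen in $\mathcal H$ at each expansion, so only the combinatorial structure matters.

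\textbf{Main obstacle and plan for it.} The key step is to show that a canonical configuration always contains such a removable pair $(s,p)$, and that contracting $s$ into $p$ preserves canonicity, so that the induction applies. We would try the following first. Take $s$ to be an endpoint of a longest edge, or an extremal switch with respect to a fixed linear order, and let $p$ be its neighbour on the first route through $s$. Then use sub-path consistency of shortest paths to argue that any route through both $s$ and $p$ uses the edge $\{s,p\}$ directly; a cyclic pattern like the Example would contradict the shortest-path tie-breaking. If this fails in general, our fallback is to compare costs instead of structure. We would replace the problematic switch by co-located copies at the same Hanan point: zero-length edges add no cost, and the copies serve the conflicting routes in a generatable nested order. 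Minimality of $|\mathcal S|$ together with $|\mathcal S|\le S_{\max}$ is then needed to show these copies fit within the budget. Checking that budget condition is the delicate part.
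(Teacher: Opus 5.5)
\textbf{The step that carries the proof is missing.} Your first two stages match the paper's proof. The normalization is its common-subpath lemma. Your reverse-construction criterion (contract $s$ into $p$ when every route containing both has them adjacent) is exactly its contraction $C_k$. But the proposal stops at the inductive step, and the plan you give for it fails.

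The first contraction is easy and needs no longest-edge or extremal choice. By the common-subpath property, two switches that are consecutive on one route are consecutive on every route containing both. Two switches that never share a route can always be contracted.

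The problem is the induction, which rests on ``contracting $s$ into $p$ preserves canonicity''. For the combinatorial canonicity you would need, this is false. The switch sequences $(a,z,y)$, $(b,y,z)$, $(a,b)$ satisfy the common-subpath property. Contracting the consecutive pair $a,b$ into $s$ gives $(s,z,y)$ and $(s,y,z)$, which disagree between $s$ and $y$: now $s,y$ are consecutive on one route but not on the other. Geometric canonicity is not preserved either, since a contracted configuration is not an optimal solution.

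Your fallback is not worked out. Co-located copies are not free: $L_{\mathrm{wire}}$ charges $\{u,w_1\}$ and $\{u,w_2\}$ separately unless each external wire attaches to a single copy. Nothing bounds the number of copies when the optimum already uses $S_{\max}$ switches.

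A smaller point: lexicographic tie-breaking does not make shortest routes closed under sub-paths, because the comparison can be decided before the shared segment begins. You need an additive generic perturbation, such as the paper's weights $\omega_e=2^{k(e)}$ together with minimization of $\Psi$.

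\textbf{The paper does not close this step either.} It re-invokes the common-subpath lemma at every intermediate $G^{(k)}$, although the lemma is proved only for $G^\star$. The example above shows this is the same hole you flagged. In fact, no argument that uses only route consistency can close it.

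Take the Hoffman--Singleton graph: $50$ vertices, $7$-regular, girth $5$, diameter $2$. Every non-adjacent pair $\{u,v\}$ therefore has a unique common neighbour $w$. Put one route $(u,w,v)$ per non-adjacent pair. These routes are unique geodesics, so the system has the common-subpath property.

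Suppose it were reachable. Each macro-switch's cluster must stay contiguous on every route, so the final contraction would split the switches into $A\sqcup B$ with every route of the form $A^*B^*$ or $B^*A^*$.
\begin{enumerate}
\item Take a cut edge $xy$ with $x\in A$ and $y\in B$. Any other neighbour $z$ of $x$ lies on the route $(z,x,y)$, so $z\in A$. Hence every vertex has at most one neighbour across the cut.
\item Since the diameter is $2$, every vertex of $B$ is $y$, one of the six other neighbours of $y$, or the unique cross neighbour of one of the six other neighbours of $x$. This gives $|B|\le 1+6+6=13$.
\item Symmetrically $|A|\le 13$, contradicting $|A|+|B|=50$.
\end{enumerate}

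A complete proof therefore needs an extra normalization of the optimal solution that exploits geometry or optimality, not just the common-subpath structure. Neither your proposal nor the paper's argument supplies it.
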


\subsection{Routing as a Sequential Markov Decision Process}
\label{sec:markov_decision_process}

We formulate the hierarchical construction process as a Markov Decision
Process (MDP) $(\mathcal{X},\mathcal{A},\mathcal{P},\mathcal{R})$.
An episode starts from a single switch connecting every communication pair and progressively refines an initially feasible routing configuration
through successive switch expansions, switch placements, and route
refinements.

\noindent\textbf{State space.}
At step $t$, the state is defined as
$
    x_t =
    \left(
        G_t,\mathbf{p}_t,\phi_t,
        \mathcal{Q}^{\mathrm{place}}_t,
        \mathcal{Q}^{\mathrm{route}}_t
    \right),
$
where $G_t=(\mathcal{V}_t,\mathcal{E}_t)$ is the current route-node graph,
$\mathbf{p}_t=\{\mathbf{p}_v\}_{v\in\mathcal{S}_t}$ contains the positions
of the current switches, and $\phi_t$ denotes the current decision phase.
The queues $\mathcal{Q}^{\mathrm{place}}_t$ and
$\mathcal{Q}^{\mathrm{route}}_t$ contain, respectively, the switches awaiting
placement and the communication routes awaiting refinement.
The next placement or refinement decision always operates on the first element
of the corresponding queue.

\noindent\textbf{Initial state.}
The initial graph $G_0$ contains a single switch $s^{(0)}$.
For every communication pair $(i,t)\in\mathcal{R}$, two route nodes
$r_{i,t}^{\mathrm{in}}$ and $r_{i,t}^{\mathrm{out}}$ form the path
$
    i
    \rightarrow r_{i,t}^{\mathrm{in}}
    \rightarrow s^{(0)}
    \rightarrow r_{i,t}^{\mathrm{out}}
    \rightarrow t.
$
The initial switch is placed at a fixed valid location
$\mathbf{p}_{s^{(0)}}\in\mathcal{H}$.
Both queues are initially empty,
$\mathcal{Q}^{\mathrm{place}}_0
=\mathcal{Q}^{\mathrm{route}}_0=\emptyset$,
and the initial phase corresponds to switch expansion.
The initial state therefore represents a feasible routing solution.

\noindent\textbf{Action space.}
The set of admissible actions depends on the current phase $\phi_t$.
During switch expansion, the action selects a switch
$s\in\mathcal{S}_t$ to expand according to
Definition~\ref{def:switch_expansion}.
During switch placement, the action selects a valid candidate location
$\mathbf{p}\in\mathcal{H}$ for the first switch in
$\mathcal{Q}^{\mathrm{place}}_t$.
During route refinement, the action selects one of the four local routing
alternatives
$
    \rho^\star\in\{\rho_1,\rho_2,\rho_3,\rho_4\}
$
for the first route in $\mathcal{Q}^{\mathrm{route}}_t$.

\noindent\textbf{Transition dynamics.}
The environment is deterministic and Markovian.
Each construction cycle begins with an expansion action, which introduces a
new switch that initially inherits the position of the expanded switch.
The two switches affected by the expansion are added to
$\mathcal{Q}^{\mathrm{place}}_t$, and the affected communication routes are
added to $\mathcal{Q}^{\mathrm{route}}_t$.
The placement and refinement queues are then processed sequentially.
Each action operates on and removes the first element of the corresponding
queue.
Once $\mathcal{Q}^{\mathrm{place}}_t$ is empty, the process transitions from
placement to route refinement.
Once $\mathcal{Q}^{\mathrm{route}}_t$ is also empty, the resulting graph again
represents a feasible routing configuration and the process returns to the
switch expansion phase.
The episode terminates when the switch budget $S_{\max}$ is reached and the
final refinement cycle has been completed.

\noindent\textbf{Reward.}
We use a sparse terminal reward corresponding to the negative routing
objective of Equation~\eqref{eq:problem}:
$r_t = 0$ for all $t<T$, and $r_T = -L_{\mathrm{wire}}(G_T)-\lambda L_{\mathrm{route}}(G_T)$,
where $\lambda=\tfrac{1}{2}$ in our experiments.

\section{Implementation}
\label{sec:implementation}

We learn a shared graph policy and value network over the sequential decision
process defined in Section~\ref{sec:markov_decision_process}.
The network jointly encodes the current routing graph, blockage geometry, and
candidate switch locations with separate policy heads for switch expansion,
placement, and route refinement.
Invalid actions are masked throughout construction.
We use Gumbel MCTS~\cite{danihelka2022policy} to explore the resulting search
space, using the learned policy and value function to guide tree search and
training them toward search-improved targets.
Following the risk-seeking value estimation of
AlphaTensor~\cite{fawzi2022discovering}, we train the value function toward
the top $25\%$ of observed returns rather than their average.
Per-floorplan PopArt~\cite{hessel2019multi} is used for value normalization.

To isolate the contribution of tree search, we additionally train
PPO-EWMA~\cite{hilton2022batch} on the same hierarchical MDP.
PPO-EWMA uses the same policy-value architecture, action masking, terminal
objective, and PopArt value normalization, but selects actions directly from the
learned policy without tree search.
Architectural details, optimization procedures, and hyperparameters are
provided in Appendix~\ref{app:experiments}.

\section{Experiments and Results}
\label{sec:experiments}

Our experiments investigate whether
(\textit{i}) learned guidance improves over non-learning optimization methods,
(\textit{ii}) explicit tree search improves over direct policy optimization,
and (\textit{iii}) pretraining across floorplans improves optimization on
unseen instances.
Experimental details, numerical results, and routing visualizations are provided
in Appendices~\ref{app:experiments}, \ref{app:detailed_results}, and
\ref{app:visualization}, respectively.

\subsection{Datasets and Experimental Setup}

\noindent \textbf{Floorplans.}
We evaluate our model on 28 synthetic square floorplans with rectangular blockages and
initiator--target communication constraints.
We use 24 floorplans for pretraining and hold out the remaining four for
transfer experiments.
The instances contain 18--25 communication pairs, 3--5 initiators, and 5--8
targets, with switch budgets ranging from 2 to 5.
The unobstructed area covers 58.4\%--81.3\% of each floorplan.
We construct an extended Hanan grid from terminal coordinates and blockage
boundaries, yielding grids ranging from $24 \times 23$ to $42 \times 44$
candidate positions.

\noindent \textbf{Metric.}
We evaluate solutions using the objective defined in
Section~\ref{sec:problem}, \ie $L_{\mathrm{wire}} + \frac{1}{2} L_{\mathrm{route}}$.
Wirelength counts each physical connection once, irrespective of the
number or direction of routes using it, while route length counts every
route traversal. For readability, all lengths are normalized by the
side length of the corresponding floorplan.

\noindent \textbf{Baselines.}
We compare Gumbel MCTS~\cite{danihelka2022policy} against classical,
model-free search, and direct policy optimization baselines.
\textit{Heuristic} is a deterministic obstacle-aware, Steiner-inspired
constructive method that greedily selects switch locations from the extended
Hanan grid and sequentially routes communication pairs using their marginal
contribution to the objective.
\textit{Random Search}~\cite{karnopp1963random} uniformly samples legal actions
within our hierarchical framework, isolating the benefit of
learned guidance.
\textit{Genetic Algorithm}~\cite{holland1992genetic} evolves complete
hierarchical action sequences using selection, crossover, mutation, and random
immigration, providing a stronger non-learning search baseline.
Finally, \textit{PPO-EWMA}~\cite{hilton2022batch} uses the same policy-value
architecture and hierarchical environment as Gumbel MCTS, but acts directly
from the learned policy without tree search, isolating the contribution of
explicit search.

All methods except \textit{Heuristic} are run for 48 hours on the 24 pretraining
floorplans.
\textit{Heuristic} is instead run once as a deterministic constructive
procedure.
Fine-tuning is performed for 24 hours, with transfer results aggregated over
three independent runs.
For both PPO-EWMA and Gumbel MCTS, we compare fine-tuning from the respective
pretrained checkpoint against training from scratch on each of the four
held-out floorplans.

\subsection{Results and Discussion}

\begin{table*}[t]
\centering
\caption{Objective values on the $24$ training floorplans (lower is better).
Best results are shown in bold and second-best results are underlined.}
\label{tab:routing_results}
\setlength{\tabcolsep}{3pt}
\resizebox{\textwidth}{!}{
\begin{tabular}{@{}l *{12}{c}@{}}
\toprule
\textbf{Method}
& \textbf{1}
& \textbf{2}
& \textbf{3}
& \textbf{4}
& \textbf{5}
& \textbf{6}
& \textbf{7}
& \textbf{8}
& \textbf{9}
& \textbf{10}
& \textbf{11}
& \textbf{12}
\\
\midrule
Heuristic
& \underline{11.902} & \underline{11.255} & \underline{15.349} & \underline{11.330} & 14.847 & 11.126 & \underline{8.090} & \underline{10.301} & 11.292 & 14.615 & 14.741 & \underline{13.418} \\
Random search
& 18.706 & 20.799 & 24.042 & 17.420 & 23.629 & 19.310 & 12.672 & 18.574 & 18.663 & 24.931 & 22.584 & 20.775 \\
Genetic algorithm
& 14.382 & 13.999 & 16.854 & 12.622 & 17.970 & 13.107 & 8.224 & 11.954 & 14.152 & 15.799 & 16.478 & 16.504 \\
\midrule
PPO-EWMA
& 12.166 & 11.484 & 15.670 & 11.576 & \underline{14.717} & \underline{10.960} & 8.595 & 10.352 & \underline{11.142} & \underline{13.329} & \underline{14.151} & \textbf{13.361} \\
Gumbel MCTS
& \textbf{11.333} & \textbf{10.880} & \textbf{14.274} & \textbf{9.784} & \textbf{13.926} & \textbf{9.918} & \textbf{7.739} & \textbf{9.897} & \textbf{10.820} & \textbf{13.028} & \textbf{13.995} & \textbf{13.361} \\
\bottomrule
\end{tabular}
}
\par\vspace{4pt}\noindent
\resizebox{\textwidth}{!}{
\begin{tabular}{@{}l *{12}{c}@{}}
\toprule
\textbf{Method}
& \textbf{13}
& \textbf{14}
& \textbf{15}
& \textbf{16}
& \textbf{17}
& \textbf{18}
& \textbf{19}
& \textbf{20}
& \textbf{21}
& \textbf{22}
& \textbf{23}
& \textbf{24}
\\
\midrule
Heuristic
& \underline{13.662} & 15.003 & \underline{14.512} & 15.284 & \underline{16.571} & \underline{16.366} & \underline{16.277} & 16.277 & 14.909 & 15.415 & 14.007 & \underline{15.684} \\
Random search
& 21.425 & 21.875 & 19.338 & 20.146 & 20.943 & 30.810 & 29.030 & 20.421 & 23.683 & 21.337 & 22.457 & 20.583 \\
Genetic algorithm
& 15.238 & 15.057 & \textbf{14.134} & \textbf{15.236} & \textbf{15.239} & 19.881 & 22.062 & \textbf{15.726} & 16.580 & 14.531 & 14.621 & \textbf{15.277} \\
\midrule
PPO-EWMA
& \textbf{13.287} & \underline{13.593} & \textbf{14.134} & \underline{15.269} & \textbf{15.239} & 19.147 & 18.488 & \underline{15.817} & \underline{13.095} & \underline{13.989} & \underline{12.409} & \textbf{15.277} \\
Gumbel MCTS
& \textbf{13.287} & \textbf{13.466} & \textbf{14.134} & \textbf{15.236} & \textbf{15.239} & \textbf{13.601} & \textbf{14.373} & \textbf{15.726} & \textbf{12.779} & \textbf{13.471} & \textbf{12.386} & \textbf{15.277} \\
\bottomrule
\end{tabular}
}
\end{table*}

\noindent \textbf{Comparison with optimization baselines.}
Table~\ref{tab:routing_results} reports the objective obtained on the $24$
training floorplans.
The Heuristic achieves strong results on several instances, but its performance
is highly instance-dependent.
Random Search performs poorly throughout. This shows the need for effective
guidance within the hierarchical search space.
The Genetic Algorithm is considerably stronger and competitive on several
instances, but struggles on more challenging floorplans, such as 18 and 19.
PPO-EWMA improves upon these baselines on most instances, and shows the benefit
of learned guidance, but still exhibits substantial performance gaps on some
instances.
Gumbel MCTS achieves the strongest and most consistent performance, including
on instances where the other methods struggle.
This is particularly evident on floorplans~18 and~19, where it obtains
objectives of $13.601$ and $14.373$, compared with $19.147$ and $18.488$ for
PPO-EWMA and $19.881$ and $22.062$ for the Genetic Algorithm. Among all methods, Gumbel MCTS benefits the most from additional
compute, continuing to improve as the search budget increases.

\noindent \textbf{Transfer to unseen floorplans.}
We next investigate whether training across multiple floorplans produces
reusable policies that facilitate optimization of previously unseen instances.
We pretrain on the $24$ training floorplans and evaluate transfer on four
held-out floorplans.
For each target instance, we compare fine-tuning from the fixed pretrained
checkpoint against training the same model from random initialization.
Results are aggregated over three independent fine-tuning runs.
Figure~\ref{fig:transfer} shows that pretraining substantially accelerates
optimization on held-out floorplans.
Fine-tuning starts from stronger solutions than training from scratch and
reaches competitive solutions using less target-instance optimization.
On some instances, fine-tuning also reaches better solutions within the
available optimization budget.

\begin{figure*}[t]
\centering
\begin{subfigure}[b]{0.48\textwidth}
    \centering
    \includegraphics[width=\linewidth]{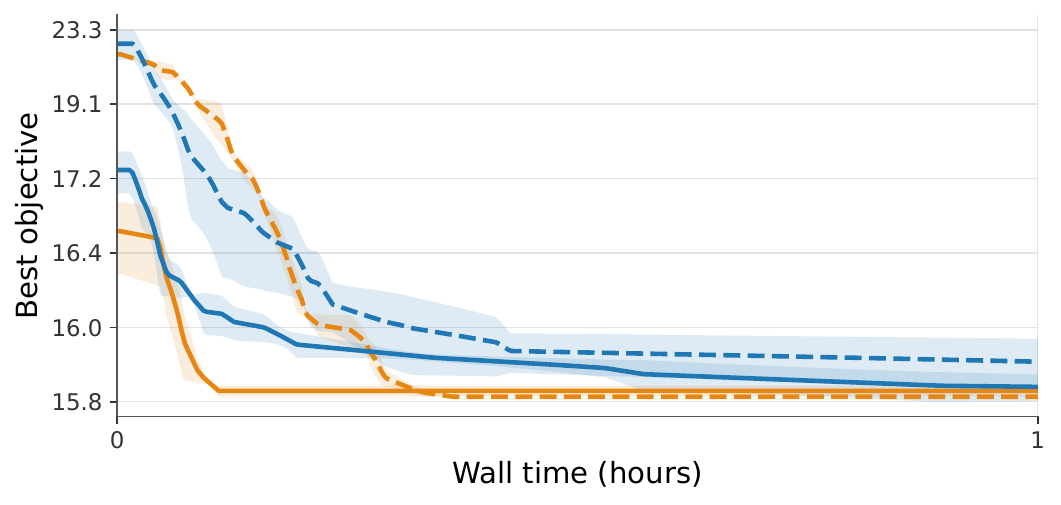}
    \caption{Fine-tuning instance 1}
\end{subfigure}
\hfill
\begin{subfigure}[b]{0.48\textwidth}
    \centering
    \includegraphics[width=\linewidth]{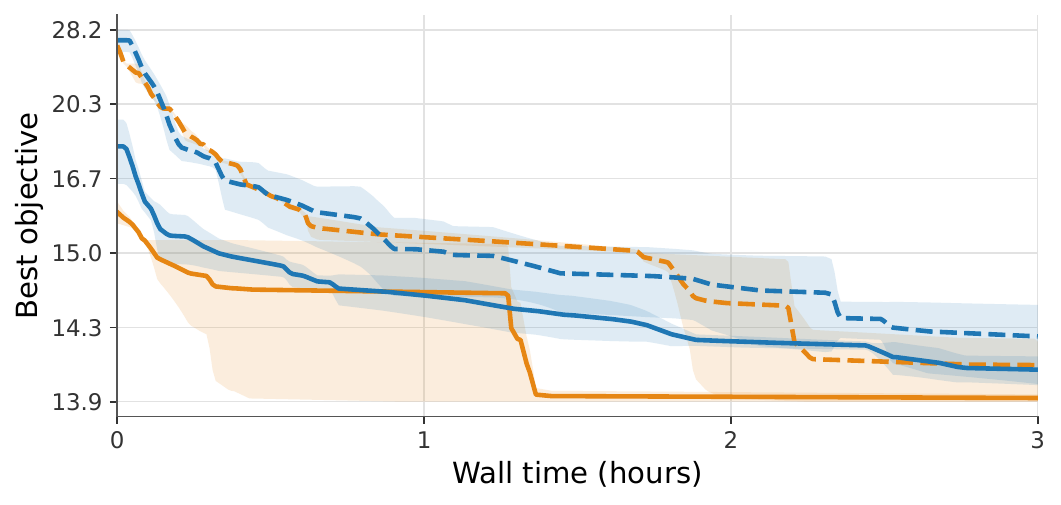}
    \caption{Fine-tuning instance 2}
\end{subfigure}
\\[0.8em]
\begin{subfigure}[b]{0.48\textwidth}
    \centering
    \includegraphics[width=\linewidth]{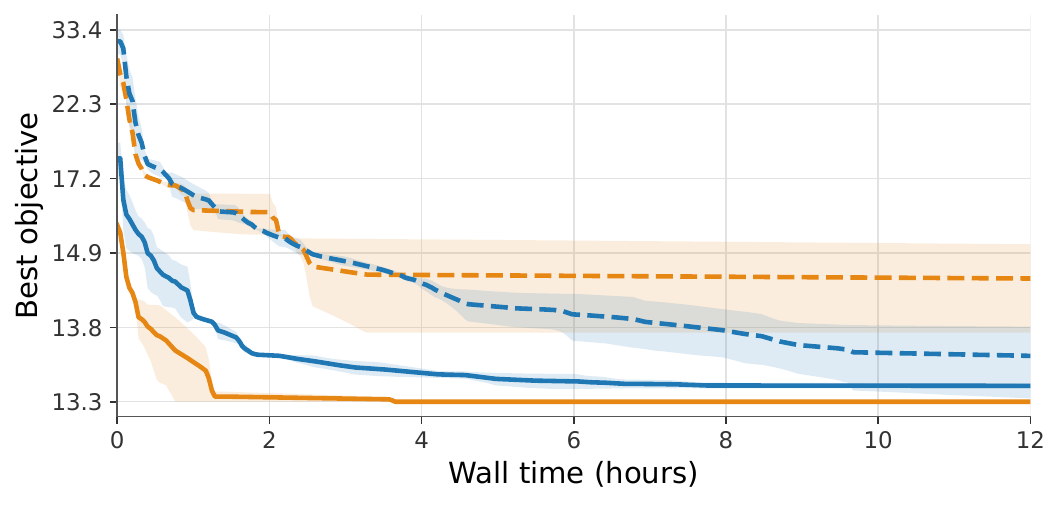}
    \caption{Fine-tuning instance 3}
\end{subfigure}
\hfill
\begin{subfigure}[b]{0.48\textwidth}
    \centering
    \includegraphics[width=\linewidth]{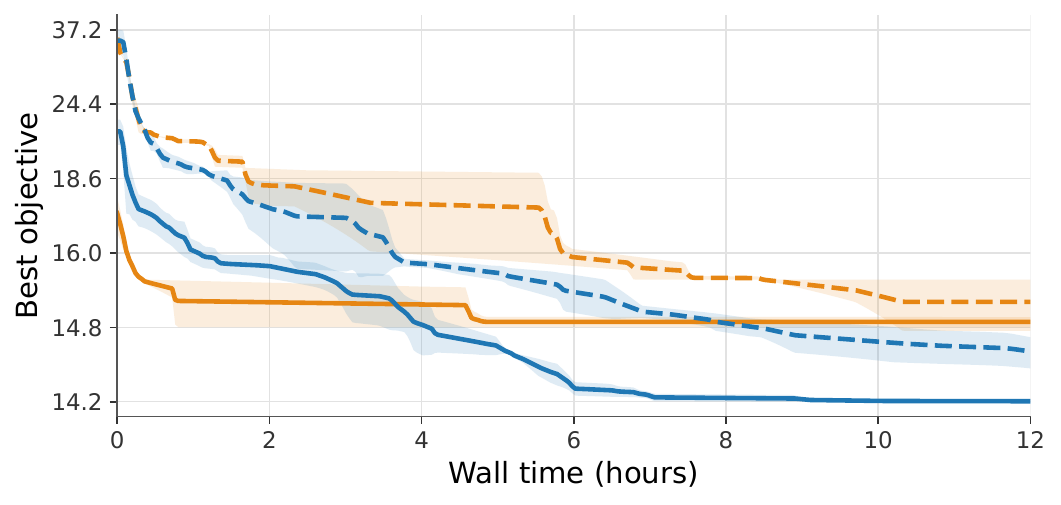}
    \caption{Fine-tuning instance 4}
\end{subfigure}
\caption{Comparison between optimization from scratch and fine-tuning from a
policy pretrained on the $24$ training floorplans, evaluated on held-out
floorplans. Each curve shows the mean over three independent runs, and the
shaded region indicates one standard deviation. The objective is shown on a
logarithmic scale. For readability, the time axis is truncated once all
methods are within $1\%$ of their respective best objective values.
PPO-EWMA is shown in orange and Gumbel MCTS in blue. Solid lines correspond
to fine-tuning from the pretrained policy, while dotted lines correspond to
optimization from scratch.}
\label{fig:transfer}
\end{figure*}

\subsection{Limitations}
\label{sec:limitations}

Our formulation deliberately omits several aspects of physical design.
The current environment does not model component dimensions, pin-level
constraints, unit density, routing congestion, or bandwidth constraints, and the
optimization objective considers only physical wirelength and
communication-route length.
Similarly, our experiments are limited to relatively small routing instances
compared with real-world NoC designs, which can contain up to tens of thousands
of communication connections.
While the formulation does not assume fixed instance sizes, larger instances increase the computational cost.
Extending the formulation to richer physical constraints and objectives, and
evaluating it at industrial scale, are left to future work.

\section{Conclusion}

We introduced a learning-based framework for joint routing and switch placement
based on a validity-preserving graph hierarchical formulation.
The formulation restricts switch locations to an extended Hanan grid and
enforces feasibility at every complete expansion-refinement cycle.
Together, these choices discretize the placement space and restrict search to
feasible solutions, substantially reducing the difficulty of the search.
Experiments across diverse floorplans show that Gumbel MCTS effectively
explores the resulting search space, consistently outperforming classical
optimization baselines and direct policy optimization, with particularly large
improvements on more challenging instances.
Pretraining across multiple floorplans also provides a useful initialization
for unseen instances, reducing the time required to find high-quality
solutions.

\section*{Reproducibility Statement}
All proofs are given in Appendix \ref{app:proofs}. Appendices \ref{app:implementation} and \ref{app:experiments} provide all the details for the implementation and experiments. The code will be made public upon acceptance of the paper.

\section*{AI Use Statement}
Generative AI was used to assist with literature review, polishing the writing,
and coding. It was also used to double-check and refine the proofs in
Appendix~\ref{app:proofs}. In particular, it provided a counterexample to an
initial version of Proposition~\ref{prop:hierarchical_optimality}, which
prompted us to refine the proposition statement.
We take responsibility for the final content of this work, including all text,
claims, code, and other materials produced with the assistance of generative AI.

\bibliographystyle{unsrt}
\bibliography{main}

\newpage
\appendix

\doparttoc
\faketableofcontents

\addcontentsline{toc}{section}{Appendix}

\begingroup
\renewcommand{\partname}{}
\renewcommand{\thepart}{}
\part{{\Large Appendix}}
\endgroup

\parttoc
\newpage


\section{Proofs} \label{app:proofs}

In this section, we provide proofs for all the propositions in the main paper.

\subsection{Optimality on the Hanan Grid}

\begin{proposition}[Optimal Switch Placement]
Under rectilinear routing, and for fixed positions of initiators, targets, and blockages, there exists an optimal solution in which every switch is placed at an intersection of the extended Hanan grid $\mathcal{H}$.
\end{proposition}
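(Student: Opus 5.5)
The plan is to fix the combinatorial part of an optimal solution and optimize only over switch positions. Take an optimal solution $(\mathcal S,\mathbf p,\pi)$. For a bounded feasible region it exists: there are finitely many combinatorial structures once $|\mathcal S|\le S_{\max}$ and simple paths are imposed, $d_{\mathcal B}$ is continuous, and the closure of $\Omega\setminus\operatorname{int}(\mathcal B)$ is compact. With $\mathcal S$ and the routes $\{\pi_{i,t}\}$ held fixed, the objective becomes
\[
F(\mathbf p)=\sum_{\{u,v\}} w_{uv}\, d_{\mathcal B}(\mathbf p_u,\mathbf p_v),
\]
where $w_{uv}\ge 0$ equals $1$ (if the physical connection exists) plus $\lambda$ times the number of route traversals of $\{u,v\}$. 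This is a nonnegative weighted sum of pairwise obstacle-avoiding distances, and only the switch positions vary. The goal is to move the switches onto $\mathcal H$ without increasing $F$ and without violating $\mathbf p_v\notin\operatorname{int}(\mathcal B)$. I treat the $x$-coordinates first; the $y$-coordinates are handled symmetrically.

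\textbf{Sliding step.} Suppose some switch has $x$-coordinate $x^\ast\notin\mathcal X$. Let $\mathcal G$ be the set of all switches with $x=x^\ast$. Let $(a,b)$ be the open interval around $x^\ast$ bounded by the nearest values of $\mathcal X\cup\{x_v: v\in\mathcal S\setminus\mathcal G\}$. These endpoints exist, since $0,W$ can be assumed in $\mathcal X$ as blockage/boundary coordinates or handled as walls. Translate every switch in $\mathcal G$ horizontally to $x^\ast+t$, with $t$ ranging so that the group stays in $[a,b]$. I would show that $F$ is affine in $t$ on this range. Since an affine function attains its minimum at an endpoint, one can move $\mathcal G$ to $a$ or $b$ without increasing $F$. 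That either puts the group on a coordinate of $\mathcal X$ or merges it with another switch group. Both outcomes strictly decrease the number of distinct non-grid $x$-coordinates, so the process terminates.

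\textbf{Key lemma (main obstacle).} The crucial fact is that the open strip $\Sigma=(a,b)\times[0,H]$ has a product structure with respect to the blockages. No blockage corner, and hence no vertical blockage edge, has $x$-coordinate in $(a,b)$. Therefore every blockage meets $\Sigma$ either not at all or in a full-width band $(a,b)\times(y_1,y_2)$. Two consequences follow.
\begin{enumerate*}[label=(\roman*)]
\item Translating $\mathcal G$ keeps each switch outside $\operatorname{int}(\mathcal B)$, including at the endpoints, where it lands on a blockage boundary at worst.
\item Distances between two members of $\mathcal G$ are unchanged, since the configuration inside $\Sigma$ is translation-invariant.
\end{enumerate*}
The remaining case is $p\in\mathcal G$ and $q$ outside the group with, say, $x_q\le a$. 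Here I must show
\[
d_{\mathcal B}(p,q)=(x_p-a)+d_{\mathcal B}((a,y_p),q).
\]
The inequality ``$\le$'' holds because the horizontal segment from $p$ to $(a,y_p)$ is free: $p$ lies outside the bands, so the whole segment at height $y_p$ does too. For ``$\ge$'', take a shortest path from $p$ to $q$. Inside $\Sigma$ its vertical moves are constrained exactly as on the line $x=a$, by the same bands. So I would project the part of the path inside $\Sigma$ onto its horizontal component plus a vertical component performed on $x=a$. This replacement does not increase length and stays obstacle-free. Care is needed for paths that exit and re-enter $\Sigma$, or that go around to the right side; the fix is to apply the projection to each maximal excursion in $\Sigma$. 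This is the step I expect to require the most care.

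Given the lemma, each term of $F$ is affine in $t$, with slope $\pm w_{uv}$ or $0$, so $F$ is affine and the sliding argument goes through. After all $x$-coordinates lie in $\mathcal X$, the same argument applied to horizontal strips moves every $y$-coordinate into $\mathcal Y$. The $y$-moves do not change any $x$-coordinate, so the $x$-coordinates stay in $\mathcal X$. The final positions lie in $\mathcal X\times\mathcal Y$ and outside $\operatorname{int}(\mathcal B)$, hence in $\mathcal H$. The routes and the switch count are unchanged, so the solution remains feasible, and since $F$ never increased it is still optimal.
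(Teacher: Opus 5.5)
Your reduction and sliding scheme are the same as the paper's proof: fix the topology, slide the group of switches sharing an off-grid $x$-coordinate within the gap $(a,b)$ of the augmented coordinate set, stop at an endpoint, induct on the number of off-grid coordinates, then treat $y$. The gap is in the key lemma. $F$ is \emph{not} affine in $t$ in general, and both claims you use to derive affinity are false. The reason is that a blockage spanning the strip can force a shortest path out of $\Sigma$, and outside $\Sigma$ there is no translation invariance.
\begin{itemize}
\item \emph{Claim (ii).} Take a single blockage $[a,b]\times[y_1,y_2]$ and two switches $(x,y_1-h_1)$, $(x,y_2+h_2)$ of $\mathcal G$. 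Their distance is $h_1+h_2+(y_2-y_1)+2\min(x-a,\,b-x)$, which depends on $x$ and is not even affine.
\item \emph{The identity.} Take the blockage $[0.3,0.6]\times[y_1,y_2]$, the strip $(0.4,0.6)$, a terminal $q=(0.4,y_q)$ above the blockage and a switch $p$ below it. Then $d_{\mathcal B}(p,q)=(y_q-y_p)+\min(x_p-0.2,\,0.8-x_p)$, whereas your formula gives only the first branch. For $x_p>0.5$, the detour around the right end of the blockage leaves $\Sigma$ through $x=b$, has slope $-1$, and is strictly shorter.
\end{itemize}
Projecting excursions onto $x=a$ cannot recover this, because the right-hand detour is genuinely shorter.

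What is true, and suffices, is concavity. A path from $p$ to a point outside the open strip first meets $\{x=a\}\cup\{x=b\}$ at some $(\sigma,y')$. Its prefix has length at least $|x_p-\sigma|+|y'-y_p|$. The vertical segment on $x=\sigma$ between heights $y_p$ and $y'$ is free, since the prefix crossed all these heights inside $\Sigma$, so no band covers them. Hence
\[
d_{\mathcal B}(p,q)=\min\bigl\{(x_p-a)+d_{\mathcal B}((a,y_p),q),\;(b-x_p)+d_{\mathcal B}((b,y_p),q)\bigr\}.
\]
Analogously, when two members of $\mathcal G$ are separated by a band, their distance is a minimum, over the exit and re-entry sides, of $2(x-a)$, $2(b-x)$ or $b-a$ plus $t$-independent terms. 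Every term of $F$ is therefore a minimum of affine functions of $t$. So $F$ is concave on the interval and still attains its minimum at an endpoint, and with this replacement the rest of your argument goes through unchanged.

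The paper reaches the same conclusion slightly differently. It freezes, at $t=0$, one shortest realization per connection with fixed boundary attachments. Their weighted length is an affine upper bound on $F$ that is tight at $t=0$. It then moves to the endpoint where this bound does not increase and re-optimizes the paths there.
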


\begin{proof}
We first establish a geometric property used in the argument.
Recall that blockage interiors are forbidden, while their boundaries are
admissible.

\begin{lemma}
\label{lem:horizontal_cell_path}
Let $C$ be the set of switches sharing an off-grid $x$-coordinate $x$, and let
$x^-<x<x^+$ be consecutive coordinates obtained by augmenting the vertical
Hanan lines with all switch $x$-coordinates.
Every physical connection incident to $C$ admits a shortest realization whose
length varies affinely when $C$ is translated horizontally within
$[x^-,x^+]$.
\end{lemma}

\begin{proof}
Since $x^-$ and $x^+$ are consecutive augmented coordinates, no vertical
blockage boundary lies in the open slab $(x^-,x^+)\times\mathbb R$.
Consequently, from any feasible point $(x,y)$ in the slab, the horizontal
segment to either boundary is feasible: otherwise a rectangular blockage would
have a vertical boundary inside the slab or would contain $(x,y)$.

Consider first a shortest connection from $s=(x,y_s)\in C$ to an endpoint
outside $C$.
The other endpoint cannot lie strictly inside the slab, since terminal
coordinates are Hanan coordinates and all switch coordinates were included in
the augmentation.
Hence the connection reaches one of the slab boundaries; suppose it first
reaches $x^+$ at $b=(x^+,y_b)$.

The prefix from $s$ to $b$ can be replaced by a horizontal segment from
$(x,y_s)$ to $(x^+,y_s)$ followed by a vertical segment along $x^+$ to $b$.
This path is feasible and has length
$x^+-x+|y_b-y_s|$, the rectilinear lower bound between $s$ and $b$.
It is therefore also shortest.
After translating $s$ by $\delta$, only its initial horizontal segment changes,
so the connection length changes by $-\delta$.
A connection reaching $x^-$ analogously changes by $+\delta$.

Now consider a connection whose two endpoints belong to $C$.
If their vertical segment is feasible, it remains feasible under a common
translation within the slab and its length is constant.
Otherwise, any feasible connection between them must leave the slab.
Applying the preceding argument at both endpoints gives a shortest realization
with fixed boundary attachments.
Its length therefore changes by $-2\delta$, $2\delta$, or $0$, according to
the boundaries through which it leaves and re-enters the slab.

Thus every connection incident to $C$ has a shortest realization whose length
is affine in $\delta$.
\end{proof}

We now consider a globally optimal solution with objective $J^\star$ and fix
its logical routing topology.
For each physical connection $\{u,v\}$, let $m_{uv}$ be the number of
communication routes traversing it.
Its contribution to the objective has fixed positive weight
$w_{uv}=1+\lambda m_{uv}$.
Hence, for the fixed topology, the optimal objective can be written as
$J^\star=\sum_{\{u,v\}\in E} w_{uv}\,d_{\mathcal B}(u,v)$,
where $d_{\mathcal B}(u,v)$ denotes the shortest obstacle-avoiding rectilinear
distance between $u$ and $v$.

We first align the $x$-coordinates.
Let $\mathcal X$ be the set of vertical Hanan coordinates and define
$\widehat{\mathcal X}=\mathcal X\cup\{x_s:s\in\mathcal S\}$.
Group switches sharing the same $x\notin\mathcal X$, and let $N_x$ be the
number of such off-grid groups.

Consider one group $C$ at $x$, with adjacent coordinates
$x^-<x<x^+$ in $\widehat{\mathcal X}$.
Choose for each incident connection a shortest realization given by
Lemma~\ref{lem:horizontal_cell_path}.
If $C$ is translated by $\delta$, with
$x+\delta\in[x^-,x^+]$, the weighted length of these realizations is affine:
$\widetilde J(\delta)=J(0)+a\delta$ for some constant $a$.
Since $\delta=0$ lies between the two boundary displacements, at least one
boundary satisfies $\widetilde J(\delta)\leq J(0)$.

Move $C$ to such a boundary.
The constructed paths certify a feasible solution with no larger objective;
replacing them by shortest paths at the new switch positions can only improve
it further.
If the reached coordinate is in $\mathcal X$, $C$ is now Hanan-aligned;
otherwise it merges with another switch group.
In either case, $N_x$ strictly decreases.
Repeating this operation therefore places every switch on a vertical Hanan
line after finitely many moves, without increasing the objective.

Applying the same argument to the $y$-coordinates, while keeping the
$x$-coordinates fixed, places every switch at an intersection of
$\mathcal H$ without increasing the objective.
The paths maintained during these translations are feasible, but need not be
shortest at the final switch positions.
We therefore replace each physical connection by a shortest
obstacle-avoiding rectilinear path between its final endpoints.
Such a path exists because the maintained path provides a feasible connection,
and this replacement can only decrease the objective.
Thus, starting from the globally optimal value $J^\star$, we obtain a feasible
grid-aligned solution with objective at most $J^\star$.
By global optimality, its objective must equal $J^\star$, completing the proof.
\end{proof}

\subsection{Correct and Static Routing at Every Step of the Algorithm}

\begin{proposition}[Validity preservation]
Assume the routing configuration before expansion assigns exactly one simple
directed path to every communication pair.
After applying switch expansion and route refinement
(Definitions~\ref{def:switch_expansion} and~\ref{def:route_refinement}),
the resulting routing configuration again assigns exactly one simple
directed path to every communication pair.
\end{proposition}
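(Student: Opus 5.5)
The plan is to argue pair by pair in the route-node representation. The key observation is that both switch expansion and route refinement act only locally around the expanded switch $s_1$. I will therefore fix a communication pair $(i,t)$ with its unique simple path $\pi_{i,t}$ before expansion and split into two cases: $(i,t)$ either does not traverse $s_1$ or does.

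\textbf{Pairs not traversing $s_1$.} Expansion adds only three kinds of edges: edges $r_b\to s_2$ and $s_2\to r_a$ for route nodes adjacent to $s_1$, and edges through new route nodes $r_m^{i',t'}$. All of these touch route nodes of pairs traversing $s_1$, or new nodes. Every route node belongs to a single pair, so the route-node chain of $(i,t)$ is untouched. Refinement only deletes edges of affected pairs, which are also not in this chain. Hence the path of $(i,t)$ is unchanged and remains unique and simple.

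\textbf{Pairs traversing $s_1$.} Because $\pi_{i,t}$ is simple, $s_1$ occurs exactly once in it. There is therefore exactly one route node $r_b$ of $(i,t)$ entering $s_1$ and one route node $r_a$ leaving it. Here I use the invariant stated after the route-node conversion: incoming and outgoing route nodes at a switch are paired by communication pair. After expansion, the only edges of $(i,t)$'s route-node chain that differ from before lie in the local gadget on $\{r_b,s_1,s_2,r_m^{i,t},r_a\}$. The only $r_b$-to-$r_a$ routes through this gadget are $\rho_1,\dots,\rho_4$.

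Refinement keeps $E(\rho^\star)$ and deletes every other gadget edge of this pair. I need to check two things here.
\begin{enumerate*}[label=(\roman*)]
\item Edges such as $r_b\to s_1$ are shared among alternatives, so the set difference in Definition~\ref{def:route_refinement} keeps exactly the edges of $\rho^\star$. The surviving chain between $r_b$ and $r_a$ is then precisely $\rho^\star$, and $r_m^{i,t}$ is removed if it is unused.
\item The gadget edges are specific to $(i,t)$: $r_b$, $r_a$ and $r_m^{i,t}$ belong to $(i,t)$. So refining one pair does not delete edges needed by another pair, and the order in which the route queue is processed is irrelevant.
\end{enumerate*}
The resulting path is $\pi_{i,t}$ with $s_1$ replaced by $s_1$, $s_2$, $(s_1,s_2)$ or $(s_2,s_1)$. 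It is still a directed path from $i$ to $t$ whose intermediate vertices are switches, and it contains at least one switch.

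\textbf{Main obstacle.} The hard step is proving uniqueness and simplicity rigorously, that is, reading off the pair's route from the route-node graph. I will define the route of $(i,t)$ as the walk that alternates physical nodes and route nodes of $(i,t)$. Since every route node has in- and out-degree one after refinement, this walk is deterministic. Simplicity then follows because $s_2$ is a fresh vertex not present in $\pi_{i,t}$, and $s_1$ appears at most once. Finally, I will check that the pairing invariant is restored at $s_1$ and $s_2$, with each used switch getting one incoming and one outgoing route node of $(i,t)$, so that the argument can be iterated.
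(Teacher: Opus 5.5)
Your proposal is correct and follows essentially the same argument as the paper. Both split on whether $\pi_{i,t}$ traverses $s_1$, reduce the affected case to the four local alternatives $\rho_1,\dots,\rho_4$, and get simplicity from $s_2$ and $r_m$ being fresh while $s_1$ occurs at most once. Your extra checks make explicit what the paper compresses into ``applies independently to every communication pair'': that gadget edges are pair-specific through their route-node endpoint, that route nodes end with in- and out-degree one so the route can be read off deterministically, and that the pairing invariant is restored for iteration.
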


\begin{proof}
Let $G=(\mathcal{V},\mathcal{E})$ be a valid routing configuration and let
$s_1$ be the switch selected for expansion.
Consider a communication pair $(i,t)\in\mathcal{R}$.

If $\pi_{i,t}$ does not traverse $s_1$, neither switch expansion nor route
refinement modifies its route, so $\pi_{i,t}$ remains a unique simple
directed path.

Now suppose that $\pi_{i,t}$ traverses $s_1$.
Because $\pi_{i,t}$ is simple, it visits $s_1$ exactly once.
Let $r_b$ and $r_a$ denote the route nodes immediately preceding and
following $s_1$ on $\pi_{i,t}$.
Switch expansion leaves the remainder of $\pi_{i,t}$ unchanged and replaces
the local segment
$
    r_b \rightarrow s_1 \rightarrow r_a
$
by the four alternatives
$
\rho_1: r_b\rightarrow s_1\rightarrow r_a
$, $
\rho_2: r_b\rightarrow s_2\rightarrow r_a
$, $
\rho_3: r_b\rightarrow s_1\rightarrow r_m\rightarrow s_2\rightarrow r_a
$ and $
\rho_4: r_b\rightarrow s_2\rightarrow r_m\rightarrow s_1\rightarrow r_a
$,
where $s_2$ is the newly introduced switch and $r_m$ is the corresponding
route node for $(i,t)$.

Each $\rho_j$ is a simple directed path from $r_b$ to $r_a$.
Indeed, $s_2$ and $r_m$ are newly introduced nodes, while $s_1$ occurs only
once in each alternative.
Route refinement selects exactly one $\rho_j$ and removes the edges belonging
exclusively to the remaining alternatives.
Replacing the original local segment by the selected $\rho_j$ therefore
yields exactly one simple directed path from $i$ to $t$.

Since this argument applies independently to every communication pair
affected by the expansion, while all unaffected routes remain unchanged, the
refined configuration assigns exactly one simple directed path to every
communication pair.
\end{proof}

\subsection{Optimality of the Hierarchical Search Space}

\begin{proposition}[Optimality of the hierarchical search space]
There exists a globally optimal solution to Equation~\eqref{eq:problem} that
can be obtained from the initial configuration through a finite sequence of
switch expansions, placements, and route refinements.
\end{proposition}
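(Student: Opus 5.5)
We plan to argue backwards: rather than building an optimal solution forwards, we characterise which configurations can be \emph{contracted} back to the initial single-switch configuration, and then show that some globally optimal solution is contractible. By Proposition~\ref{prop:hanan_placement} we may fix an optimal solution $(\mathcal S,\mathbf p,\pi)$ with all switches on $\mathcal H$. Switch positions only matter for the objective, and placement actions may put a switch at any point of $\mathcal H$. Hence positions can be assigned at the very end, after the combinatorial route structure has been produced, and the plan reduces to producing the logical routes by expansions and refinements. Switches used by no route can be deleted without increasing the objective, so we assume every switch is used.

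\textbf{Contraction lemma.} Call a pair of switches $(s_1,s_2)$ \emph{contractible} if, in every route containing both, $s_1$ and $s_2$ appear consecutively. Contracting $s_2$ into $s_1$ means replacing $s_2$ by $s_1$ in every route and merging the resulting consecutive duplicate $s_1,s_1$. We check that this yields a valid configuration $C'$ with $|\mathcal S|-1$ switches. We then check that $C$ is obtained from $C'$ by expanding $s_1$ and choosing, route by route, the refinements:
\begin{itemize}[leftmargin=0.5cm]
\item $\rho_1$ if the route contains only $s_1$;
\item $\rho_2$ if it contains only $s_2$;
\item $\rho_3$ or $\rho_4$ according to the order of $s_1,s_2$ if it contains both.
\end{itemize}
Routes avoiding both switches are untouched, which matches Definition~\ref{def:switch_expansion}, since only routes through $s_1$ in $C'$ are affected. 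Together with Proposition~\ref{prop:validity_preservation}, this gives: if a configuration admits a sequence of contractions down to one switch, it is reachable from the initial state, whose single switch carries every route as $i\to r^{\mathrm{in}}\to s^{(0)}\to r^{\mathrm{out}}\to t$. A pair of switches sharing no route is trivially contractible, so disconnected pieces can always be merged at the end.

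\textbf{Sufficient structure: a switch forest.} Let $F$ be the undirected graph on $\mathcal S$ whose edges are the physical switch--switch connections. If $F$ is a forest, let $s_2$ be a leaf with unique neighbour $s_1$, or an isolated vertex paired with an arbitrary $s_1$. Every route through both switches must then use the edge $\{s_1,s_2\}$, so the pair is contractible. The contracted switch graph is $F$ with the leaf removed, again a forest, and induction on $|\mathcal S|$ finishes the argument. It therefore suffices to show that some optimal solution has an acyclic switch graph $F$.

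\textbf{Main obstacle: acyclicity of $F$ at an optimum.} This is where the real work lies; the cyclic example shows that some optimal-looking configurations are not reachable. Among optimal solutions we will choose one that lexicographically minimises (i) the number of physical switch--switch connections and (ii) the total number of hops $\sum|\pi_{i,t}|$. Suppose $F$ contains a cycle. We first try a shortcut exchange: whenever a route visits $a,\ldots,c$ non-consecutively and the edge $\{a,c\}$ already exists, rerouting it directly through $\{a,c\}$ keeps the route simple and adds no wire. By the triangle inequality for $d_{\mathcal B}$ it does not increase $\mathrm{len}(\pi)$, and it strictly decreases the hop count. After this step, we attempt to delete the cycle edge of smallest load $m_{uv}$ by rerouting its users around the rest of the cycle. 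This saves $d_{\mathcal B}(u,v)$ of wire but may increase route length by up to $\lambda m_{uv}$ times the detour, so the exchange must be weighed carefully. If the balance fails, the fallback is to weaken the target: instead of requiring a forest, we show directly that the optimal configuration admits an \emph{elimination order}, \ie a switch $s_2$ and a partner $s_1$ that are consecutive on all shared routes. We would obtain such a pair from the shortcut-minimal structure by taking $s_2$ extremal on the Hanan grid, for instance lexicographically largest.
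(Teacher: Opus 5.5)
Your reduction is correct as far as it goes. A pair of switches that is consecutive on every route containing both can be undone by one expansion with the refinements $\rho_1,\dots,\rho_4$ you list. And if the switch graph $F$ is a forest, repeatedly contracting a leaf into its neighbour reaches the single-switch state.

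The gap is that the proposal stops exactly at the step that carries the proposition. You never show that some optimal solution has acyclic $F$, or that it admits an elimination order. Acyclicity is also the wrong target, for two reasons:
\begin{itemize}
\item It is much stronger than reachability. Three switches with routes $(s_1,s_2)$, $(s_2,s_3)$, $(s_3,s_1)$ form a triangle in $F$, yet contracting $s_1,s_2$ and then the result with $s_3$ shows the configuration is reachable.
\item Optimality gives no reason to expect it. Your cycle-breaking exchange can strictly increase the objective: in a ring whose edges each carry several routes between neighbouring switches, the added route length ($\lambda m_{uv}$ times the detour) exceeds the saved wire $d_{\mathcal B}(u,v)$.
\end{itemize}
The fallback of taking $s_2$ lexicographically extremal on $\mathcal H$ is not justified. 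The order of switches along routes is combinatorial and is not controlled by geometric position; two switches may even share a location.

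The missing idea is the paper's common-subpath lemma. Give every possible connection a distinct weight $2^{k(e)}$, let $\tau(\pi)$ be the total weight of a route, and among optimal solutions choose one minimizing $\Psi=\sum_{(i,t)\in\mathcal R}\tau(\pi_{i,t})$. Suppose two routes joined the same switches $s_a,s_b$ by different switch sequences. Replace the worse subpath by the better one, reversed if necessary and with loops erased. This adds no wire. It either strictly shortens a route, contradicting optimality, or, at equal length, keeps the solution optimal while strictly decreasing $\Psi$. Hence two switches consecutive on one route are consecutive on every route containing both, so a contractible pair exists without any acyclicity assumption. Your shortcut exchange is the special case where the better subpath is a single connection. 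In your lexicographically chosen optimum it already makes every edge of $F$ a contractible pair. The paper then contracts down to one switch and reverses the sequence.

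Be aware, however, that the paper re-invokes the lemma for each intermediate contracted configuration, and the property is not inherited. The routes $(a,c)$, $(b,d,c)$, $(a,b)$ satisfy it, but contracting $a,b$ into $s$ gives $(s,c)$ and $(s,d,c)$, which violate it. So even with the lemma you still need what your forest was meant to provide: an invariant preserved by contraction, or a justified choice of contraction order.
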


\begin{proof}
We first establish a structural property of optimal routing solutions.
For a communication route $\pi$ containing two switches $s_a$ and $s_b$,
let $\pi[s_a,s_b]$ denote the physical subpath between them, independently
of its traversal direction.

\begin{lemma}[Common-subpath property]
\label{lem:common_subpath}
There exists a globally optimal routing configuration such that, for any pair
of switches $s_a$ and $s_b$, all communication routes containing both switches
use the same switch sequence between them, possibly in reverse order.
\end{lemma}

\begin{proof}
Index the switches up to the budget $S_{\max}$, and assign every possible
undirected connection $e$ between terminals and switches a positive
tie-breaking weight
$
    \omega_e = 2^{k(e)},
$
where the exponents $k(e)$ are distinct.
For a route $\pi=(v_0,\ldots,v_k)$, define
$
    \tau(\pi)
    =
    \sum_{j=1}^{k}\omega_{\{v_{j-1},v_j\}}.
$
Since routes are simple, two distinct paths between the same endpoints,
up to reversal, have different values of $\tau$.

Among all globally optimal routing configurations, choose one minimizing
$
    \Psi
    =
    \sum_{(i,t)\in\mathcal R}\tau(\pi_{i,t}).
$
Such a configuration exists because the switch budget and the number of
communication pairs are finite, and hence only finitely many simple route
sequences are possible.

Suppose, for contradiction, that two communication routes contain the same
switches $s_a$ and $s_b$ but use different switch sequences $P_1$ and $P_2$
between them.
Orient both sequences from $s_a$ to $s_b$, and let $L(P)$ denote their
route-length contribution.
Without loss of generality, assume that $L(P_1)<L(P_2)$, or that
$L(P_1)=L(P_2)$ and $\tau(P_1)<\tau(P_2)$.

Replace $P_2$ in its route by $P_1$, using the reverse of $P_1$ if necessary.
Every connection of $P_1$ is already present in the routing configuration, so
this introduces no new physical wire.
If the resulting route contains a repeated switch, erase the resulting loops;
this preserves connectivity and can only remove physical wire and route
length.

If $L(P_1)<L(P_2)$, the route-length term strictly decreases while
wirelength does not increase, contradicting global optimality.
Otherwise, the objective does not increase and therefore, by global
optimality, remains unchanged.
The resulting configuration is thus also globally optimal, but replacing
$P_2$ by $P_1$ strictly decreases $\Psi$.
Any loop removal only decreases it further because all $\omega_e$ are
positive.
This contradicts the choice of the globally optimal configuration minimizing
$\Psi$.

Therefore no such pair of routes can exist, proving the claim.
\end{proof}

Now let $G^\star$ be such a globally optimal configuration.
By Proposition~\ref{prop:hanan_placement}, we may additionally choose
$G^\star$ such that every switch is placed on the extended Hanan grid
$\mathcal H$.

We now construct a finite sequence of contractions
$
    G^\star = G^{(m)}
    \xrightarrow{C_m}
    G^{(m-1)}
    \xrightarrow{C_{m-1}}
    \cdots
    \xrightarrow{C_1}
    G^{(0)},
$
where $G^{(0)}$ contains a single switch.

Consider a configuration $G^{(k)}$ containing at least two switches.
If some communication route contains at least two switches, choose two
switches $s_a$ and $s_b$ that are consecutive on that route.
Their physical subpath is the single connection between $s_a$ and $s_b$;
hence, by Lemma~\ref{lem:common_subpath}, every other communication route
containing both switches also contains them consecutively, possibly in the
opposite order.
If no communication route contains two switches, choose any two remaining
switches $s_a$ and $s_b$, which cannot occur together on any communication
route.

For every communication pair whose route contains at least one of
$s_a$ and $s_b$, the local subpath involving these switches is therefore
one of
$
    \rightarrow s_a \rightarrow,\qquad
    \rightarrow s_b \rightarrow,\qquad
    \rightarrow s_a \rightarrow s_b \rightarrow,\qquad
    \rightarrow s_b \rightarrow s_a \rightarrow.
$
Before contracting the switches, we record this local subpath for each
such communication pair.

The contraction $C_k$ identifies $s_a$ and $s_b$ with a single
macro-switch $s$ and replaces each of these local subpaths by
$
    \rightarrow s \rightarrow.
$
All other communication routes are left unchanged.
Each contraction removes one switch. Repeating this construction therefore
yields, after finitely many steps, a single-switch configuration $G^{(0)}$.

We now reverse the sequence constructively.
For each contraction $C_k$, we recorded the two switches $(s_a,s_b)$,
their positions in $G^\star$, and the local subpath of every communication
route affected by the contraction.
Starting from $G^{(k-1)}$, we expand the corresponding macro-switch $s$
into $s_a$ and $s_b$ and assign them their positions
$
    \mathbf p_{s_a}^\star,\mathbf p_{s_b}^\star\in\mathcal H.
$
For each communication pair traversing $s$, route refinement replaces
$
    \rightarrow s \rightarrow
$
by its recorded local subpath,
$
    \rightarrow s_a \rightarrow,\qquad
    \rightarrow s_b \rightarrow,\qquad
    \rightarrow s_a \rightarrow s_b \rightarrow,\qquad\text{or}\qquad
    \rightarrow s_b \rightarrow s_a \rightarrow.
$
All other communication routes are unaffected.
Thus, the expansion, placement, and refinement operations reconstruct
$G^{(k)}$ exactly from $G^{(k-1)}$.

Reversing all contractions consequently gives
$
    G^{(0)}
    \longrightarrow
    G^{(1)}
    \longrightarrow
    \cdots
    \longrightarrow
    G^{(m)}=G^\star,
$
where every transition consists of one switch expansion, the corresponding
switch placements, and the recorded route refinements.
Hence $G^\star$ can be constructed from the initial configuration through
a finite sequence of switch expansions, placements, and route refinements.
Since $G^\star$ is globally optimal, the hierarchical search space contains
a globally optimal solution.
\end{proof}

\section{Implementation Details}
\label{app:implementation}

\subsection{Model Architecture}
\label{app:model_architecture}

The policy and value functions share a neural architecture that jointly encodes
the current routing configuration and physical floorplan.
The routing configuration is represented as an augmented directed graph
containing initiators, targets, instantiated switches, and route-bundle nodes
representing physical segments shared by one or more communication routes.
The floorplan is represented by a raster encoding of the blockage geometry and
legal candidate switch locations.

The architecture is structured as follows:
\begin{enumerate}[leftmargin=0.5cm]
    \itemsep0em

    \item \textbf{Routing graph representation:}
    Graph nodes represent initiators, targets, switches, and route bundles.
    Node and edge attributes encode node type, normalized coordinates,
    communication identities, active switches, geometric distances, and the
    current routing configuration.

    \item \textbf{Graph feature embedding:}
    Coordinate and categorical attributes are independently embedded into
    $24$-dimensional representations and combined to initialize the graph
    features used by the policy and value networks.

    \item \textbf{Floorplan encoding:}
    The physical floorplan is represented by a two-channel $128\times128$
    image.
    The first channel encodes blockage occupancy, while the second identifies
    legal candidate switch locations.
    A convolutional encoder consisting of four blocks with GroupNorm and ReLU
    activations produces spatial and global floorplan representations.
    These representations condition both the routing-graph features and the
    switch-placement predictions.

    \item \textbf{Graph processing:}
    The routing graph is processed by four directed message-passing layers
    with hidden dimension $80$.
    This produces contextual node representations that combine the current
    routing structure with the encoded physical environment.

    \item \textbf{Policy prediction:}
    Separate policy heads parameterize the three action types of the
    hierarchical MDP: switch expansion, switch placement, and route refinement.
    For expansion, the policy predicts a scalar score for each existing switch,
    normalized across the switches in the current graph.
    For placement, it predicts a two-dimensional score map over candidate
    locations for each switch being placed.
    Invalid and absent locations are masked, and each placement map is
    normalized independently.
    For refinement, the policy predicts four scores for each relevant graph
    edge, corresponding to the four refinement alternatives defined in
    Sec.~\ref{sec:method}.
    Predictions associated with edges belonging to the same affected
    communication route are averaged to obtain its four refinement scores.

    \item \textbf{Value prediction:}
    The contextual node representations are aggregated using attention pooling
    with four learned queries and passed through a value network with two hidden
    layers of dimension $160$.
    For PPO-EWMA, the network predicts a scalar state value.
    For Gumbel MCTS, following AlphaTensor~\cite{fawzi2022discovering}, it predicts eight return quantiles, with the mean of the
    two largest quantiles used as the leaf value during search.
    The quantile outputs are trained using the quantile Huber loss at levels
    $\tau_i=(i-\tfrac{1}{2})/8$.

    We use PopArt normalization with separate statistics for each floorplan.
    Statistics are updated once per fresh trajectory collection using a decay
    of $0.9$ and a minimum standard deviation of $10^{-6}$.
    Following each update, the output-layer parameters are rescaled so that
    the corresponding unnormalized value predictions remain unchanged.
\end{enumerate}

The complete policy and value architecture contains approximately
$0.7$ million trainable parameters.
Both learning methods use AdamW with learning rate $10^{-4}$ and zero weight
decay, per-GPU optimization batches of $4096$, and gradient-norm clipping at
$1$.

\subsection{PPO-EWMA}
\label{app:ppo}

We train PPO using an exponentially weighted moving-average proximal
policy~\cite{hilton2022batch}.
Complete trajectories are collected under the current behavior policy and
optimized using a clipped importance-sampling objective relative to the
exponentially averaged proximal policy.
We use undiscounted Monte Carlo returns with $\gamma=1$.

We use one optimization epoch per rollout, a value-loss coefficient of $0.5$,
an entropy coefficient of $0.05$, a clipping coefficient of $0.01$, a
proximal-policy EWMA decay of $0.889$, and a maximum importance ratio of
$100$.
Advantages are normalized separately for each floorplan using exponentially
weighted running moments with decay $0.9$.
We collect $8192$ trajectories per GPU per collection.

\subsection{Gumbel MCTS}
\label{app:gumbel_mcts_impl}

We implement Gumbel MCTS following Gumbel
AlphaZero~\cite{danihelka2022policy}.
At each decision state, the policy network provides action priors and the
value network predicts eight return quantiles.
The mean of the largest two predicted quantiles, corresponding to the upper
quartile of the predicted return distribution, is used as the scalar leaf
value during search.
At the root, actions are selected using Gumbel-perturbed policy logits and
evaluated through sequential halving.
At non-root states, simulations are allocated according to the completed-value
policy-improvement rule.

During training, we use $800$ MCTS simulations per decision and collect $512$
trajectories per floorplan per collection.
At most $128$ root actions are retained for sequential halving.
We use unit-scale Gumbel perturbations, $c_{\mathrm{visit}}=50$, and
$c_{\mathrm{scale}}=0.01$.

Search simulations are evaluated asynchronously.
Within each root-action subtree, the number of concurrent simulations is
limited to $0.08$ of the simulation budget.
A virtual loss~\cite{chaslot2008parallel} of $0.1$, expressed in normalized completed-$Q$ units, is
applied to in-flight branches to reduce collisions between concurrent
simulations.

To stabilize learning from search-improved targets, we interpolate the policy
target with the current policy using $\eta_\pi=0.25$:
$
    \pi_{\mathrm{target}}
    =
    (1-\eta_\pi)\pi_{\mathrm{prior}}
    +
    \eta_\pi\pi_{\mathrm{search}}.
$

For the quantile critic, let
$\mathbf z_{\mathrm{prior}}=(z_1,\ldots,z_8)$ denote the predicted return
quantiles and let $G$ denote the realized return.
At collection $k$, each quantile target is
\begin{equation}
    z_{i,\mathrm{target}}
    =
    (1-\eta_V^{(k)})z_{i,\mathrm{prior}}
    +
    \eta_V^{(k)}G,
    \qquad
    \eta_V^{(k)}
    =
    \max\left(0.25,\frac{1}{k}\right).
\end{equation}
The larger interpolation coefficient during the initial collections mitigates
critic cold start; afterward, it matches the policy interpolation coefficient.

MCTS samples are retained for two collections, corresponding to an expected
$20$ optimization replays per sample.
The value-loss coefficient is $0.5$.

\section{Experimental Details}
\label{app:experiments}

This section provides additional details on the datasets, baselines, training
setup, computational resources, and evaluation protocols used in our
experiments.

Unless otherwise stated, all methods are evaluated using the same routing
objective and feasibility requirements.
Random Search and the Genetic Algorithm operate directly in the same
hierarchical routing environment as the learning-based methods.
The Heuristic uses the same extended Hanan grid.
PPO-EWMA and Gumbel MCTS share the same routing environment and neural
backbone, with scalar and quantile value heads, respectively.

\subsection{Datasets and Transfer Protocol}
\label{app:datasets}

Our pretraining dataset contains $24$ synthetically generated routing
floorplans.
Each instance contains between $6$ and $18$ rectangular blockages,
$3$--$5$ initiators, $5$--$8$ targets, and $18$--$25$ required directed
communications.
The switch budget ranges from $2$ to $5$.
All floorplans are square.

For each learning algorithm, a single shared model is trained across all
pretraining floorplans, with trajectories distributed approximately uniformly
across instances during data collection.

For transfer experiments, we use four target floorplans held out from
pretraining, covering two-, three-, and four-switch routing settings.
For each target instance, we compare initialization from the corresponding
pretrained model against training from scratch.
When transferring a pretrained model, we retain all shape-compatible shared
parameters, including the floorplan encoder, message-passing layers, policy
heads, attention pooling, and hidden value-network layers.
The initiator-, target-, and route-identifier embeddings and the
floorplan-specific PopArt output layer are reinitialized.
Optimizer state, replay data, PopArt statistics, and algorithm-specific
training state are not transferred.

\subsection{Optimization Baselines}
\label{app:baselines}

\noindent\textbf{Heuristic.}
The Heuristic is a deterministic obstacle-aware, Steiner-inspired constructive
method operating on the extended Hanan grid.
Every
communication route is required to traverse at least one switch.
Candidate locations are ranked by evaluating single-switch routing solutions,
and the best $128$ candidates are retained.
The method evaluates the retained one-switch solutions and greedily expands
the best partial solution by adding switches until the prescribed maximum
budget is reached.
The best solution encountered across all intermediate switch counts is
retained, so the returned solution may use fewer switches than the maximum
budget.

For each candidate switch set, communication pairs are inserted sequentially
using obstacle-aware shortest paths that minimize their marginal contribution
to the objective.
For a segment of length $d$, introducing a new physical connection incurs
$d+0.5d$, whereas reusing an existing physical connection incurs only the
additional route cost $0.5d$.
We evaluate eight deterministic demand orderings and retain the best resulting
network.
Finally, one local-refinement pass considers single-switch replacements and
accepts strictly improving configurations.

\noindent\textbf{Random Search.}
Random Search operates directly on the same hierarchical construction process
as the learning-based methods, but uses no learned policy or value function.
At each state, it samples uniformly from the currently admissible actions,
thereby producing feasible routing configurations by construction.
It evaluates $18{,}000$ complete trajectories per GPU and generation and
retains the best solution found.

\noindent\textbf{Genetic Algorithm.}
The Genetic Algorithm represents each candidate solution as a sequence of
hierarchical decisions and evaluates it using the same routing environment.
Starting from a random population, subsequent generations combine elitist
selection, tournament selection, one-point crossover, per-decision mutation,
and random immigration.
Because action legality depends on preceding decisions, inherited actions that
are no longer admissible are replaced by uniformly sampled legal actions.
We use an elite pool of $64$ programs per floorplan, tournament size $4$,
crossover probability $0.9$, mutation probability $0.05$, and a $10\%$ random
immigrant fraction.
Each generation evaluates $18{,}000$ complete trajectories per GPU.

\subsection{Compute and Evaluation}
\label{app:compute}

Pretraining is performed on six NVIDIA L40S GPUs.
PPO-EWMA, Gumbel MCTS, Random Search, and the Genetic Algorithm are each run
for $48$ hours on the $24$ pretraining floorplans, with computation distributed
approximately uniformly across instances.
The deterministic Heuristic is run once for each floorplan.
Due to the computational cost of multi-floorplan pretraining, we train one
pretrained model for each learning algorithm.

Fine-tuning is performed for $24$ hours on a single NVIDIA L40S GPU.
Transfer results are aggregated over three independent fine-tuning runs for each
target floorplan and initialization setting.
All pretrained fine-tuning runs for a given learning algorithm are initialized
from the same fixed pretrained checkpoint.

\clearpage

\section{Detailed Numerical Results} \label{app:detailed_results}

This section provides detailed numerical results for all experiments.
For each floorplan, we additionally report its main characteristics, including
the switch budget, number of initiators, targets, and communication routes, and
free space, defined as the percentage of the floorplan area not covered by
blockages.
We report the route length, wirelength, and best objective found by each method
within its corresponding optimization budget.
We similarly report transfer results on the four held-out floorplans.

\subsection{Pretraining}

\begin{center}
\begin{minipage}{\textwidth}
\centering
\captionof{table}{Detailed numerical results for Heuristic on the 24 pretraining floorplans.}
\label{tab:detailed_heuristic}
\setlength{\tabcolsep}{5pt}
\resizebox{\textwidth}{!}{
\begin{tabular}{@{}lcccccccc@{}}
\toprule
\multicolumn{9}{c}{\textbf{Heuristic}} \\
\midrule
\textbf{Instance} &
\textbf{Switch budget} &
\textbf{Initiators} &
\textbf{Targets} &
\textbf{Routes} &
\textbf{Free space} (\%) &
\textbf{Route length} $ \downarrow $ &
\textbf{Wirelength} $ \downarrow $ &
\textbf{Objective} $ \downarrow $ \\
\midrule
1 & 4 & 5 & 8 & 21 & 71.9 & 16.842 & 3.481 & 11.902 \\
2 & 4 & 4 & 7 & 18 & 68.9 & 15.975 & 3.267 & 11.255 \\
3 & 4 & 5 & 8 & 23 & 71.0 & 19.626 & 5.536 & 15.349 \\
4 & 4 & 5 & 8 & 22 & 77.2 & 15.623 & 3.518 & 11.330 \\
5 & 4 & 5 & 8 & 19 & 76.1 & 17.938 & 5.878 & 14.847 \\
6 & 4 & 5 & 8 & 22 & 81.0 & 15.640 & 3.306 & 11.126 \\
7 & 4 & 5 & 8 & 22 & 75.0 & 9.796 & 3.192 & 8.090 \\
8 & 4 & 5 & 8 & 23 & 77.7 & 14.270 & 3.166 & 10.301 \\
9 & 4 & 5 & 8 & 21 & 78.5 & 15.834 & 3.375 & 11.292 \\
10 & 4 & 5 & 8 & 23 & 81.3 & 20.724 & 4.253 & 14.615 \\
11 & 3 & 4 & 5 & 19 & 68.0 & 21.356 & 4.063 & 14.741 \\
12 & 3 & 4 & 5 & 19 & 74.9 & 19.355 & 3.740 & 13.418 \\
13 & 3 & 4 & 5 & 19 & 75.2 & 19.670 & 3.827 & 13.662 \\
14 & 3 & 4 & 5 & 19 & 72.2 & 20.028 & 4.989 & 15.003 \\
15 & 2 & 3 & 7 & 20 & 68.5 & 19.366 & 4.829 & 14.512 \\
16 & 2 & 3 & 7 & 20 & 68.7 & 21.211 & 4.679 & 15.284 \\
17 & 2 & 3 & 7 & 20 & 63.7 & 23.431 & 4.855 & 16.571 \\
18 & 5 & 5 & 5 & 25 & 66.5 & 25.126 & 3.803 & 16.366 \\
19 & 4 & 5 & 5 & 25 & 76.2 & 24.959 & 3.797 & 16.277 \\
20 & 2 & 4 & 5 & 19 & 60.6 & 22.797 & 4.879 & 16.277 \\
21 & 4 & 4 & 5 & 19 & 58.4 & 20.369 & 4.724 & 14.909 \\
22 & 3 & 4 & 5 & 19 & 63.2 & 20.309 & 5.260 & 15.415 \\
23 & 4 & 4 & 5 & 19 & 61.8 & 18.653 & 4.681 & 14.007 \\
24 & 2 & 3 & 7 & 20 & 63.1 & 20.961 & 5.203 & 15.684 \\
\bottomrule
\end{tabular}
}
\end{minipage}
\end{center}

\begin{center}
\begin{minipage}{\textwidth}
\centering
\captionof{table}{Detailed numerical results for Random search on the 24 pretraining floorplans.}
\label{tab:detailed_random_search}
\setlength{\tabcolsep}{5pt}
\resizebox{\textwidth}{!}{
\begin{tabular}{@{}lcccccccc@{}}
\toprule
\multicolumn{9}{c}{\textbf{Random search}} \\
\midrule
\textbf{Instance} &
\textbf{Switch budget} &
\textbf{Initiators} &
\textbf{Targets} &
\textbf{Routes} &
\textbf{Free space} (\%) &
\textbf{Route length} $ \downarrow $ &
\textbf{Wirelength} $ \downarrow $ &
\textbf{Objective} $ \downarrow $ \\
\midrule
1 & 4 & 5 & 8 & 21 & 71.9 & 18.540 & 9.436 & 18.706 \\
2 & 4 & 4 & 7 & 18 & 68.9 & 23.403 & 9.098 & 20.799 \\
3 & 4 & 5 & 8 & 23 & 71.0 & 24.834 & 11.625 & 24.042 \\
4 & 4 & 5 & 8 & 22 & 77.2 & 18.655 & 8.092 & 17.420 \\
5 & 4 & 5 & 8 & 19 & 76.1 & 23.366 & 11.946 & 23.629 \\
6 & 4 & 5 & 8 & 22 & 81.0 & 20.092 & 9.264 & 19.310 \\
7 & 4 & 5 & 8 & 22 & 75.0 & 13.404 & 5.970 & 12.672 \\
8 & 4 & 5 & 8 & 23 & 77.7 & 18.610 & 9.269 & 18.574 \\
9 & 4 & 5 & 8 & 21 & 78.5 & 19.788 & 8.769 & 18.663 \\
10 & 4 & 5 & 8 & 23 & 81.3 & 26.542 & 11.660 & 24.931 \\
11 & 3 & 4 & 5 & 19 & 68.0 & 25.169 & 9.999 & 22.584 \\
12 & 3 & 4 & 5 & 19 & 74.9 & 24.155 & 8.698 & 20.775 \\
13 & 3 & 4 & 5 & 19 & 75.2 & 24.537 & 9.157 & 21.425 \\
14 & 3 & 4 & 5 & 19 & 72.2 & 24.847 & 9.451 & 21.875 \\
15 & 2 & 3 & 7 & 20 & 68.5 & 22.518 & 8.079 & 19.338 \\
16 & 2 & 3 & 7 & 20 & 68.7 & 25.037 & 7.627 & 20.146 \\
17 & 2 & 3 & 7 & 20 & 63.7 & 25.112 & 8.387 & 20.943 \\
18 & 5 & 5 & 5 & 25 & 66.5 & 33.287 & 14.167 & 30.810 \\
19 & 4 & 5 & 5 & 25 & 76.2 & 34.856 & 11.602 & 29.030 \\
20 & 2 & 4 & 5 & 19 & 60.6 & 26.344 & 7.249 & 20.421 \\
21 & 4 & 4 & 5 & 19 & 58.4 & 27.774 & 9.796 & 23.683 \\
22 & 3 & 4 & 5 & 19 & 63.2 & 25.547 & 8.564 & 21.337 \\
23 & 4 & 4 & 5 & 19 & 61.8 & 24.468 & 10.223 & 22.457 \\
24 & 2 & 3 & 7 & 20 & 63.1 & 25.941 & 7.613 & 20.583 \\
\bottomrule
\end{tabular}
}
\end{minipage}
\end{center}

\begin{center}
\begin{minipage}{\textwidth}
\centering
\captionof{table}{Detailed numerical results for Genetic algorithm on the 24 pretraining floorplans.}
\label{tab:detailed_genetic_algorithm}
\setlength{\tabcolsep}{5pt}
\resizebox{\textwidth}{!}{
\begin{tabular}{@{}lcccccccc@{}}
\toprule
\multicolumn{9}{c}{\textbf{Genetic algorithm}} \\
\midrule
\textbf{Instance} &
\textbf{Switch budget} &
\textbf{Initiators} &
\textbf{Targets} &
\textbf{Routes} &
\textbf{Free space} (\%) &
\textbf{Route length} $ \downarrow $ &
\textbf{Wirelength} $ \downarrow $ &
\textbf{Objective} $ \downarrow $ \\
\midrule
1 & 4 & 5 & 8 & 21 & 71.9 & 17.970 & 5.397 & 14.382 \\
2 & 4 & 4 & 7 & 18 & 68.9 & 16.293 & 5.853 & 13.999 \\
3 & 4 & 5 & 8 & 23 & 71.0 & 22.228 & 5.740 & 16.854 \\
4 & 4 & 5 & 8 & 22 & 77.2 & 15.589 & 4.827 & 12.622 \\
5 & 4 & 5 & 8 & 19 & 76.1 & 20.256 & 7.842 & 17.970 \\
6 & 4 & 5 & 8 & 22 & 81.0 & 14.812 & 5.701 & 13.107 \\
7 & 4 & 5 & 8 & 22 & 75.0 & 9.982 & 3.233 & 8.224 \\
8 & 4 & 5 & 8 & 23 & 77.7 & 15.332 & 4.288 & 11.954 \\
9 & 4 & 5 & 8 & 21 & 78.5 & 17.572 & 5.366 & 14.152 \\
10 & 4 & 5 & 8 & 23 & 81.3 & 19.600 & 5.999 & 15.799 \\
11 & 3 & 4 & 5 & 19 & 68.0 & 21.500 & 5.728 & 16.478 \\
12 & 3 & 4 & 5 & 19 & 74.9 & 21.160 & 5.924 & 16.504 \\
13 & 3 & 4 & 5 & 19 & 75.2 & 18.885 & 5.796 & 15.238 \\
14 & 3 & 4 & 5 & 19 & 72.2 & 20.606 & 4.754 & 15.057 \\
15 & 2 & 3 & 7 & 20 & 68.5 & 19.638 & 4.315 & 14.134 \\
16 & 2 & 3 & 7 & 20 & 68.7 & 21.019 & 4.727 & 15.236 \\
17 & 2 & 3 & 7 & 20 & 63.7 & 20.303 & 5.087 & 15.239 \\
18 & 5 & 5 & 5 & 25 & 66.5 & 25.411 & 7.175 & 19.881 \\
19 & 4 & 5 & 5 & 25 & 76.2 & 29.455 & 7.334 & 22.062 \\
20 & 2 & 4 & 5 & 19 & 60.6 & 21.177 & 5.138 & 15.726 \\
21 & 4 & 4 & 5 & 19 & 58.4 & 19.118 & 7.021 & 16.580 \\
22 & 3 & 4 & 5 & 19 & 63.2 & 19.422 & 4.821 & 14.531 \\
23 & 4 & 4 & 5 & 19 & 61.8 & 19.822 & 4.710 & 14.621 \\
24 & 2 & 3 & 7 & 20 & 63.1 & 21.269 & 4.643 & 15.277 \\
\bottomrule
\end{tabular}
}
\end{minipage}
\end{center}

\begin{center}
\begin{minipage}{\textwidth}
\centering
\captionof{table}{Detailed numerical results for PPO-EWMA on the 24 pretraining floorplans.}
\label{tab:detailed_ppo}
\setlength{\tabcolsep}{5pt}
\resizebox{\textwidth}{!}{
\begin{tabular}{@{}lcccccccc@{}}
\toprule
\multicolumn{9}{c}{\textbf{PPO-EWMA}} \\
\midrule
\textbf{Instance} &
\textbf{Switch budget} &
\textbf{Initiators} &
\textbf{Targets} &
\textbf{Routes} &
\textbf{Free space} (\%) &
\textbf{Route length} $ \downarrow $ &
\textbf{Wirelength} $ \downarrow $ &
\textbf{Objective} $ \downarrow $ \\
\midrule
1 & 4 & 5 & 8 & 21 & 71.9 & 17.150 & 3.591 & 12.166 \\
2 & 4 & 4 & 7 & 18 & 68.9 & 15.969 & 3.499 & 11.484 \\
3 & 4 & 5 & 8 & 23 & 71.0 & 20.328 & 5.506 & 15.670 \\
4 & 4 & 5 & 8 & 22 & 77.2 & 15.109 & 4.021 & 11.576 \\
5 & 4 & 5 & 8 & 19 & 76.1 & 19.440 & 4.997 & 14.717 \\
6 & 4 & 5 & 8 & 22 & 81.0 & 15.130 & 3.395 & 10.960 \\
7 & 4 & 5 & 8 & 22 & 75.0 & 10.702 & 3.244 & 8.595 \\
8 & 4 & 5 & 8 & 23 & 77.7 & 14.482 & 3.111 & 10.352 \\
9 & 4 & 5 & 8 & 21 & 78.5 & 15.860 & 3.212 & 11.142 \\
10 & 4 & 5 & 8 & 23 & 81.3 & 19.434 & 3.612 & 13.329 \\
11 & 3 & 4 & 5 & 19 & 68.0 & 20.563 & 3.869 & 14.151 \\
12 & 3 & 4 & 5 & 19 & 74.9 & 19.274 & 3.724 & 13.361 \\
13 & 3 & 4 & 5 & 19 & 75.2 & 18.415 & 4.080 & 13.287 \\
14 & 3 & 4 & 5 & 19 & 72.2 & 19.741 & 3.722 & 13.593 \\
15 & 2 & 3 & 7 & 20 & 68.5 & 19.638 & 4.315 & 14.134 \\
16 & 2 & 3 & 7 & 20 & 68.7 & 20.543 & 4.998 & 15.269 \\
17 & 2 & 3 & 7 & 20 & 63.7 & 20.303 & 5.087 & 15.239 \\
18 & 5 & 5 & 5 & 25 & 66.5 & 27.654 & 5.320 & 19.147 \\
19 & 4 & 5 & 5 & 25 & 76.2 & 27.178 & 4.899 & 18.488 \\
20 & 2 & 4 & 5 & 19 & 60.6 & 22.841 & 4.396 & 15.817 \\
21 & 4 & 4 & 5 & 19 & 58.4 & 18.409 & 3.890 & 13.095 \\
22 & 3 & 4 & 5 & 19 & 63.2 & 20.314 & 3.832 & 13.989 \\
23 & 4 & 4 & 5 & 19 & 61.8 & 18.419 & 3.199 & 12.409 \\
24 & 2 & 3 & 7 & 20 & 63.1 & 21.269 & 4.643 & 15.277 \\
\bottomrule
\end{tabular}
}
\end{minipage}
\end{center}

\begin{center}
\begin{minipage}{\textwidth}
\centering
\captionof{table}{Detailed numerical results for Gumbel MCTS on the 24 pretraining floorplans.}
\label{tab:detailed_mcts_ours}
\setlength{\tabcolsep}{5pt}
\resizebox{\textwidth}{!}{
\begin{tabular}{@{}lcccccccc@{}}
\toprule
\multicolumn{9}{c}{\textbf{Gumbel MCTS}} \\
\midrule
\textbf{Instance} &
\textbf{Switch budget} &
\textbf{Initiators} &
\textbf{Targets} &
\textbf{Routes} &
\textbf{Free space} (\%) &
\textbf{Route length} $ \downarrow $ &
\textbf{Wirelength} $ \downarrow $ &
\textbf{Objective} $ \downarrow $ \\
\midrule
1 & 4 & 5 & 8 & 21 & 71.9 & 16.782 & 2.942 & 11.333 \\
2 & 4 & 4 & 7 & 18 & 68.9 & 15.463 & 3.148 & 10.880 \\
3 & 4 & 5 & 8 & 23 & 71.0 & 19.616 & 4.466 & 14.274 \\
4 & 4 & 5 & 8 & 22 & 77.2 & 13.849 & 2.859 & 9.784 \\
5 & 4 & 5 & 8 & 19 & 76.1 & 19.014 & 4.419 & 13.926 \\
6 & 4 & 5 & 8 & 22 & 81.0 & 14.496 & 2.670 & 9.918 \\
7 & 4 & 5 & 8 & 22 & 75.0 & 9.788 & 2.845 & 7.739 \\
8 & 4 & 5 & 8 & 23 & 77.7 & 14.084 & 2.855 & 9.897 \\
9 & 4 & 5 & 8 & 21 & 78.5 & 15.784 & 2.928 & 10.820 \\
10 & 4 & 5 & 8 & 23 & 81.3 & 19.168 & 3.444 & 13.028 \\
11 & 3 & 4 & 5 & 19 & 68.0 & 20.850 & 3.570 & 13.995 \\
12 & 3 & 4 & 5 & 19 & 74.9 & 19.274 & 3.724 & 13.361 \\
13 & 3 & 4 & 5 & 19 & 75.2 & 18.415 & 4.080 & 13.287 \\
14 & 3 & 4 & 5 & 19 & 72.2 & 19.870 & 3.531 & 13.466 \\
15 & 2 & 3 & 7 & 20 & 68.5 & 19.638 & 4.315 & 14.134 \\
16 & 2 & 3 & 7 & 20 & 68.7 & 21.019 & 4.727 & 15.236 \\
17 & 2 & 3 & 7 & 20 & 63.7 & 20.303 & 5.087 & 15.239 \\
18 & 5 & 5 & 5 & 25 & 66.5 & 22.272 & 2.465 & 13.601 \\
19 & 4 & 5 & 5 & 25 & 76.2 & 23.025 & 2.861 & 14.373 \\
20 & 2 & 4 & 5 & 19 & 60.6 & 21.177 & 5.138 & 15.726 \\
21 & 4 & 4 & 5 & 19 & 58.4 & 18.801 & 3.379 & 12.779 \\
22 & 3 & 4 & 5 & 19 & 63.2 & 19.669 & 3.637 & 13.471 \\
23 & 4 & 4 & 5 & 19 & 61.8 & 18.393 & 3.190 & 12.386 \\
24 & 2 & 3 & 7 & 20 & 63.1 & 21.269 & 4.643 & 15.277 \\
\bottomrule
\end{tabular}
}
\end{minipage}
\end{center}

\vspace{2em}

\subsection{Fine-Tuning}

\begin{center}
\begin{minipage}{\textwidth}
\centering
\captionof{table}{Detailed PPO-EWMA fine-tuning results with pretrained and scratch
initialization. Values are mean $\pm$ standard deviation over three runs.}
\label{tab:finetuning_detailed_ppo}
\setlength{\tabcolsep}{4pt}
\resizebox{\textwidth}{!}{
\begin{tabular}{@{}llcccccccc@{}}
\toprule
\textbf{Instance} &
\textbf{Initialization} &
\textbf{Switch budget} &
\textbf{Initiators} &
\textbf{Targets} &
\textbf{Routes} &
\textbf{Free space} (\%) &
\textbf{Route length} $\downarrow$ &
\textbf{Wirelength} $\downarrow$ &
\textbf{Objective} $\downarrow$ \\
\midrule

\multirow{2}{*}{1}
& Pretrained & \multirow{2}{*}{2} & \multirow{2}{*}{5} & \multirow{2}{*}{6}
& \multirow{2}{*}{20} & \multirow{2}{*}{76.1}
& $21.310 \vcenter{\hbox{\scriptsize $\pm 0.020$}}$
& $5.165 \vcenter{\hbox{\scriptsize $\pm 0.000$}}$
& $15.820 \vcenter{\hbox{\scriptsize $\pm 0.010$}}$ \\
& Scratch & & & & & 
& $21.290 \vcenter{\hbox{\scriptsize $\pm 0.000$}}$
& $5.165 \vcenter{\hbox{\scriptsize $\pm 0.000$}}$
& $15.810 \vcenter{\hbox{\scriptsize $\pm 0.000$}}$ \\
\midrule

\multirow{2}{*}{2}
& Pretrained & \multirow{2}{*}{3} & \multirow{2}{*}{5} & \multirow{2}{*}{6}
& \multirow{2}{*}{20} & \multirow{2}{*}{72.9}
& $18.765 \vcenter{\hbox{\scriptsize $\pm 0.010$}}$
& $4.550 \vcenter{\hbox{\scriptsize $\pm 0.005$}}$
& $13.933 \vcenter{\hbox{\scriptsize $\pm 0.000$}}$ \\
& Scratch & & & & &
& $18.765 \vcenter{\hbox{\scriptsize $\pm 0.010$}}$
& $4.550 \vcenter{\hbox{\scriptsize $\pm 0.005$}}$
& $13.933 \vcenter{\hbox{\scriptsize $\pm 0.000$}}$ \\
\midrule

\multirow{2}{*}{3}
& Pretrained & \multirow{2}{*}{4} & \multirow{2}{*}{5} & \multirow{2}{*}{6}
& \multirow{2}{*}{20} & \multirow{2}{*}{71.1}
& $18.045 \vcenter{\hbox{\scriptsize $\pm 0.000$}}$
& $4.325 \vcenter{\hbox{\scriptsize $\pm 0.000$}}$
& $13.347 \vcenter{\hbox{\scriptsize $\pm 0.000$}}$ \\
& Scratch & & & & &
& $20.115 \vcenter{\hbox{\scriptsize $\pm 0.050$}}$
& $4.222 \vcenter{\hbox{\scriptsize $\pm 0.477$}}$
& $14.280 \vcenter{\hbox{\scriptsize $\pm 0.503$}}$ \\
\midrule

\multirow{2}{*}{4}
& Pretrained & \multirow{2}{*}{4} & \multirow{2}{*}{5} & \multirow{2}{*}{5}
& \multirow{2}{*}{25} & \multirow{2}{*}{69.1}
& $23.554 \vcenter{\hbox{\scriptsize $\pm 0.139$}}$
& $3.048 \vcenter{\hbox{\scriptsize $\pm 0.126$}}$
& $14.825 \vcenter{\hbox{\scriptsize $\pm 0.057$}}$ \\
& Scratch & & & & &
& $24.000 \vcenter{\hbox{\scriptsize $\pm 0.304$}}$
& $3.079 \vcenter{\hbox{\scriptsize $\pm 0.198$}}$
& $15.078 \vcenter{\hbox{\scriptsize $\pm 0.349$}}$ \\

\bottomrule
\end{tabular}
}
\end{minipage}
\end{center}

\begin{center}
\begin{minipage}{\textwidth}
\centering
\captionof{table}{Detailed Gumbel MCTS fine-tuning results with pretrained and scratch
initialization. Values are mean $\pm$ standard deviation over three runs.}
\label{tab:finetuning_detailed_mcts}
\setlength{\tabcolsep}{4pt}
\resizebox{\textwidth}{!}{
\begin{tabular}{@{}llcccccccc@{}}
\toprule
\textbf{Instance} &
\textbf{Initialization} &
\textbf{Switch budget} &
\textbf{Initiators} &
\textbf{Targets} &
\textbf{Routes} &
\textbf{Free space} (\%) &
\textbf{Route length} $\downarrow$ &
\textbf{Wirelength} $\downarrow$ &
\textbf{Objective} $\downarrow$ \\
\midrule

\multirow{2}{*}{1}
& Pretrained & \multirow{2}{*}{2} & \multirow{2}{*}{5} & \multirow{2}{*}{6}
& \multirow{2}{*}{20} & \multirow{2}{*}{76.1}
& $21.290 \vcenter{\hbox{\scriptsize $\pm 0.000$}}$
& $5.165 \vcenter{\hbox{\scriptsize $\pm 0.000$}}$
& $15.810 \vcenter{\hbox{\scriptsize $\pm 0.000$}}$ \\
& Scratch & & & & &
& $21.290 \vcenter{\hbox{\scriptsize $\pm 0.000$}}$
& $5.165 \vcenter{\hbox{\scriptsize $\pm 0.000$}}$
& $15.810 \vcenter{\hbox{\scriptsize $\pm 0.000$}}$ \\
\midrule

\multirow{2}{*}{2}
& Pretrained & \multirow{2}{*}{3} & \multirow{2}{*}{5} & \multirow{2}{*}{6}
& \multirow{2}{*}{20} & \multirow{2}{*}{72.9}
& $20.135 \vcenter{\hbox{\scriptsize $\pm 0.008$}}$
& $3.903 \vcenter{\hbox{\scriptsize $\pm 0.002$}}$
& $13.971 \vcenter{\hbox{\scriptsize $\pm 0.005$}}$ \\
& Scratch & & & & &
& $20.365 \vcenter{\hbox{\scriptsize $\pm 0.339$}}$
& $3.970 \vcenter{\hbox{\scriptsize $\pm 0.092$}}$
& $14.153 \vcenter{\hbox{\scriptsize $\pm 0.262$}}$ \\
\midrule

\multirow{2}{*}{3}
& Pretrained & \multirow{2}{*}{4} & \multirow{2}{*}{5} & \multirow{2}{*}{6}
& \multirow{2}{*}{20} & \multirow{2}{*}{71.1}
& $18.895 \vcenter{\hbox{\scriptsize $\pm 0.523$}}$
& $3.970 \vcenter{\hbox{\scriptsize $\pm 0.264$}}$
& $13.417 \vcenter{\hbox{\scriptsize $\pm 0.012$}}$ \\
& Scratch & & & & &
& $19.532 \vcenter{\hbox{\scriptsize $\pm 0.207$}}$
& $3.815 \vcenter{\hbox{\scriptsize $\pm 0.142$}}$
& $13.581 \vcenter{\hbox{\scriptsize $\pm 0.245$}}$ \\
\midrule

\multirow{2}{*}{4}
& Pretrained & \multirow{2}{*}{4} & \multirow{2}{*}{5} & \multirow{2}{*}{5}
& \multirow{2}{*}{25} & \multirow{2}{*}{69.1}
& $20.620 \vcenter{\hbox{\scriptsize $\pm 0.000$}}$
& $3.907 \vcenter{\hbox{\scriptsize $\pm 0.000$}}$
& $14.217 \vcenter{\hbox{\scriptsize $\pm 0.000$}}$ \\
& Scratch & & & & &
& $23.134 \vcenter{\hbox{\scriptsize $\pm 0.215$}}$
& $2.852 \vcenter{\hbox{\scriptsize $\pm 0.009$}}$
& $14.419 \vcenter{\hbox{\scriptsize $\pm 0.116$}}$ \\

\bottomrule
\end{tabular}
}
\end{minipage}
\end{center}

\clearpage

\section{Floorplans Generated by the Different Methods}\label{app:visualization}

This section provides the solution corresponding to the best objective value found for each experiment and method.

\subsection{Pretraining}

\begin{figure*}[h]
\centering
    \begin{subfigure}[t]{0.31\linewidth}
        \vspace{0pt}
        \centering
        \includegraphics[width=\linewidth]{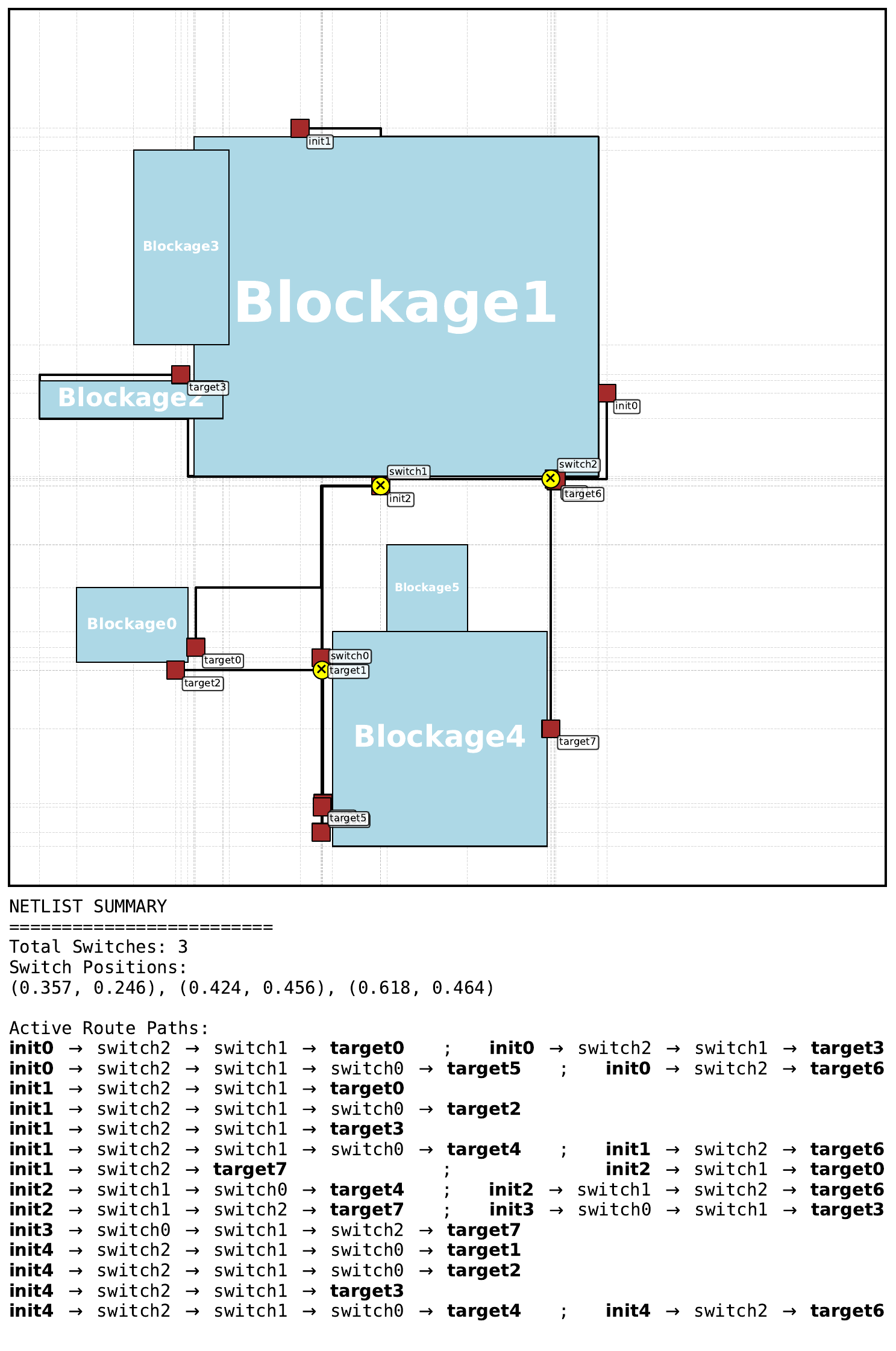}
        \caption*{Heuristic}
    \end{subfigure}
    \hfill
    \begin{subfigure}[t]{0.31\linewidth}
        \vspace{0pt}
        \centering
        \includegraphics[width=\linewidth]{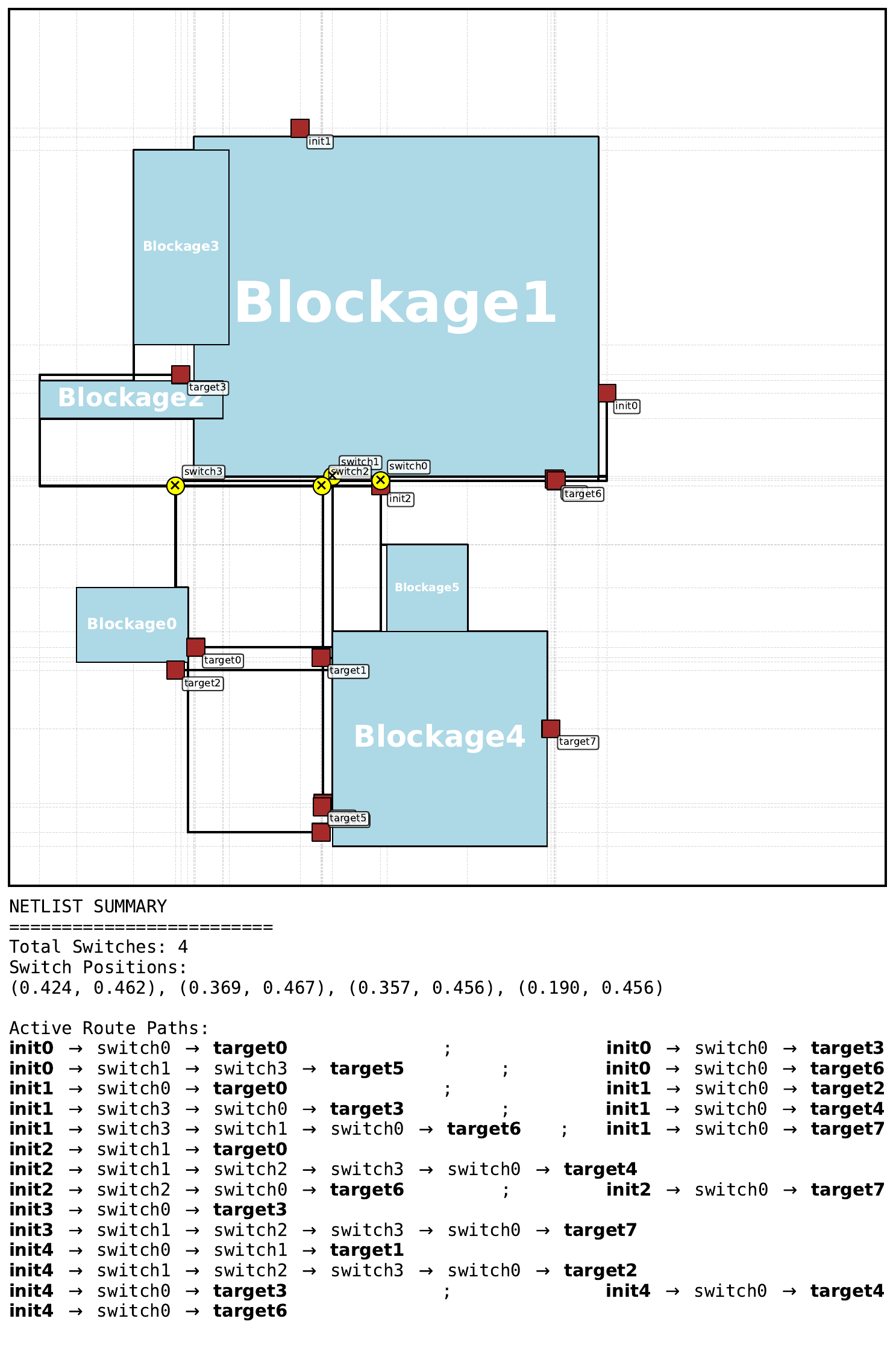}
        \caption*{Random search}
    \end{subfigure}
    \hfill
    \begin{subfigure}[t]{0.31\linewidth}
        \vspace{0pt}
        \centering
        \includegraphics[width=\linewidth]{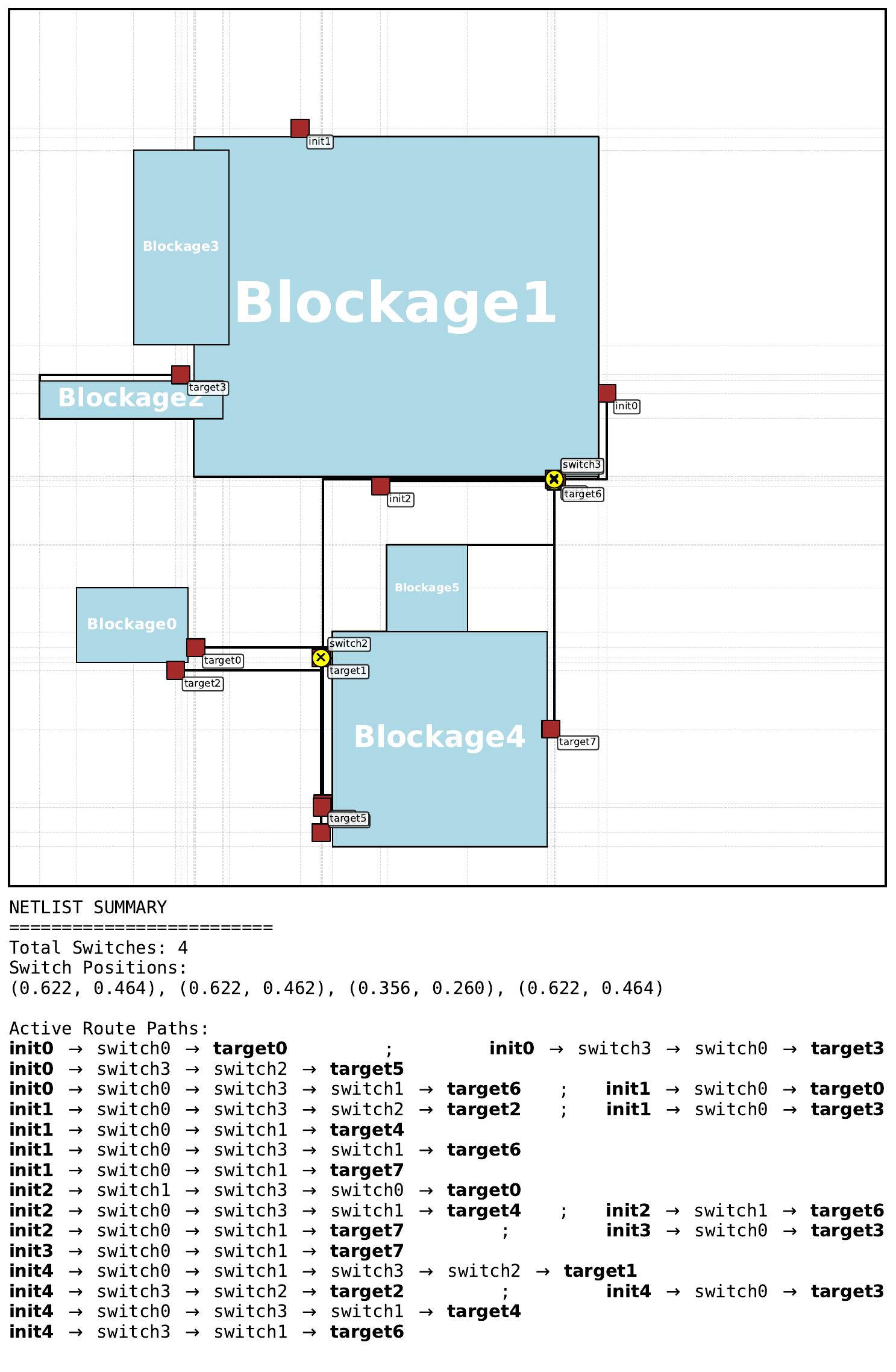}
        \caption*{Genetic algorithm}
    \end{subfigure}
    \\[0.6em]
    \begin{subfigure}[t]{0.31\linewidth}
        \vspace{0pt}
        \centering
        \includegraphics[width=\linewidth]{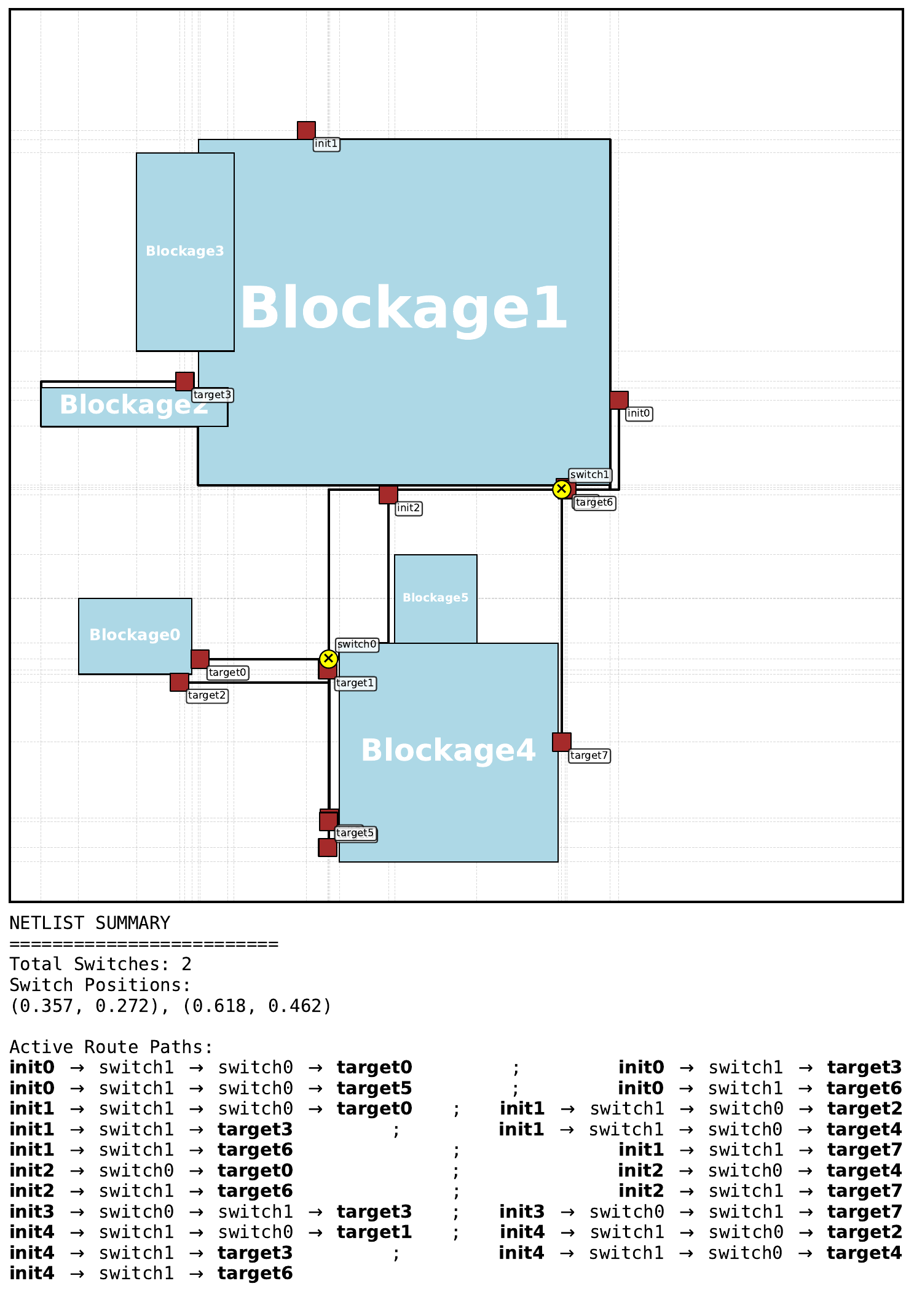}
        \caption*{PPO}
    \end{subfigure}
    \hspace{0.04\linewidth}
    \begin{subfigure}[t]{0.31\linewidth}
        \vspace{0pt}
        \centering
        \includegraphics[width=\linewidth]{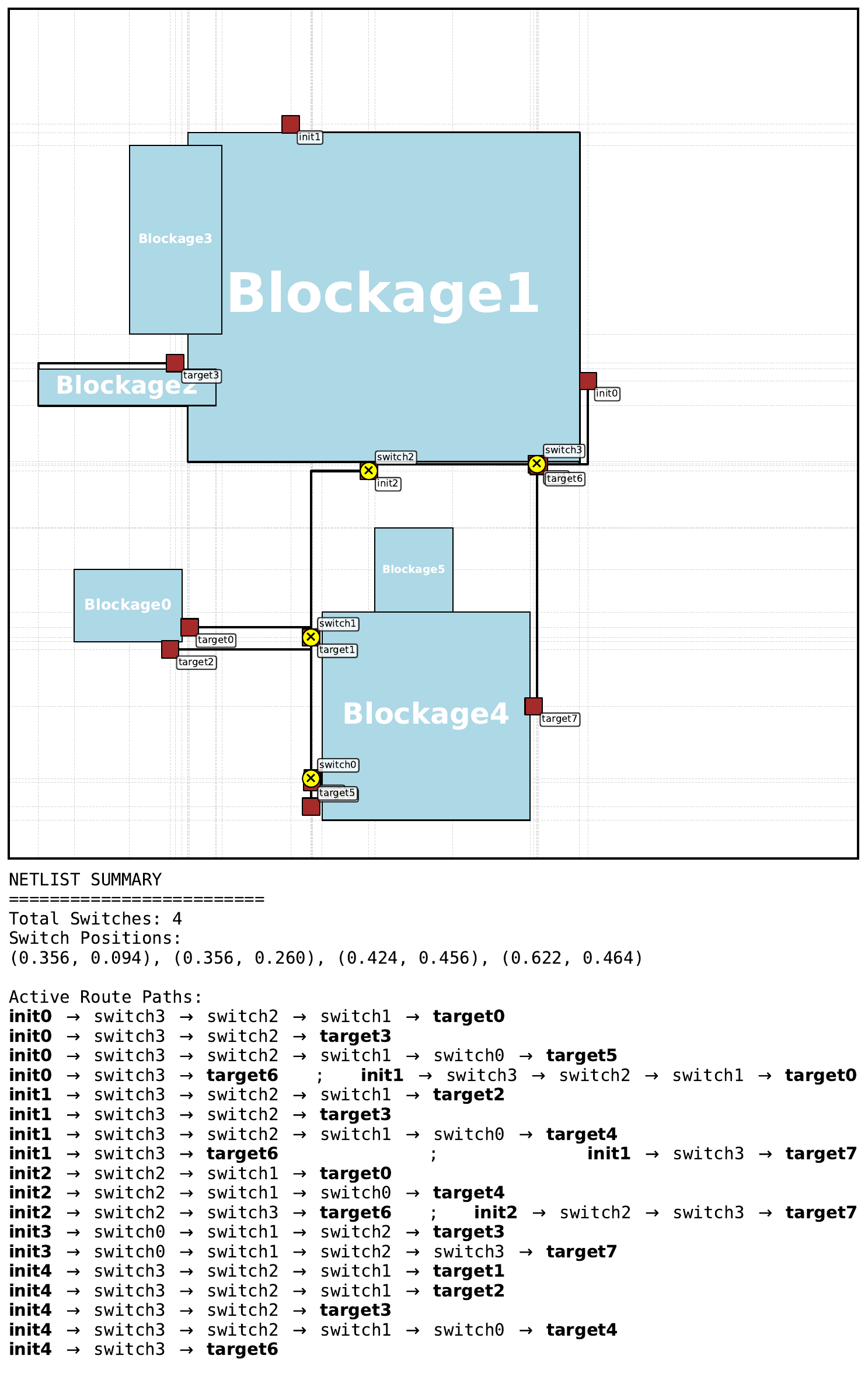}
        \caption*{MCTS}
    \end{subfigure}
\caption{Instance 1.}
\label{fig:best_pretrain_instance_1}
\end{figure*}

\begin{figure*}[h]
\centering
    \begin{subfigure}[t]{0.31\linewidth}
        \vspace{0pt}
        \centering
        \includegraphics[width=\linewidth]{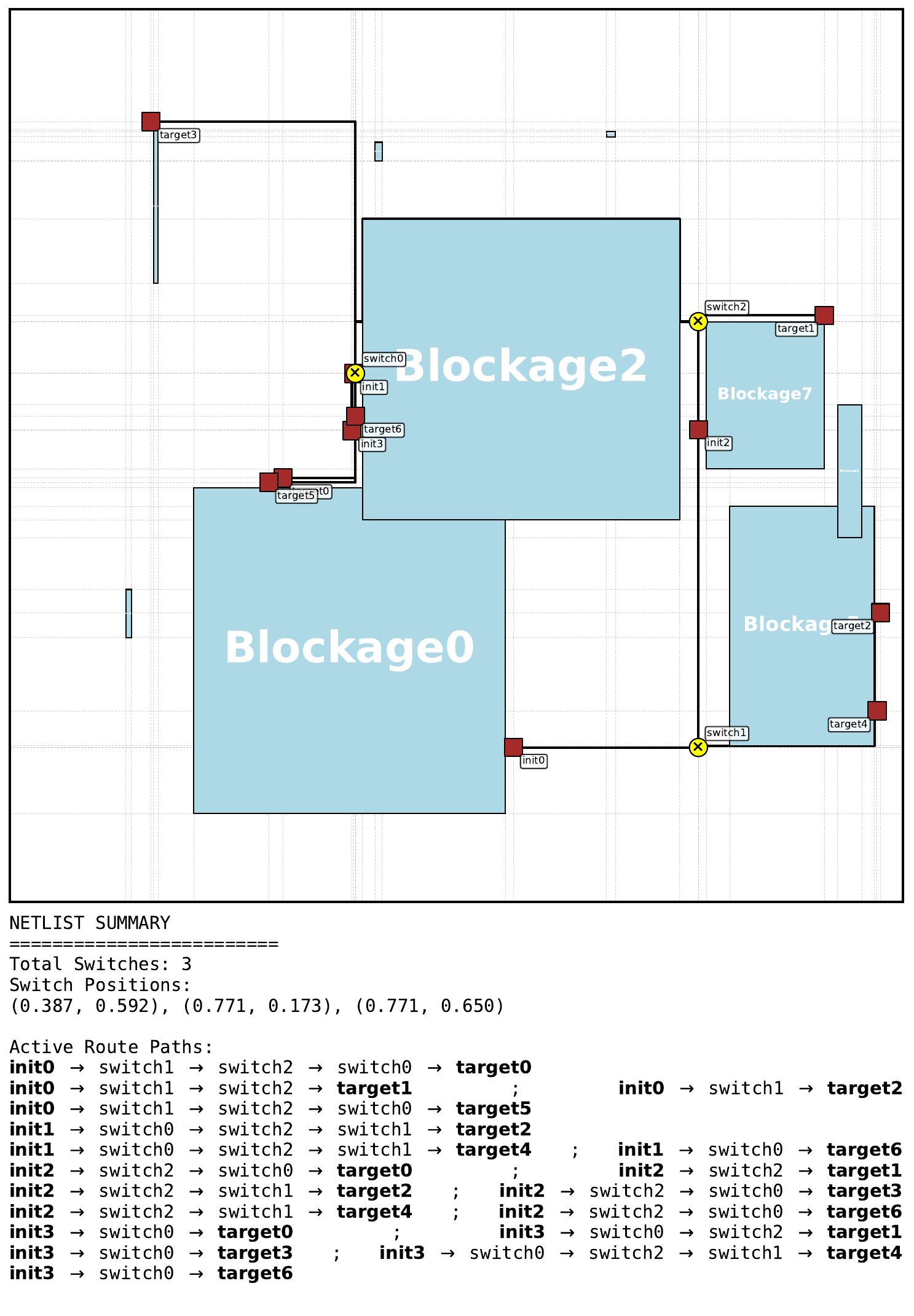}
        \caption*{Heuristic}
    \end{subfigure}
    \hfill
    \begin{subfigure}[t]{0.31\linewidth}
        \vspace{0pt}
        \centering
        \includegraphics[width=\linewidth]{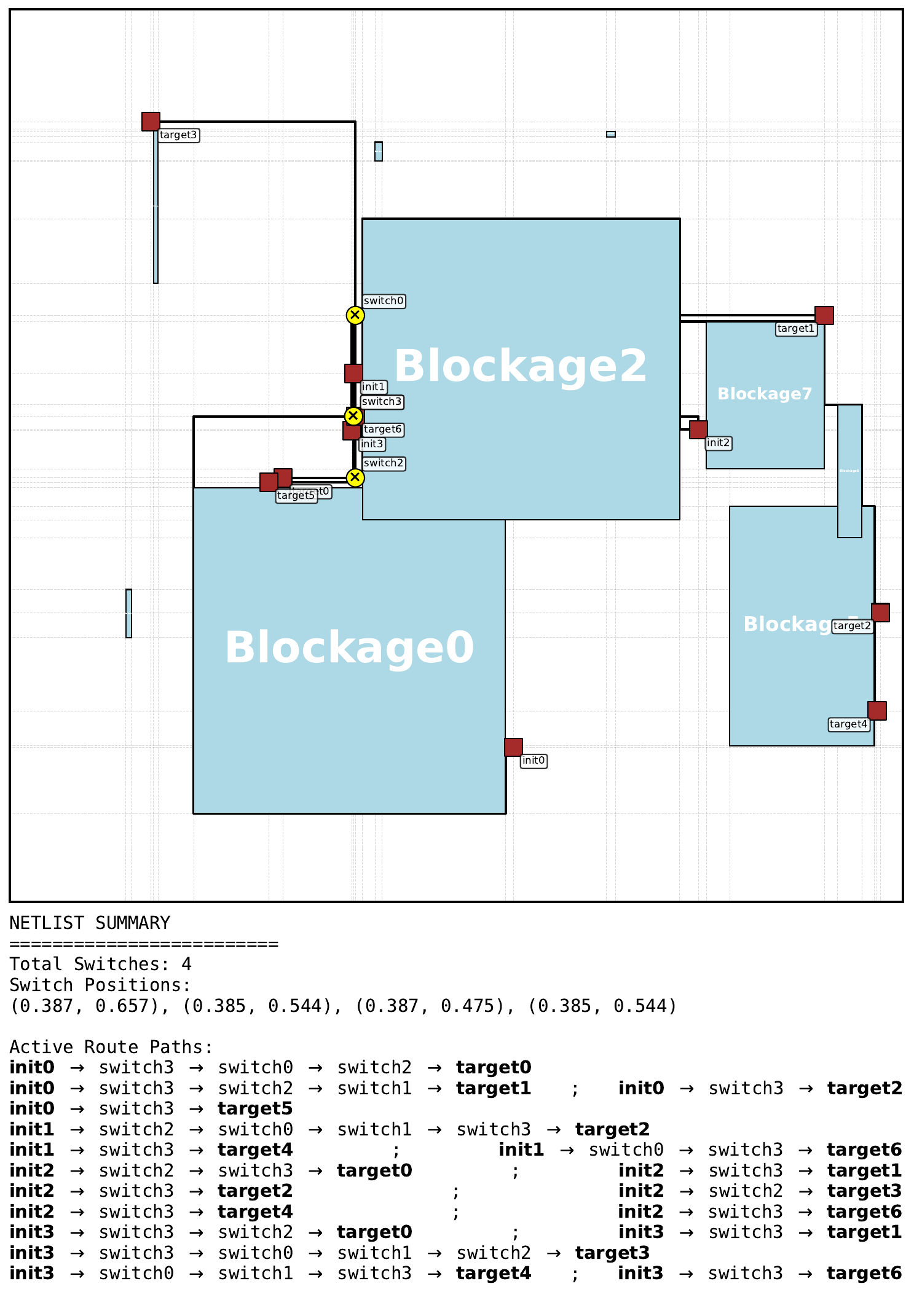}
        \caption*{Random search}
    \end{subfigure}
    \hfill
    \begin{subfigure}[t]{0.31\linewidth}
        \vspace{0pt}
        \centering
        \includegraphics[width=\linewidth]{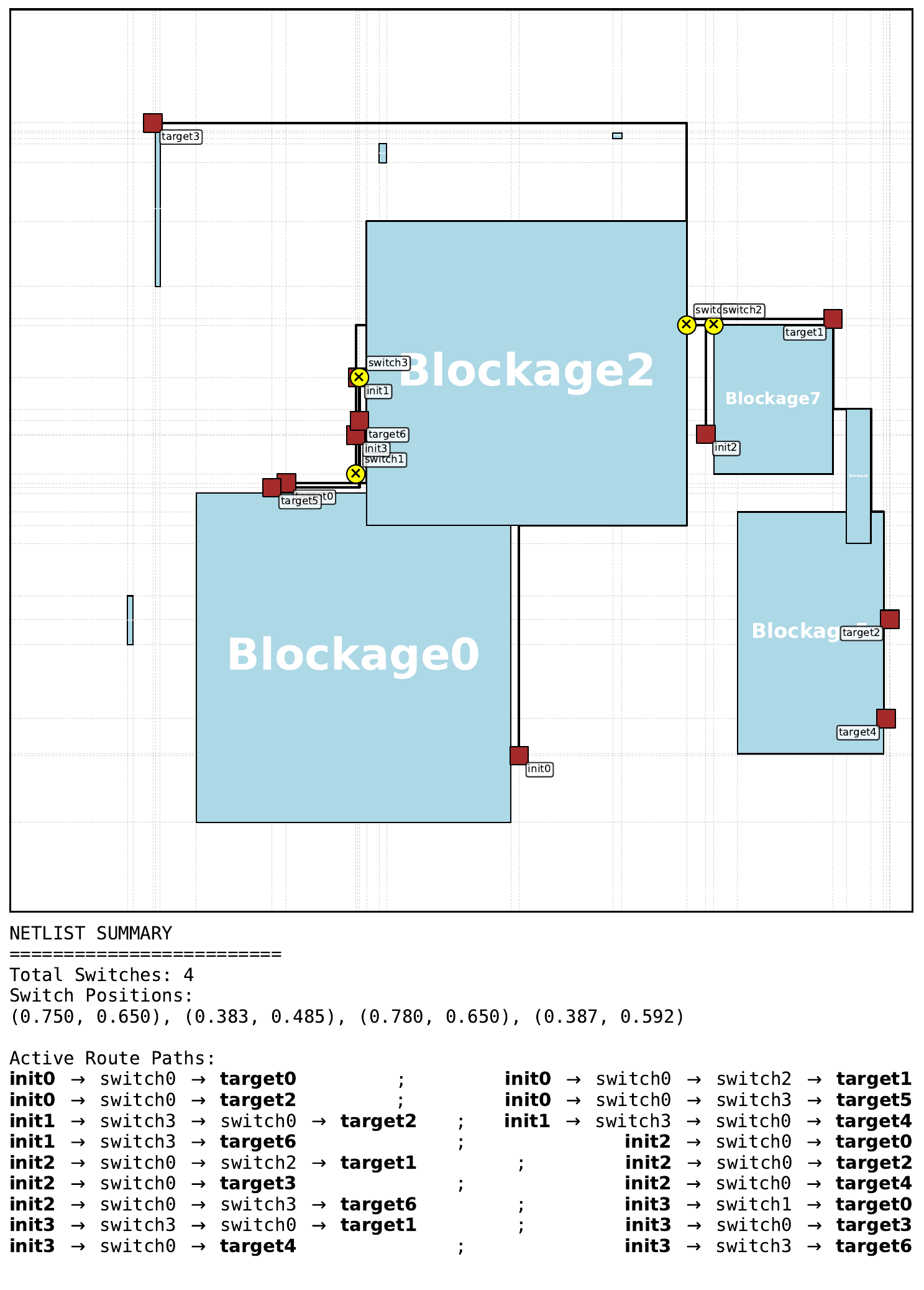}
        \caption*{Genetic algorithm}
    \end{subfigure}
    \\[0.6em]
    \begin{subfigure}[t]{0.31\linewidth}
        \vspace{0pt}
        \centering
        \includegraphics[width=\linewidth]{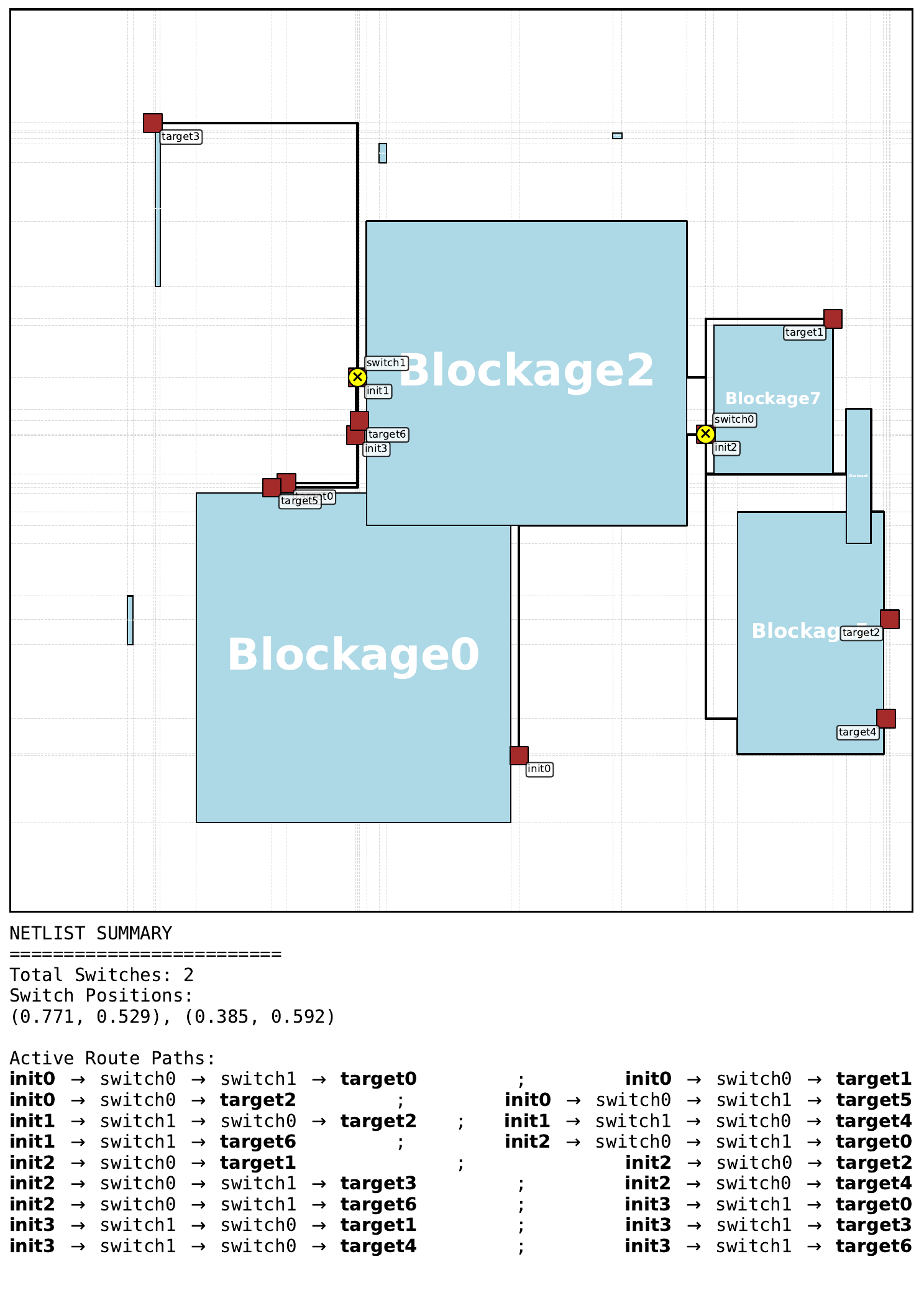}
        \caption*{PPO}
    \end{subfigure}
    \hspace{0.04\linewidth}
    \begin{subfigure}[t]{0.31\linewidth}
        \vspace{0pt}
        \centering
        \includegraphics[width=\linewidth]{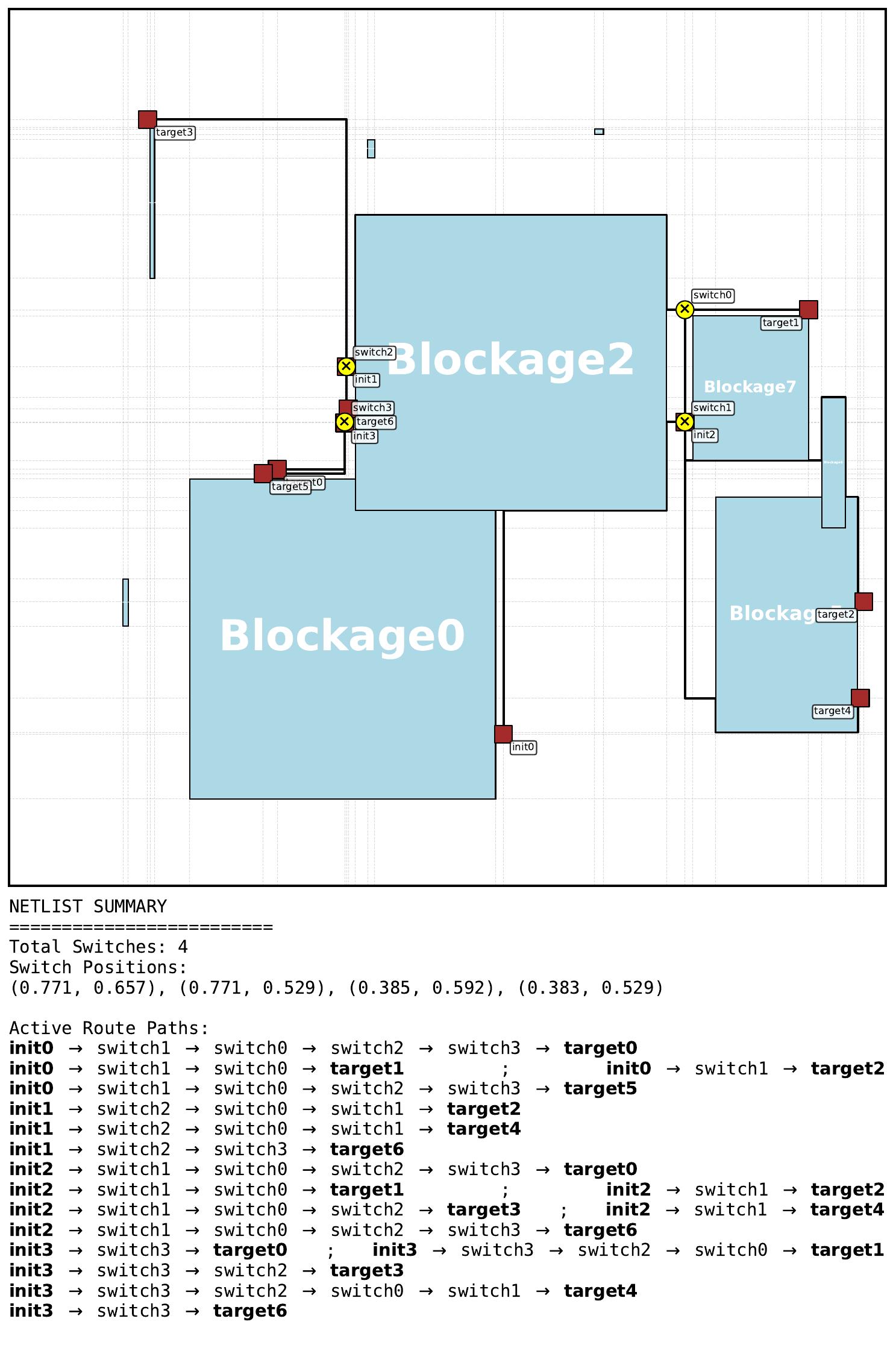}
        \caption*{MCTS}
    \end{subfigure}
\caption{Instance 2.}
\label{fig:best_pretrain_instance_2}
\end{figure*}

\begin{figure*}[h]
\centering
    \begin{subfigure}[t]{0.31\linewidth}
        \vspace{0pt}
        \centering
        \includegraphics[width=\linewidth]{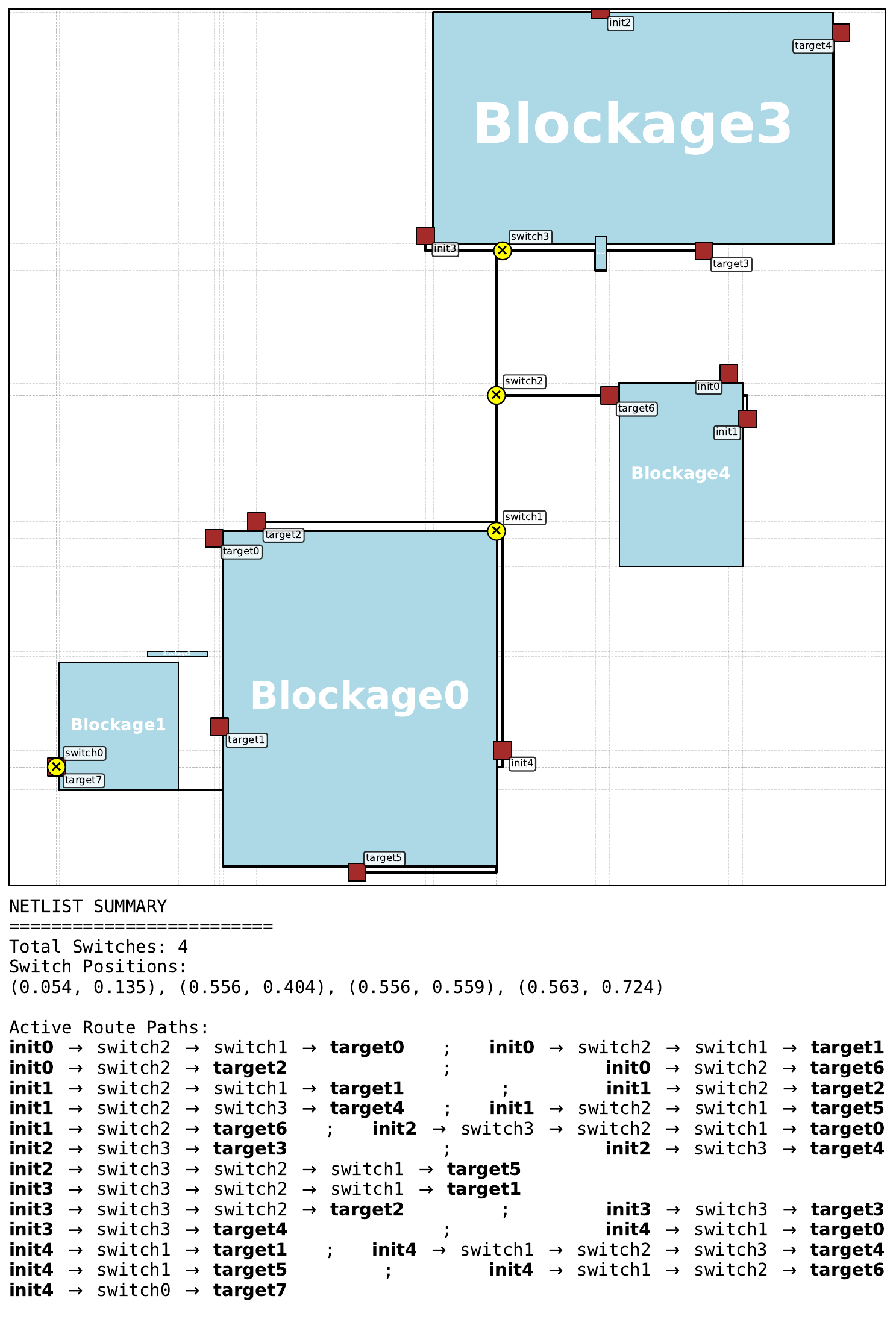}
        \caption*{Heuristic}
    \end{subfigure}
    \hfill
    \begin{subfigure}[t]{0.31\linewidth}
        \vspace{0pt}
        \centering
        \includegraphics[width=\linewidth]{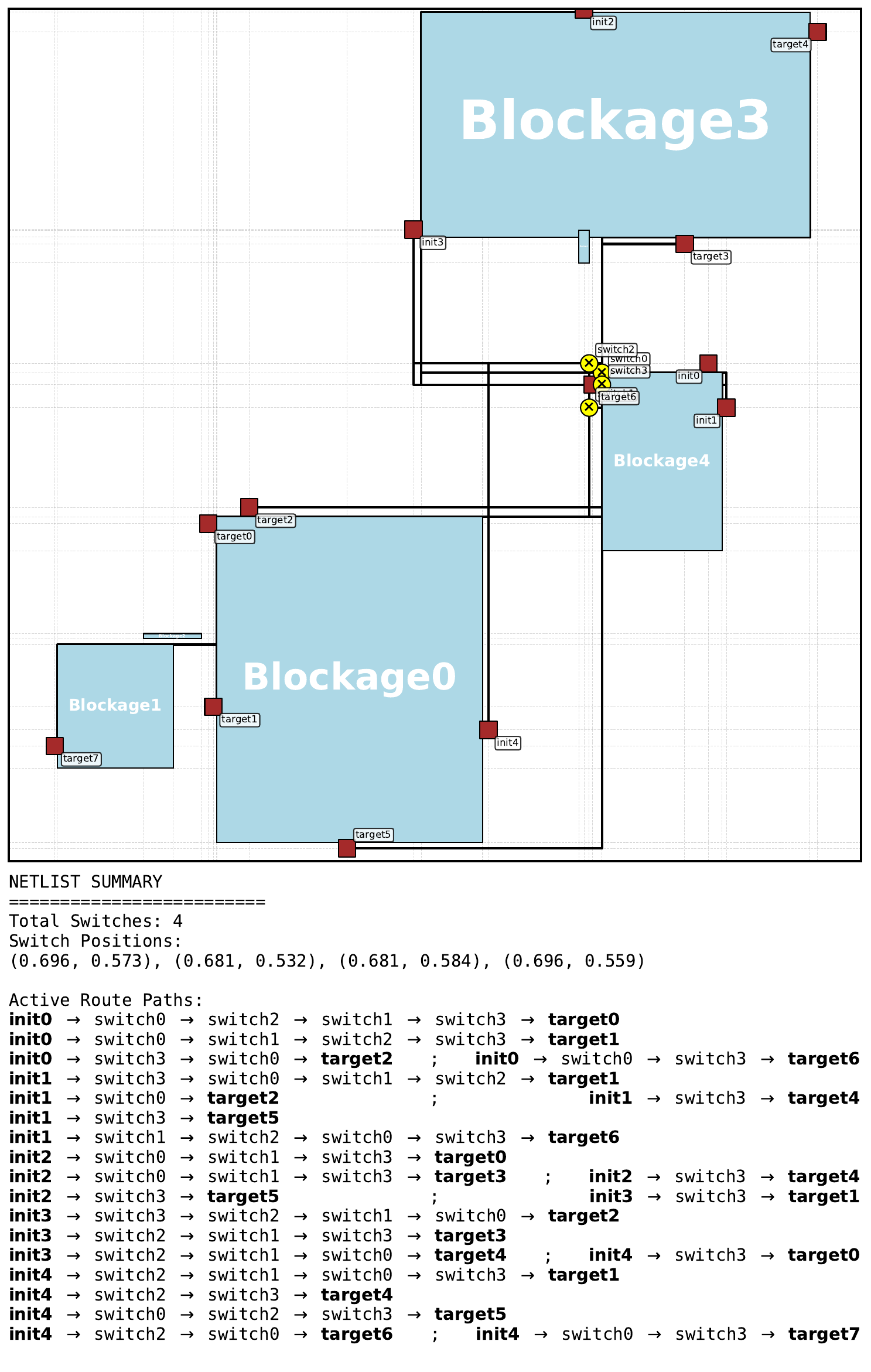}
        \caption*{Random search}
    \end{subfigure}
    \hfill
    \begin{subfigure}[t]{0.31\linewidth}
        \vspace{0pt}
        \centering
        \includegraphics[width=\linewidth]{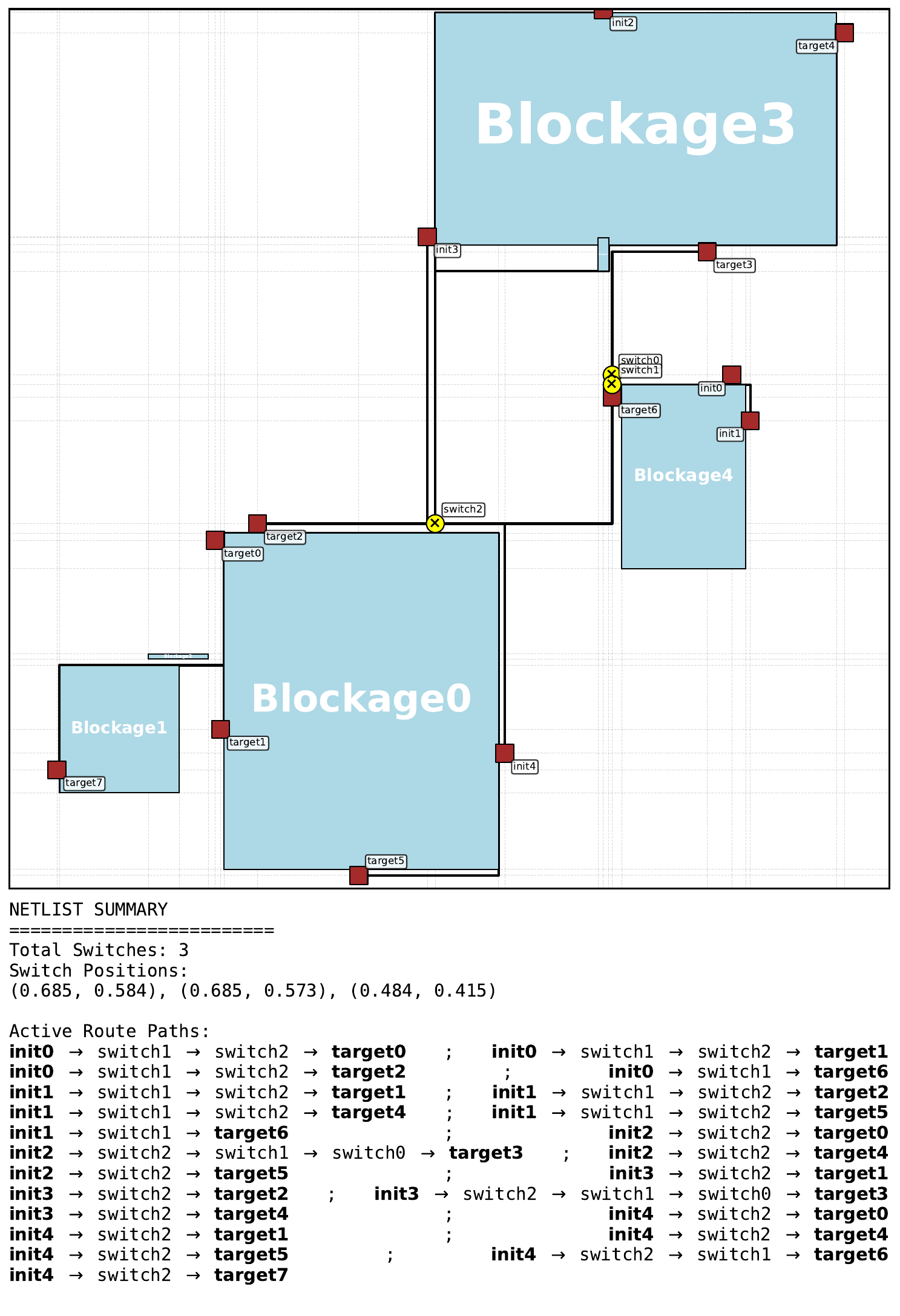}
        \caption*{Genetic algorithm}
    \end{subfigure}
    \\[0.6em]
    \begin{subfigure}[t]{0.31\linewidth}
        \vspace{0pt}
        \centering
        \includegraphics[width=\linewidth]{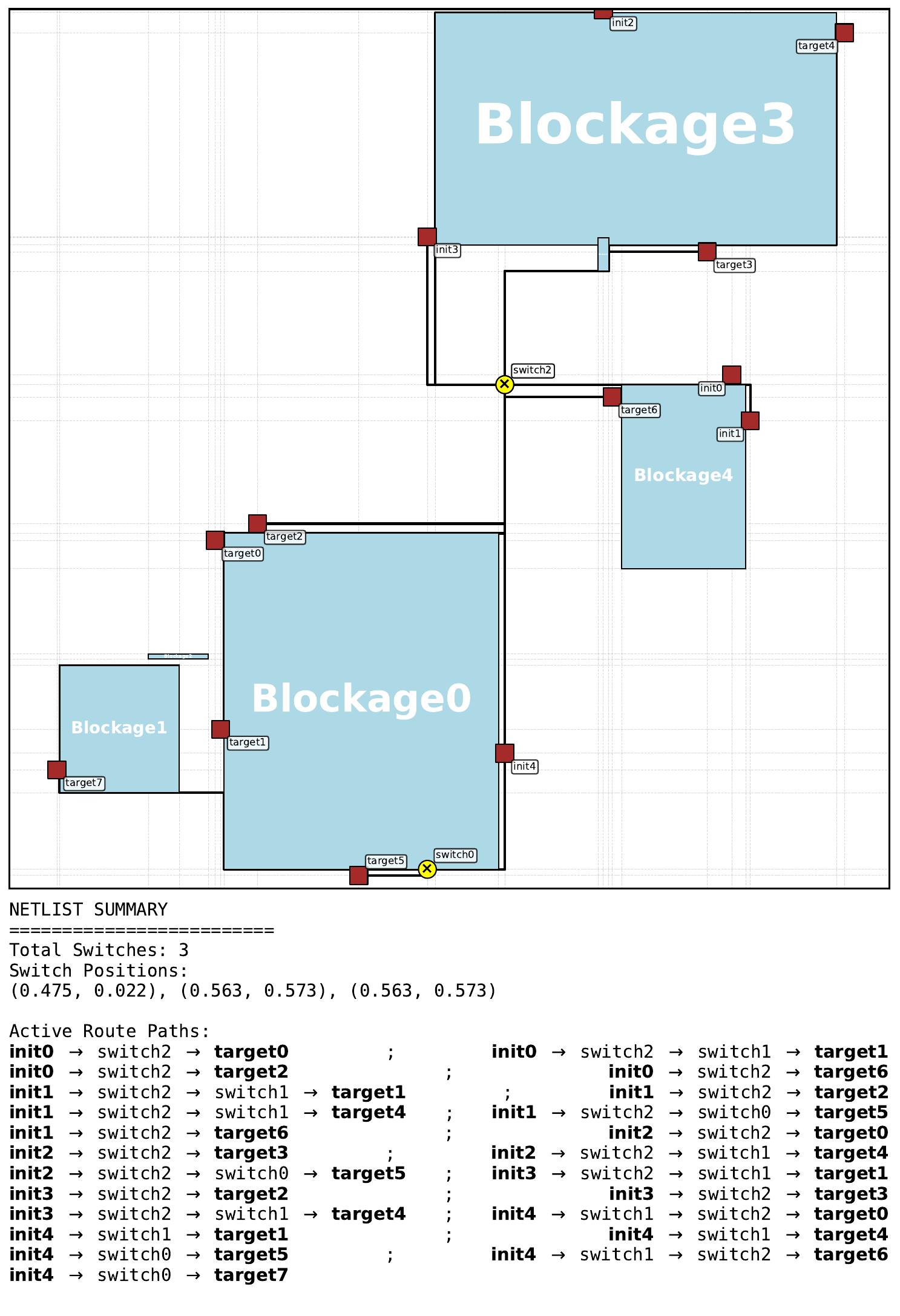}
        \caption*{PPO}
    \end{subfigure}
    \hspace{0.04\linewidth}
    \begin{subfigure}[t]{0.31\linewidth}
        \vspace{0pt}
        \centering
        \includegraphics[width=\linewidth]{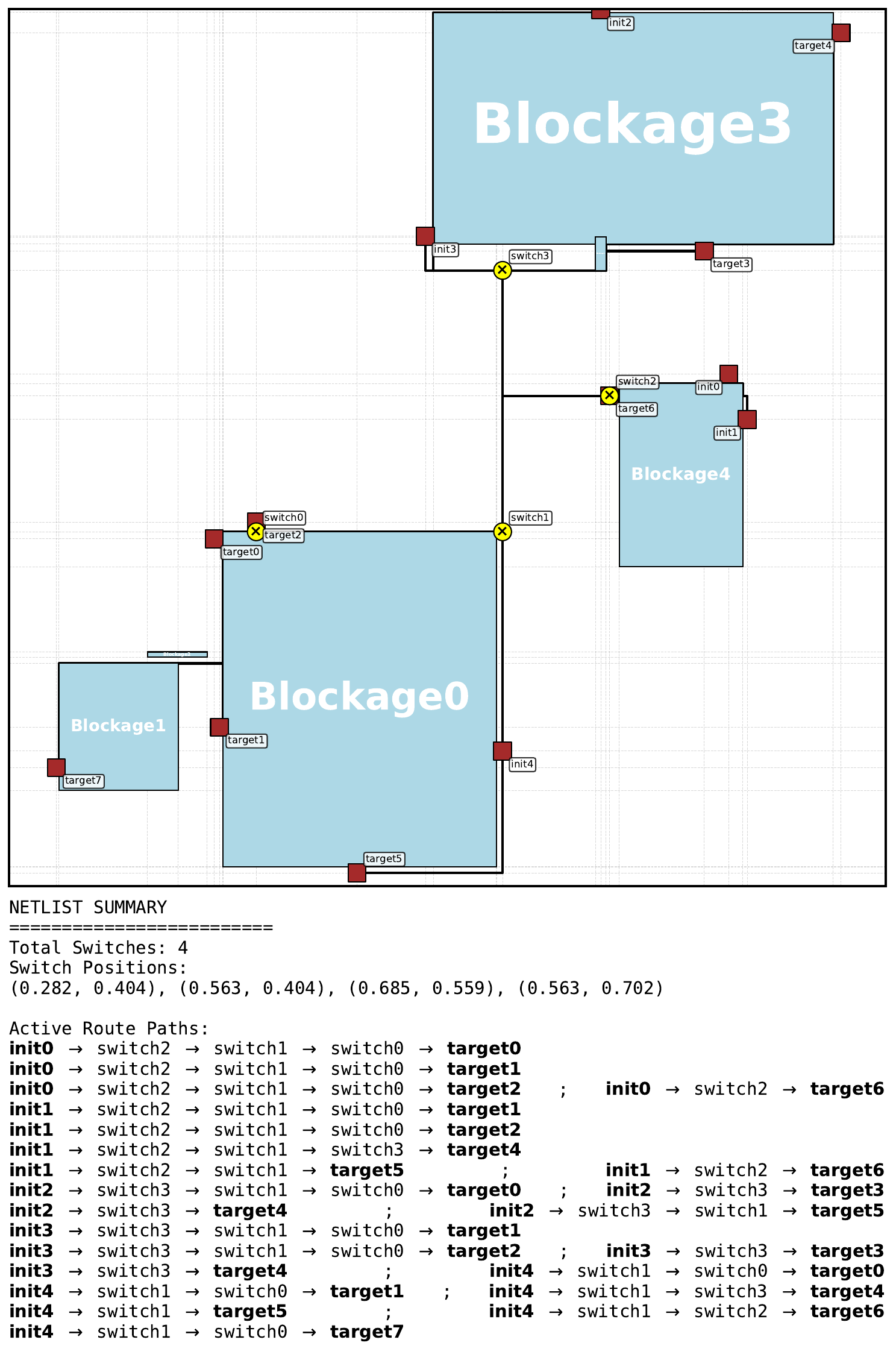}
        \caption*{MCTS}
    \end{subfigure}
\caption{Instance 3.}
\label{fig:best_pretrain_instance_3}
\end{figure*}

\begin{figure*}[h]
\centering
    \begin{subfigure}[t]{0.31\linewidth}
        \vspace{0pt}
        \centering
        \includegraphics[width=\linewidth]{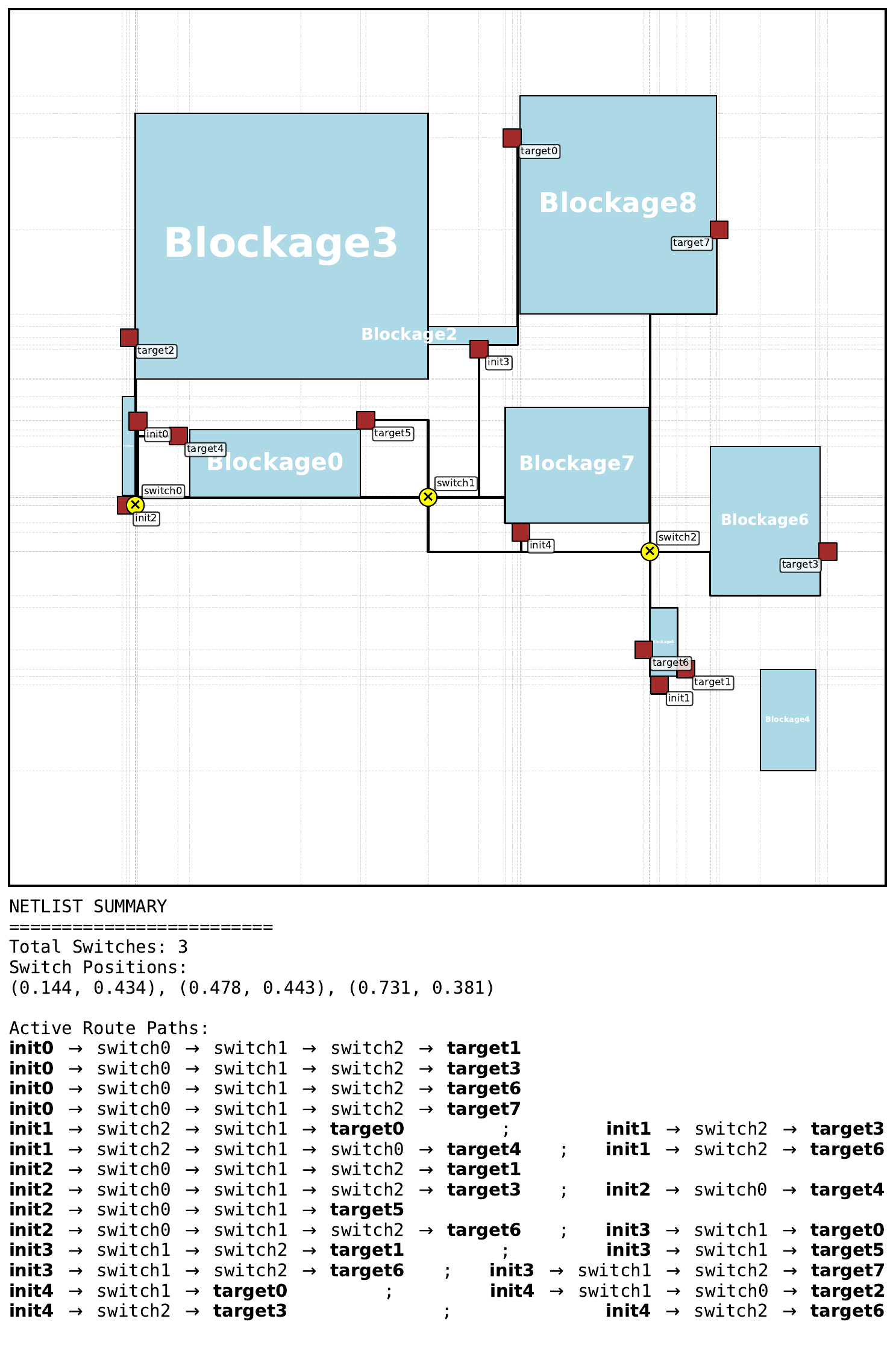}
        \caption*{Heuristic}
    \end{subfigure}
    \hfill
    \begin{subfigure}[t]{0.31\linewidth}
        \vspace{0pt}
        \centering
        \includegraphics[width=\linewidth]{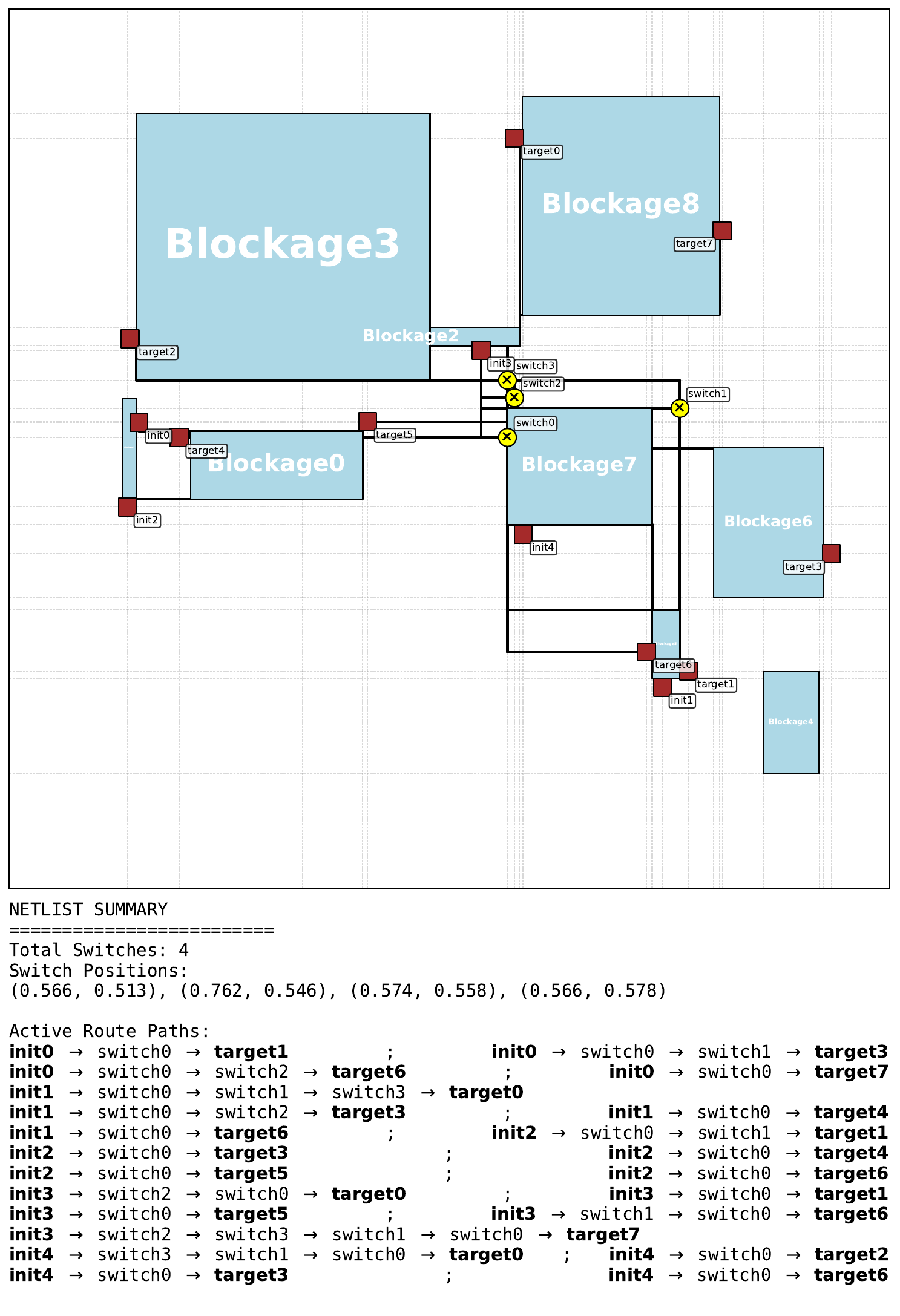}
        \caption*{Random search}
    \end{subfigure}
    \hfill
    \begin{subfigure}[t]{0.31\linewidth}
        \vspace{0pt}
        \centering
        \includegraphics[width=\linewidth]{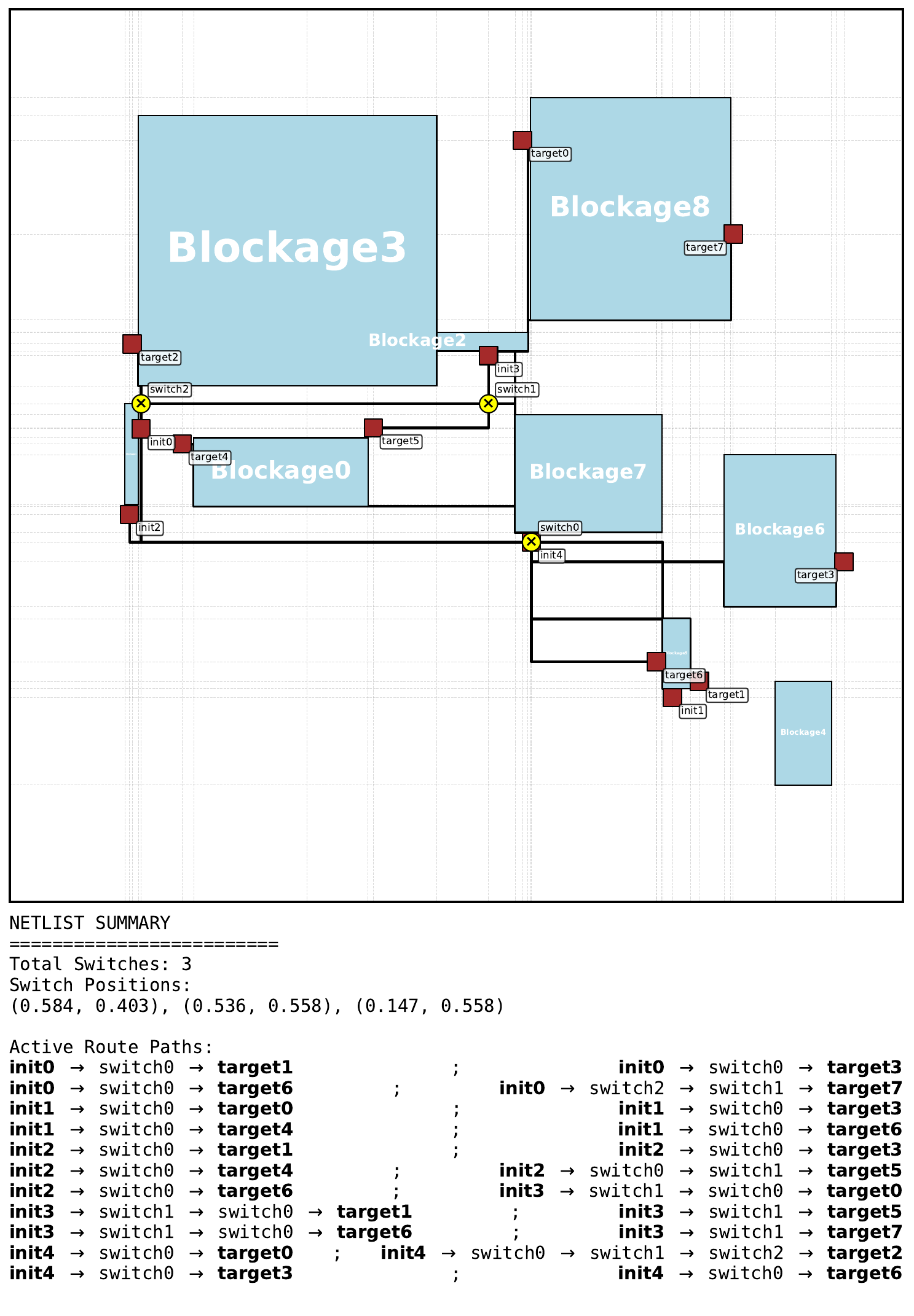}
        \caption*{Genetic algorithm}
    \end{subfigure}
    \\[0.6em]
    \begin{subfigure}[t]{0.31\linewidth}
        \vspace{0pt}
        \centering
        \includegraphics[width=\linewidth]{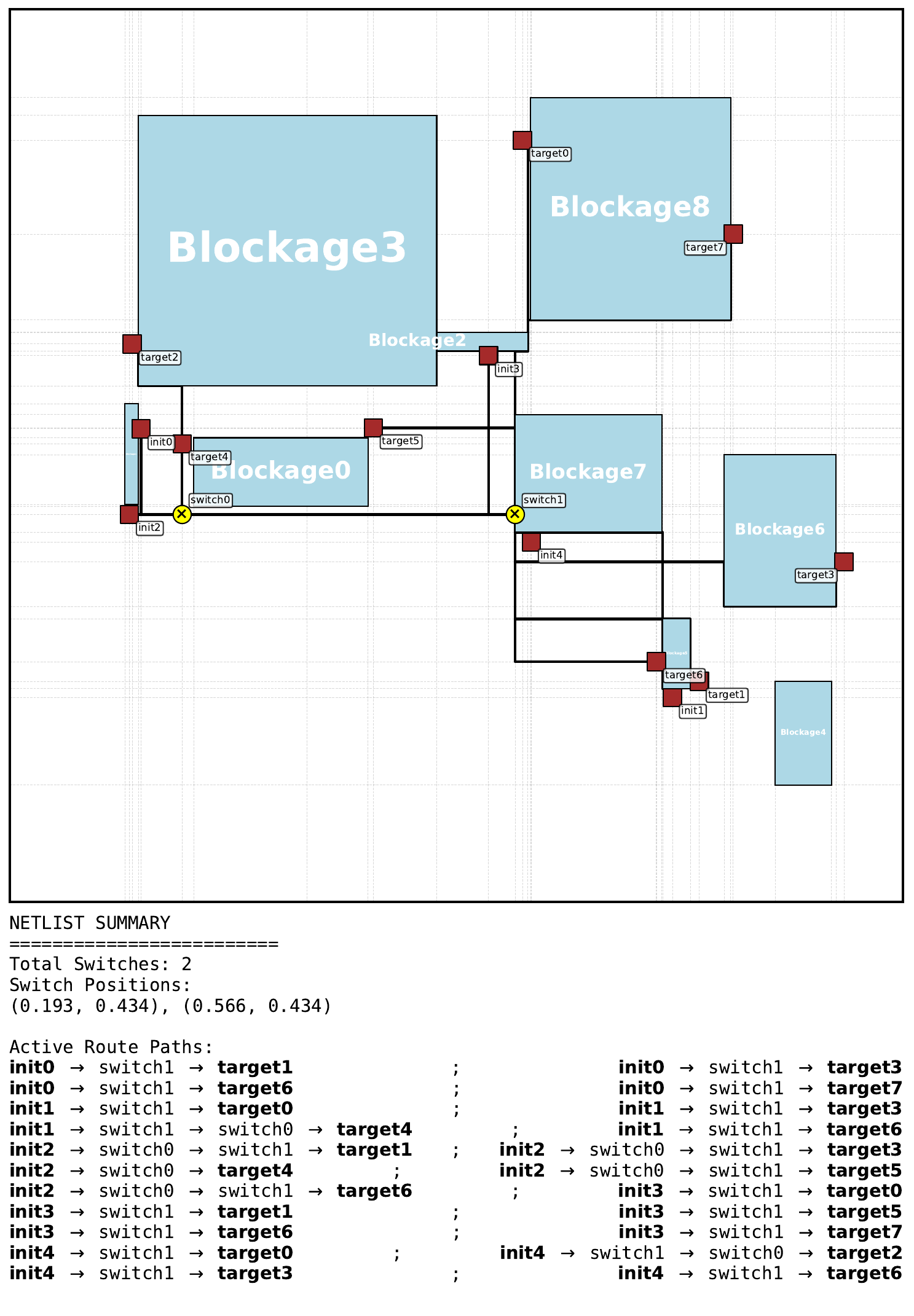}
        \caption*{PPO}
    \end{subfigure}
    \hspace{0.04\linewidth}
    \begin{subfigure}[t]{0.31\linewidth}
        \vspace{0pt}
        \centering
        \includegraphics[width=\linewidth]{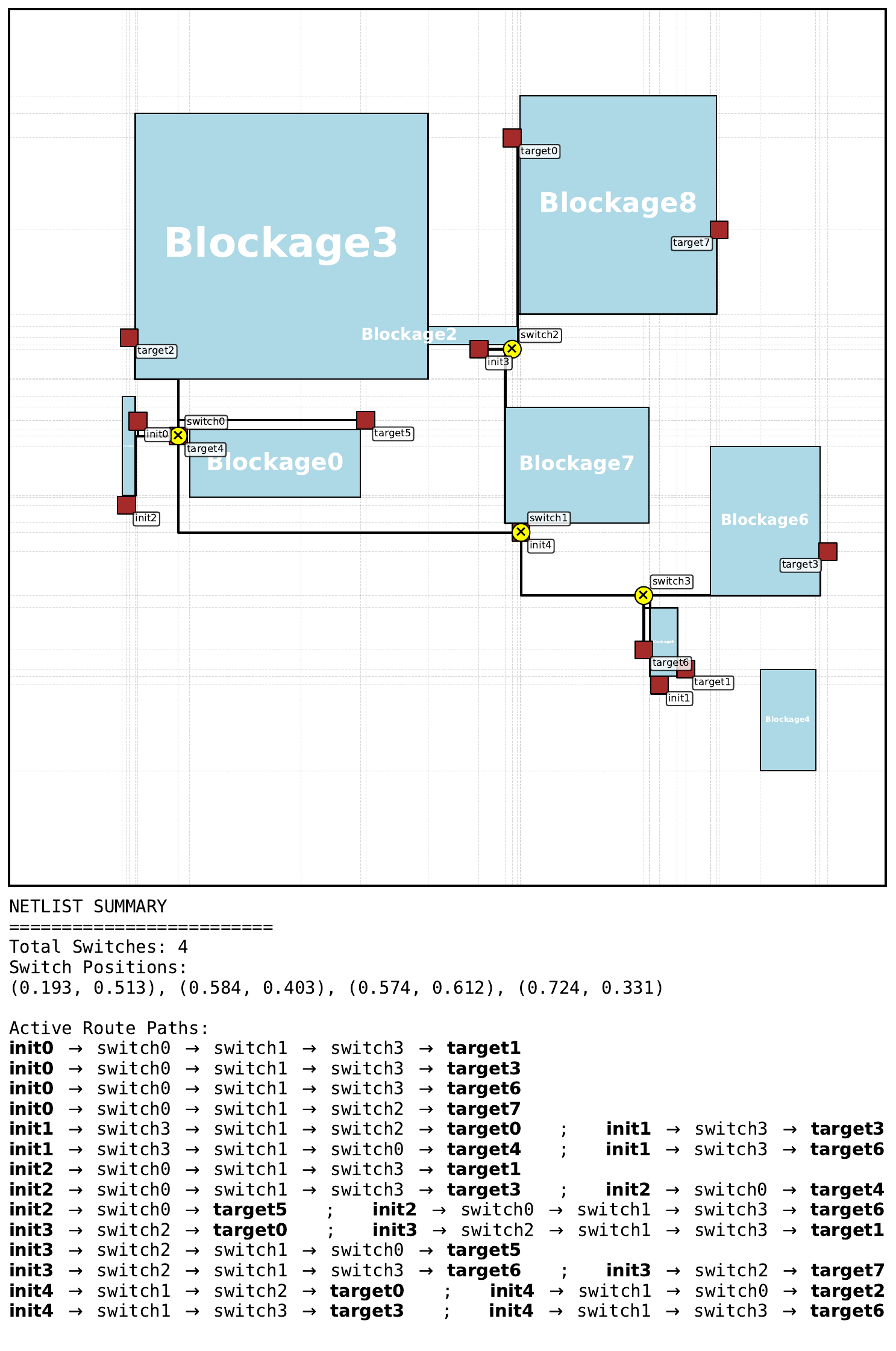}
        \caption*{MCTS}
    \end{subfigure}
\caption{Instance 4.}
\label{fig:best_pretrain_instance_4}
\end{figure*}

\begin{figure*}[h]
\centering
    \begin{subfigure}[t]{0.31\linewidth}
        \vspace{0pt}
        \centering
        \includegraphics[width=\linewidth]{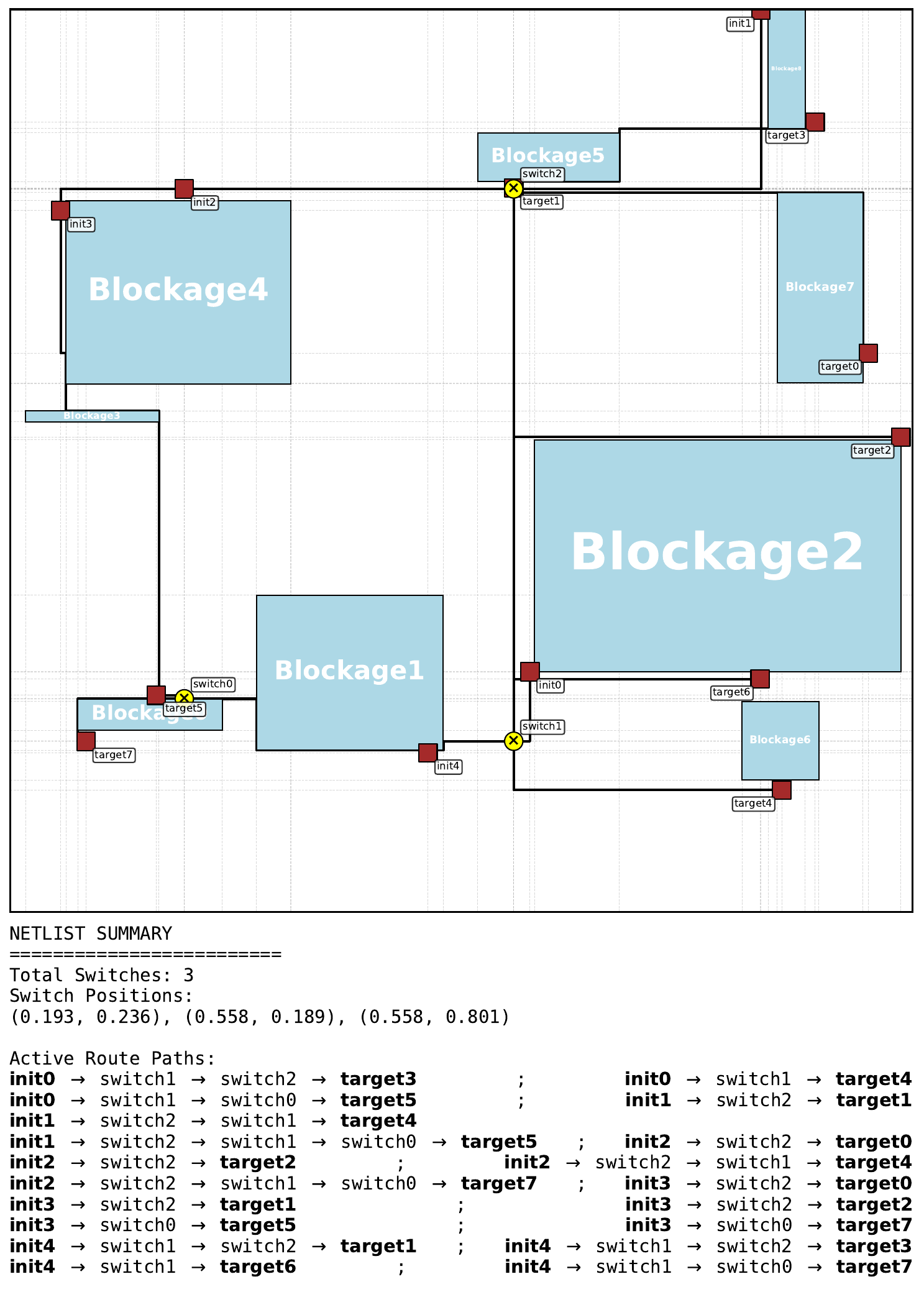}
        \caption*{Heuristic}
    \end{subfigure}
    \hfill
    \begin{subfigure}[t]{0.31\linewidth}
        \vspace{0pt}
        \centering
        \includegraphics[width=\linewidth]{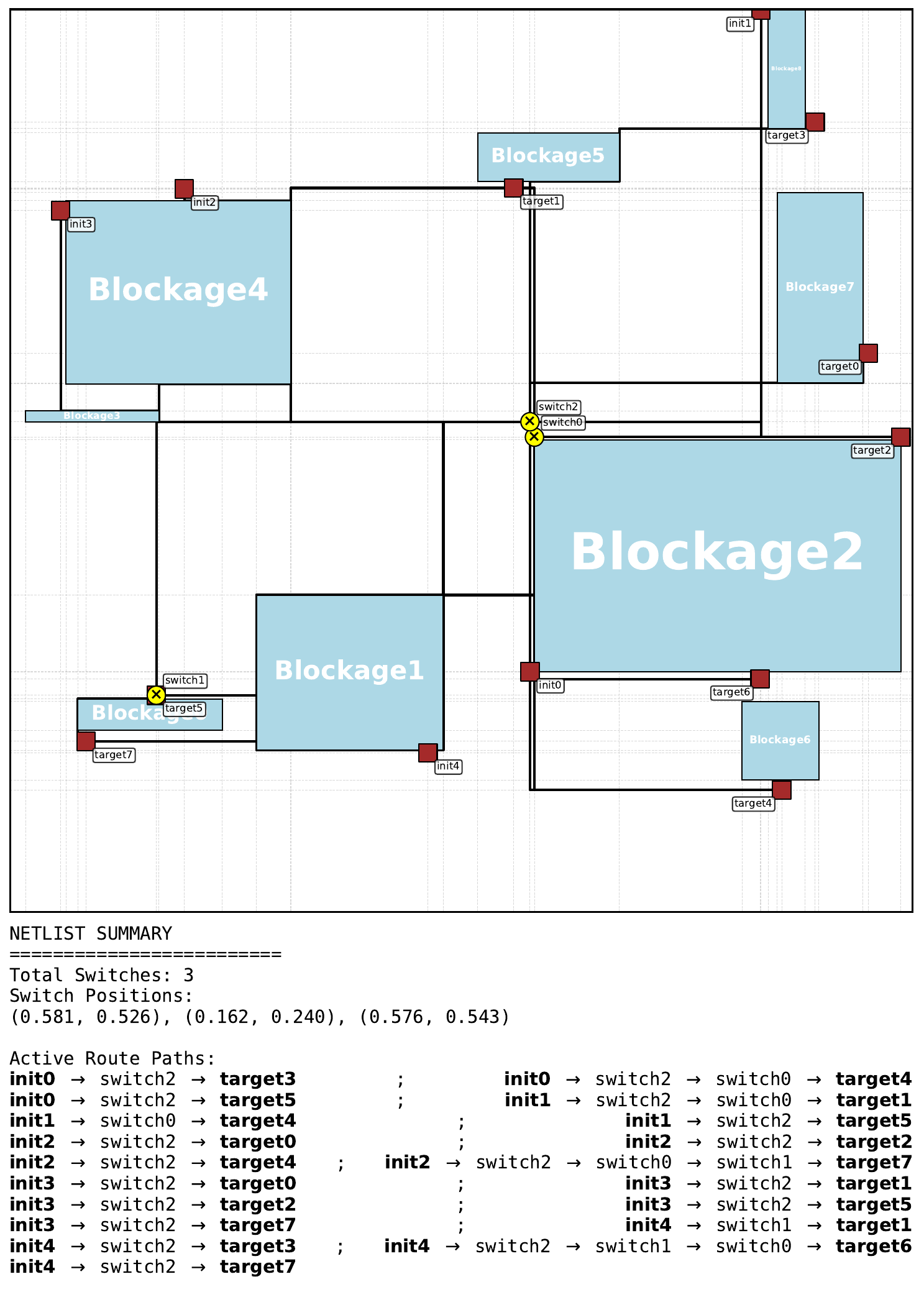}
        \caption*{Random search}
    \end{subfigure}
    \hfill
    \begin{subfigure}[t]{0.31\linewidth}
        \vspace{0pt}
        \centering
        \includegraphics[width=\linewidth]{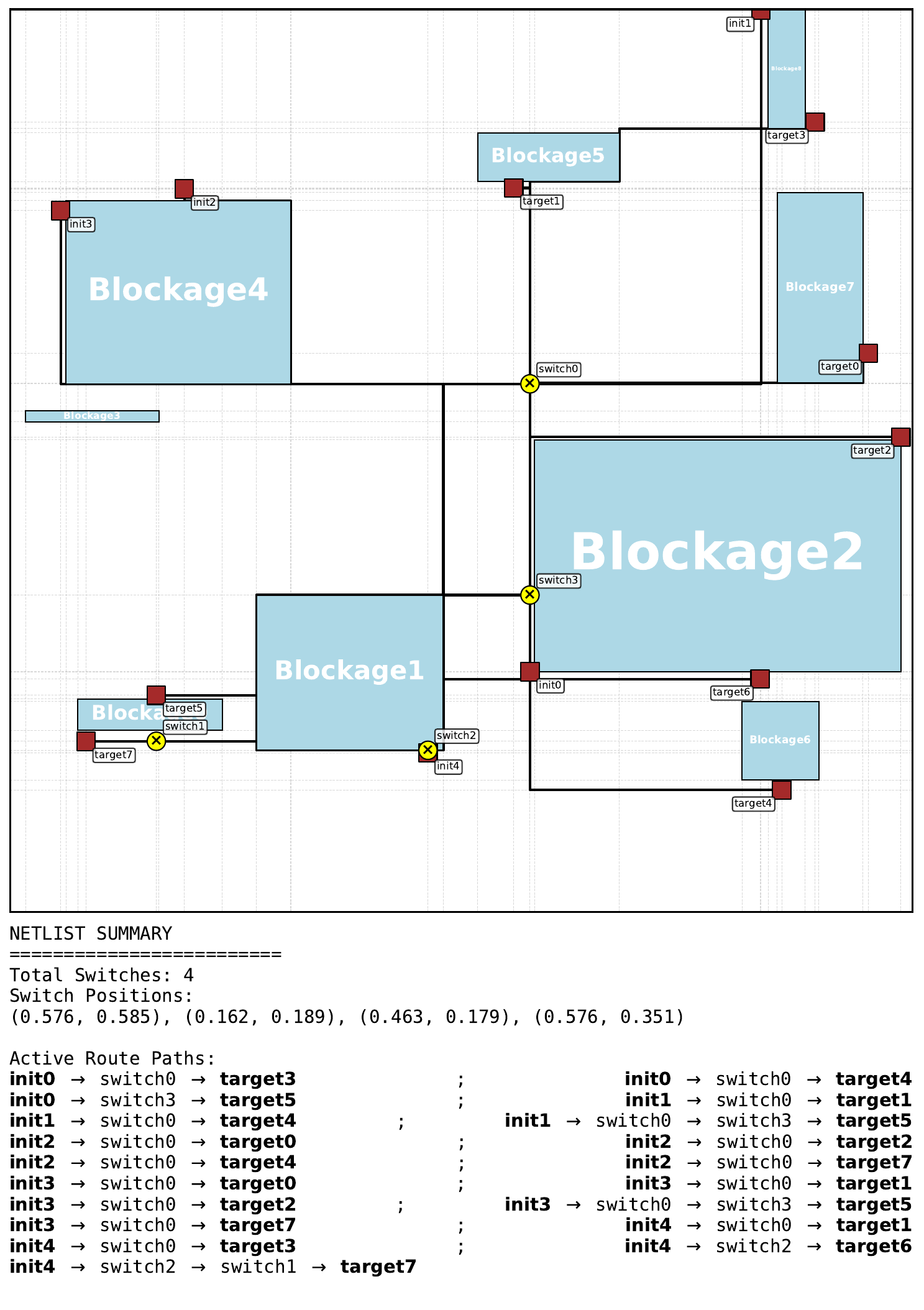}
        \caption*{Genetic algorithm}
    \end{subfigure}
    \\[0.6em]
    \begin{subfigure}[t]{0.31\linewidth}
        \vspace{0pt}
        \centering
        \includegraphics[width=\linewidth]{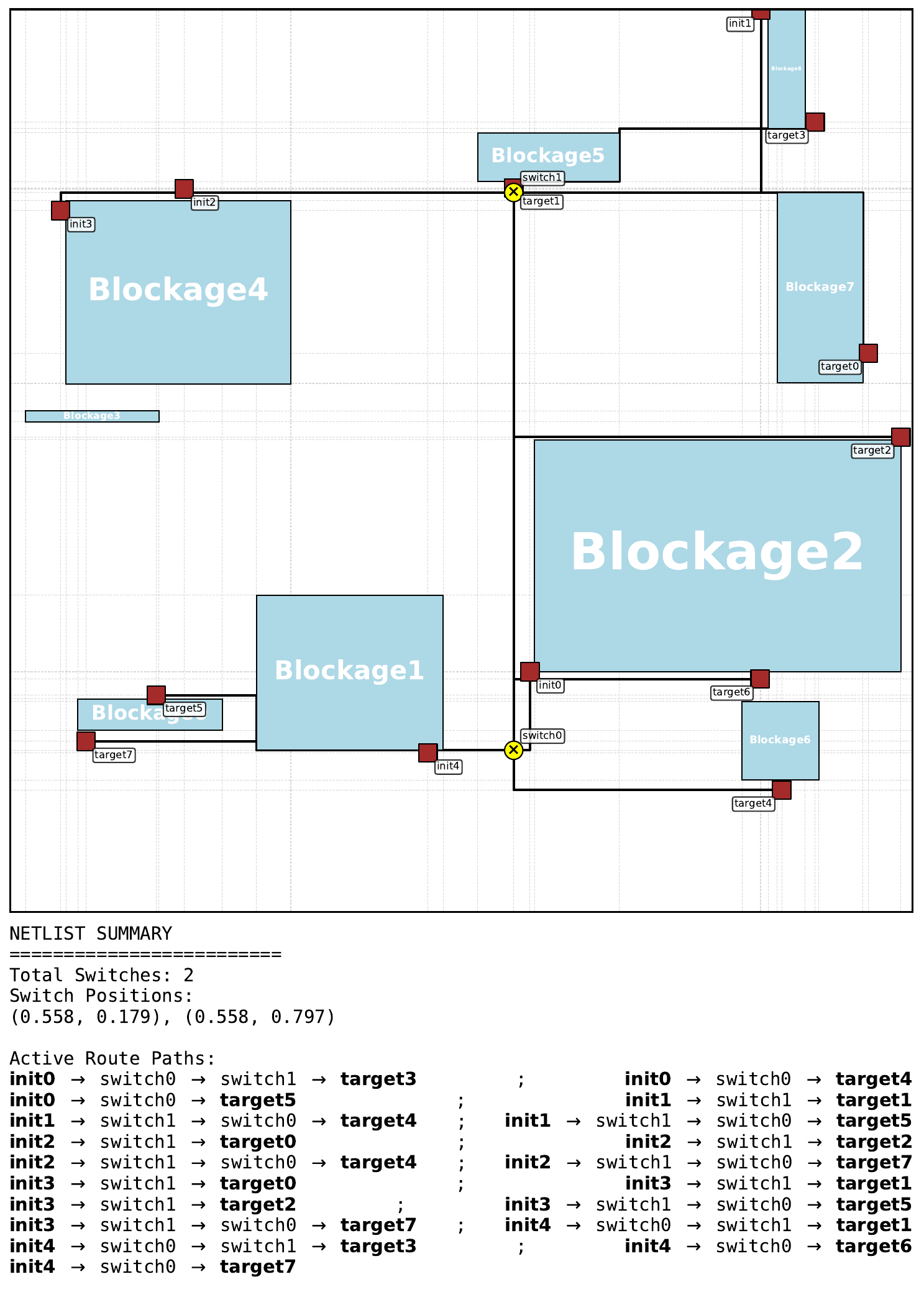}
        \caption*{PPO}
    \end{subfigure}
    \hspace{0.04\linewidth}
    \begin{subfigure}[t]{0.31\linewidth}
        \vspace{0pt}
        \centering
        \includegraphics[width=\linewidth]{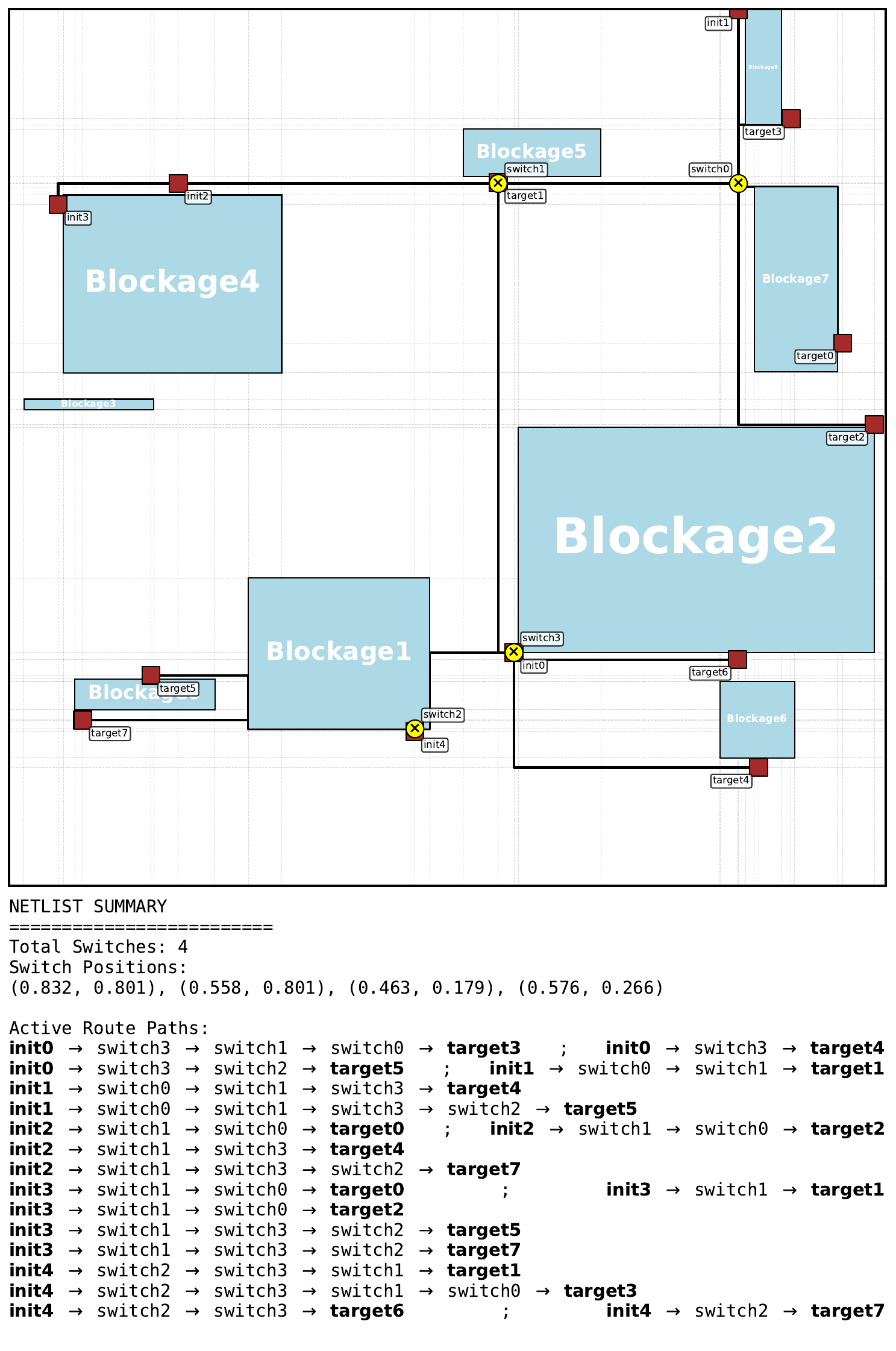}
        \caption*{MCTS}
    \end{subfigure}
\caption{Instance 5.}
\label{fig:best_pretrain_instance_5}
\end{figure*}

\begin{figure*}[h]
\centering
    \begin{subfigure}[t]{0.31\linewidth}
        \vspace{0pt}
        \centering
        \includegraphics[width=\linewidth]{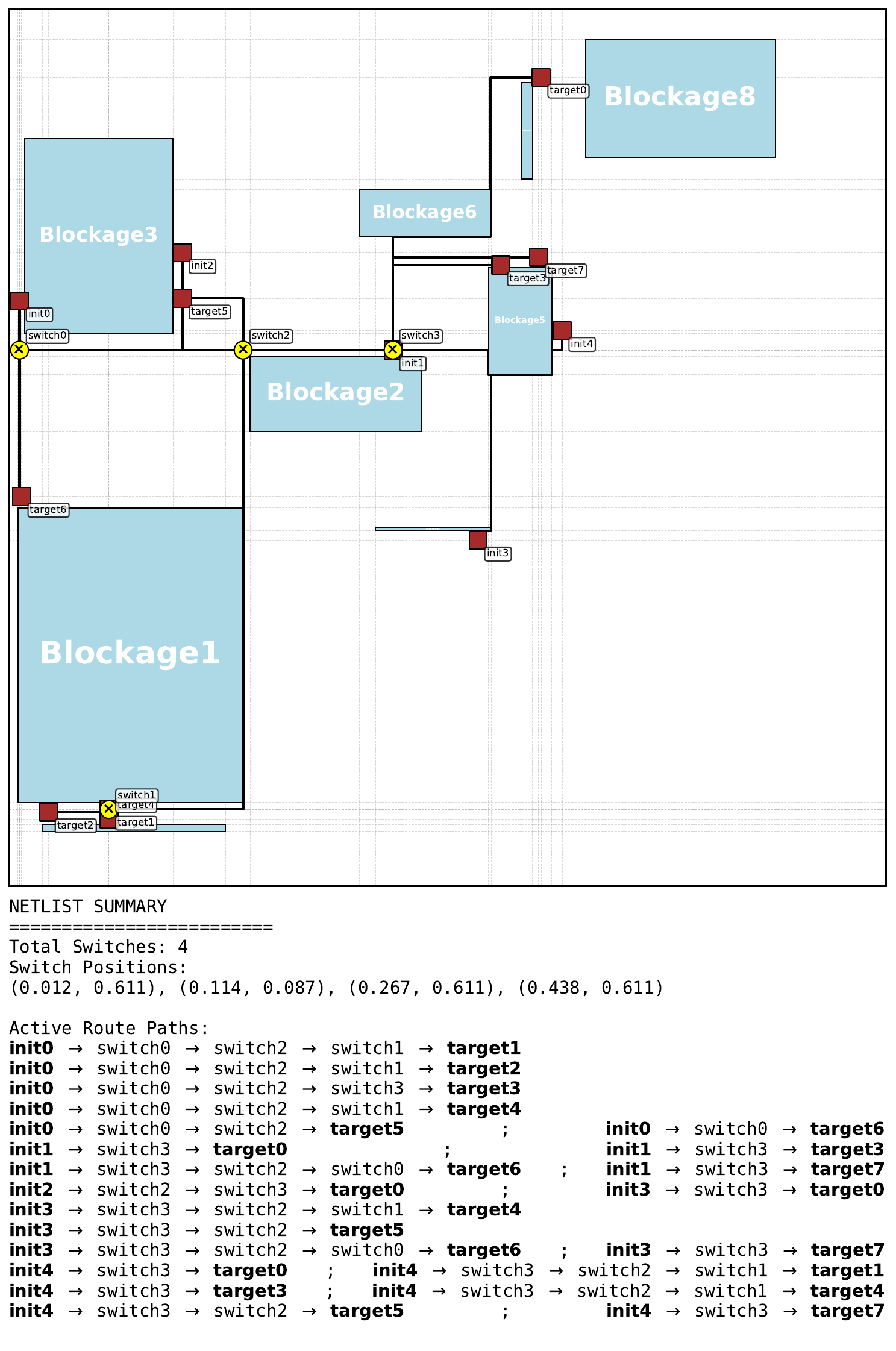}
        \caption*{Heuristic}
    \end{subfigure}
    \hfill
    \begin{subfigure}[t]{0.31\linewidth}
        \vspace{0pt}
        \centering
        \includegraphics[width=\linewidth]{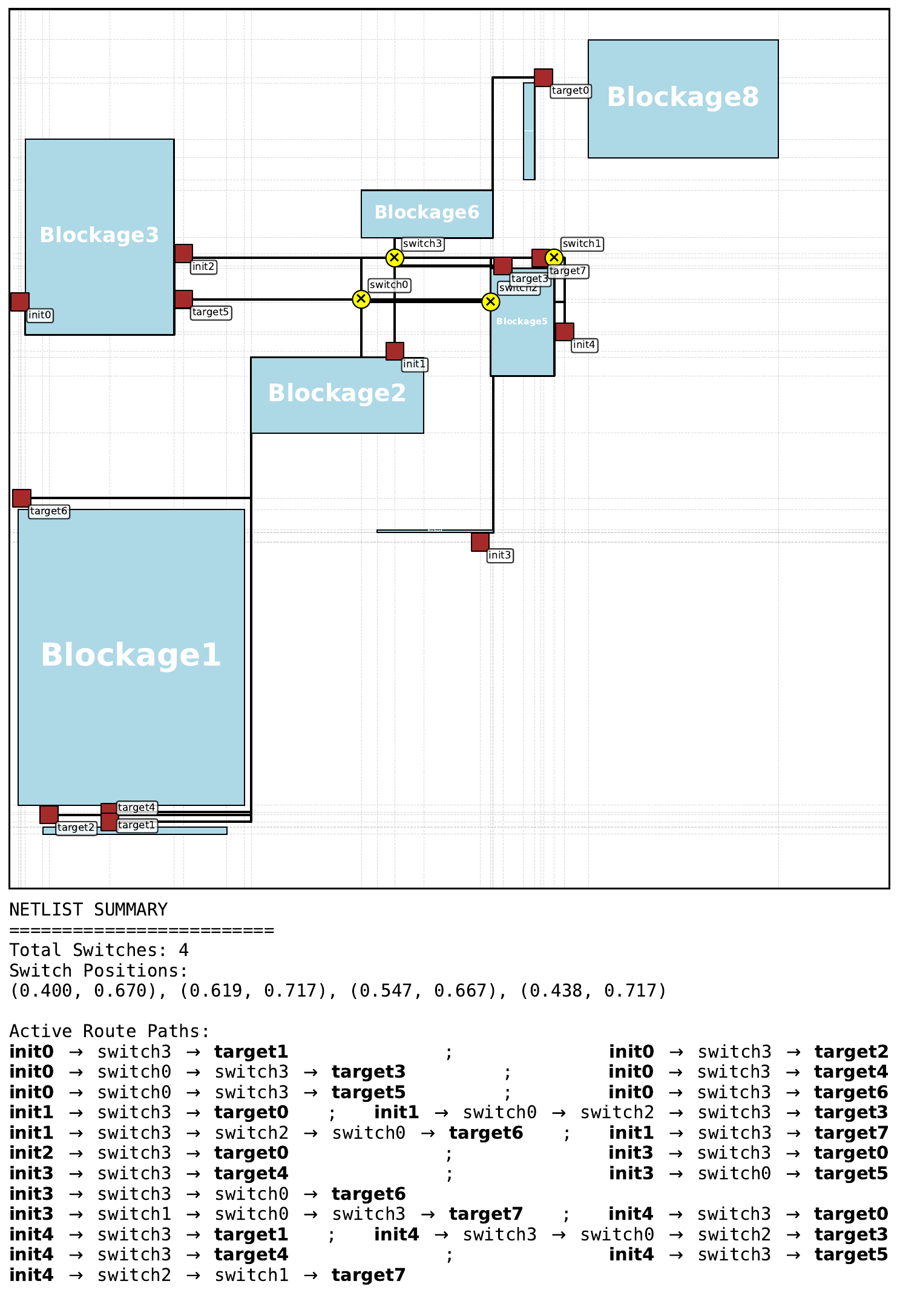}
        \caption*{Random search}
    \end{subfigure}
    \hfill
    \begin{subfigure}[t]{0.31\linewidth}
        \vspace{0pt}
        \centering
        \includegraphics[width=\linewidth]{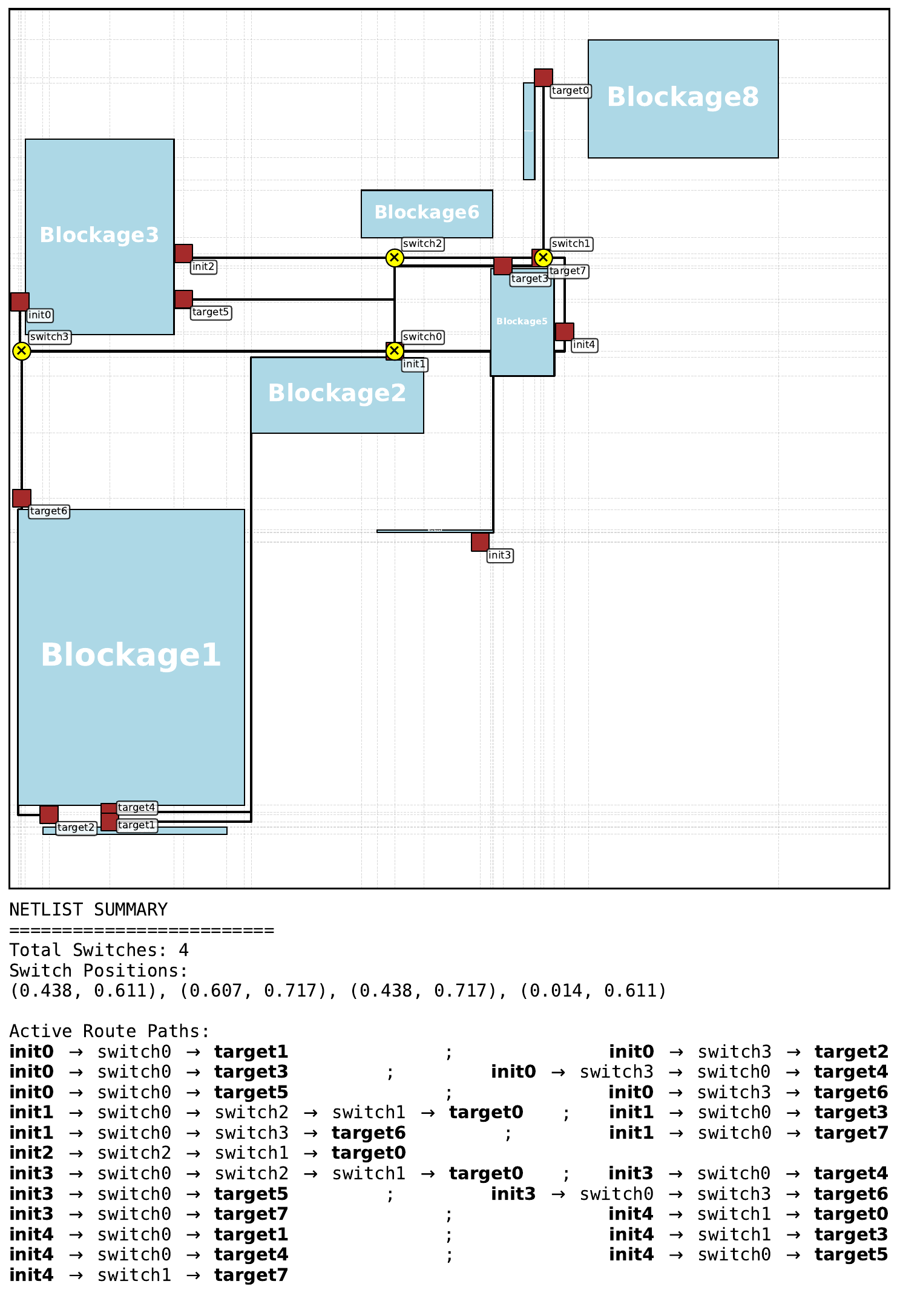}
        \caption*{Genetic algorithm}
    \end{subfigure}
    \\[0.6em]
    \begin{subfigure}[t]{0.31\linewidth}
        \vspace{0pt}
        \centering
        \includegraphics[width=\linewidth]{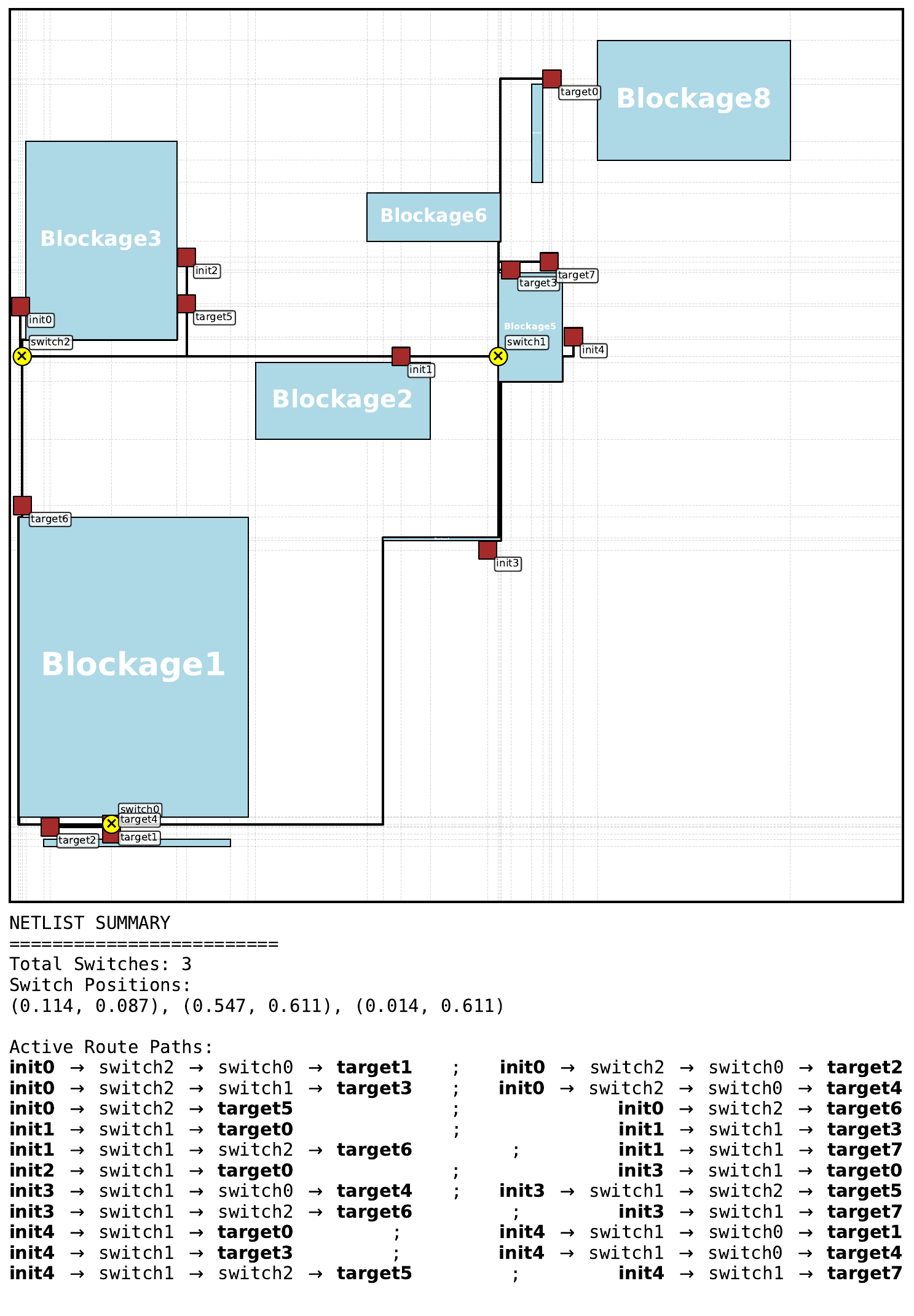}
        \caption*{PPO}
    \end{subfigure}
    \hspace{0.04\linewidth}
    \begin{subfigure}[t]{0.31\linewidth}
        \vspace{0pt}
        \centering
        \includegraphics[width=\linewidth]{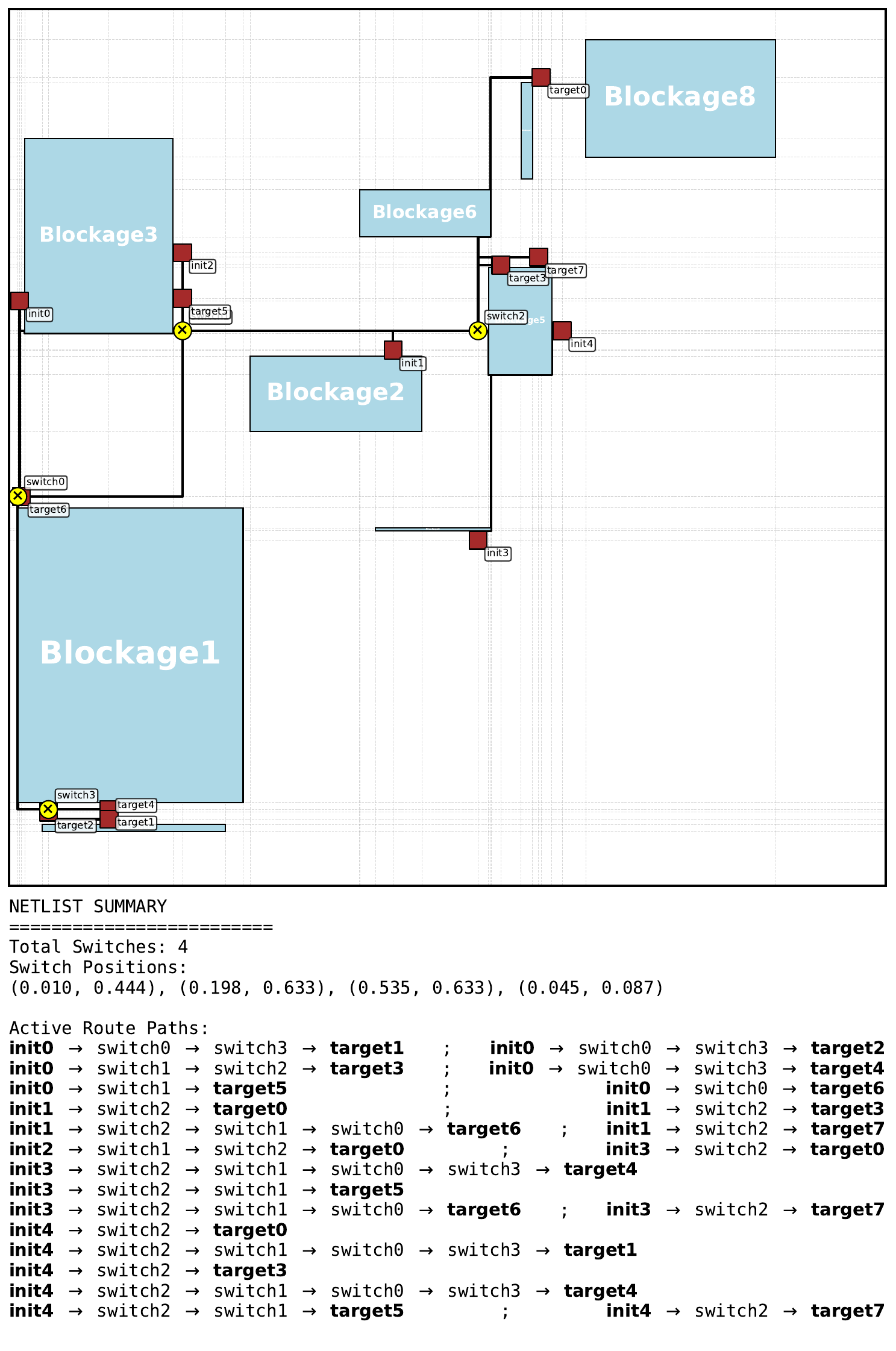}
        \caption*{MCTS}
    \end{subfigure}
\caption{Instance 6.}
\label{fig:best_pretrain_instance_6}
\end{figure*}

\begin{figure*}[h]
\centering
    \begin{subfigure}[t]{0.31\linewidth}
        \vspace{0pt}
        \centering
        \includegraphics[width=\linewidth]{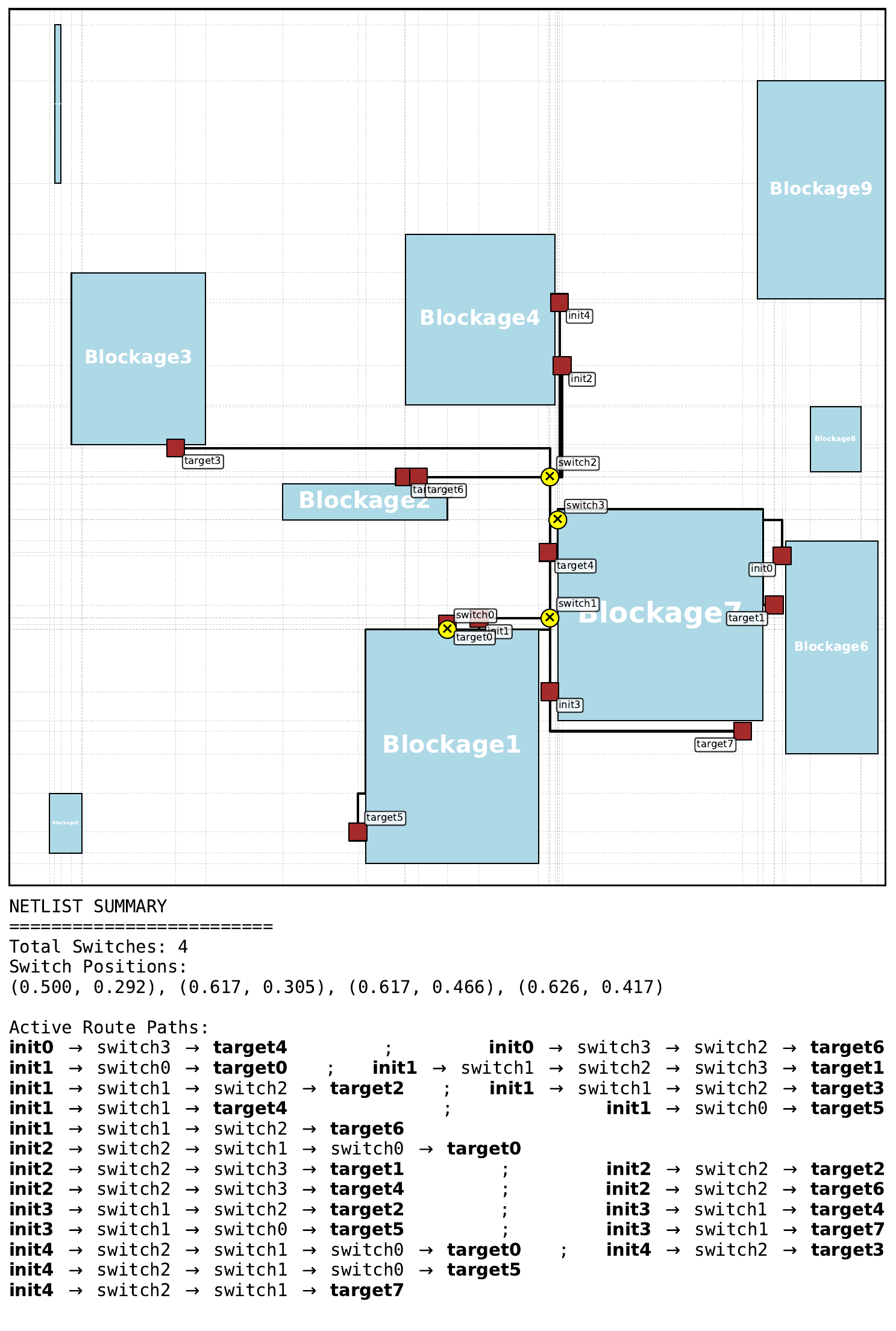}
        \caption*{Heuristic}
    \end{subfigure}
    \hfill
    \begin{subfigure}[t]{0.31\linewidth}
        \vspace{0pt}
        \centering
        \includegraphics[width=\linewidth]{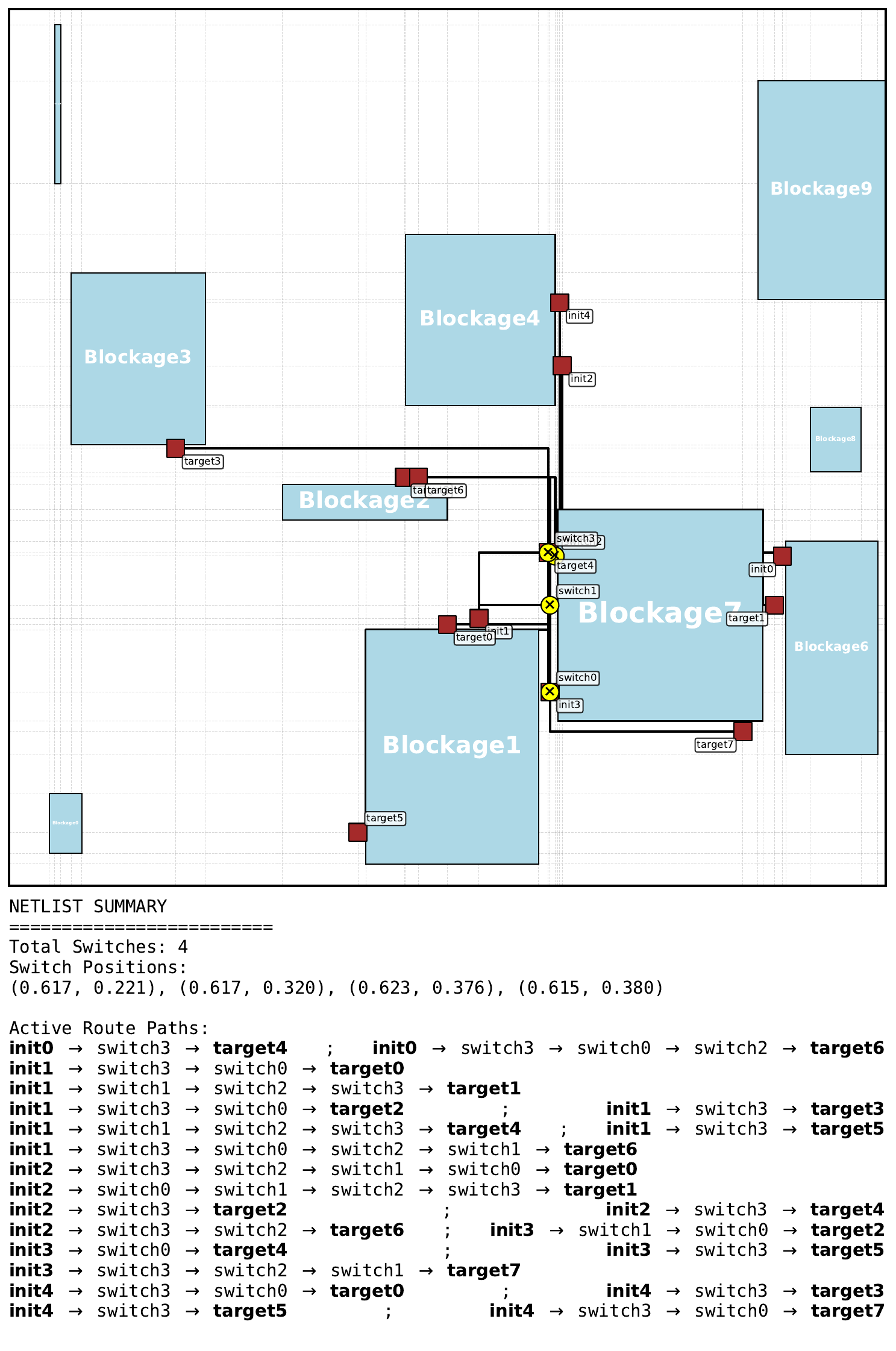}
        \caption*{Random search}
    \end{subfigure}
    \hfill
    \begin{subfigure}[t]{0.31\linewidth}
        \vspace{0pt}
        \centering
        \includegraphics[width=\linewidth]{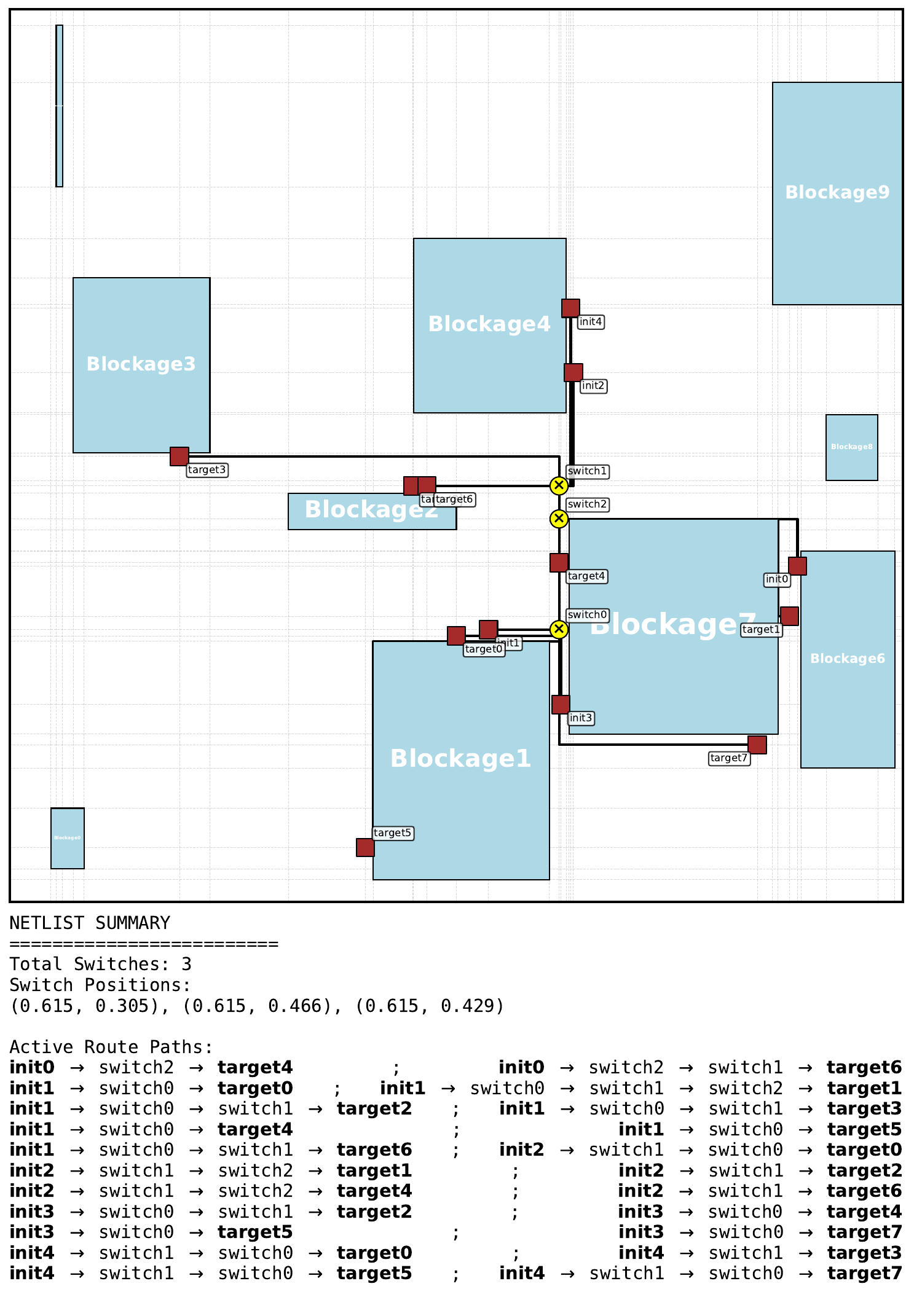}
        \caption*{Genetic algorithm}
    \end{subfigure}
    \\[0.6em]
    \begin{subfigure}[t]{0.31\linewidth}
        \vspace{0pt}
        \centering
        \includegraphics[width=\linewidth]{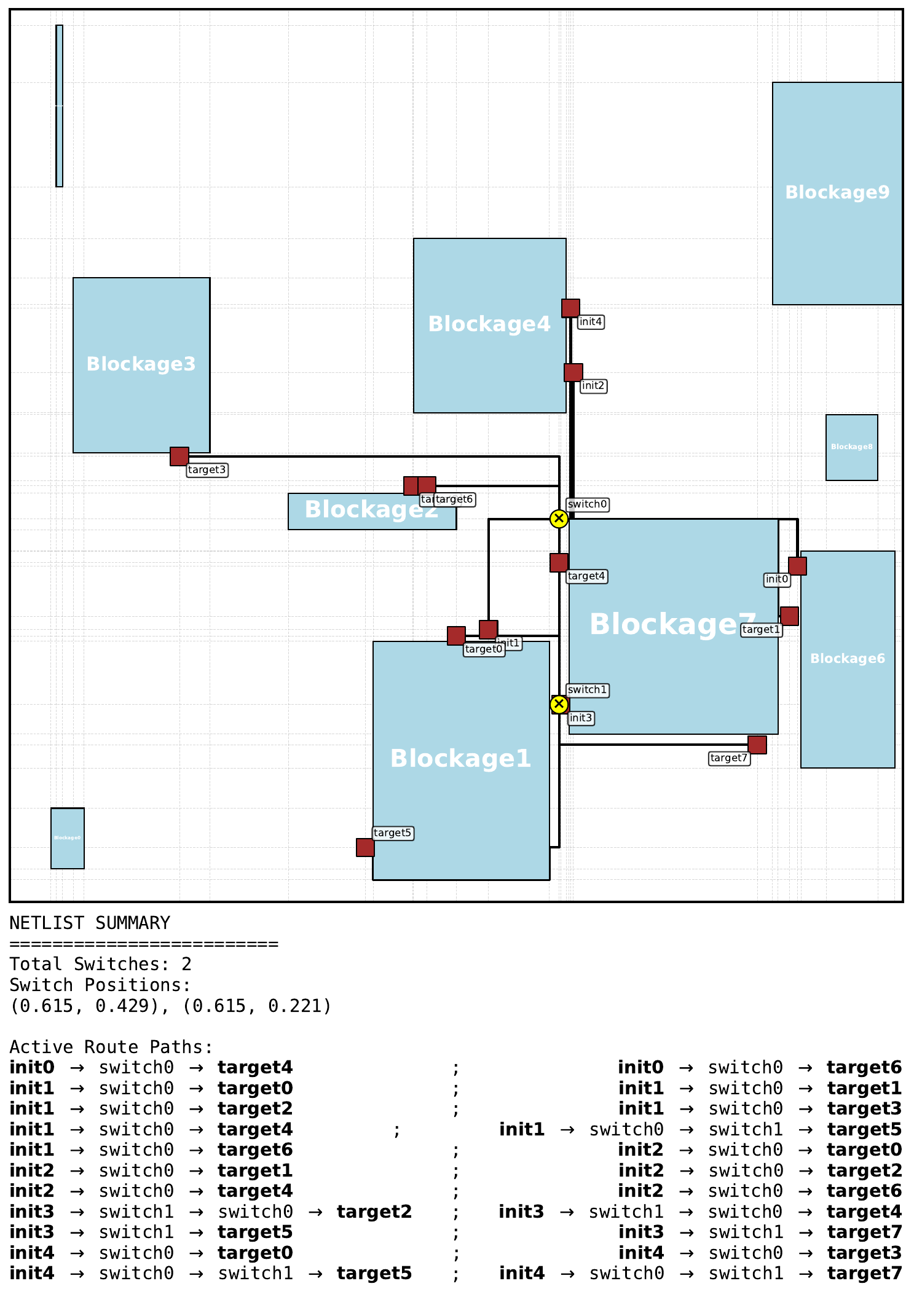}
        \caption*{PPO}
    \end{subfigure}
    \hspace{0.04\linewidth}
    \begin{subfigure}[t]{0.31\linewidth}
        \vspace{0pt}
        \centering
        \includegraphics[width=\linewidth]{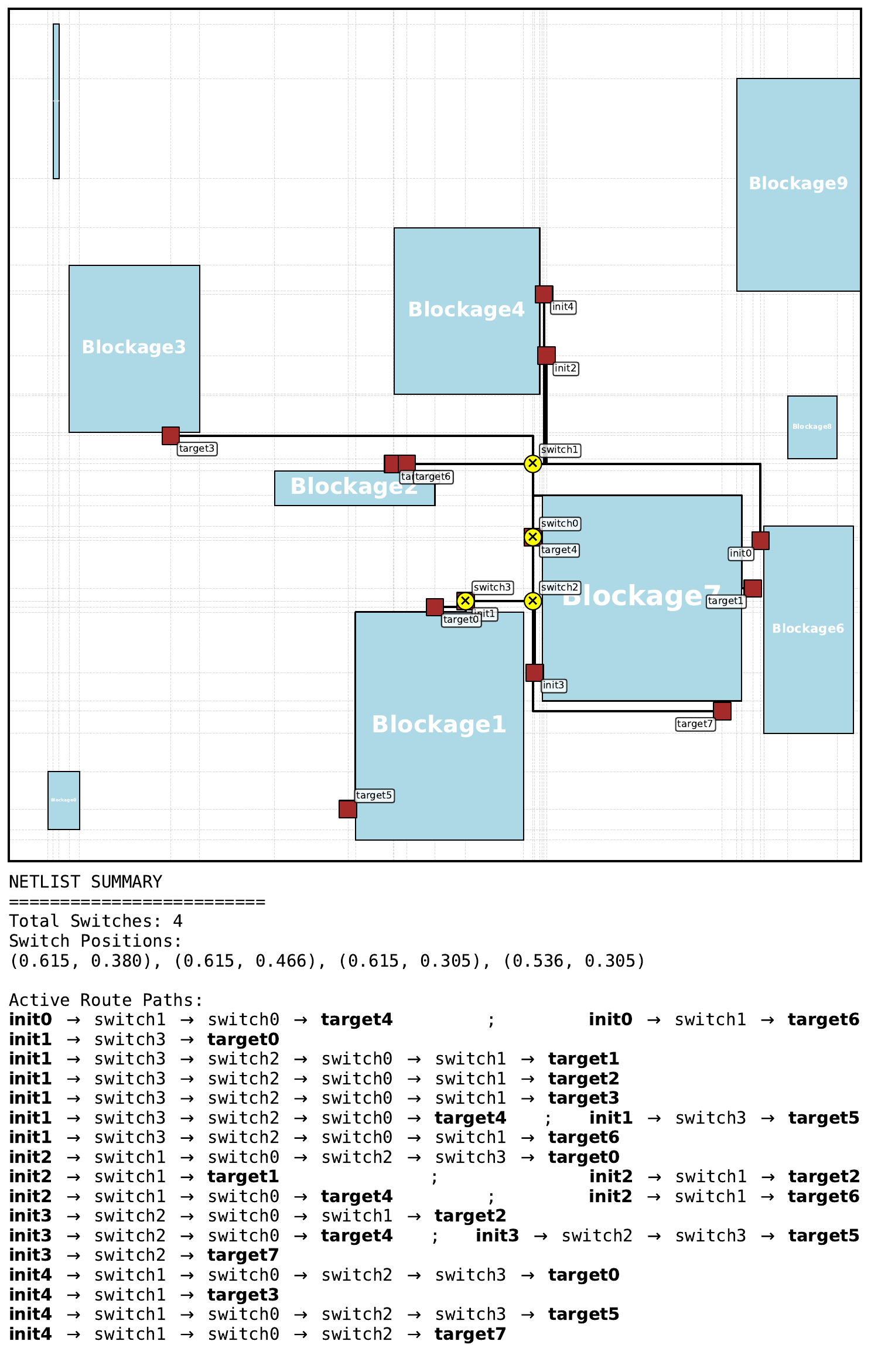}
        \caption*{MCTS}
    \end{subfigure}
\caption{Instance 7.}
\label{fig:best_pretrain_instance_7}
\end{figure*}

\begin{figure*}[h]
\centering
    \begin{subfigure}[t]{0.31\linewidth}
        \vspace{0pt}
        \centering
        \includegraphics[width=\linewidth]{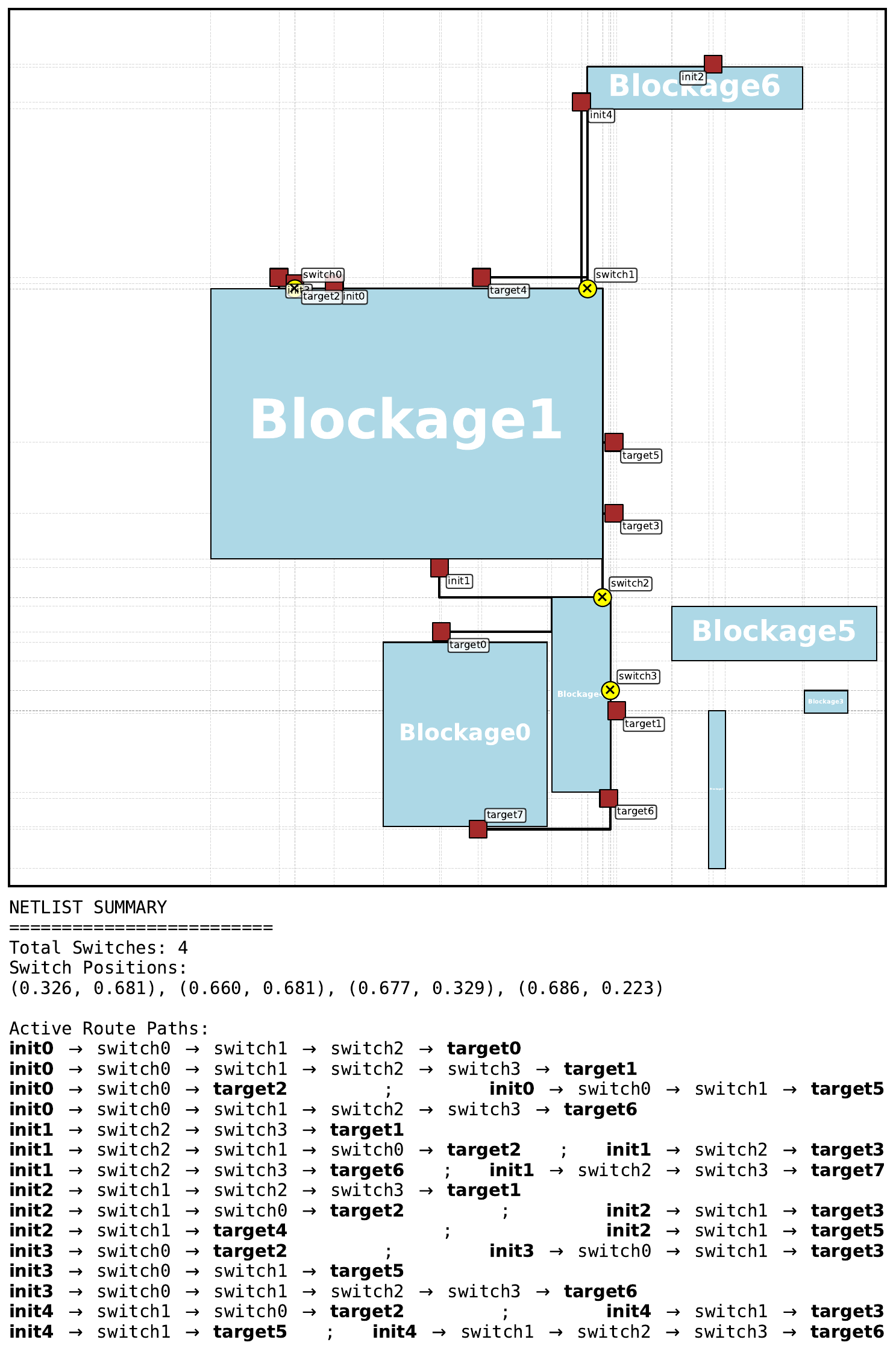}
        \caption*{Heuristic}
    \end{subfigure}
    \hfill
    \begin{subfigure}[t]{0.31\linewidth}
        \vspace{0pt}
        \centering
        \includegraphics[width=\linewidth]{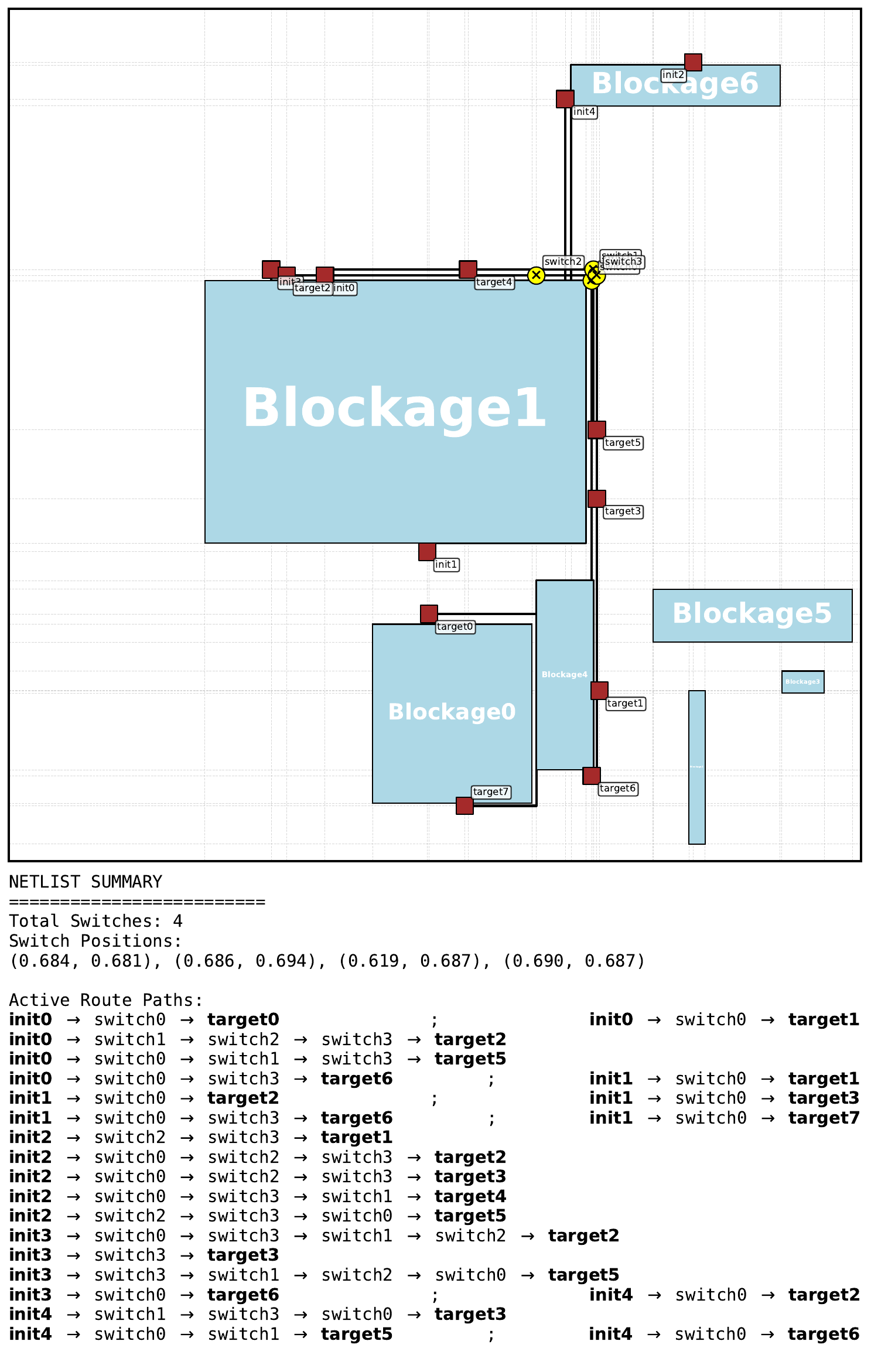}
        \caption*{Random search}
    \end{subfigure}
    \hfill
    \begin{subfigure}[t]{0.31\linewidth}
        \vspace{0pt}
        \centering
        \includegraphics[width=\linewidth]{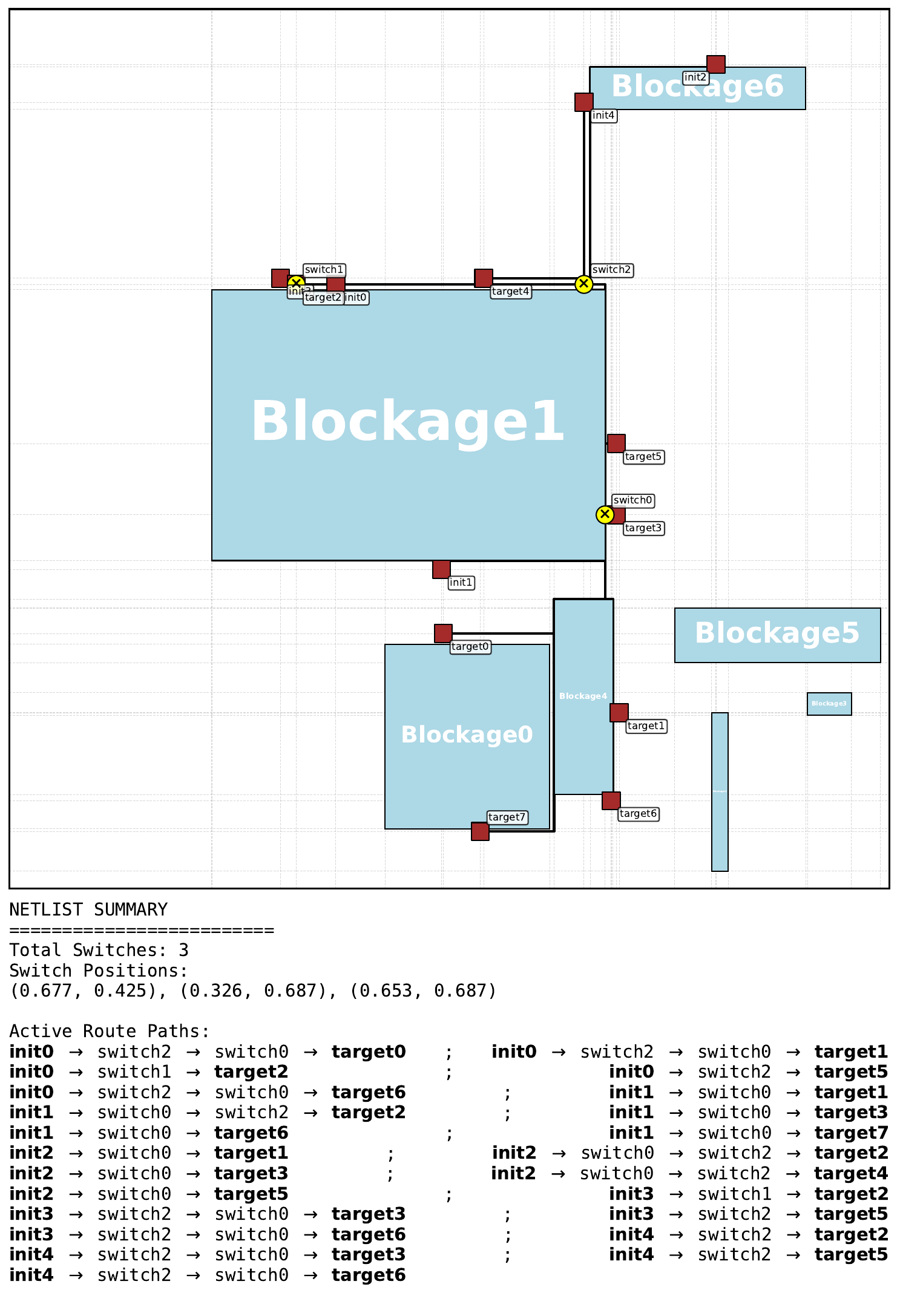}
        \caption*{Genetic algorithm}
    \end{subfigure}
    \\[0.6em]
    \begin{subfigure}[t]{0.31\linewidth}
        \vspace{0pt}
        \centering
        \includegraphics[width=\linewidth]{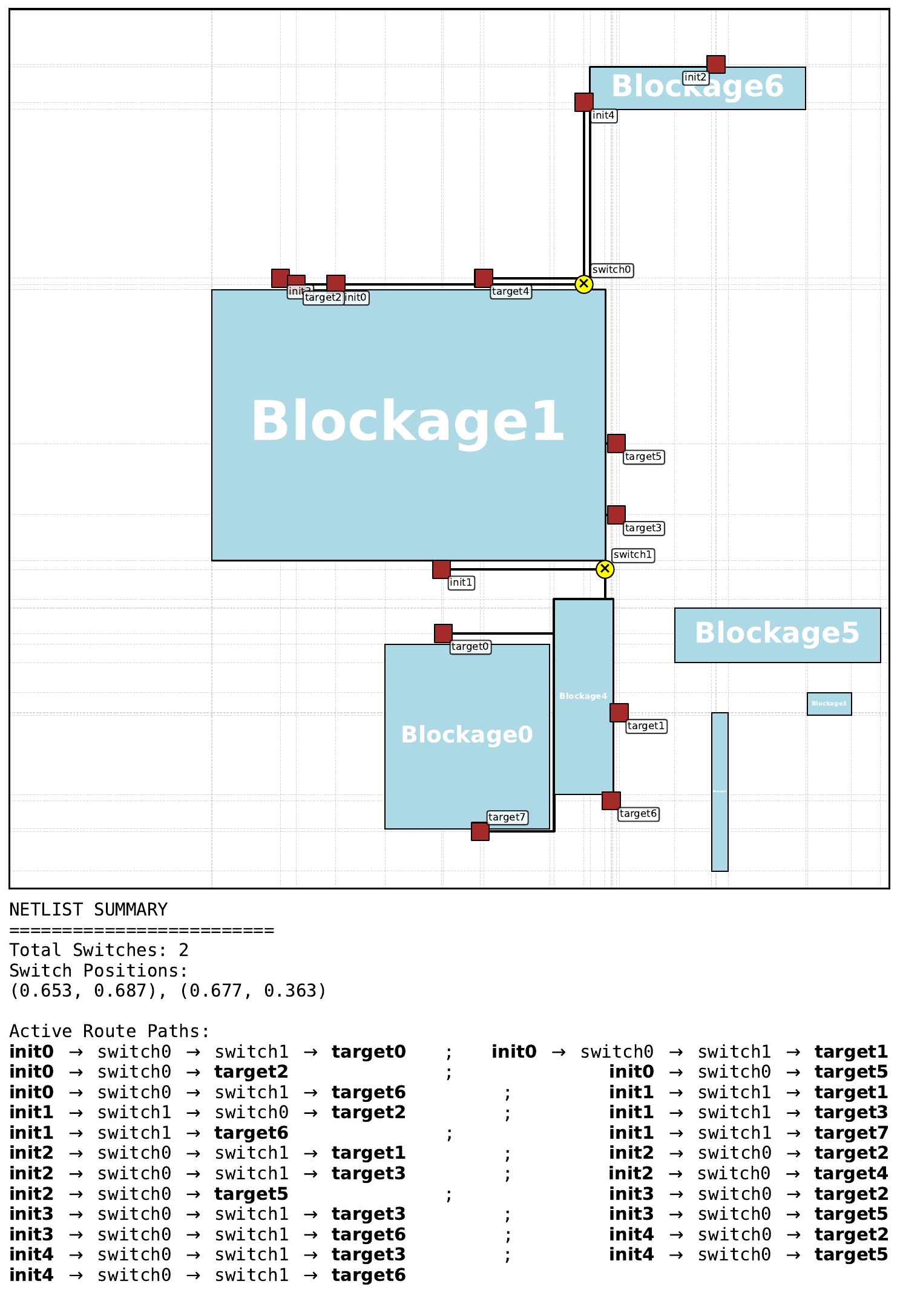}
        \caption*{PPO}
    \end{subfigure}
    \hspace{0.04\linewidth}
    \begin{subfigure}[t]{0.31\linewidth}
        \vspace{0pt}
        \centering
        \includegraphics[width=\linewidth]{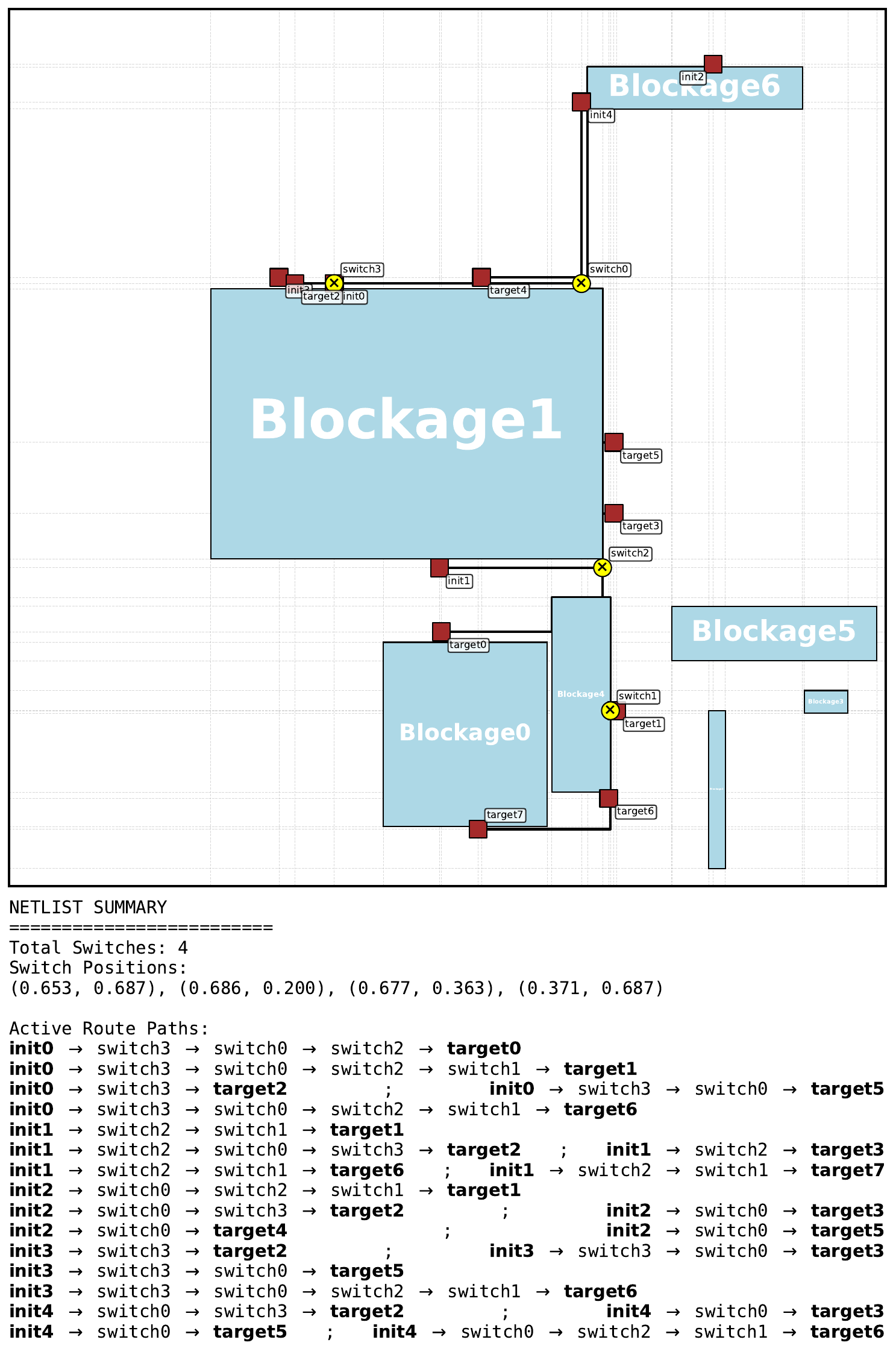}
        \caption*{MCTS}
    \end{subfigure}
\caption{Instance 8.}
\label{fig:best_pretrain_instance_8}
\end{figure*}

\begin{figure*}[h]
\centering
    \begin{subfigure}[t]{0.31\linewidth}
        \vspace{0pt}
        \centering
        \includegraphics[width=\linewidth]{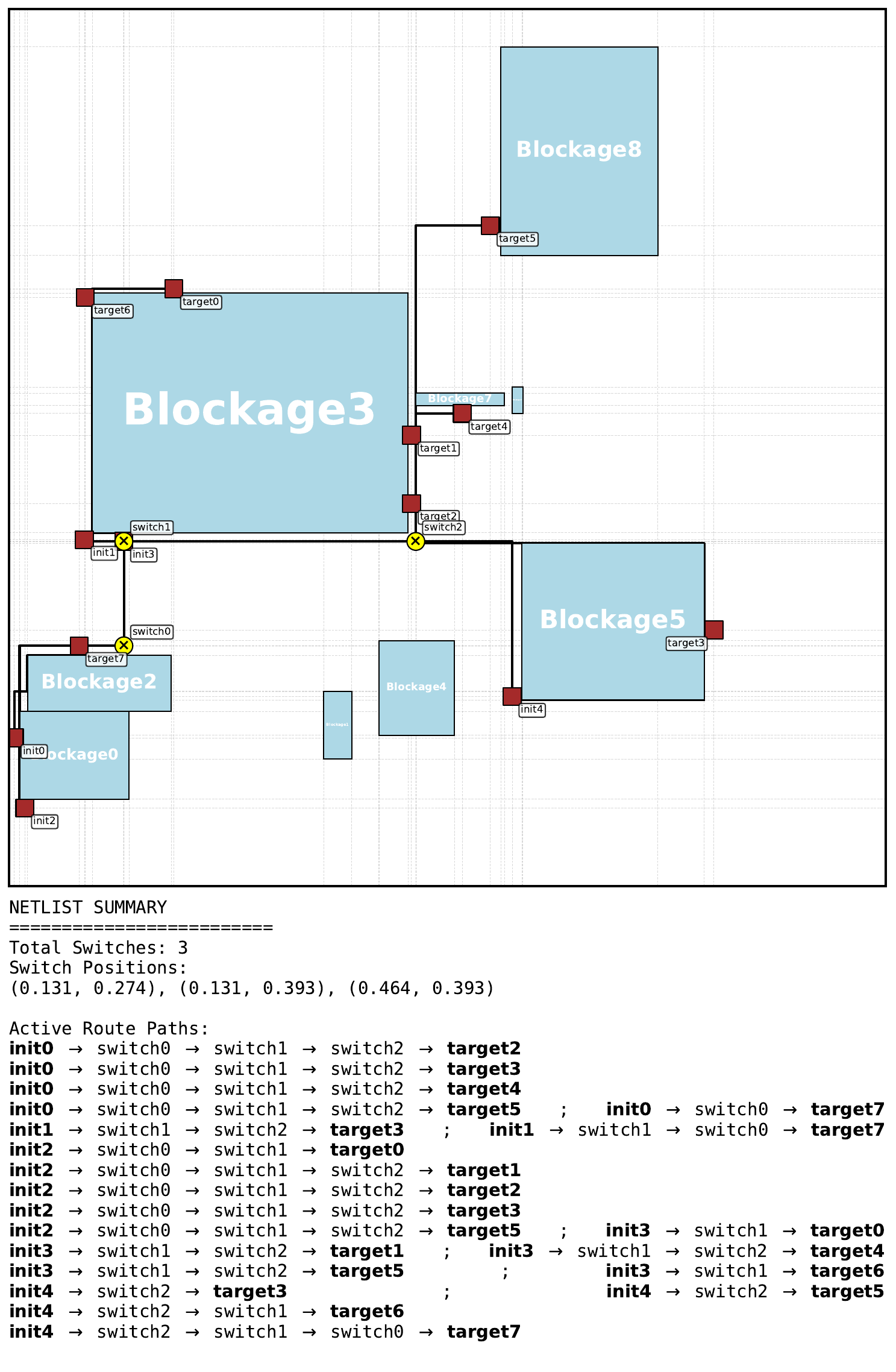}
        \caption*{Heuristic}
    \end{subfigure}
    \hfill
    \begin{subfigure}[t]{0.31\linewidth}
        \vspace{0pt}
        \centering
        \includegraphics[width=\linewidth]{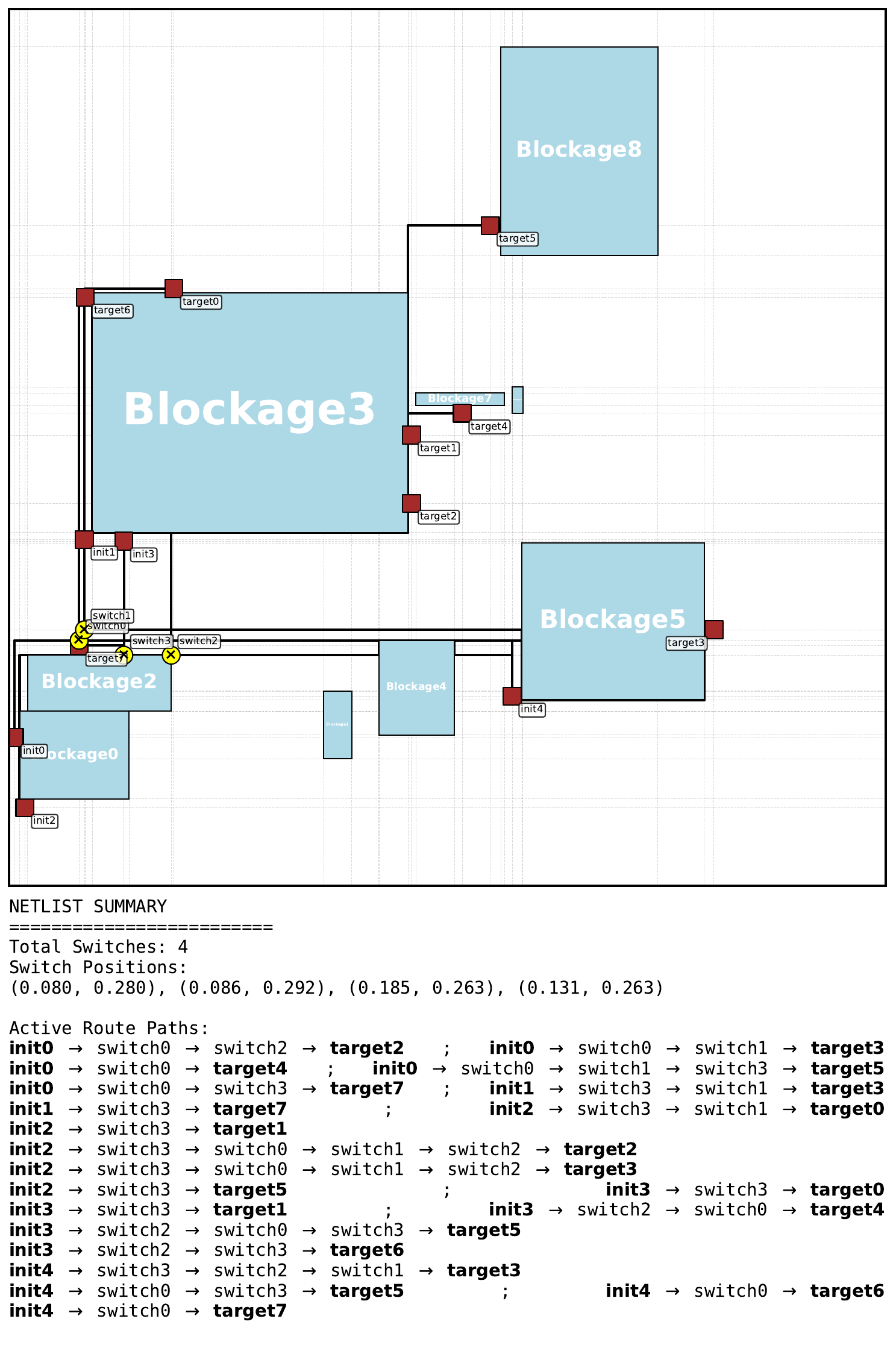}
        \caption*{Random search}
    \end{subfigure}
    \hfill
    \begin{subfigure}[t]{0.31\linewidth}
        \vspace{0pt}
        \centering
        \includegraphics[width=\linewidth]{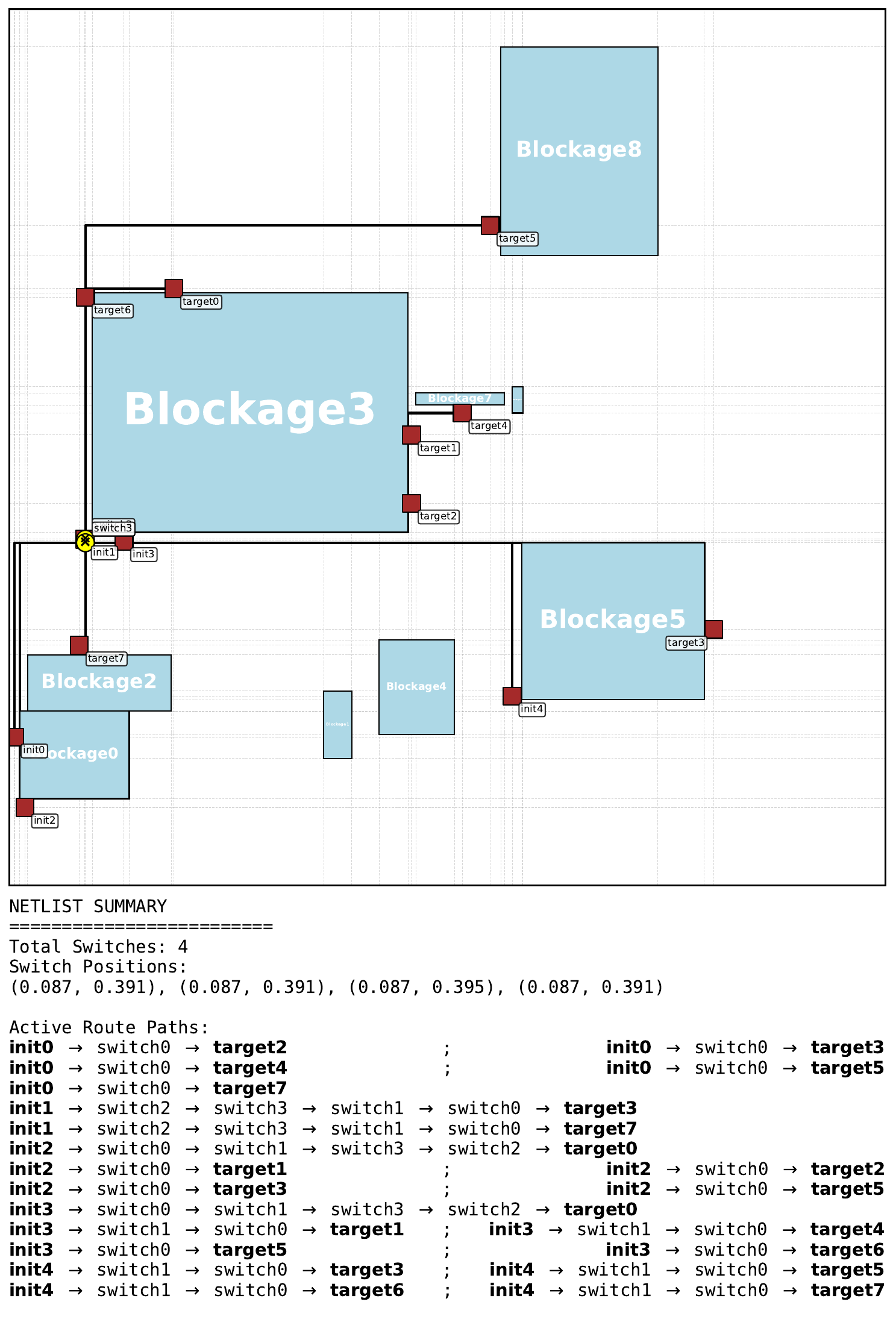}
        \caption*{Genetic algorithm}
    \end{subfigure}
    \\[0.6em]
    \begin{subfigure}[t]{0.31\linewidth}
        \vspace{0pt}
        \centering
        \includegraphics[width=\linewidth]{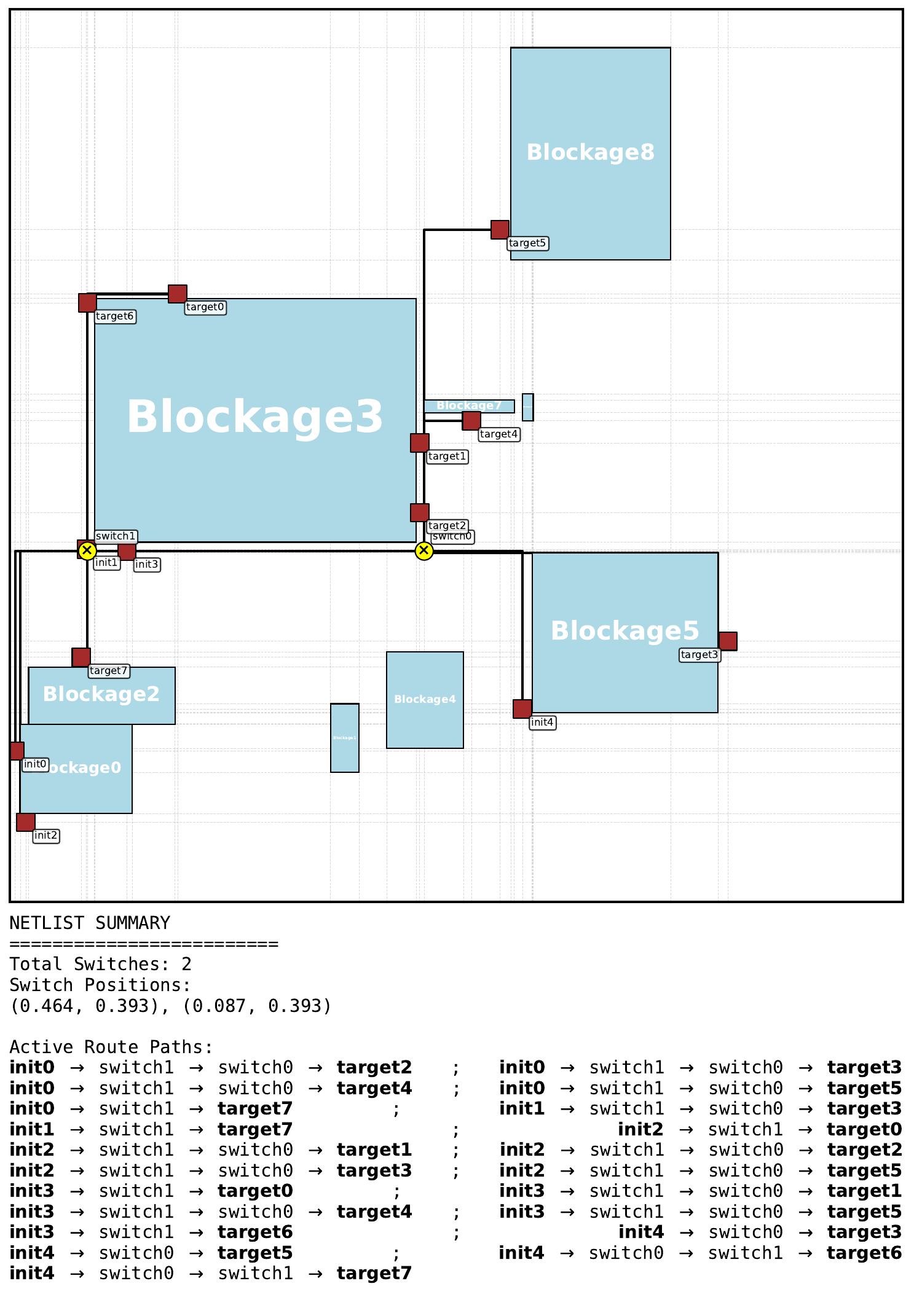}
        \caption*{PPO}
    \end{subfigure}
    \hspace{0.04\linewidth}
    \begin{subfigure}[t]{0.31\linewidth}
        \vspace{0pt}
        \centering
        \includegraphics[width=\linewidth]{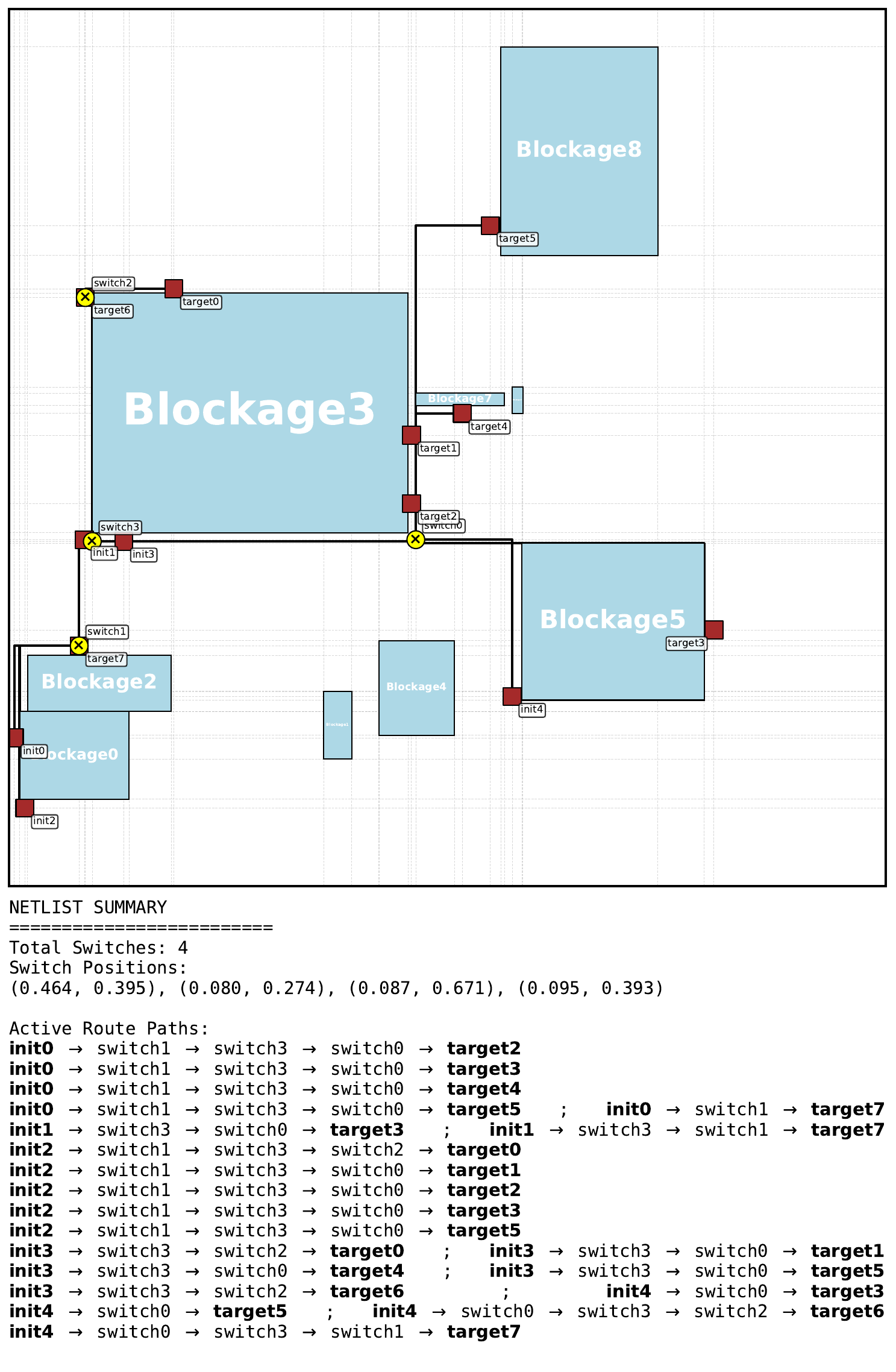}
        \caption*{MCTS}
    \end{subfigure}
\caption{Instance 9.}
\label{fig:best_pretrain_instance_9}
\end{figure*}

\begin{figure*}[h]
\centering
    \begin{subfigure}[t]{0.31\linewidth}
        \vspace{0pt}
        \centering
        \includegraphics[width=\linewidth]{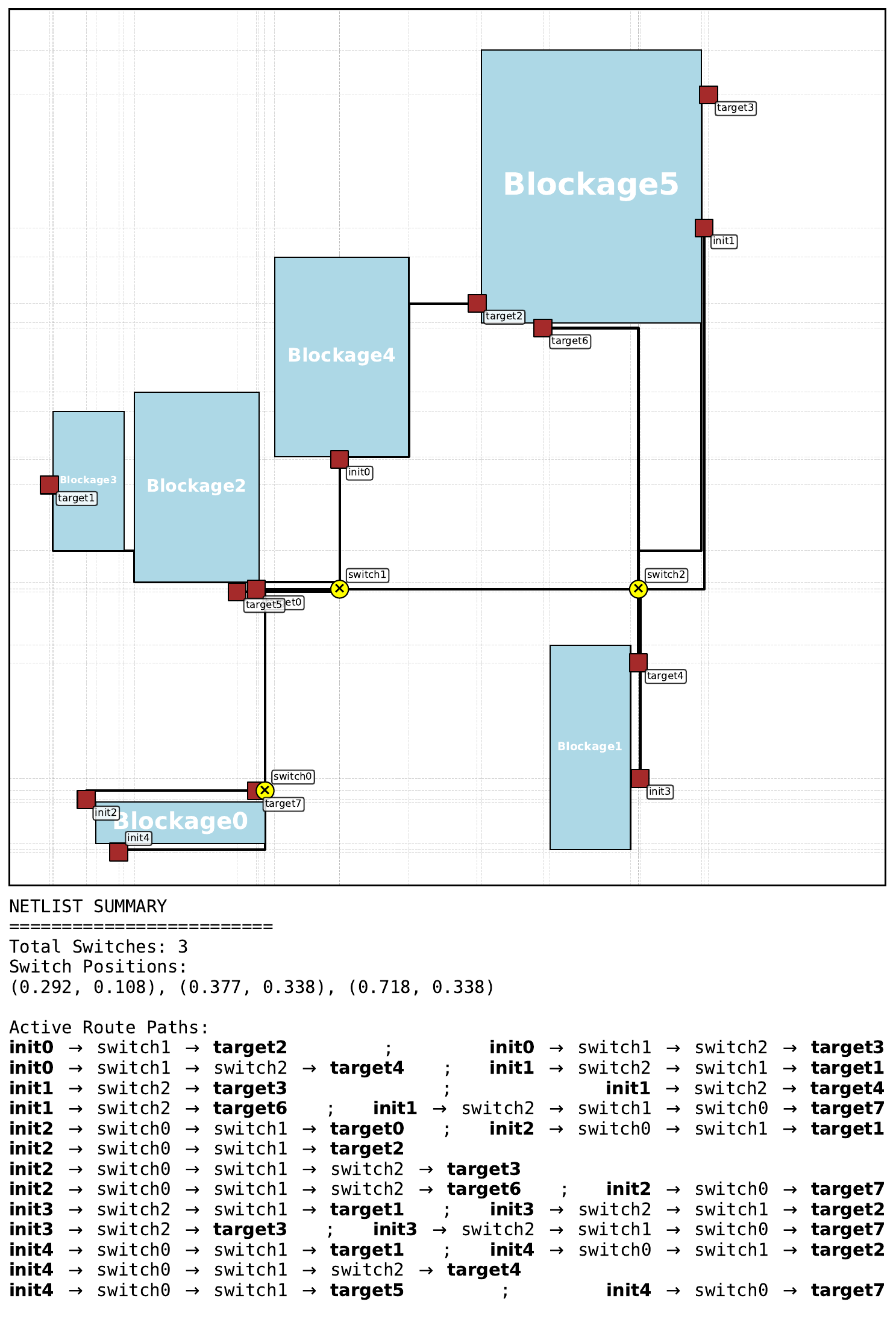}
        \caption*{Heuristic}
    \end{subfigure}
    \hfill
    \begin{subfigure}[t]{0.31\linewidth}
        \vspace{0pt}
        \centering
        \includegraphics[width=\linewidth]{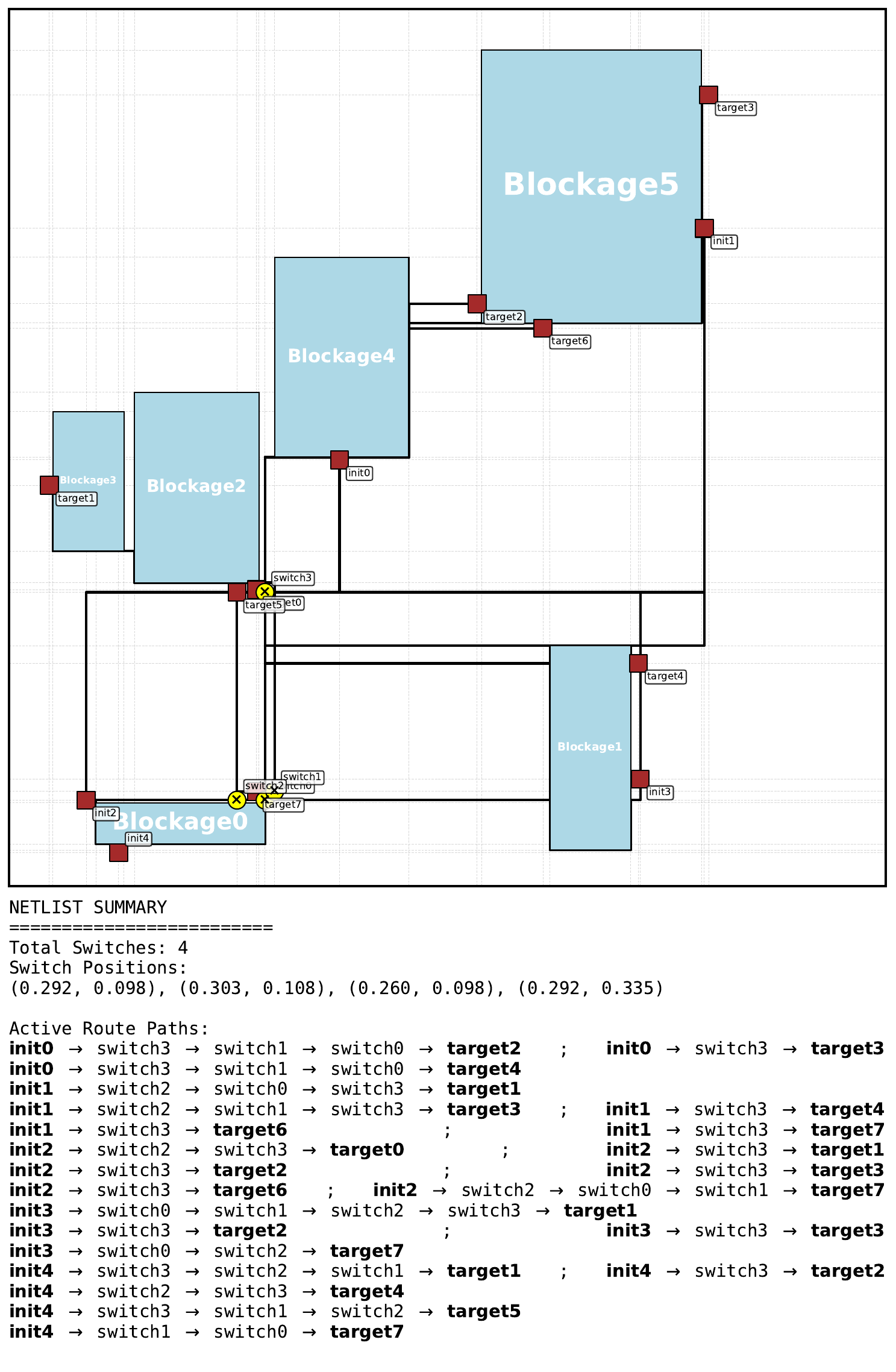}
        \caption*{Random search}
    \end{subfigure}
    \hfill
    \begin{subfigure}[t]{0.31\linewidth}
        \vspace{0pt}
        \centering
        \includegraphics[width=\linewidth]{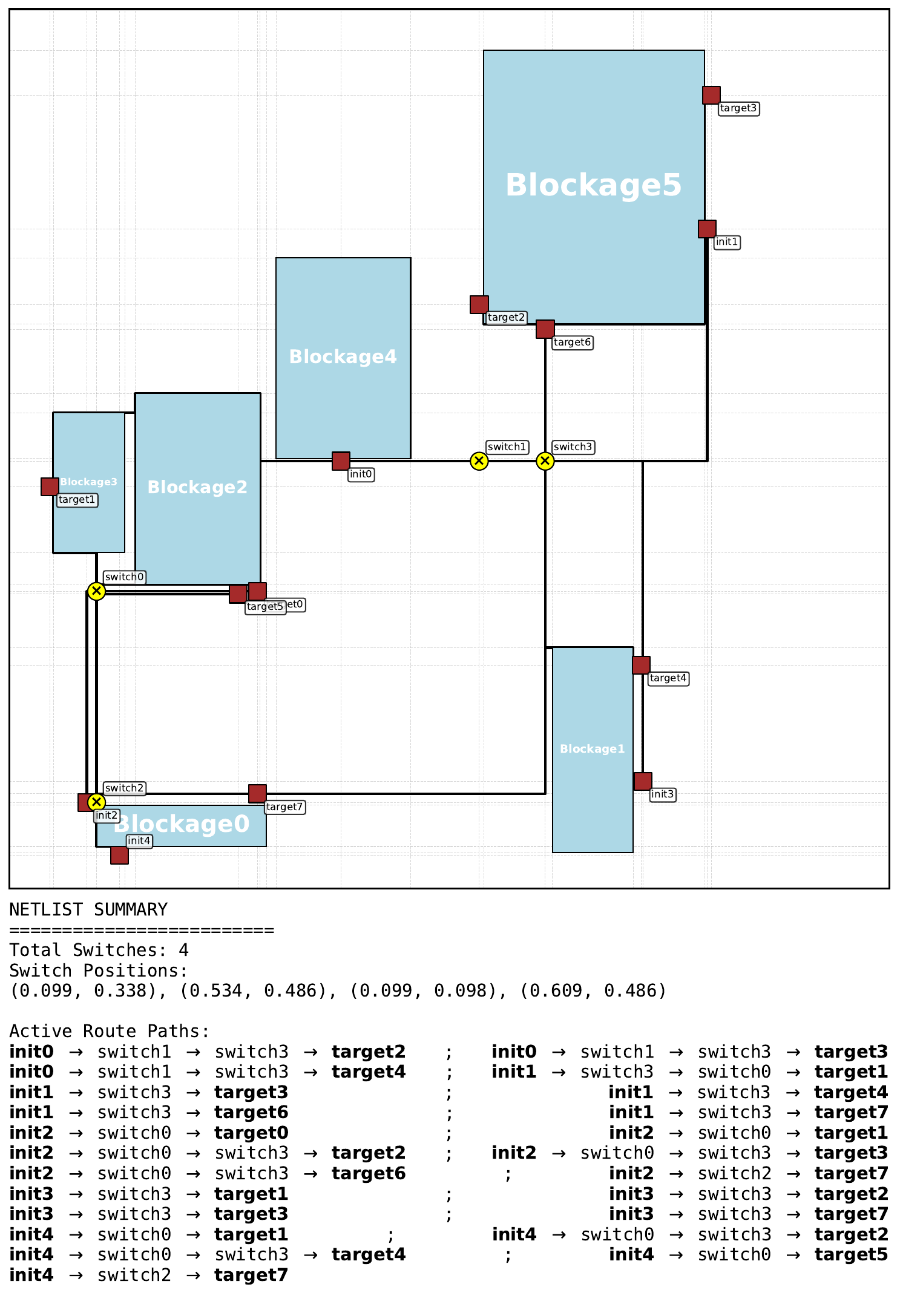}
        \caption*{Genetic algorithm}
    \end{subfigure}
    \\[0.6em]
    \begin{subfigure}[t]{0.31\linewidth}
        \vspace{0pt}
        \centering
        \includegraphics[width=\linewidth]{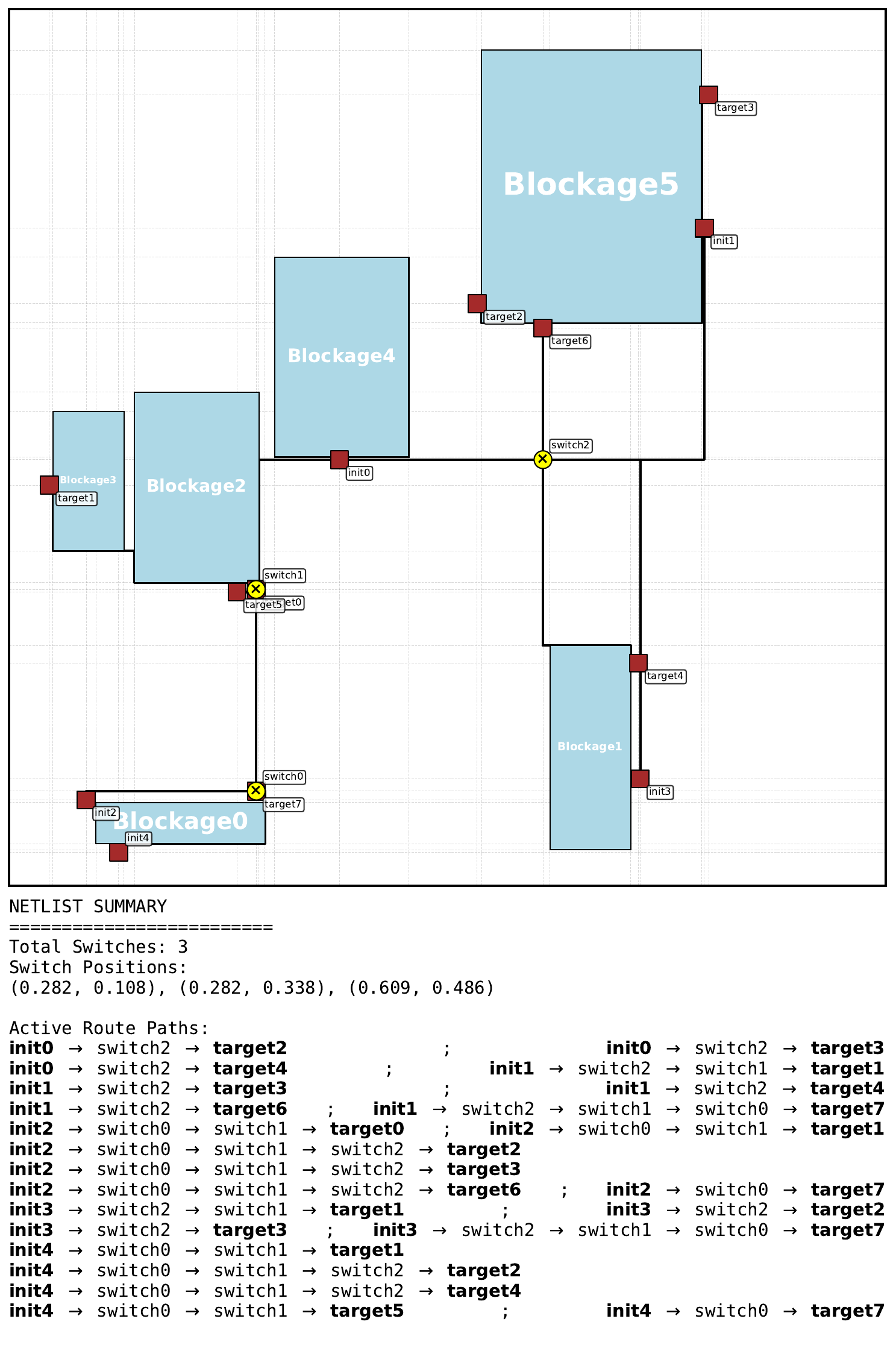}
        \caption*{PPO}
    \end{subfigure}
    \hspace{0.04\linewidth}
    \begin{subfigure}[t]{0.31\linewidth}
        \vspace{0pt}
        \centering
        \includegraphics[width=\linewidth]{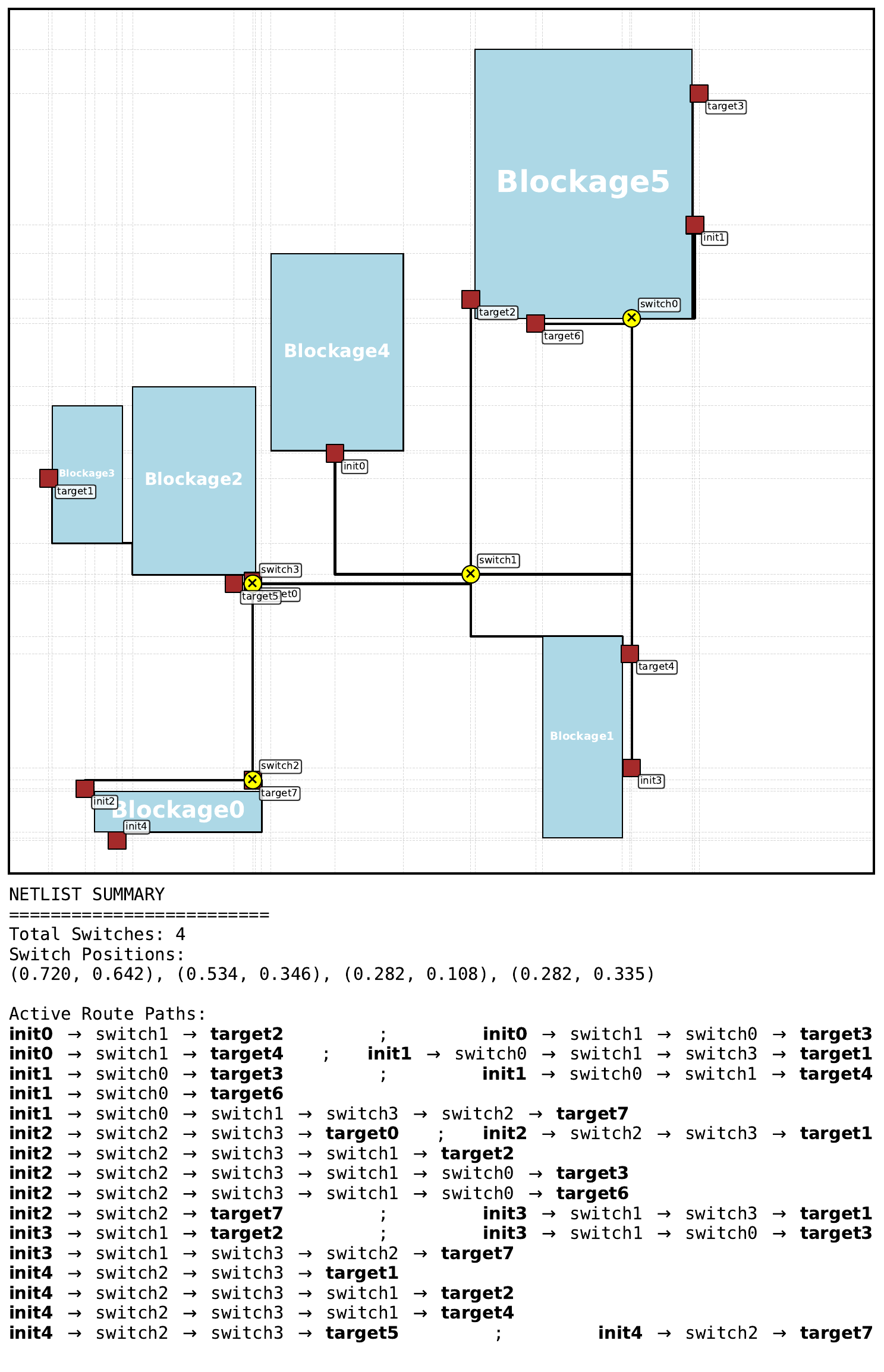}
        \caption*{MCTS}
    \end{subfigure}
\caption{Instance 10.}
\label{fig:best_pretrain_instance_10}
\end{figure*}

\begin{figure*}[h]
\centering
    \begin{subfigure}[t]{0.31\linewidth}
        \vspace{0pt}
        \centering
        \includegraphics[width=\linewidth]{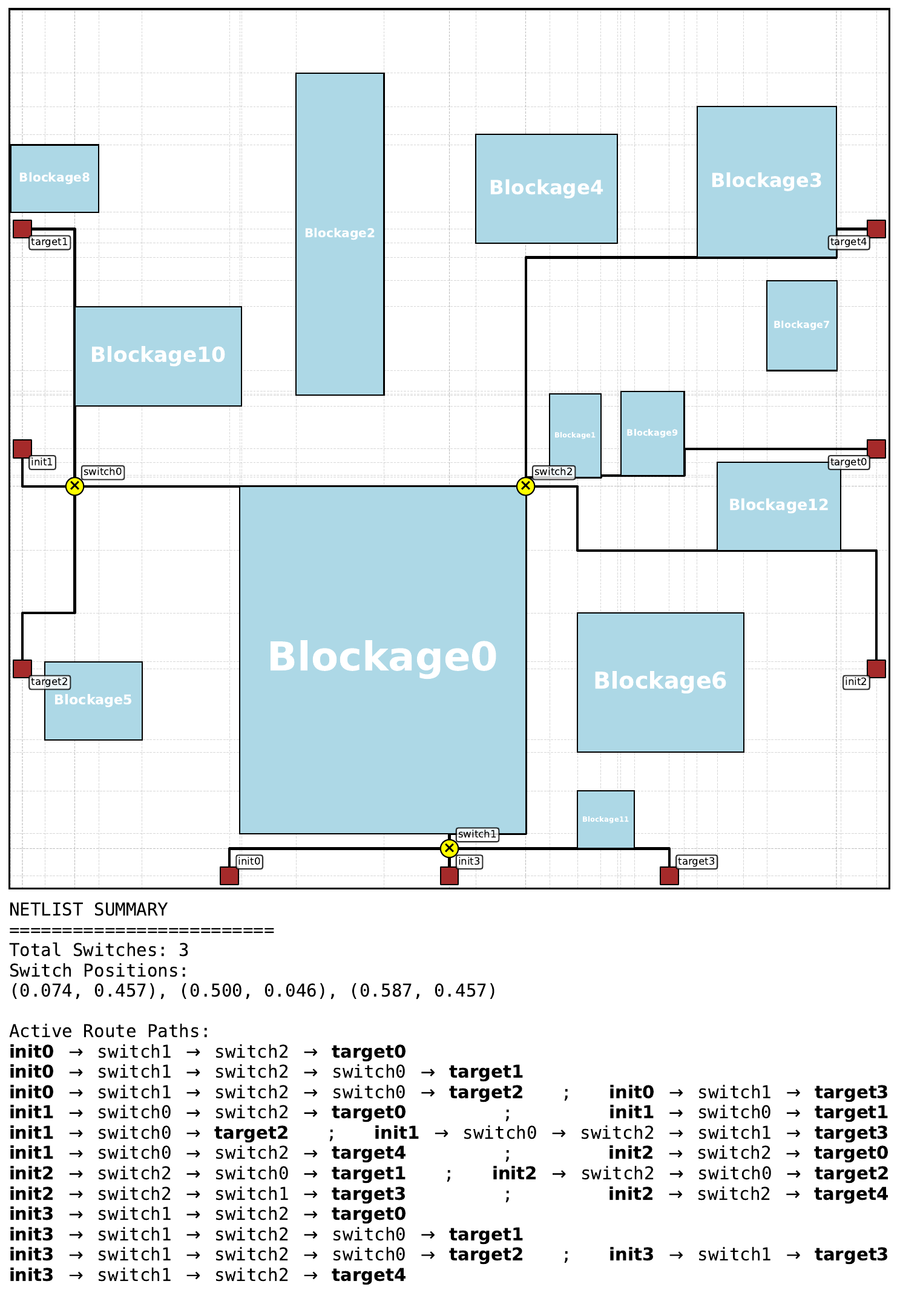}
        \caption*{Heuristic}
    \end{subfigure}
    \hfill
    \begin{subfigure}[t]{0.31\linewidth}
        \vspace{0pt}
        \centering
        \includegraphics[width=\linewidth]{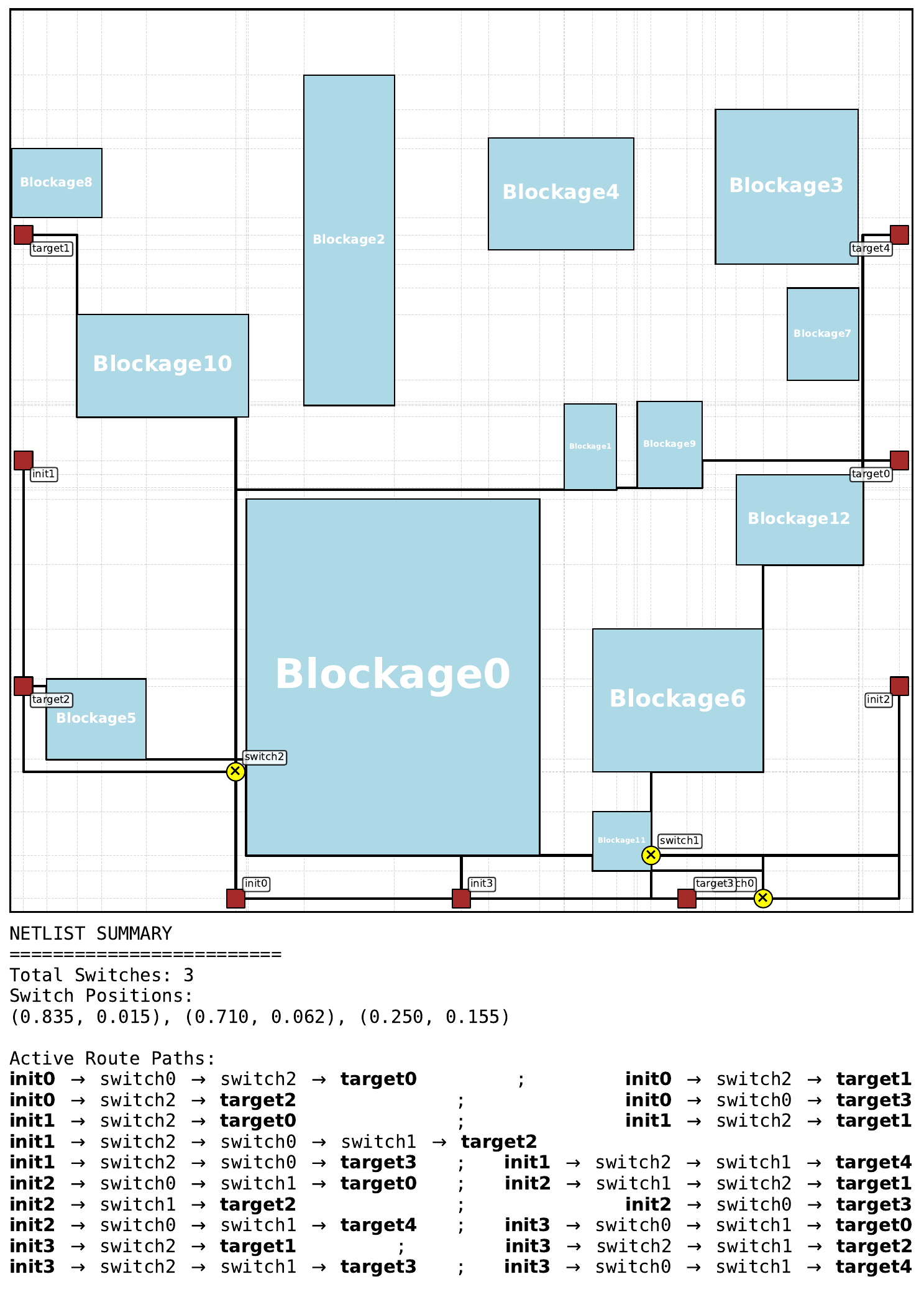}
        \caption*{Random search}
    \end{subfigure}
    \hfill
    \begin{subfigure}[t]{0.31\linewidth}
        \vspace{0pt}
        \centering
        \includegraphics[width=\linewidth]{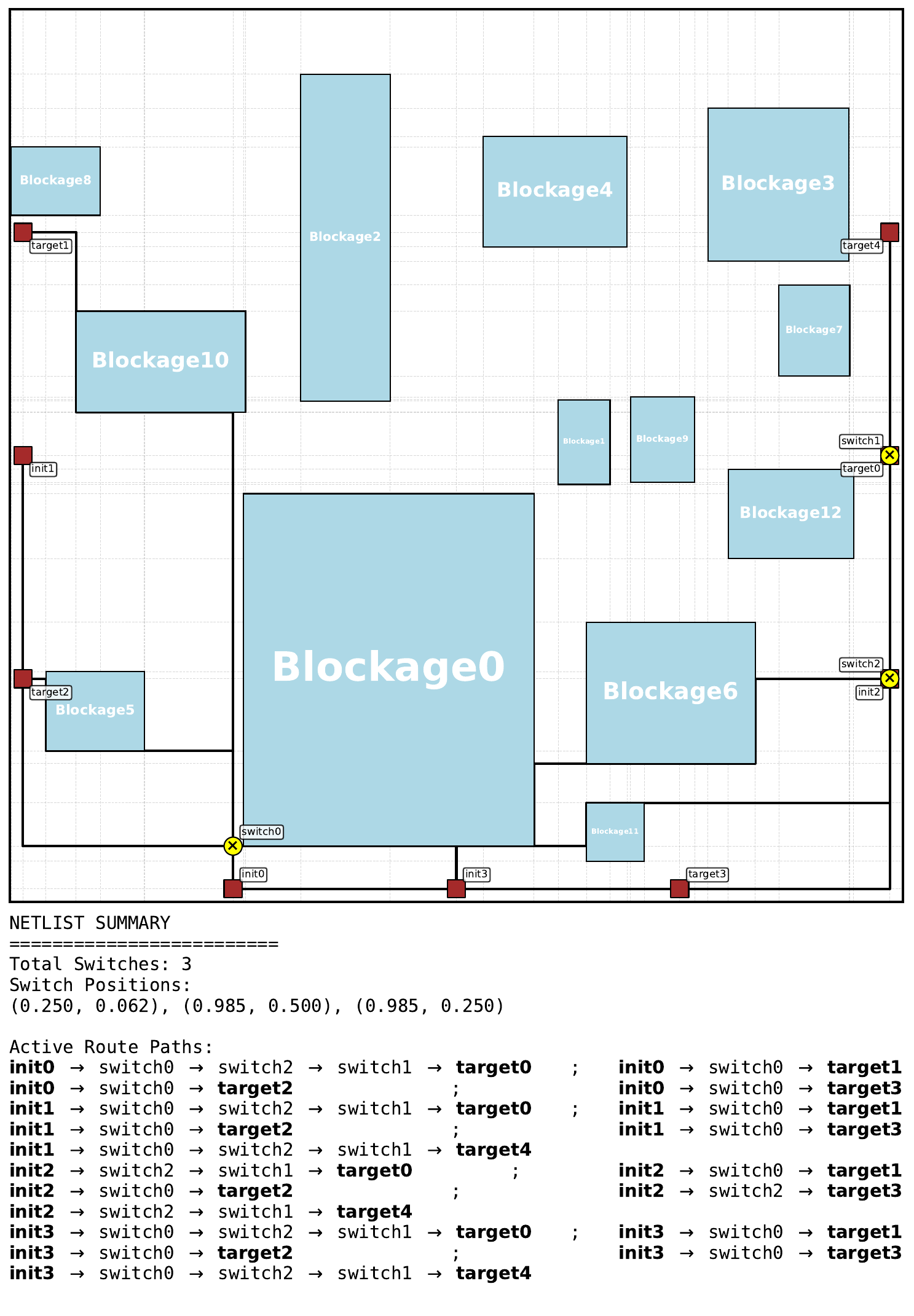}
        \caption*{Genetic algorithm}
    \end{subfigure}
    \\[0.6em]
    \begin{subfigure}[t]{0.31\linewidth}
        \vspace{0pt}
        \centering
        \includegraphics[width=\linewidth]{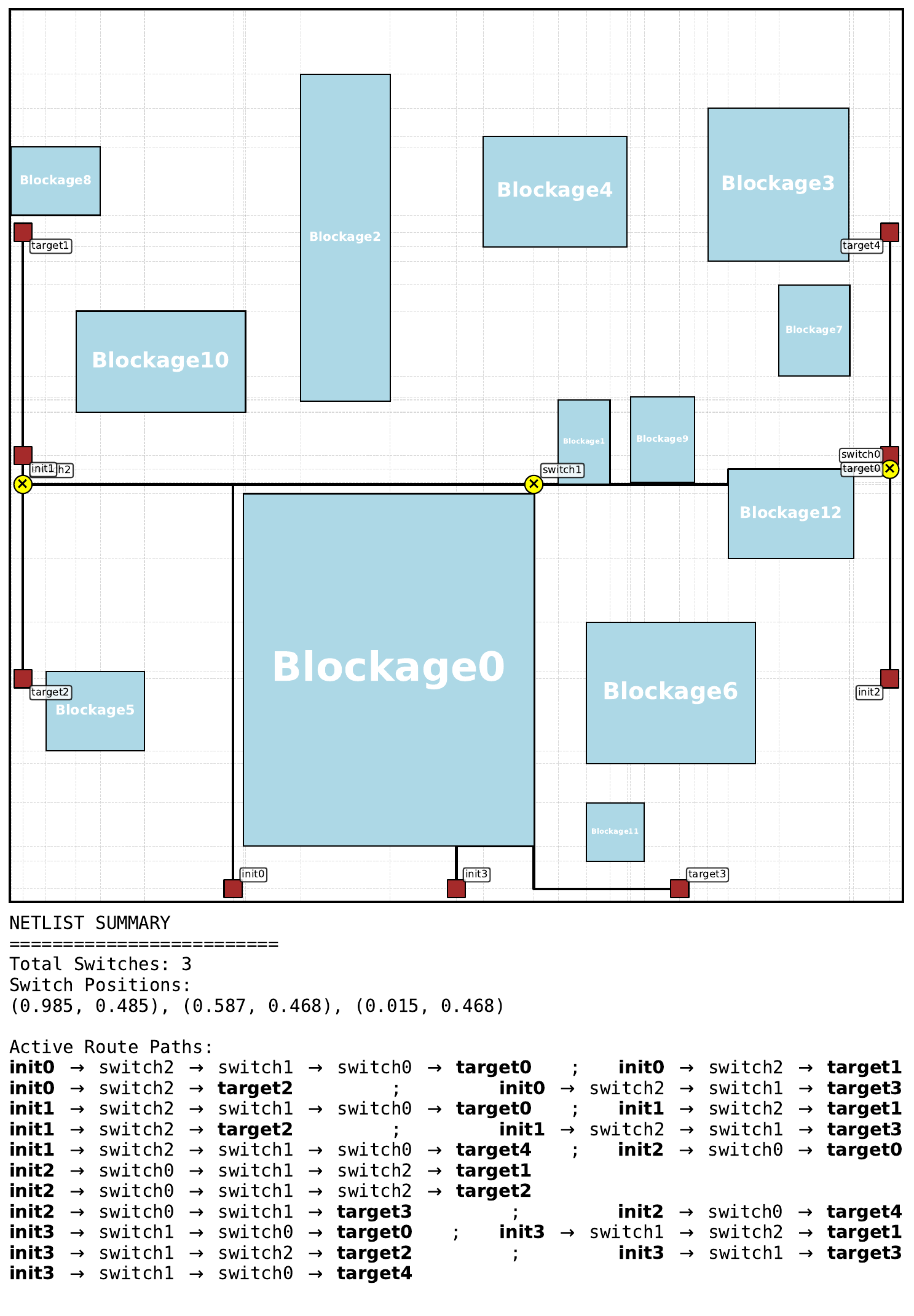}
        \caption*{PPO}
    \end{subfigure}
    \hspace{0.04\linewidth}
    \begin{subfigure}[t]{0.31\linewidth}
        \vspace{0pt}
        \centering
        \includegraphics[width=\linewidth]{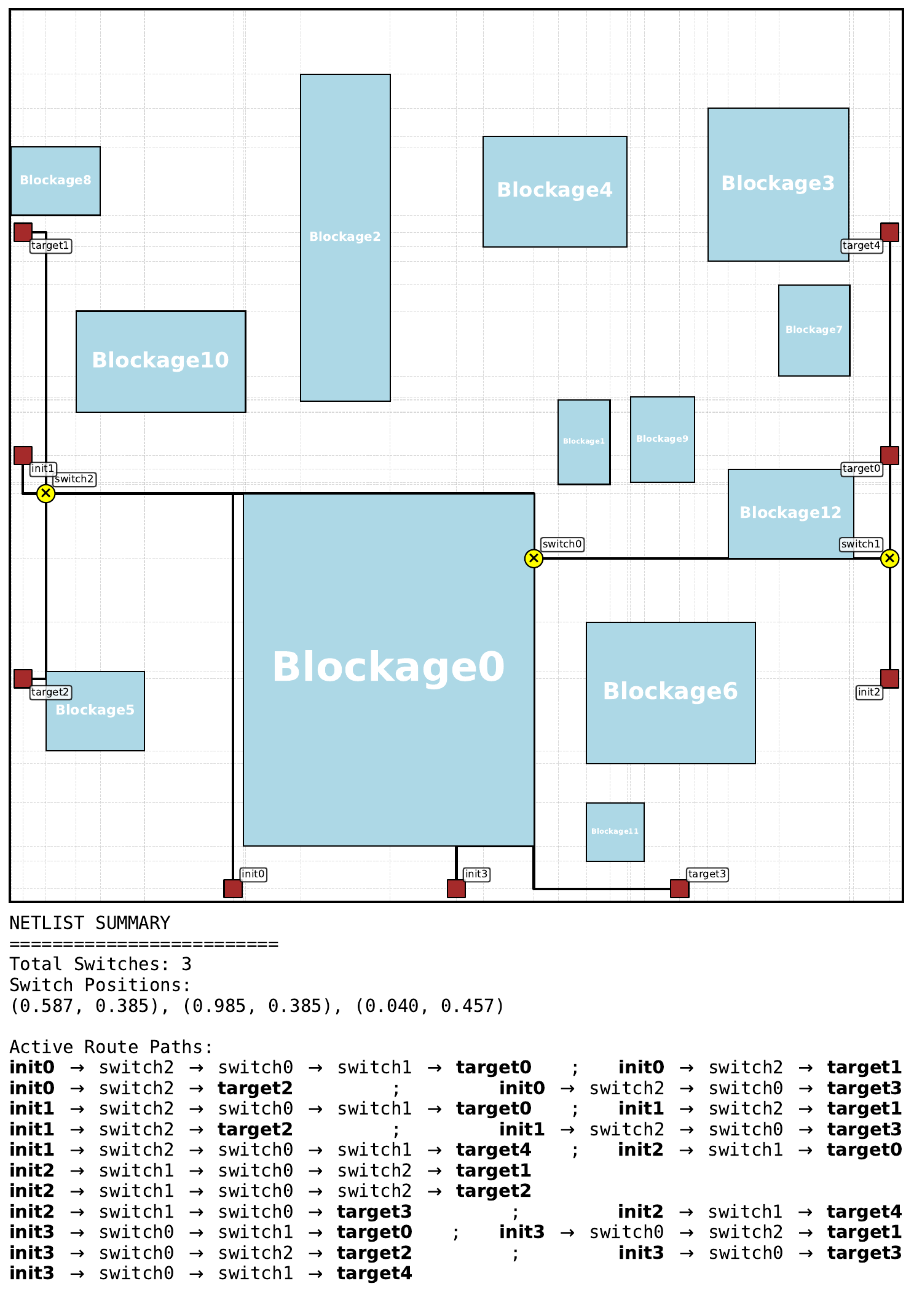}
        \caption*{MCTS}
    \end{subfigure}
\caption{Instance 11.}
\label{fig:best_pretrain_instance_11}
\end{figure*}

\begin{figure*}[h]
\centering
    \begin{subfigure}[t]{0.31\linewidth}
        \vspace{0pt}
        \centering
        \includegraphics[width=\linewidth]{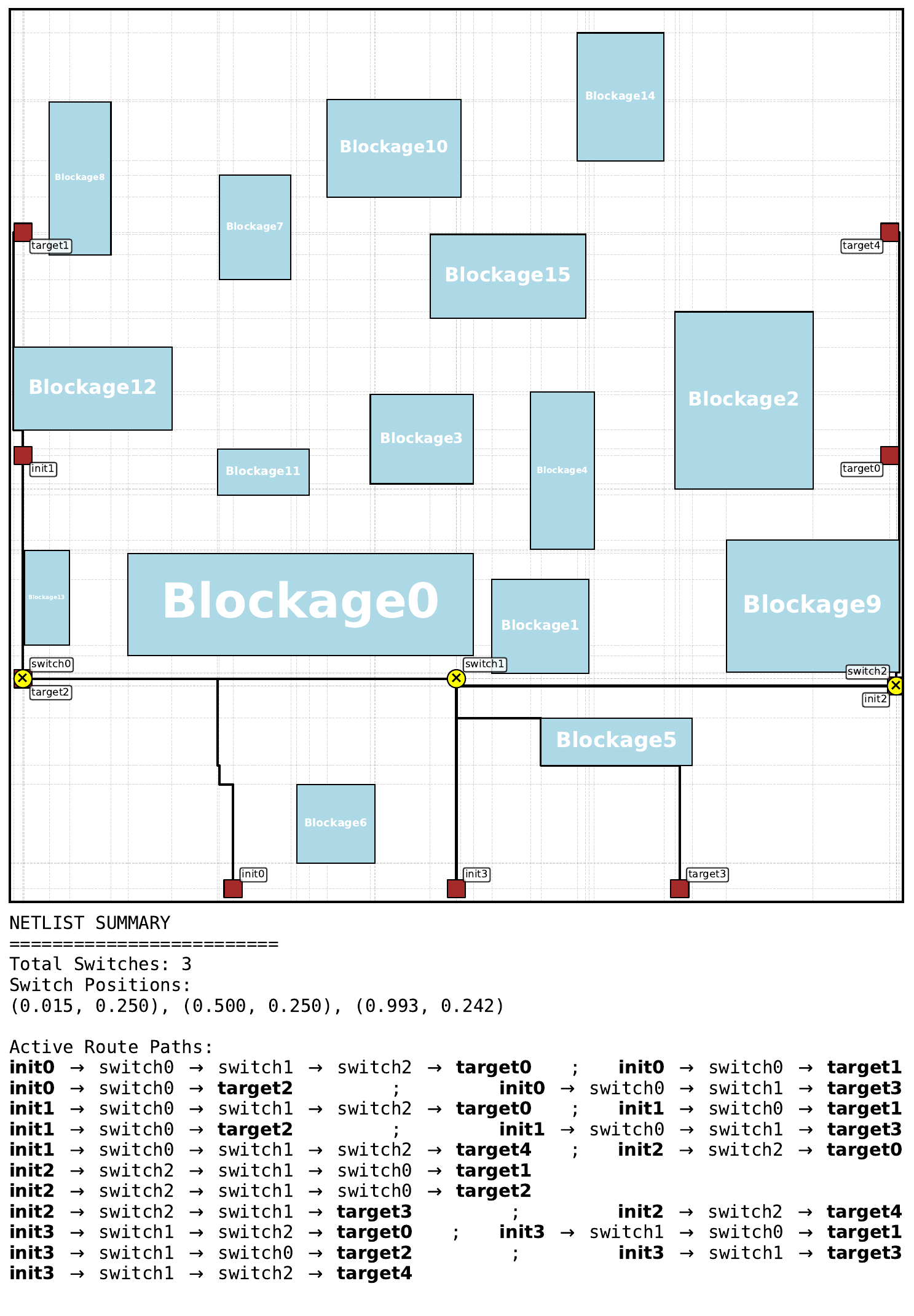}
        \caption*{Heuristic}
    \end{subfigure}
    \hfill
    \begin{subfigure}[t]{0.31\linewidth}
        \vspace{0pt}
        \centering
        \includegraphics[width=\linewidth]{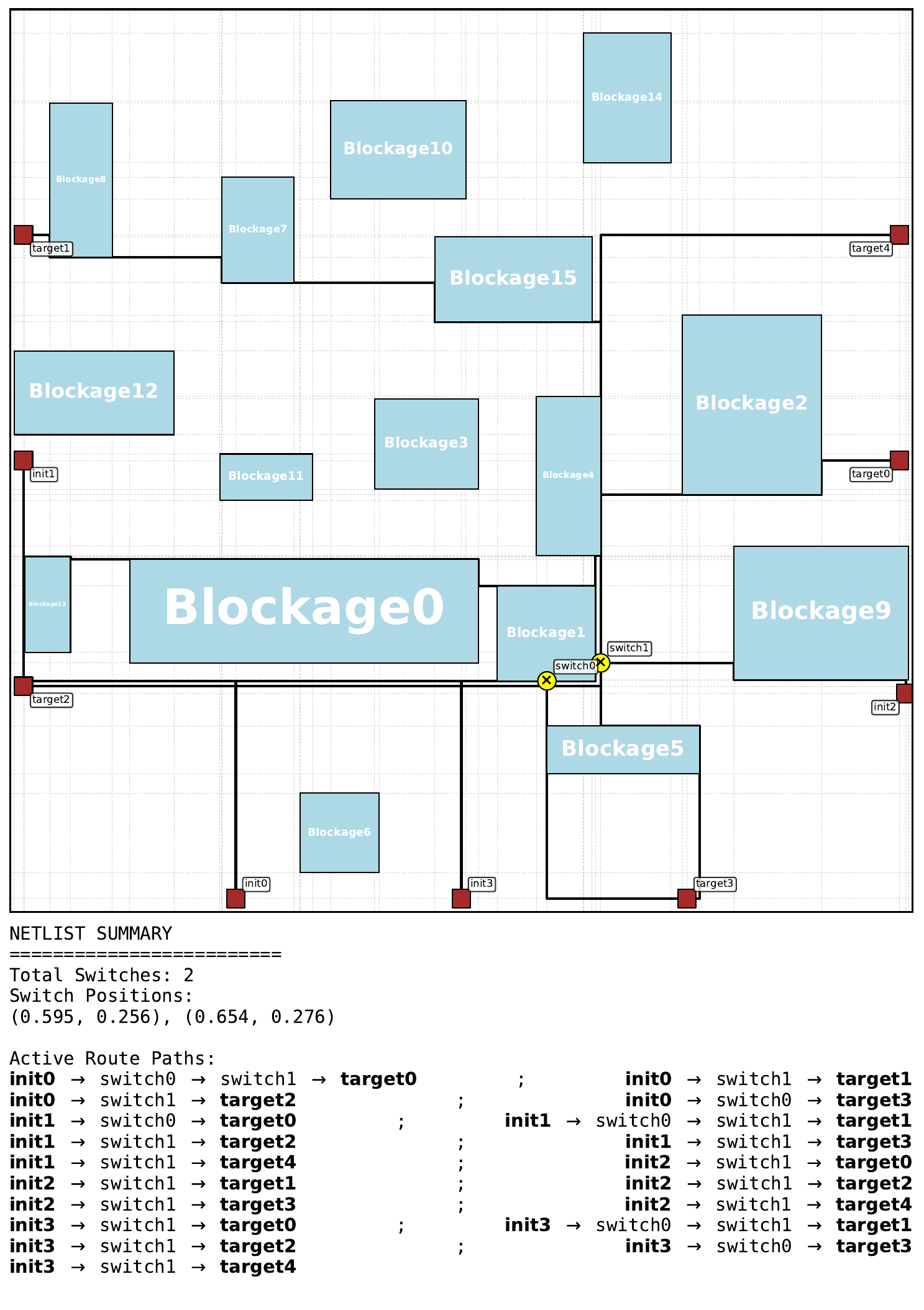}
        \caption*{Random search}
    \end{subfigure}
    \hfill
    \begin{subfigure}[t]{0.31\linewidth}
        \vspace{0pt}
        \centering
        \includegraphics[width=\linewidth]{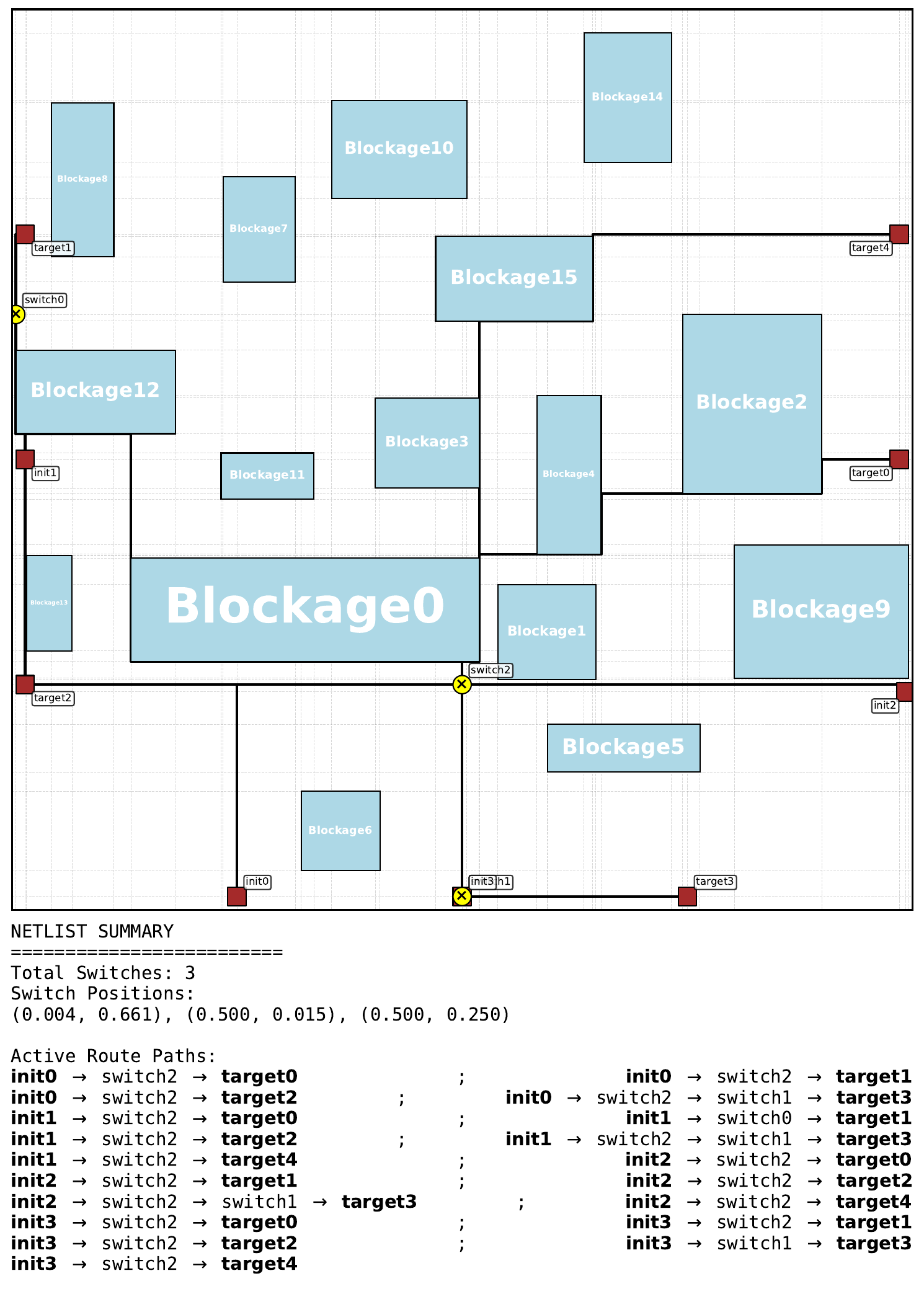}
        \caption*{Genetic algorithm}
    \end{subfigure}
    \\[0.6em]
    \begin{subfigure}[t]{0.31\linewidth}
        \vspace{0pt}
        \centering
        \includegraphics[width=\linewidth]{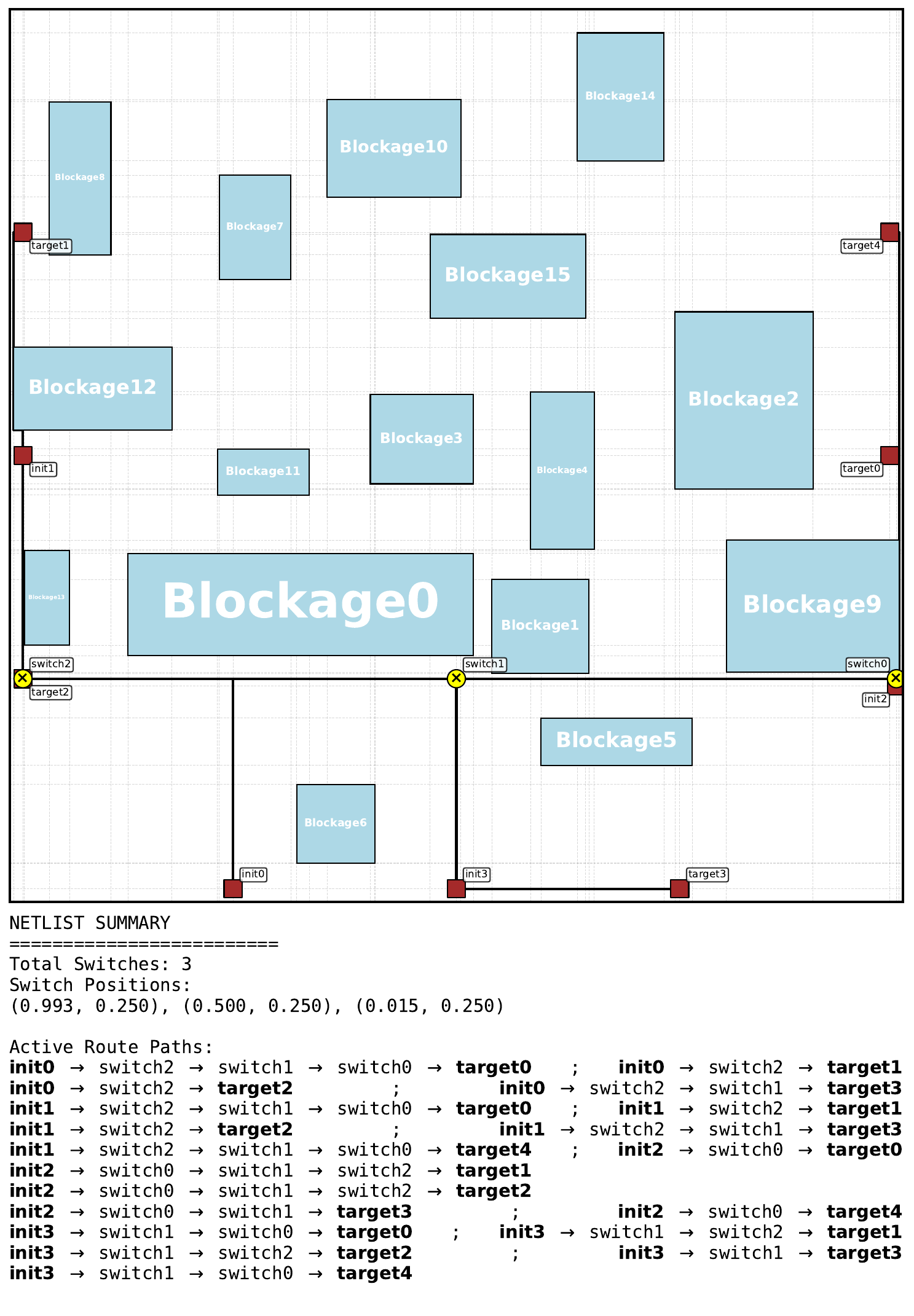}
        \caption*{PPO}
    \end{subfigure}
    \hspace{0.04\linewidth}
    \begin{subfigure}[t]{0.31\linewidth}
        \vspace{0pt}
        \centering
        \includegraphics[width=\linewidth]{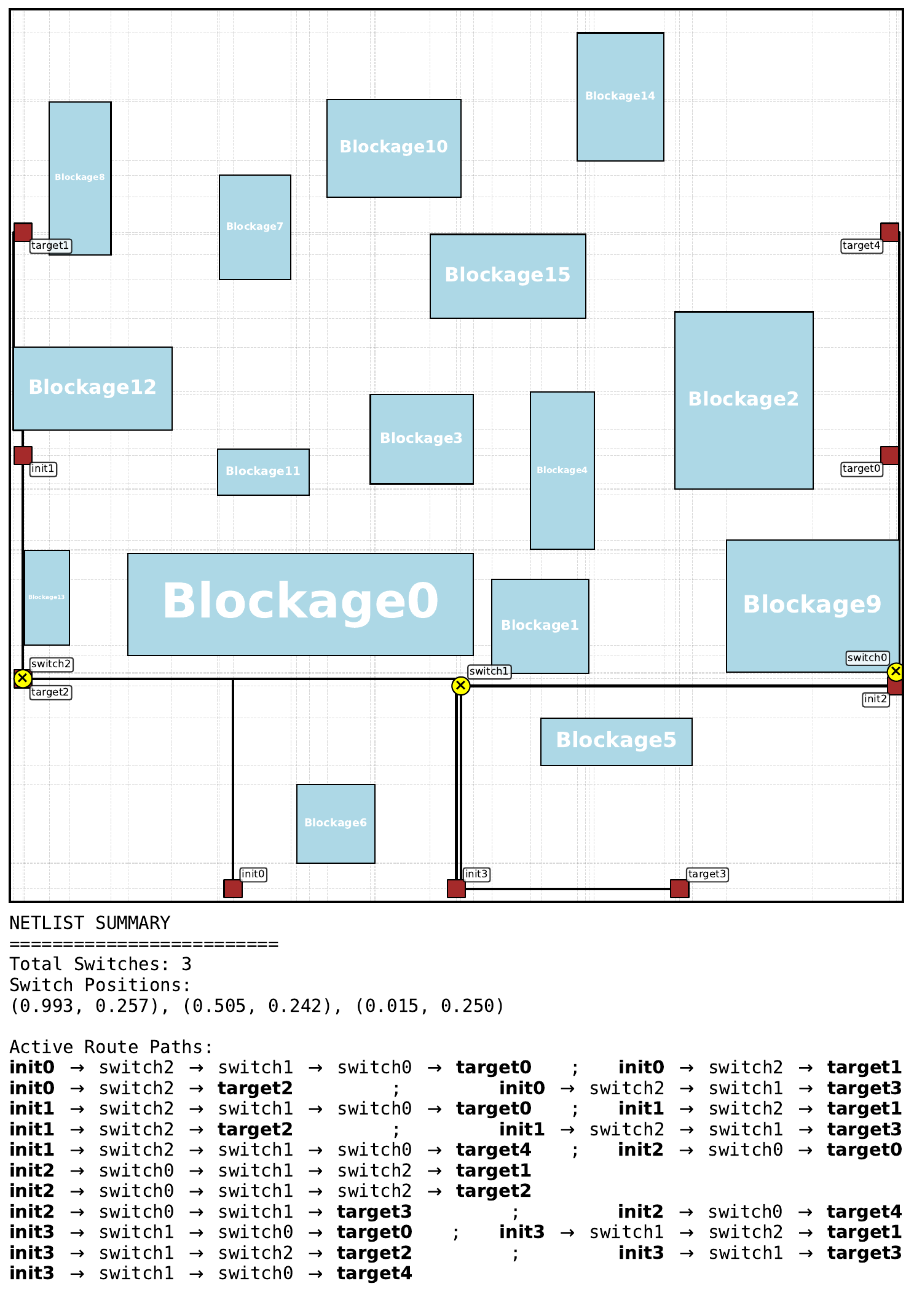}
        \caption*{MCTS}
    \end{subfigure}
\caption{Instance 12.}
\label{fig:best_pretrain_instance_12}
\end{figure*}

\begin{figure*}[h]
\centering
    \begin{subfigure}[t]{0.31\linewidth}
        \vspace{0pt}
        \centering
        \includegraphics[width=\linewidth]{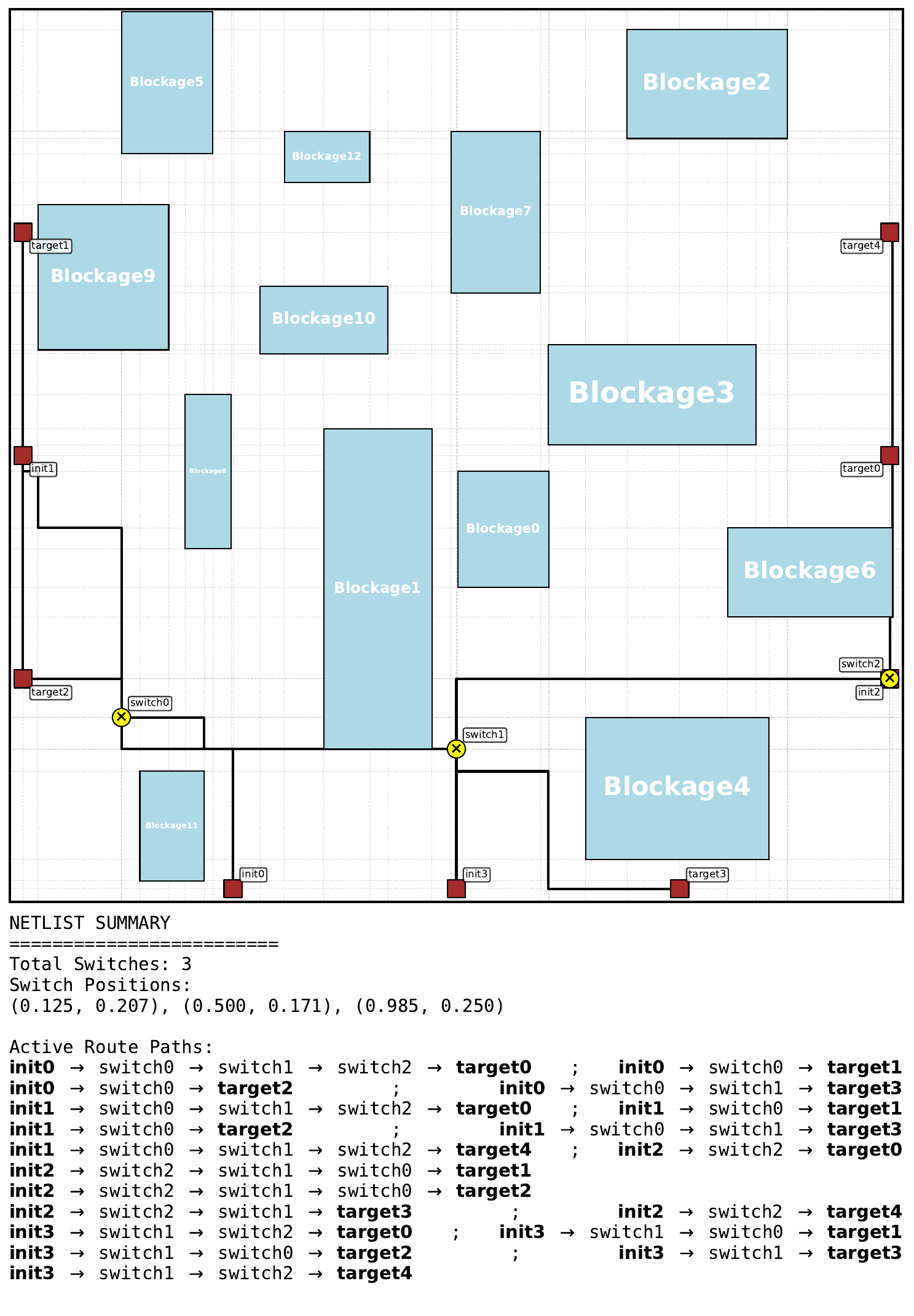}
        \caption*{Heuristic}
    \end{subfigure}
    \hfill
    \begin{subfigure}[t]{0.31\linewidth}
        \vspace{0pt}
        \centering
        \includegraphics[width=\linewidth]{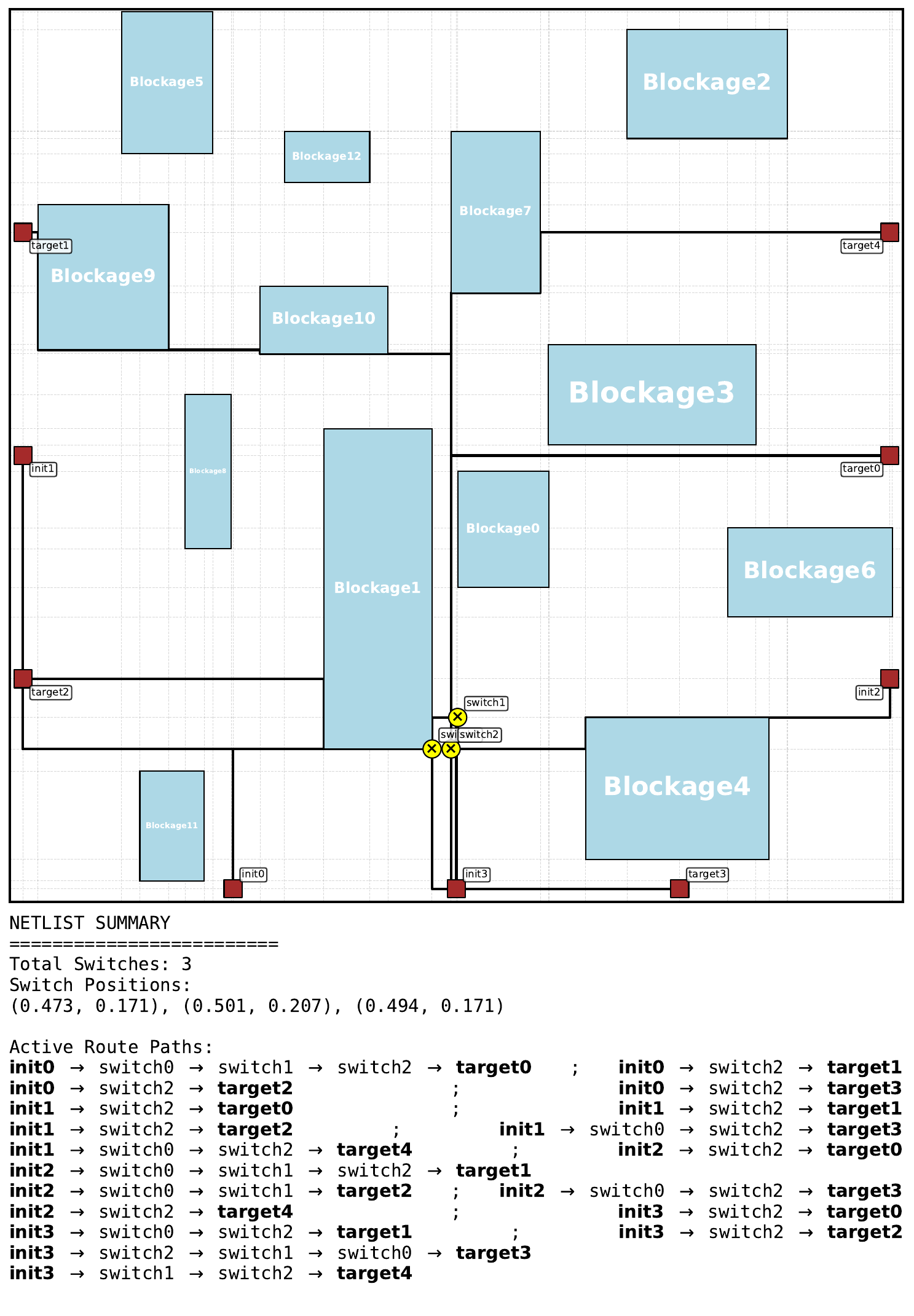}
        \caption*{Random search}
    \end{subfigure}
    \hfill
    \begin{subfigure}[t]{0.31\linewidth}
        \vspace{0pt}
        \centering
        \includegraphics[width=\linewidth]{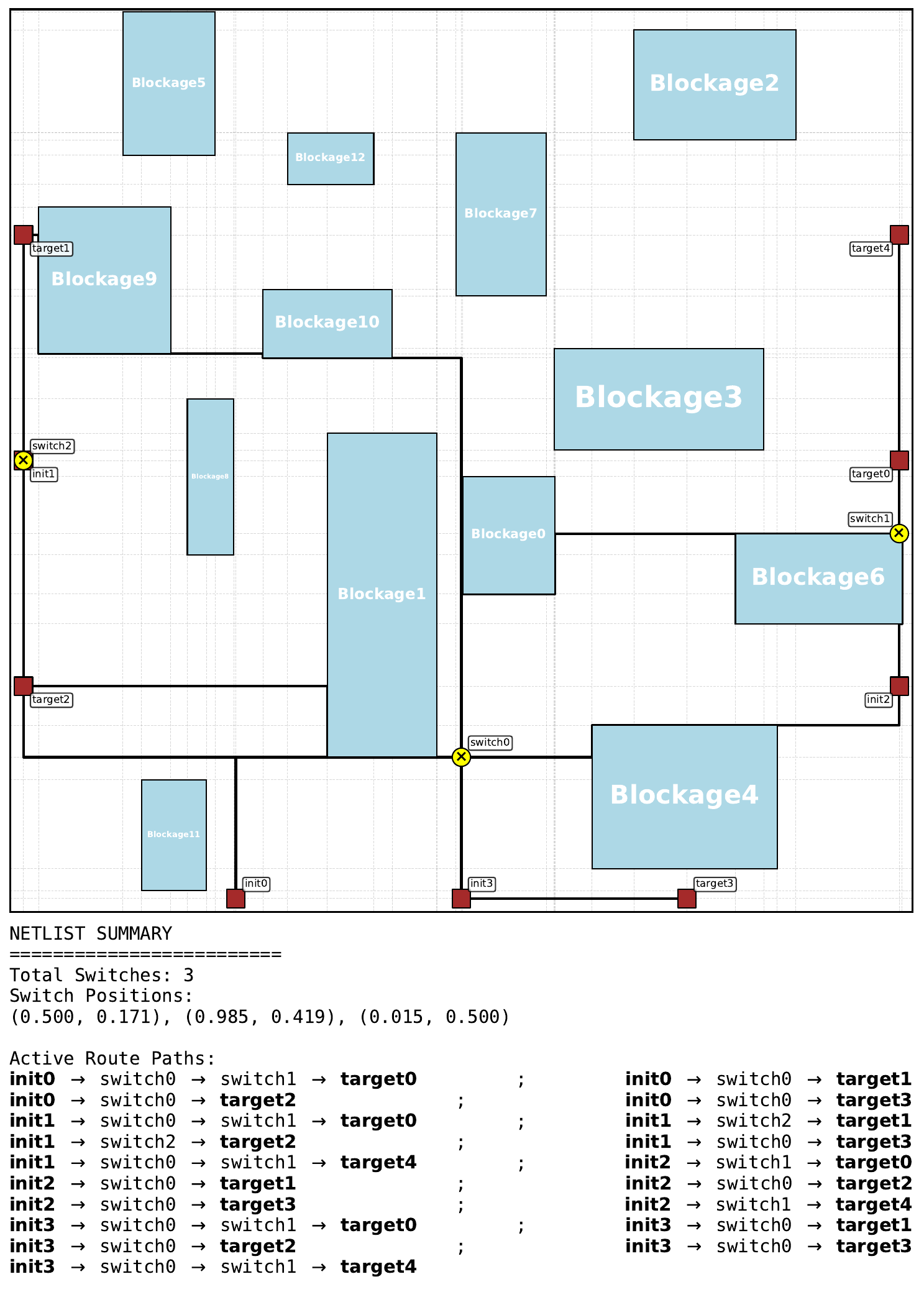}
        \caption*{Genetic algorithm}
    \end{subfigure}
    \\[0.6em]
    \begin{subfigure}[t]{0.31\linewidth}
        \vspace{0pt}
        \centering
        \includegraphics[width=\linewidth]{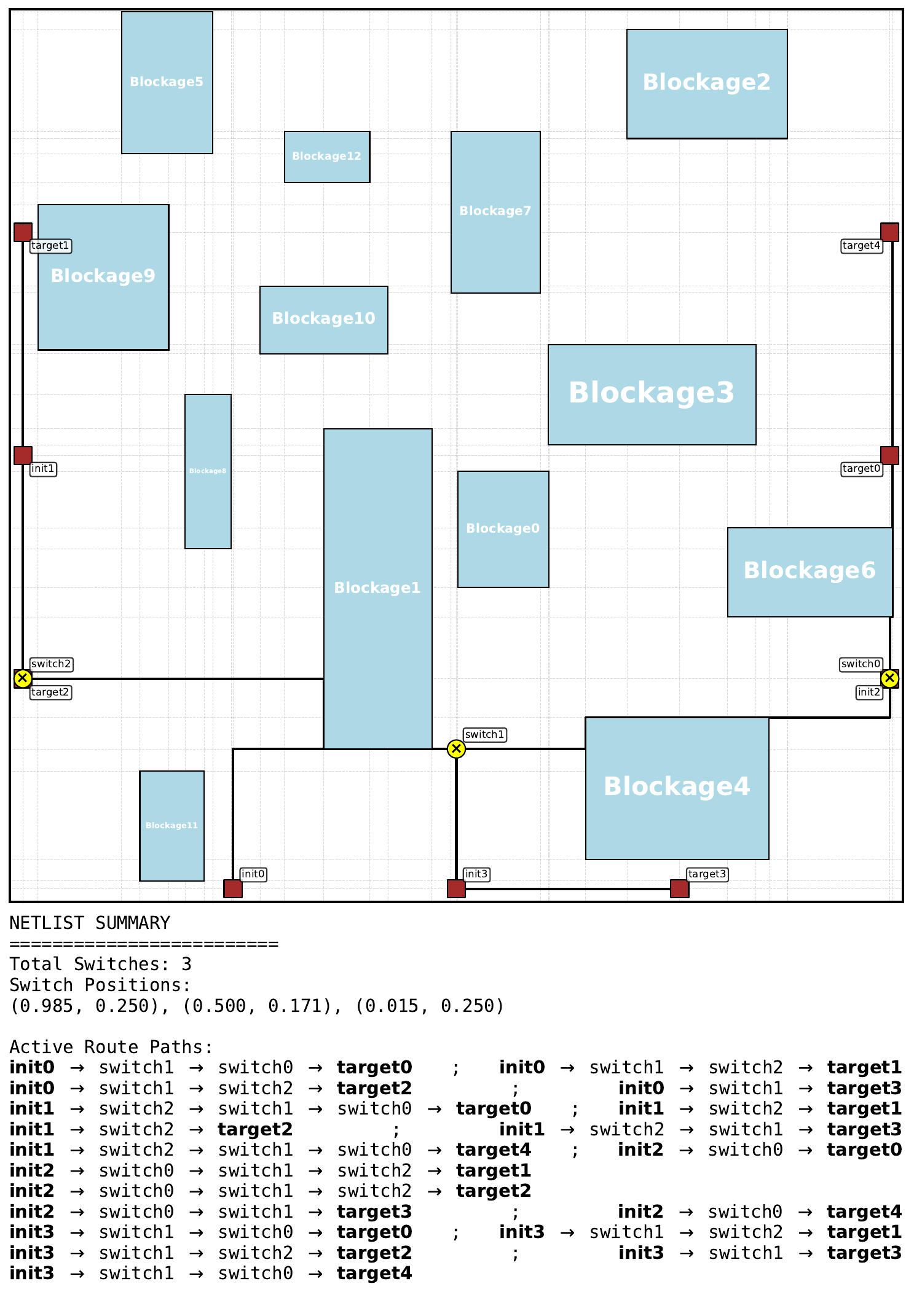}
        \caption*{PPO}
    \end{subfigure}
    \hspace{0.04\linewidth}
    \begin{subfigure}[t]{0.31\linewidth}
        \vspace{0pt}
        \centering
        \includegraphics[width=\linewidth]{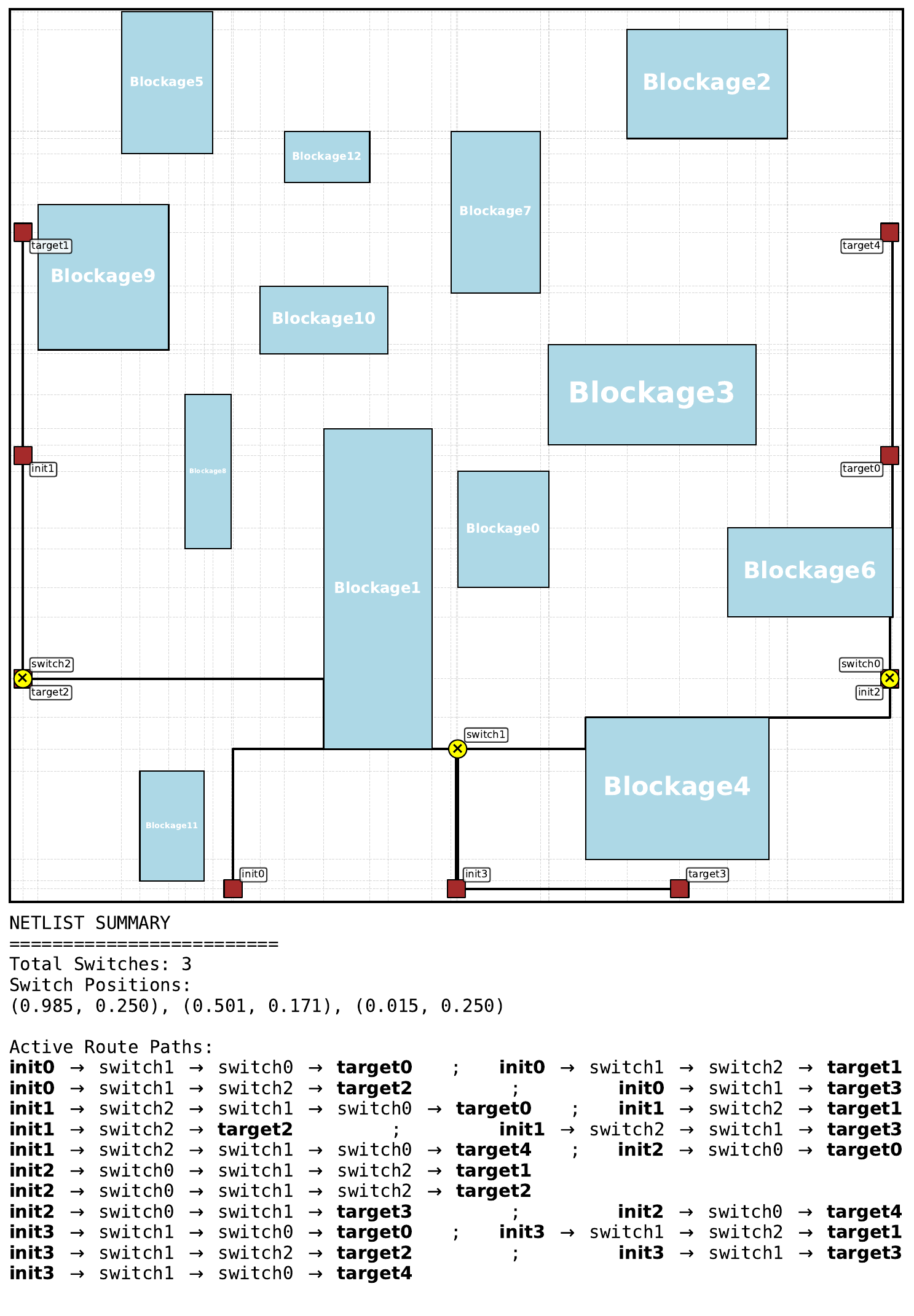}
        \caption*{MCTS}
    \end{subfigure}
\caption{Instance 13.}
\label{fig:best_pretrain_instance_13}
\end{figure*}

\begin{figure*}[h]
\centering
    \begin{subfigure}[t]{0.31\linewidth}
        \vspace{0pt}
        \centering
        \includegraphics[width=\linewidth]{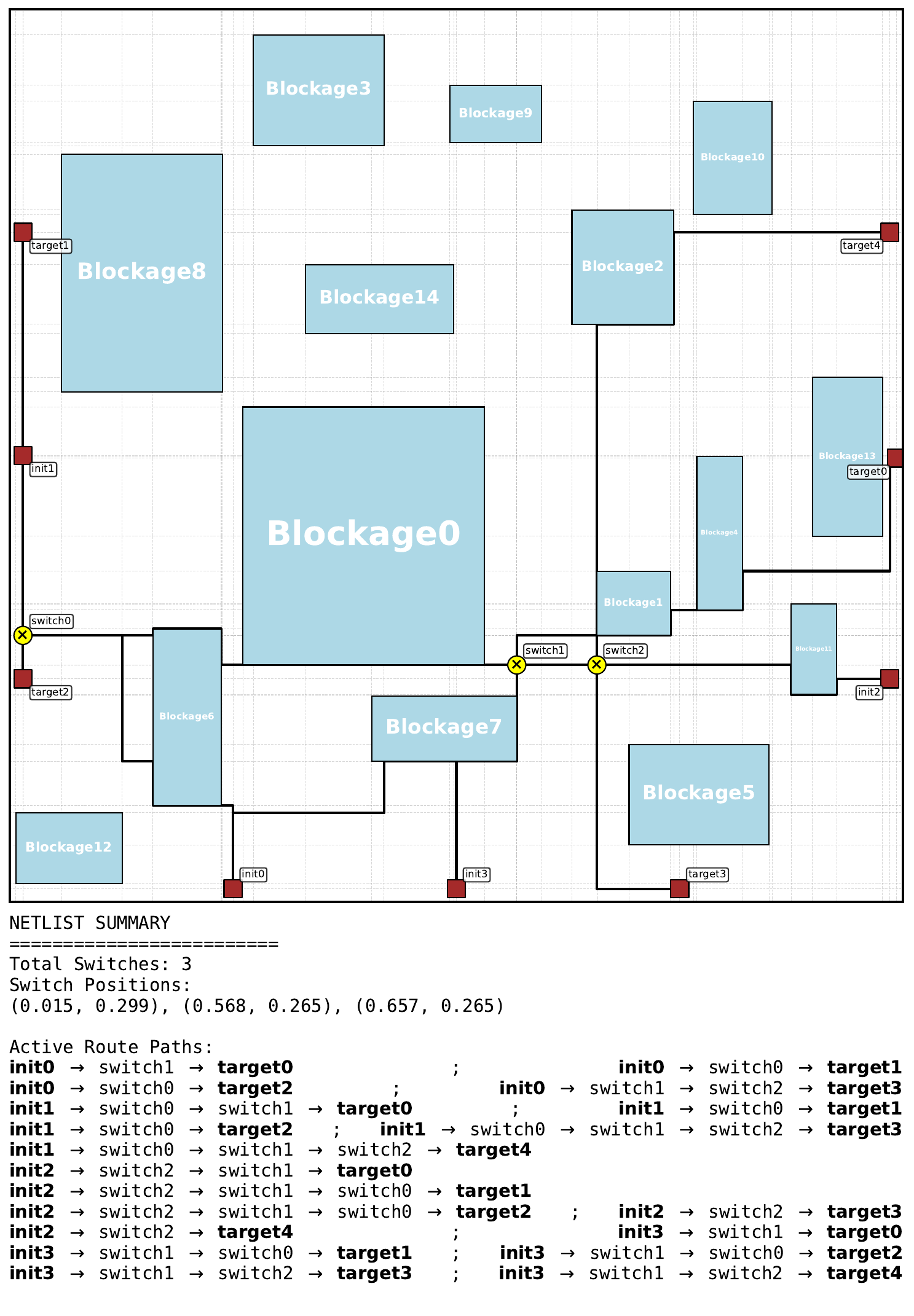}
        \caption*{Heuristic}
    \end{subfigure}
    \hfill
    \begin{subfigure}[t]{0.31\linewidth}
        \vspace{0pt}
        \centering
        \includegraphics[width=\linewidth]{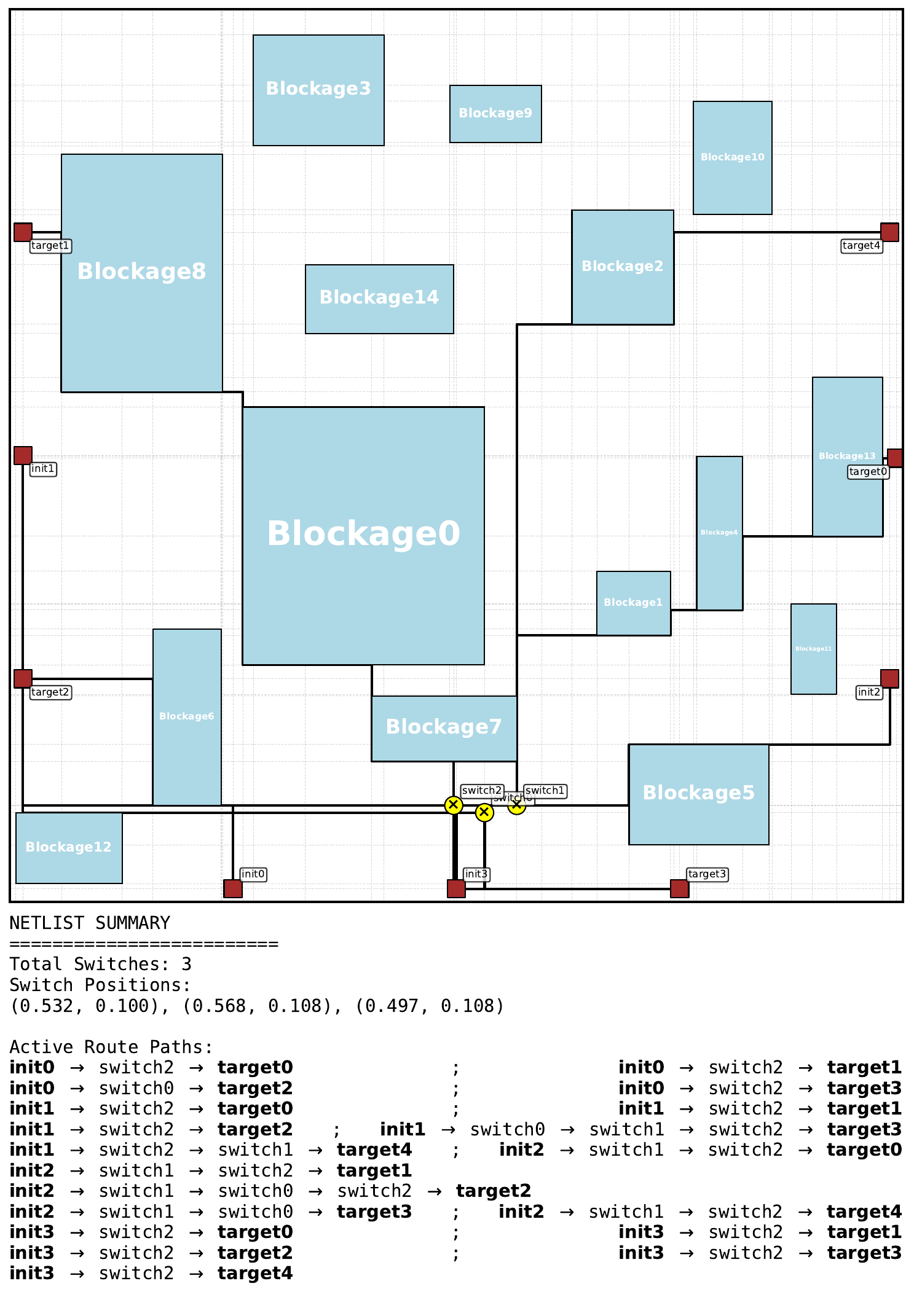}
        \caption*{Random search}
    \end{subfigure}
    \hfill
    \begin{subfigure}[t]{0.31\linewidth}
        \vspace{0pt}
        \centering
        \includegraphics[width=\linewidth]{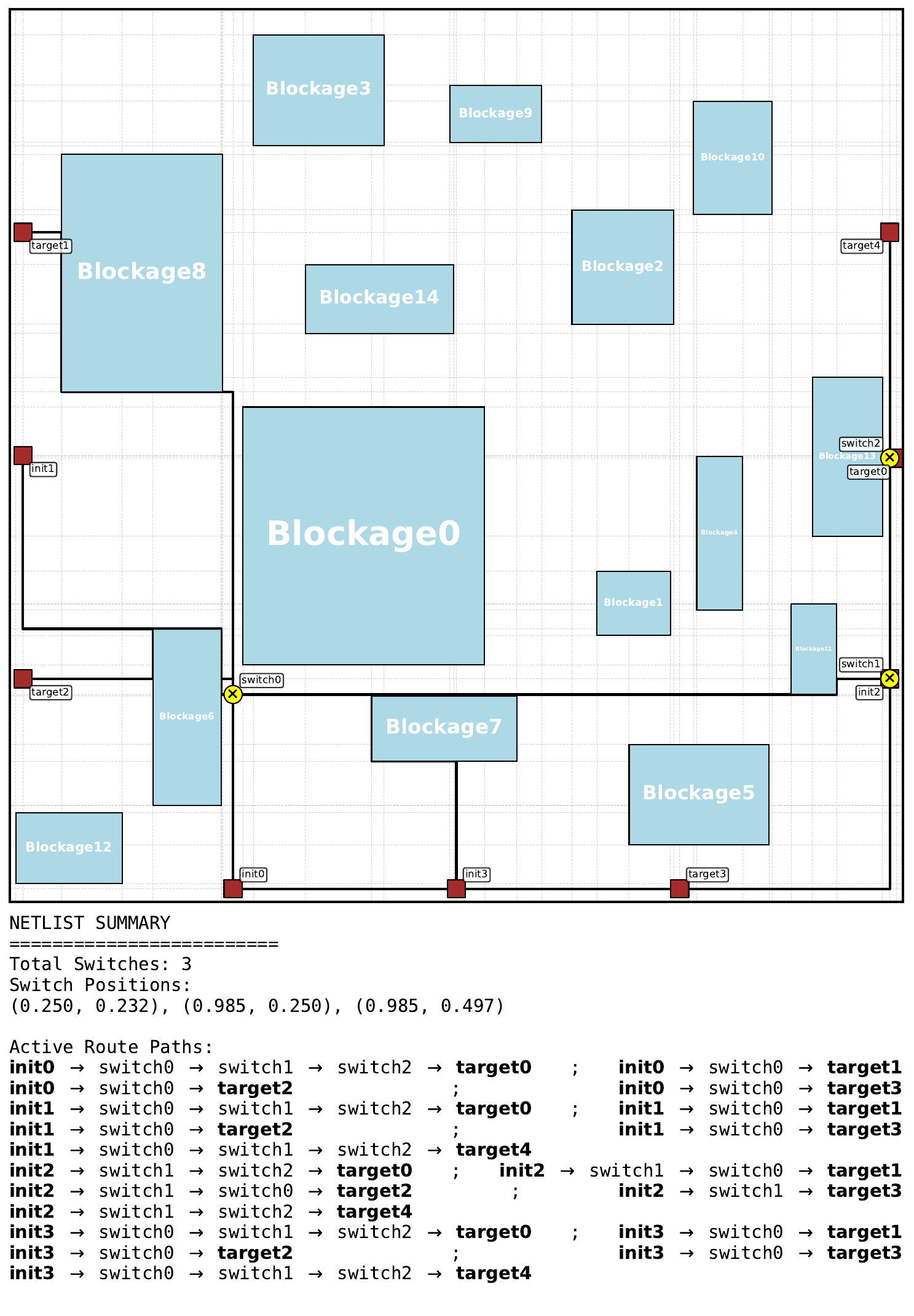}
        \caption*{Genetic algorithm}
    \end{subfigure}
    \\[0.6em]
    \begin{subfigure}[t]{0.31\linewidth}
        \vspace{0pt}
        \centering
        \includegraphics[width=\linewidth]{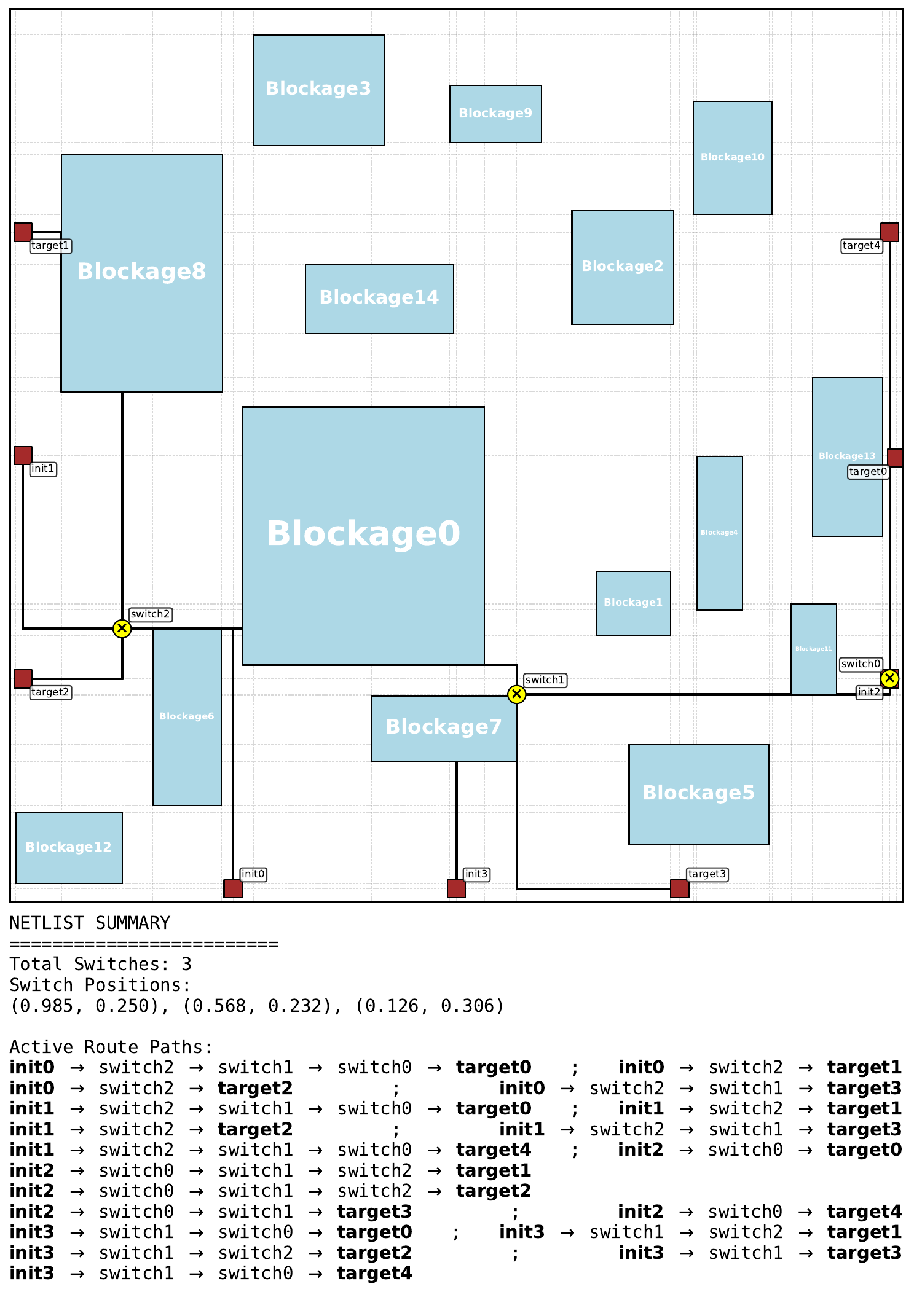}
        \caption*{PPO}
    \end{subfigure}
    \hspace{0.04\linewidth}
    \begin{subfigure}[t]{0.31\linewidth}
        \vspace{0pt}
        \centering
        \includegraphics[width=\linewidth]{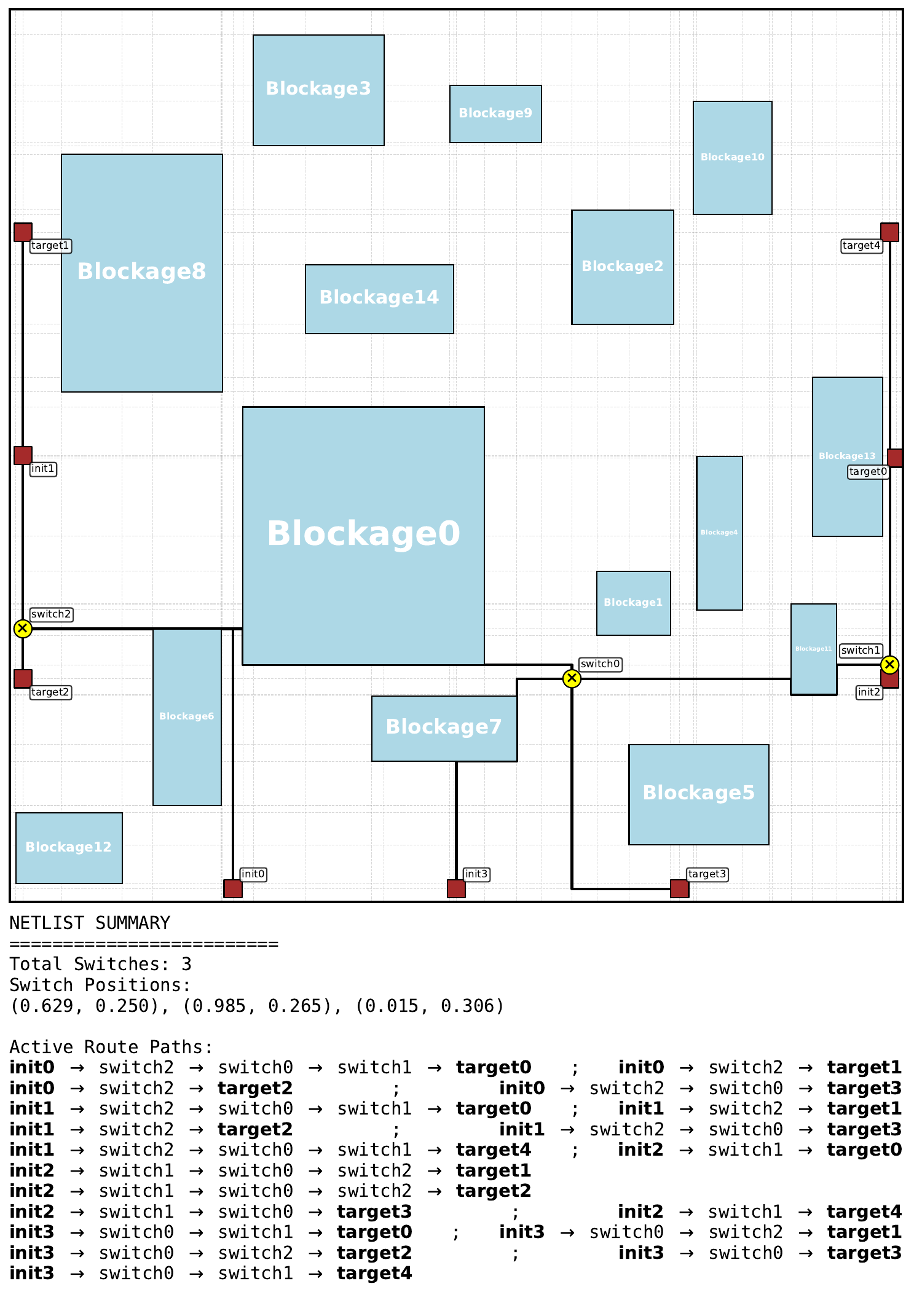}
        \caption*{MCTS}
    \end{subfigure}
\caption{Instance 14.}
\label{fig:best_pretrain_instance_14}
\end{figure*}

\begin{figure*}[h]
\centering
    \begin{subfigure}[t]{0.31\linewidth}
        \vspace{0pt}
        \centering
        \includegraphics[width=\linewidth]{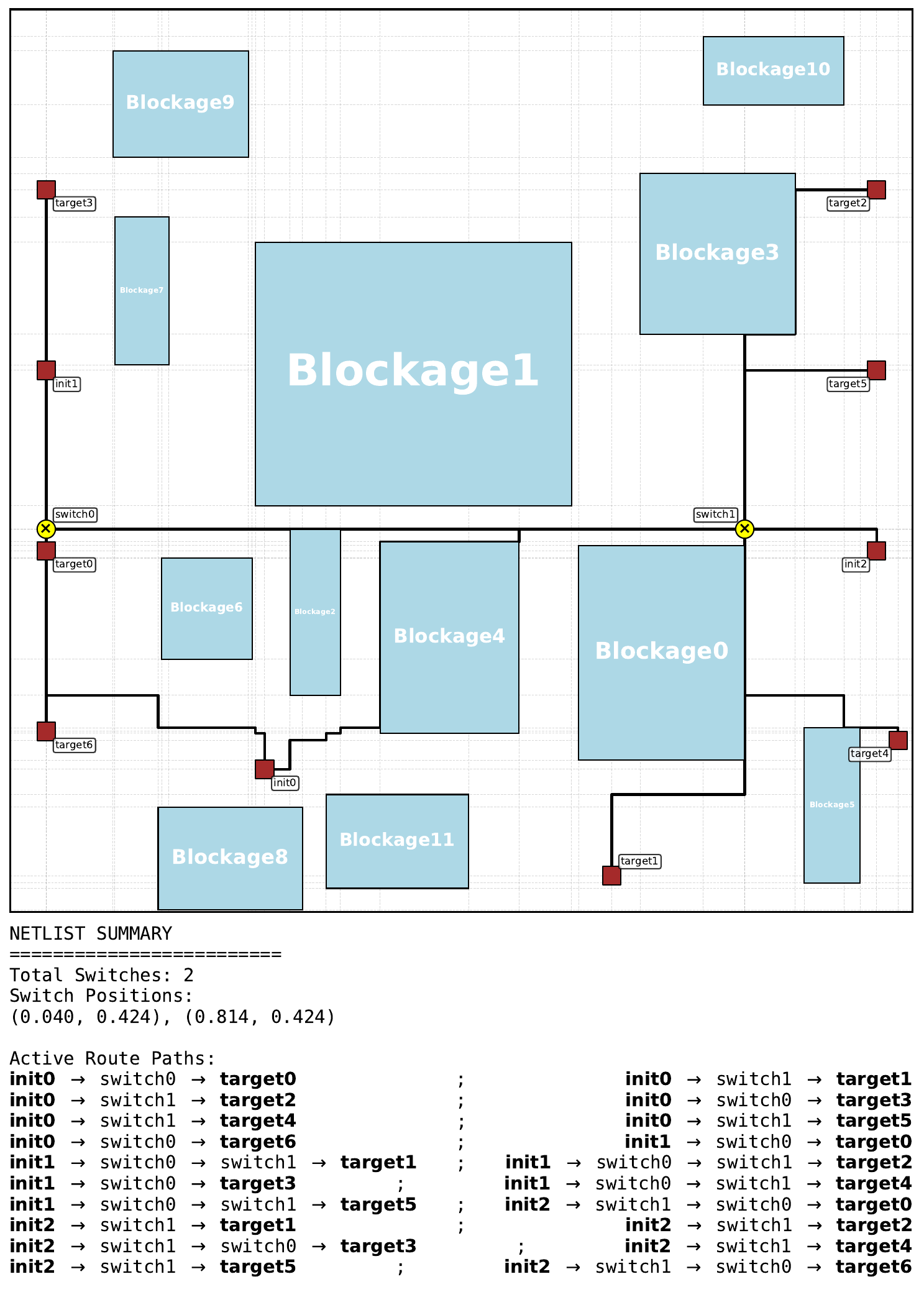}
        \caption*{Heuristic}
    \end{subfigure}
    \hfill
    \begin{subfigure}[t]{0.31\linewidth}
        \vspace{0pt}
        \centering
        \includegraphics[width=\linewidth]{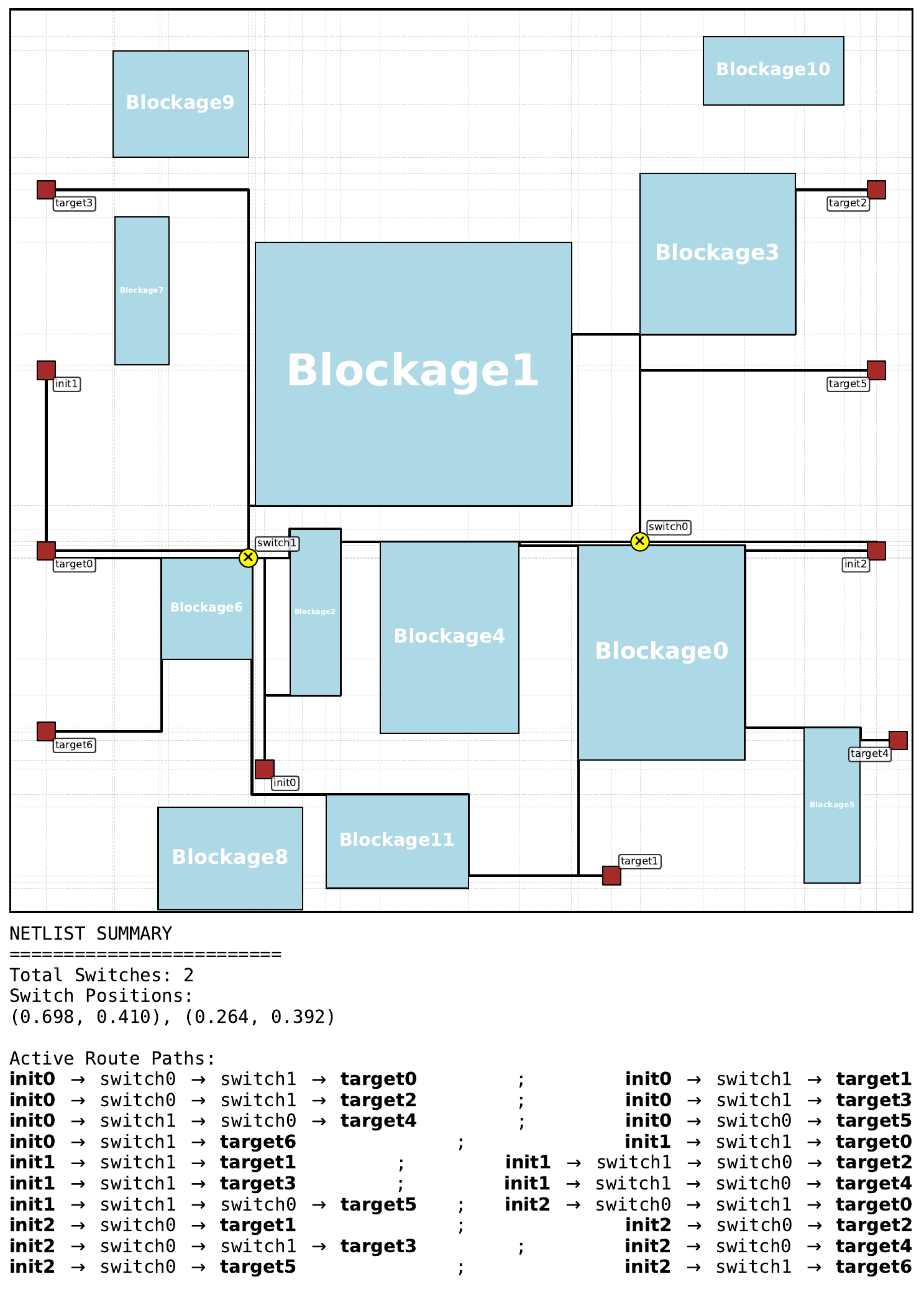}
        \caption*{Random search}
    \end{subfigure}
    \hfill
    \begin{subfigure}[t]{0.31\linewidth}
        \vspace{0pt}
        \centering
        \includegraphics[width=\linewidth]{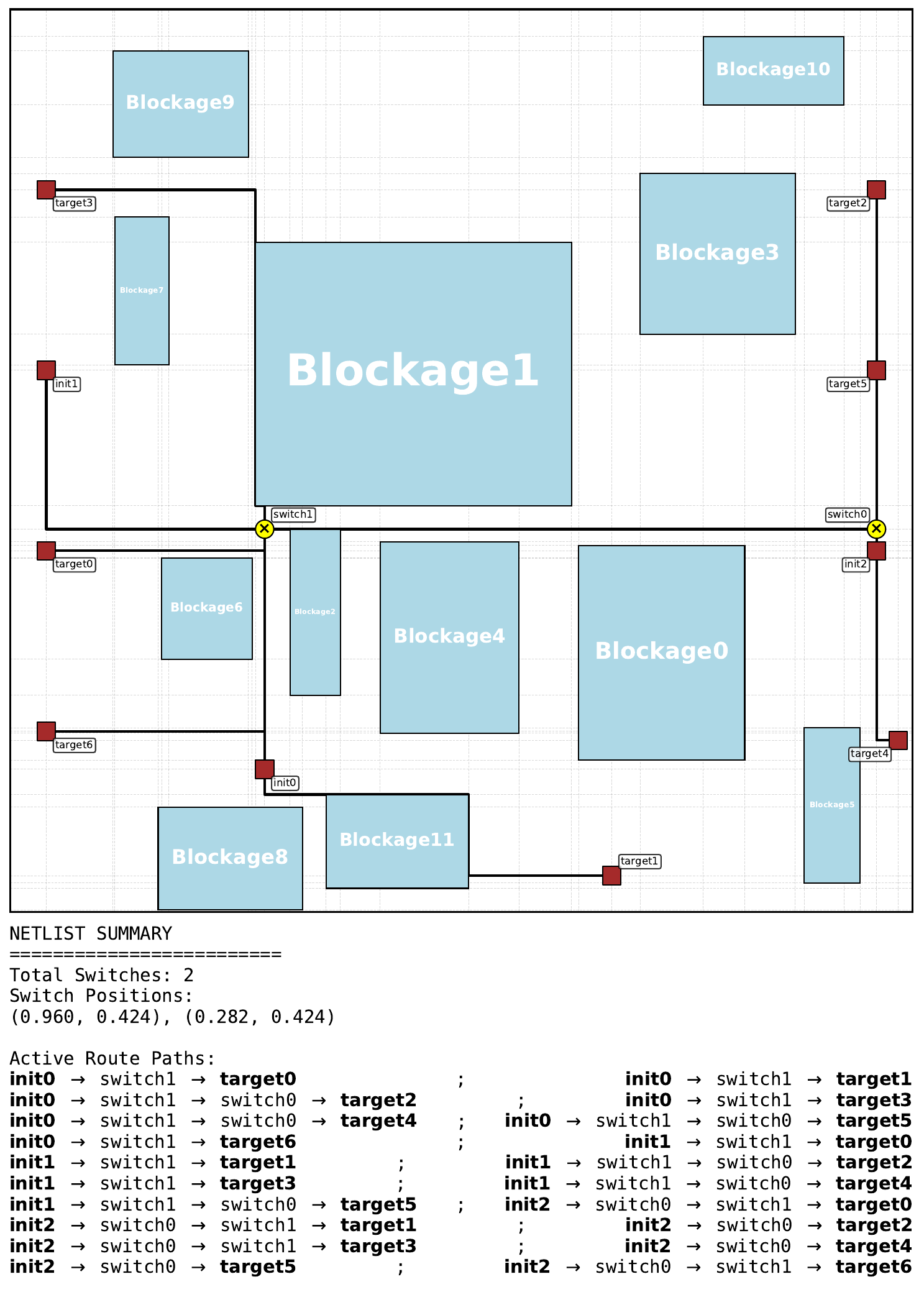}
        \caption*{Genetic algorithm}
    \end{subfigure}
    \\[0.6em]
    \begin{subfigure}[t]{0.31\linewidth}
        \vspace{0pt}
        \centering
        \includegraphics[width=\linewidth]{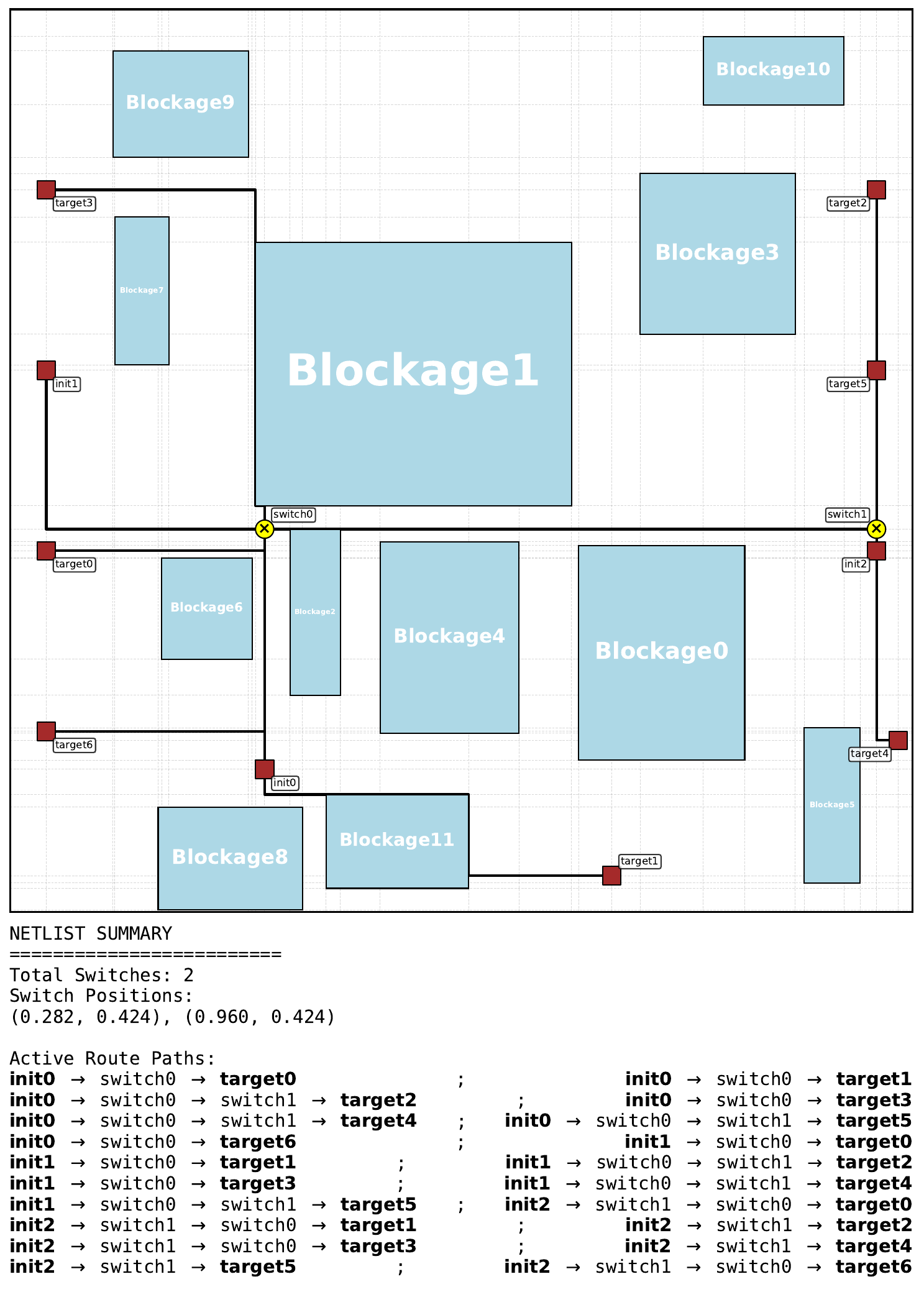}
        \caption*{PPO}
    \end{subfigure}
    \hspace{0.04\linewidth}
    \begin{subfigure}[t]{0.31\linewidth}
        \vspace{0pt}
        \centering
        \includegraphics[width=\linewidth]{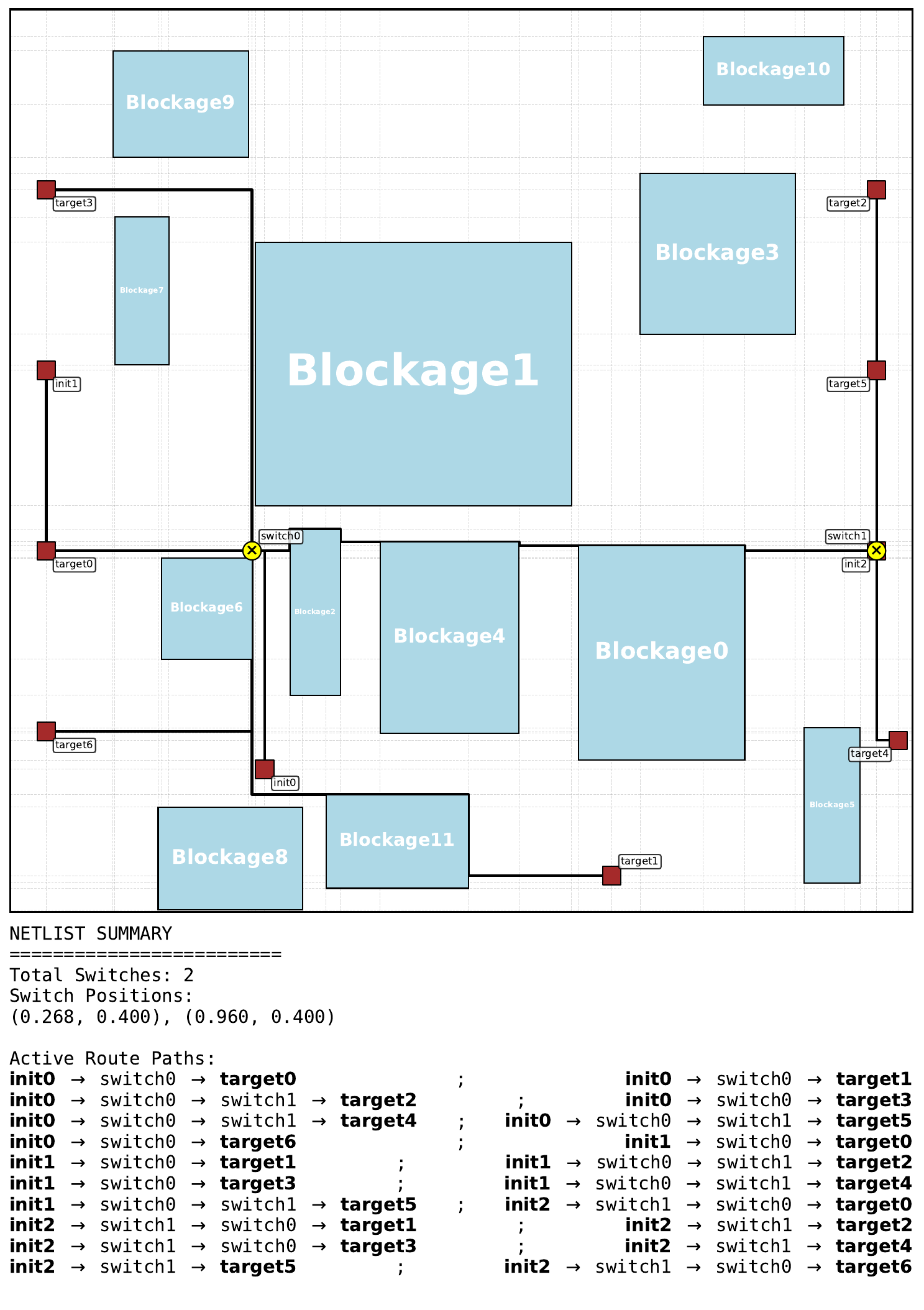}
        \caption*{MCTS}
    \end{subfigure}
\caption{Instance 15.}
\label{fig:best_pretrain_instance_15}
\end{figure*}

\begin{figure*}[h]
\centering
    \begin{subfigure}[t]{0.31\linewidth}
        \vspace{0pt}
        \centering
        \includegraphics[width=\linewidth]{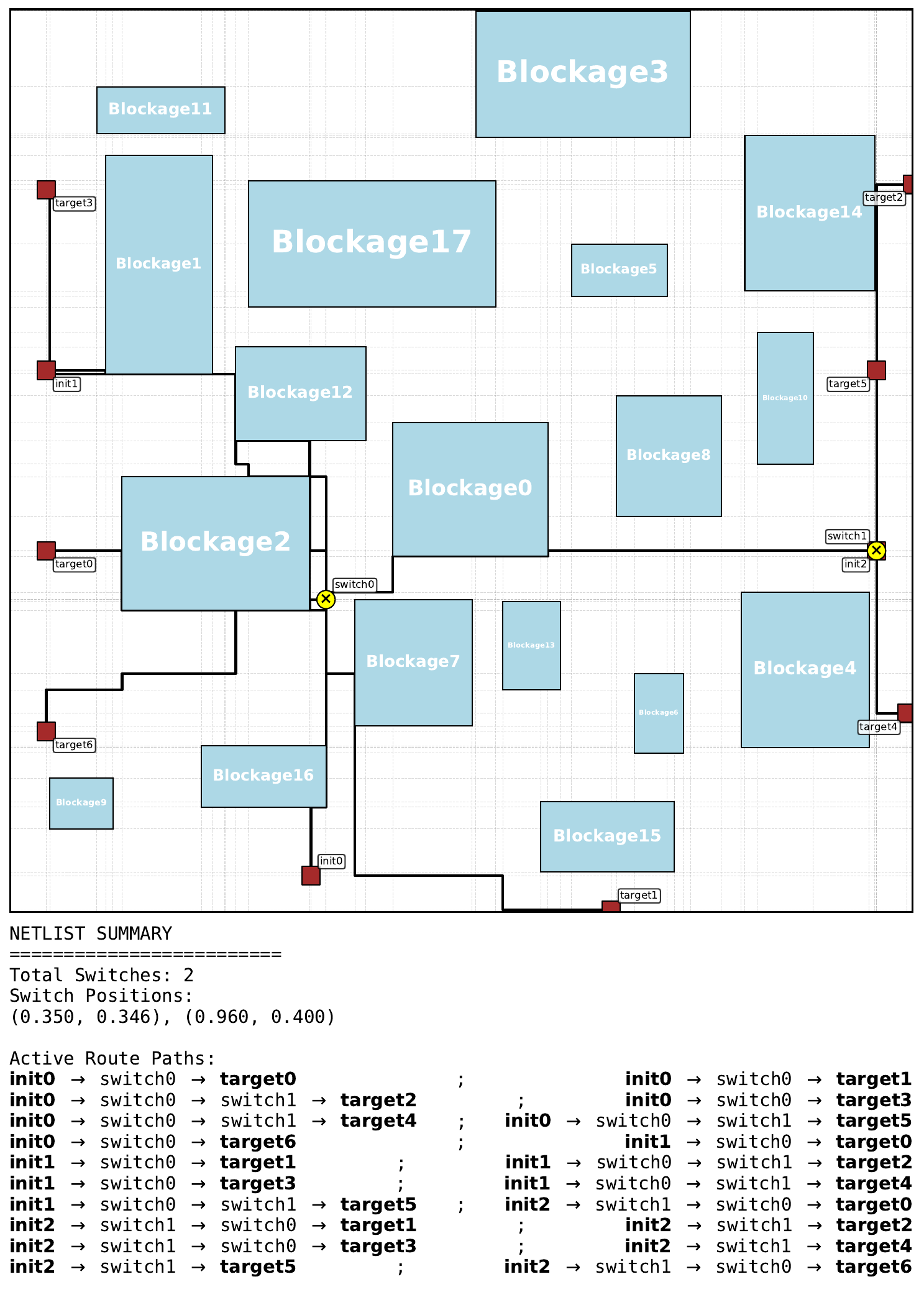}
        \caption*{Heuristic}
    \end{subfigure}
    \hfill
    \begin{subfigure}[t]{0.31\linewidth}
        \vspace{0pt}
        \centering
        \includegraphics[width=\linewidth]{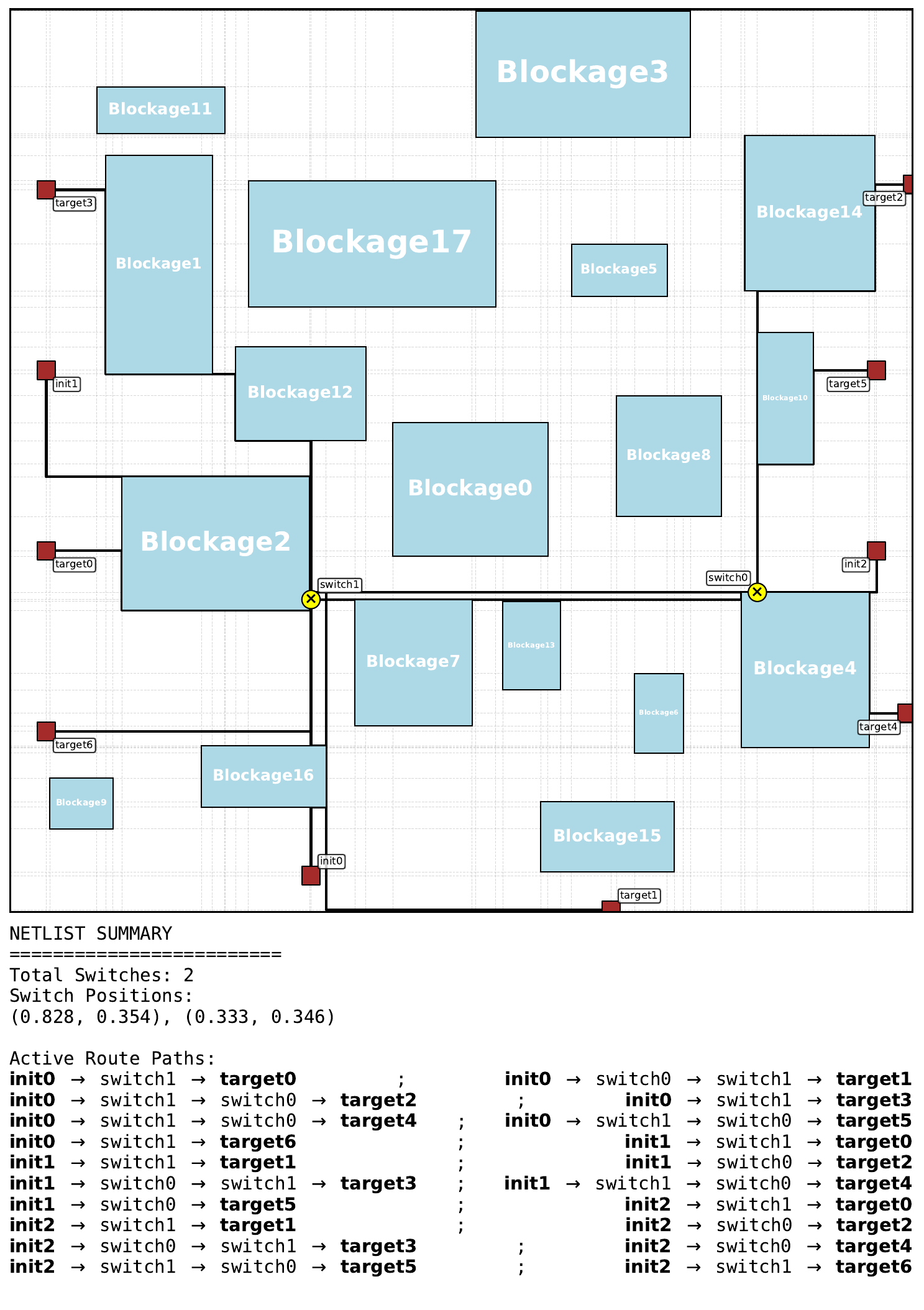}
        \caption*{Random search}
    \end{subfigure}
    \hfill
    \begin{subfigure}[t]{0.31\linewidth}
        \vspace{0pt}
        \centering
        \includegraphics[width=\linewidth]{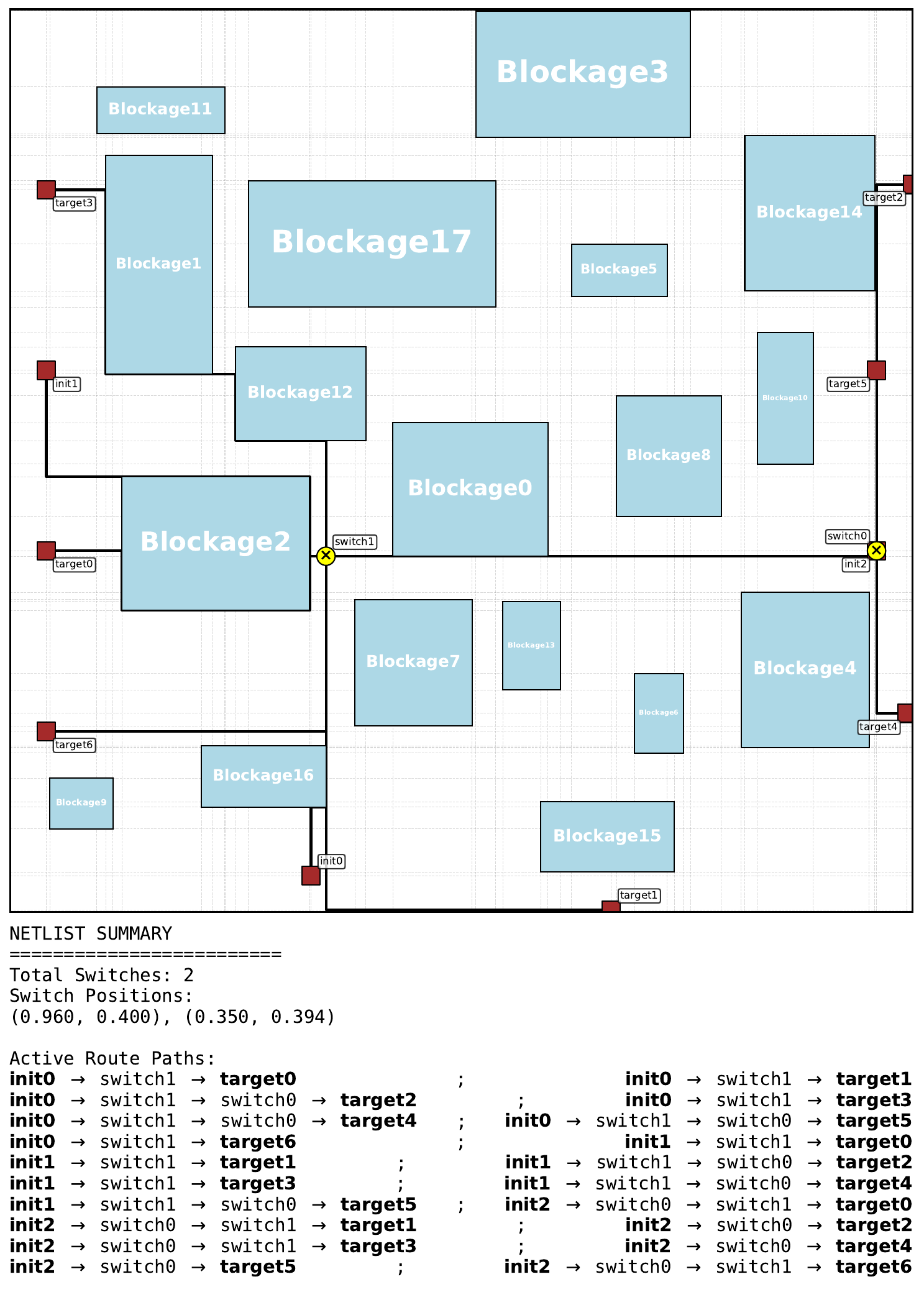}
        \caption*{Genetic algorithm}
    \end{subfigure}
    \\[0.6em]
    \begin{subfigure}[t]{0.31\linewidth}
        \vspace{0pt}
        \centering
        \includegraphics[width=\linewidth]{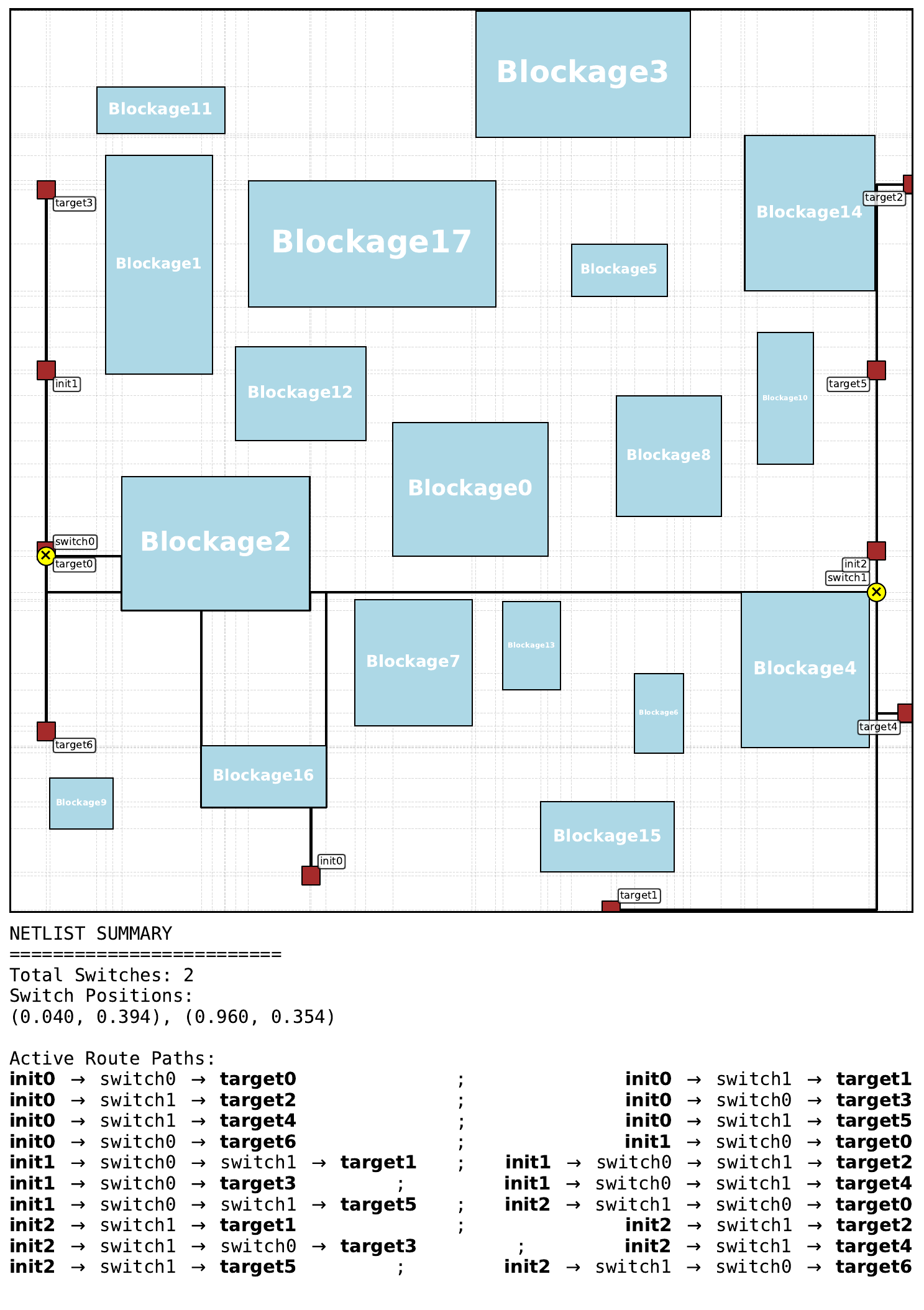}
        \caption*{PPO}
    \end{subfigure}
    \hspace{0.04\linewidth}
    \begin{subfigure}[t]{0.31\linewidth}
        \vspace{0pt}
        \centering
        \includegraphics[width=\linewidth]{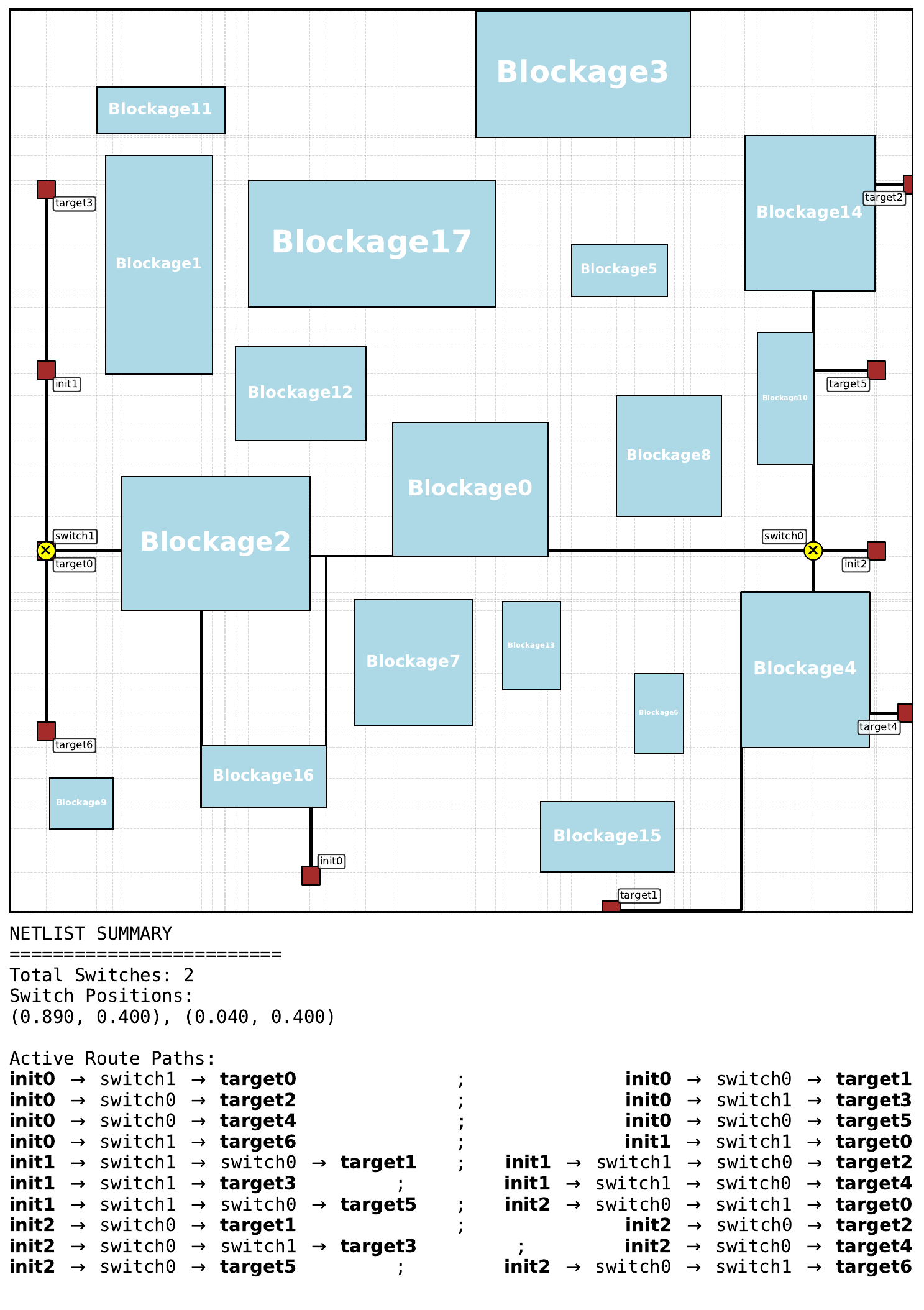}
        \caption*{MCTS}
    \end{subfigure}
\caption{Instance 16.}
\label{fig:best_pretrain_instance_16}
\end{figure*}

\begin{figure*}[h]
\centering
    \begin{subfigure}[t]{0.31\linewidth}
        \vspace{0pt}
        \centering
        \includegraphics[width=\linewidth]{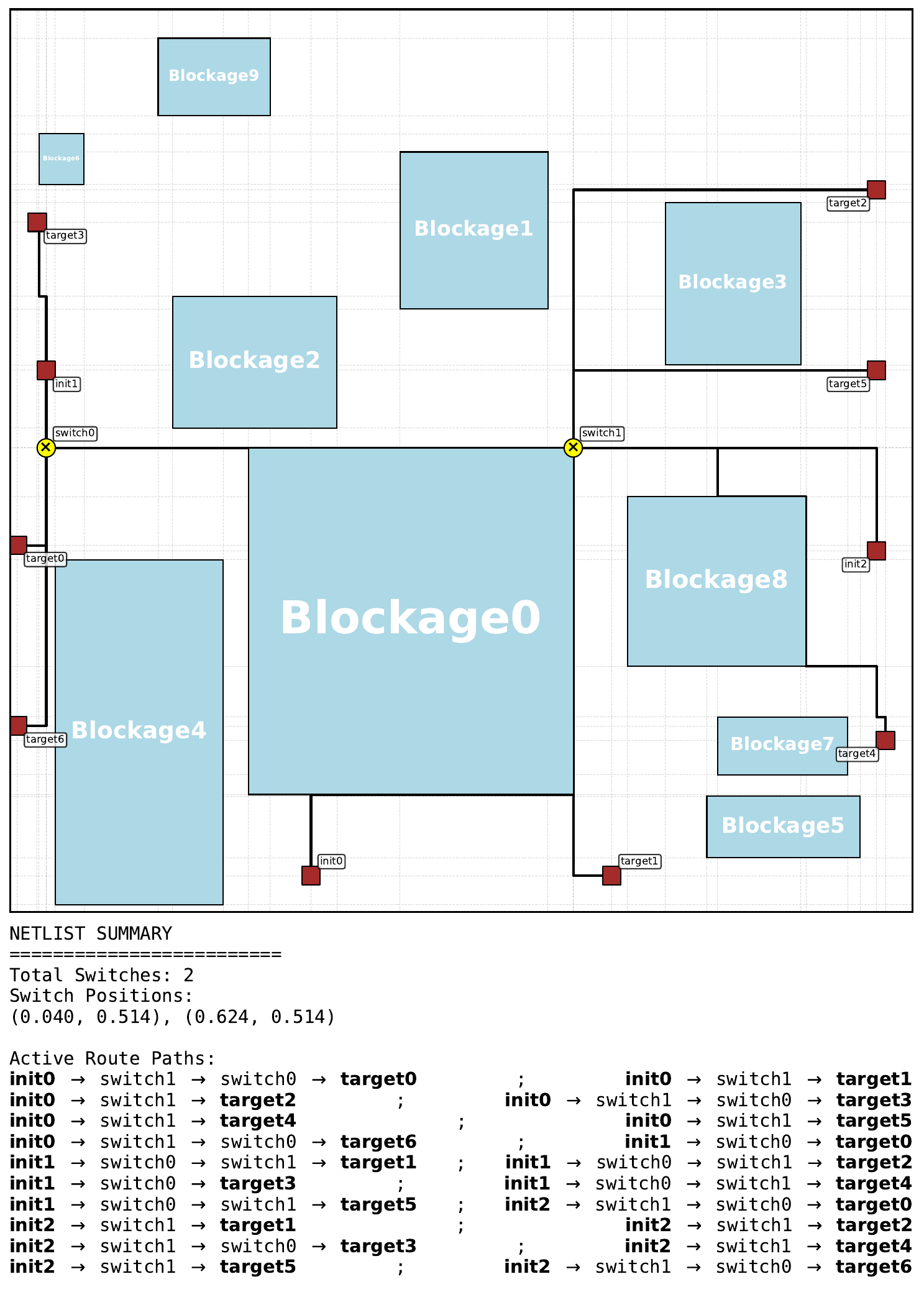}
        \caption*{Heuristic}
    \end{subfigure}
    \hfill
    \begin{subfigure}[t]{0.31\linewidth}
        \vspace{0pt}
        \centering
        \includegraphics[width=\linewidth]{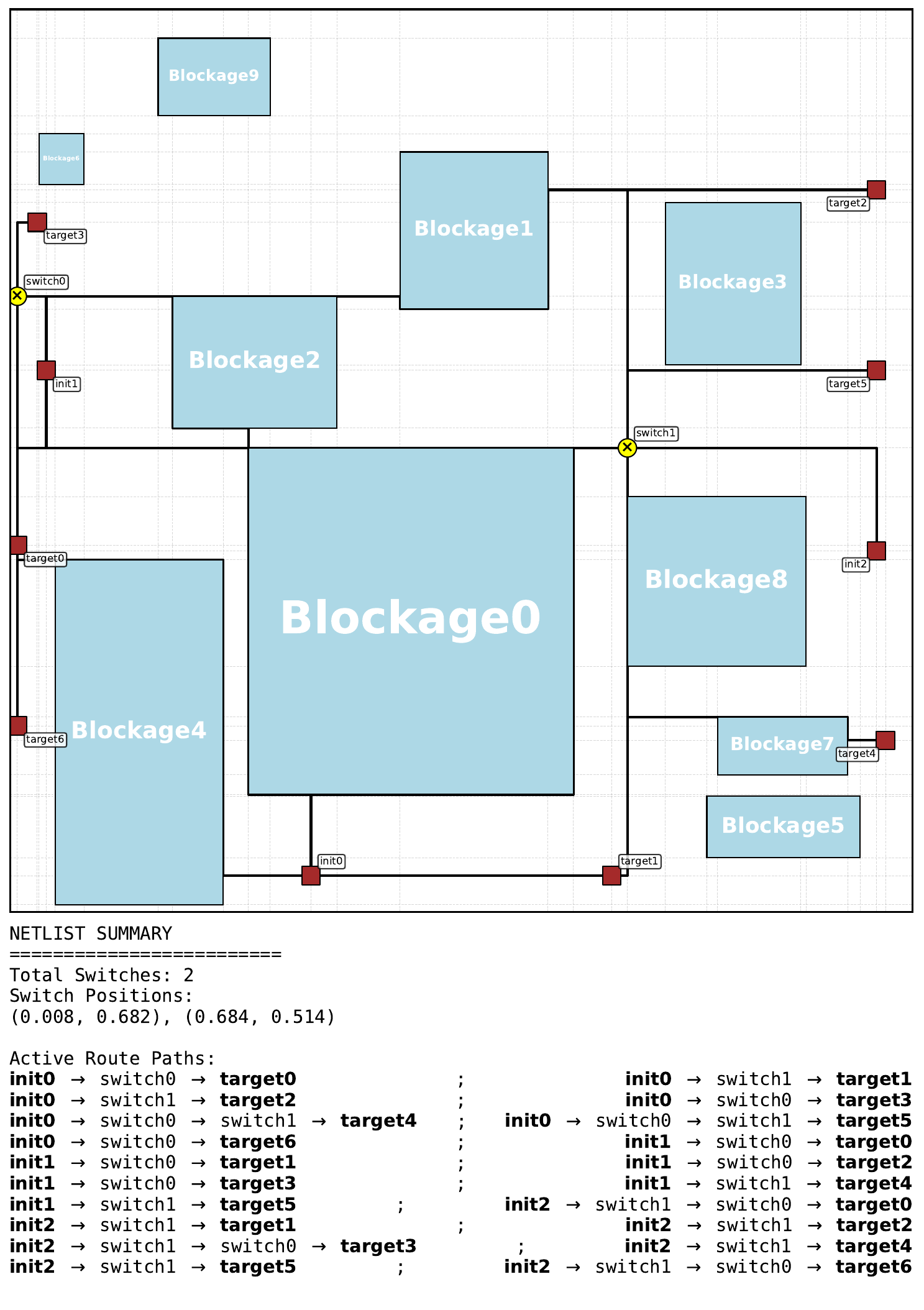}
        \caption*{Random search}
    \end{subfigure}
    \hfill
    \begin{subfigure}[t]{0.31\linewidth}
        \vspace{0pt}
        \centering
        \includegraphics[width=\linewidth]{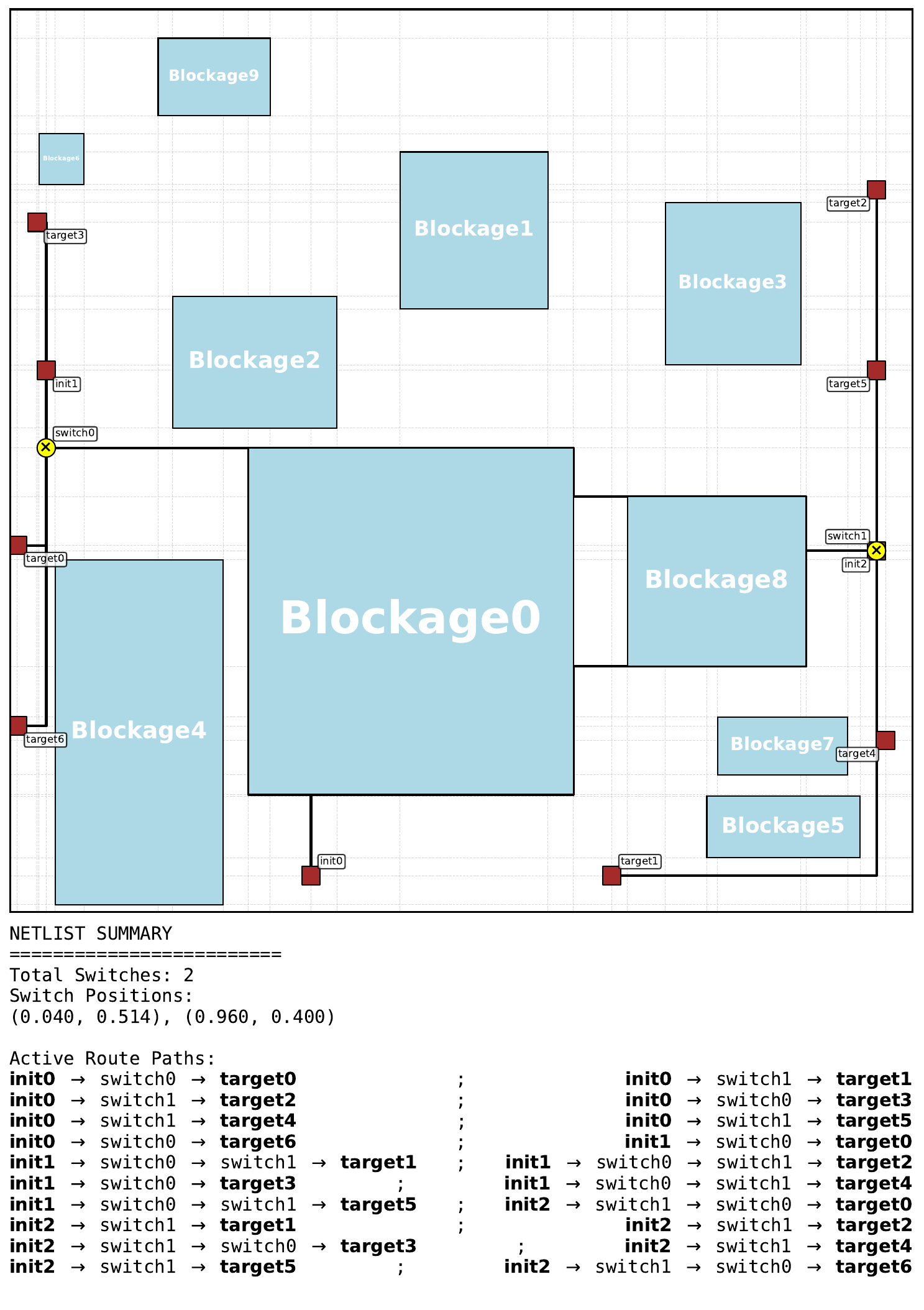}
        \caption*{Genetic algorithm}
    \end{subfigure}
    \\[0.6em]
    \begin{subfigure}[t]{0.31\linewidth}
        \vspace{0pt}
        \centering
        \includegraphics[width=\linewidth]{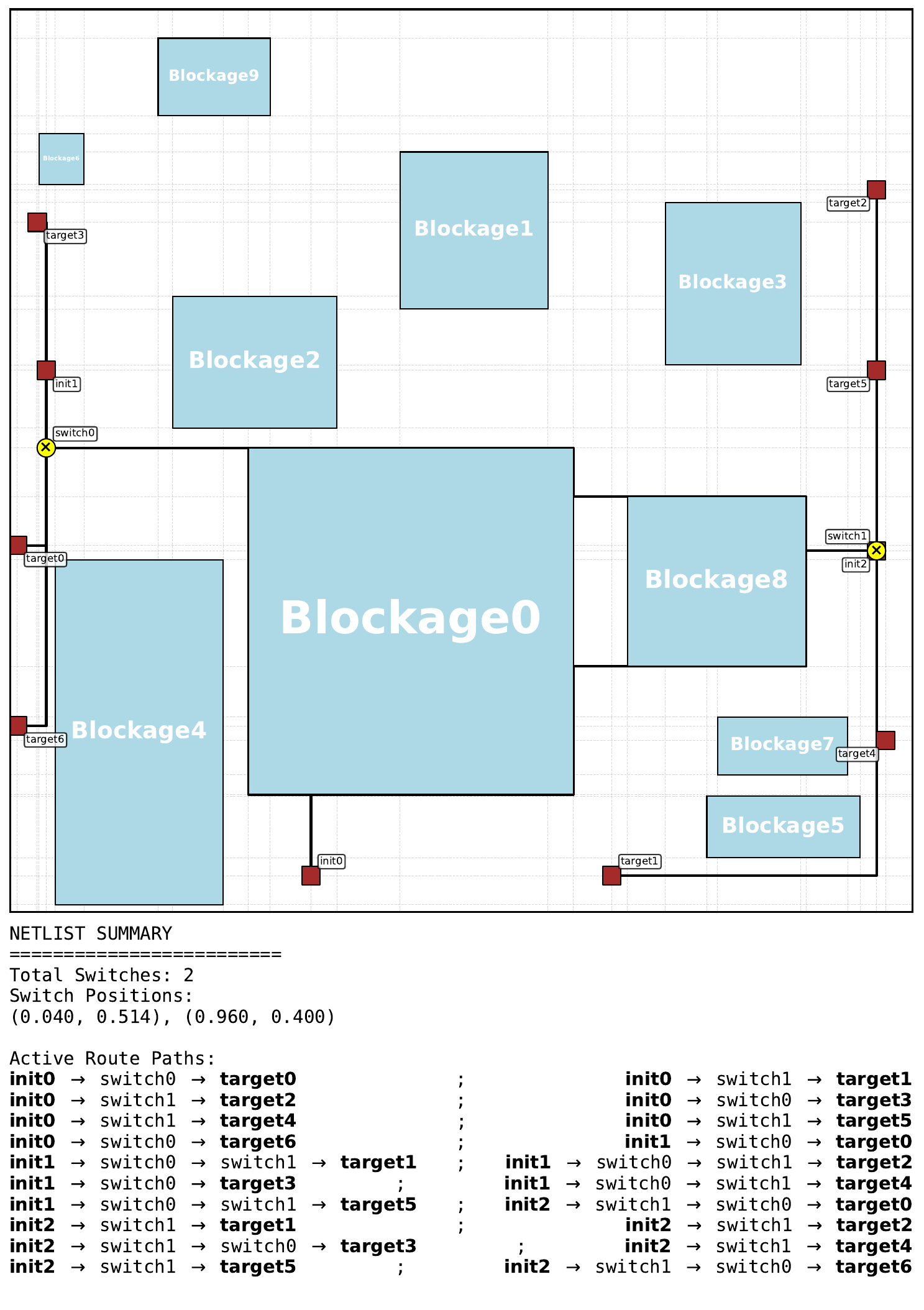}
        \caption*{PPO}
    \end{subfigure}
    \hspace{0.04\linewidth}
    \begin{subfigure}[t]{0.31\linewidth}
        \vspace{0pt}
        \centering
        \includegraphics[width=\linewidth]{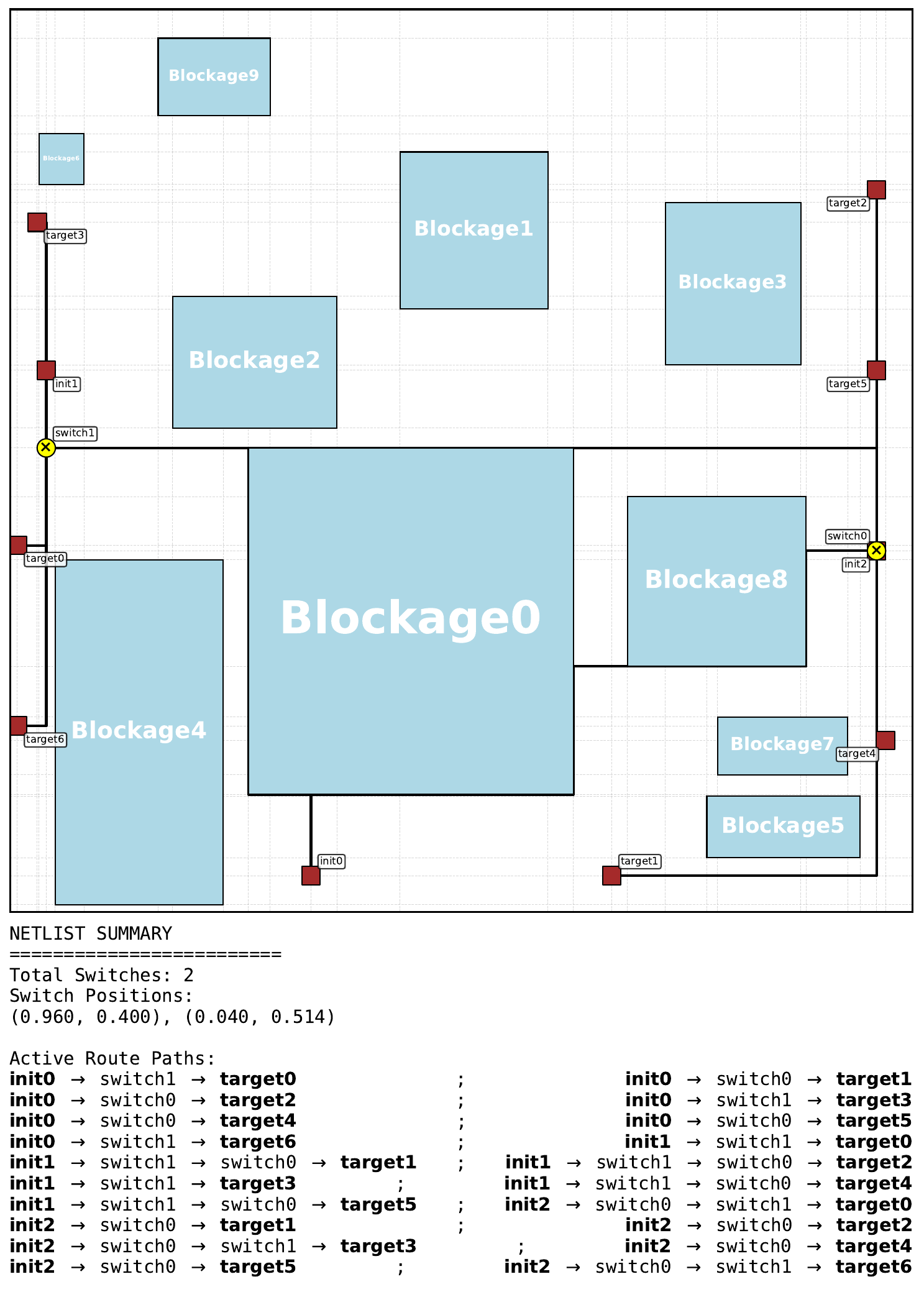}
        \caption*{MCTS}
    \end{subfigure}
\caption{Instance 17.}
\label{fig:best_pretrain_instance_17}
\end{figure*}

\begin{figure*}[h]
\centering
    \begin{subfigure}[t]{0.31\linewidth}
        \vspace{0pt}
        \centering
        \includegraphics[width=\linewidth]{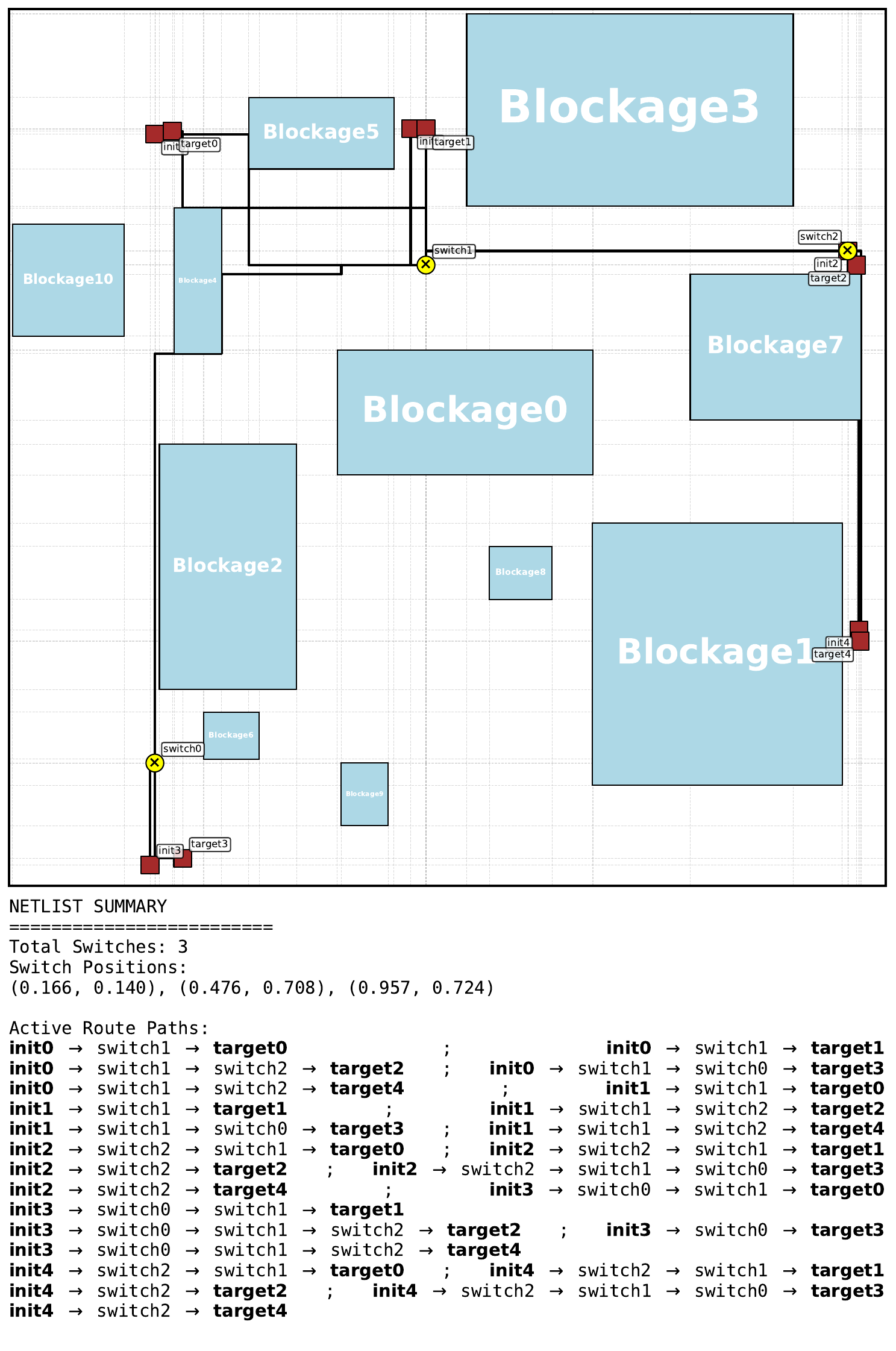}
        \caption*{Heuristic}
    \end{subfigure}
    \hfill
    \begin{subfigure}[t]{0.31\linewidth}
        \vspace{0pt}
        \centering
        \includegraphics[width=\linewidth]{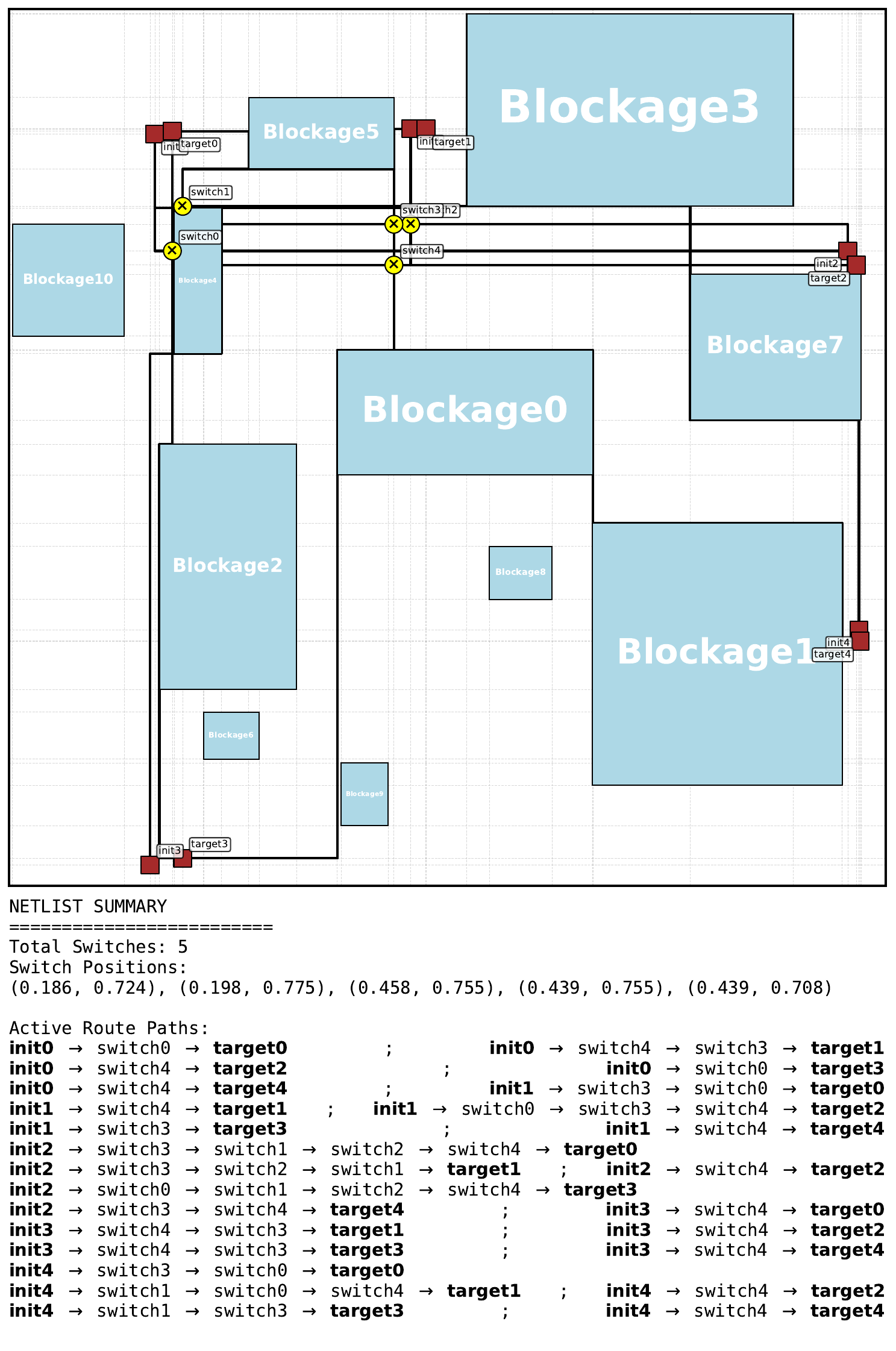}
        \caption*{Random search}
    \end{subfigure}
    \hfill
    \begin{subfigure}[t]{0.31\linewidth}
        \vspace{0pt}
        \centering
        \includegraphics[width=\linewidth]{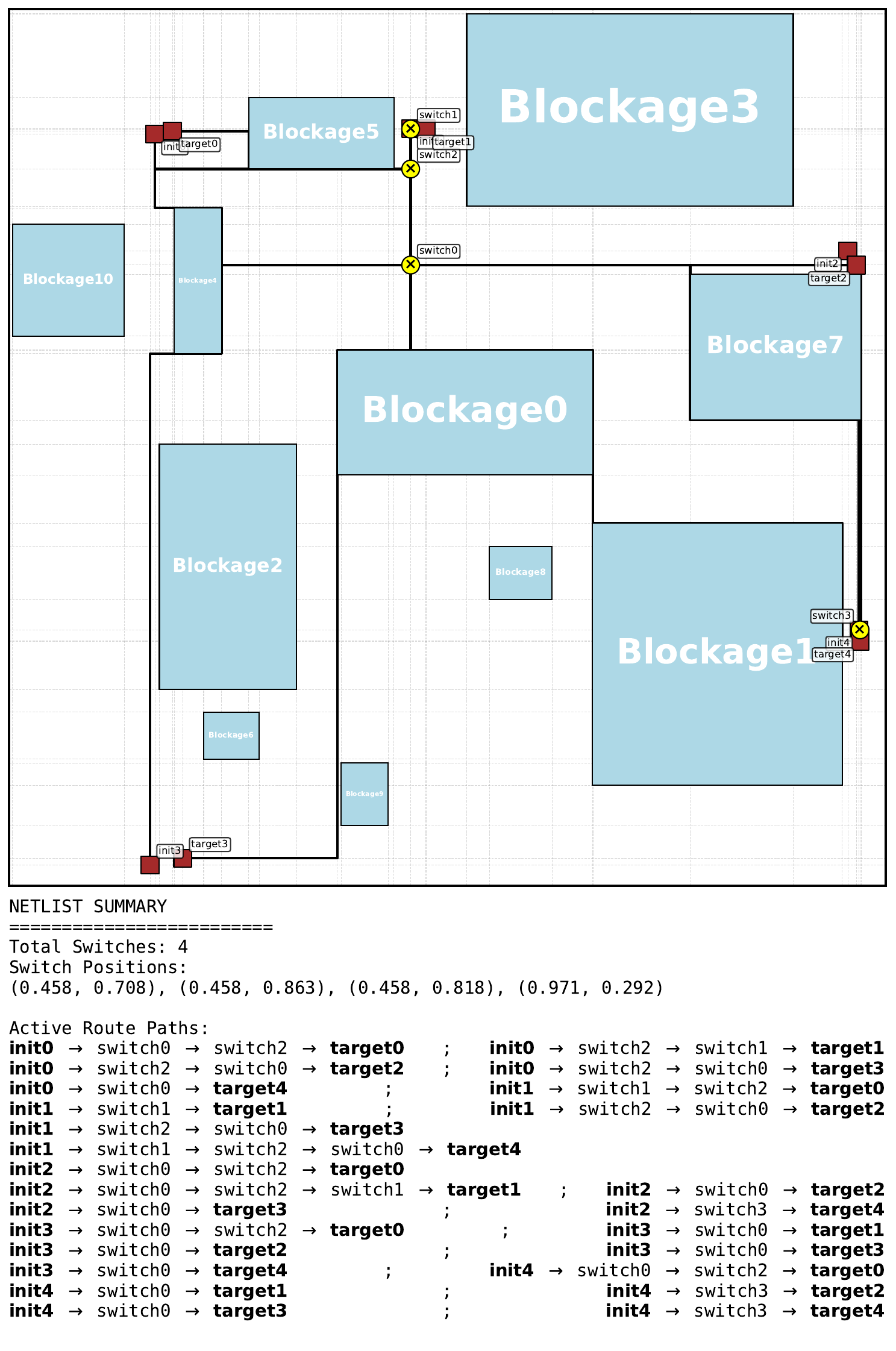}
        \caption*{Genetic algorithm}
    \end{subfigure}
    \\[0.6em]
    \begin{subfigure}[t]{0.31\linewidth}
        \vspace{0pt}
        \centering
        \includegraphics[width=\linewidth]{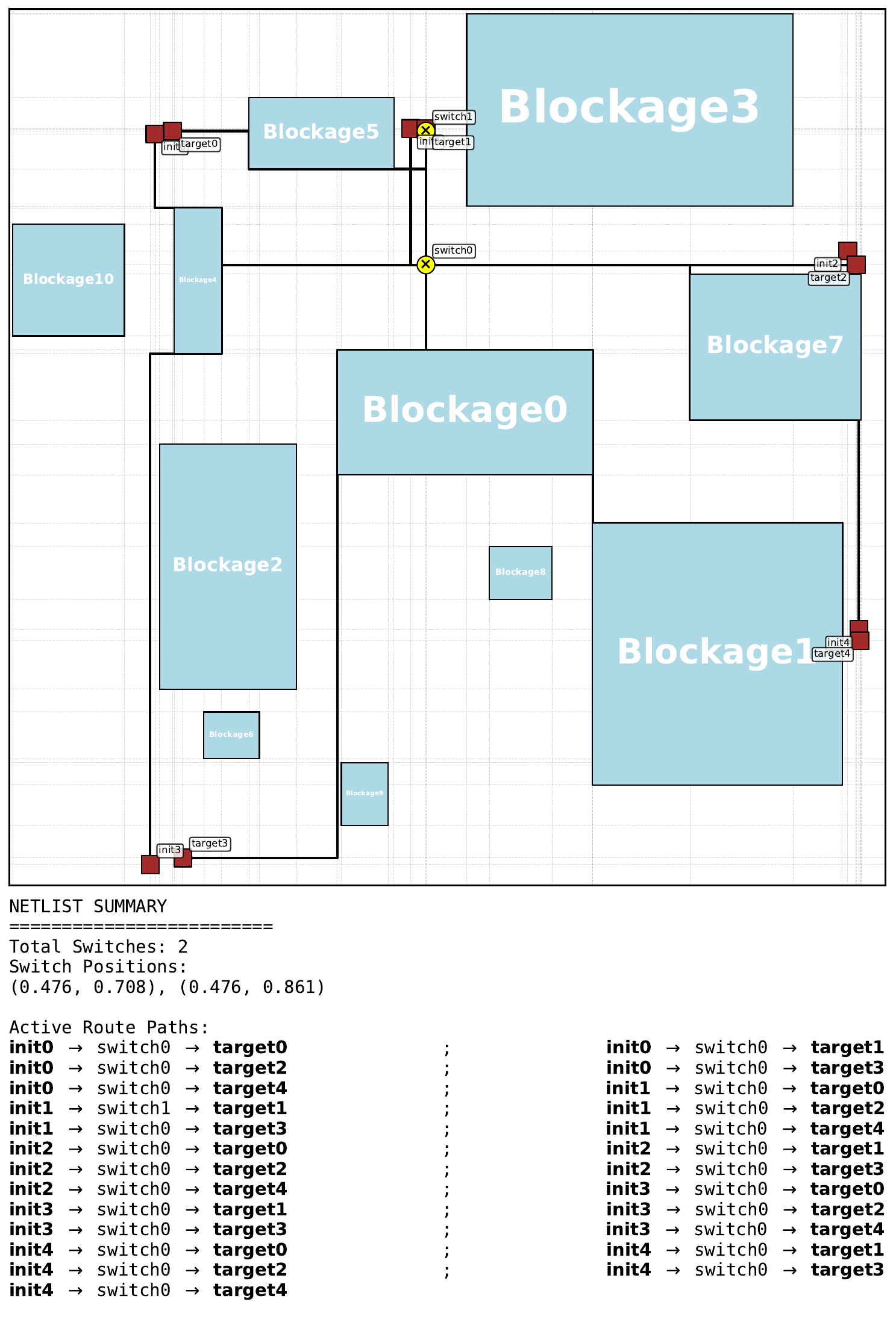}
        \caption*{PPO}
    \end{subfigure}
    \hspace{0.04\linewidth}
    \begin{subfigure}[t]{0.31\linewidth}
        \vspace{0pt}
        \centering
        \includegraphics[width=\linewidth]{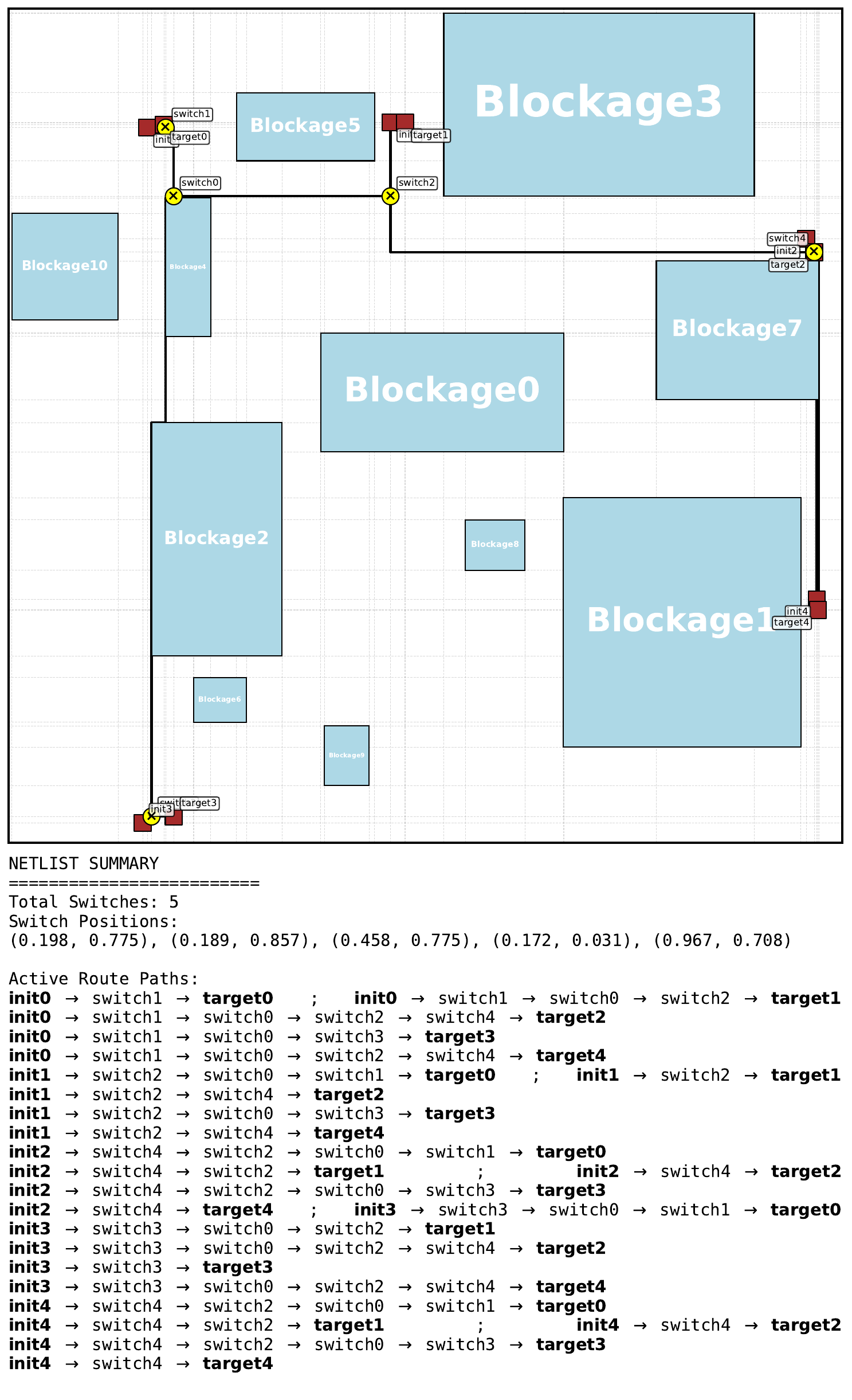}
        \caption*{MCTS}
    \end{subfigure}
\caption{Instance 18.}
\label{fig:best_pretrain_instance_18}
\end{figure*}

\begin{figure*}[h]
\centering
    \begin{subfigure}[t]{0.31\linewidth}
        \vspace{0pt}
        \centering
        \includegraphics[width=\linewidth]{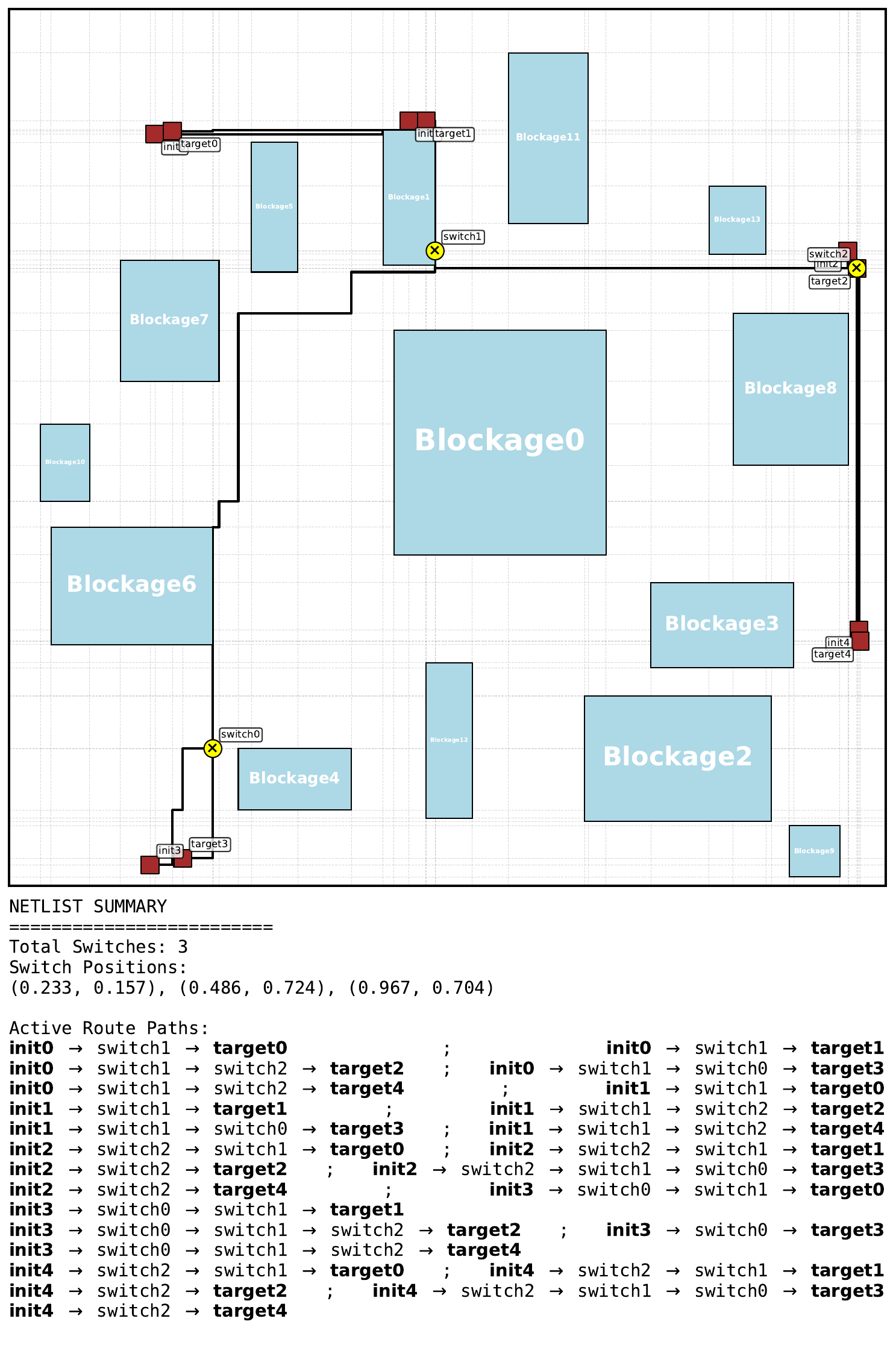}
        \caption*{Heuristic}
    \end{subfigure}
    \hfill
    \begin{subfigure}[t]{0.31\linewidth}
        \vspace{0pt}
        \centering
        \includegraphics[width=\linewidth]{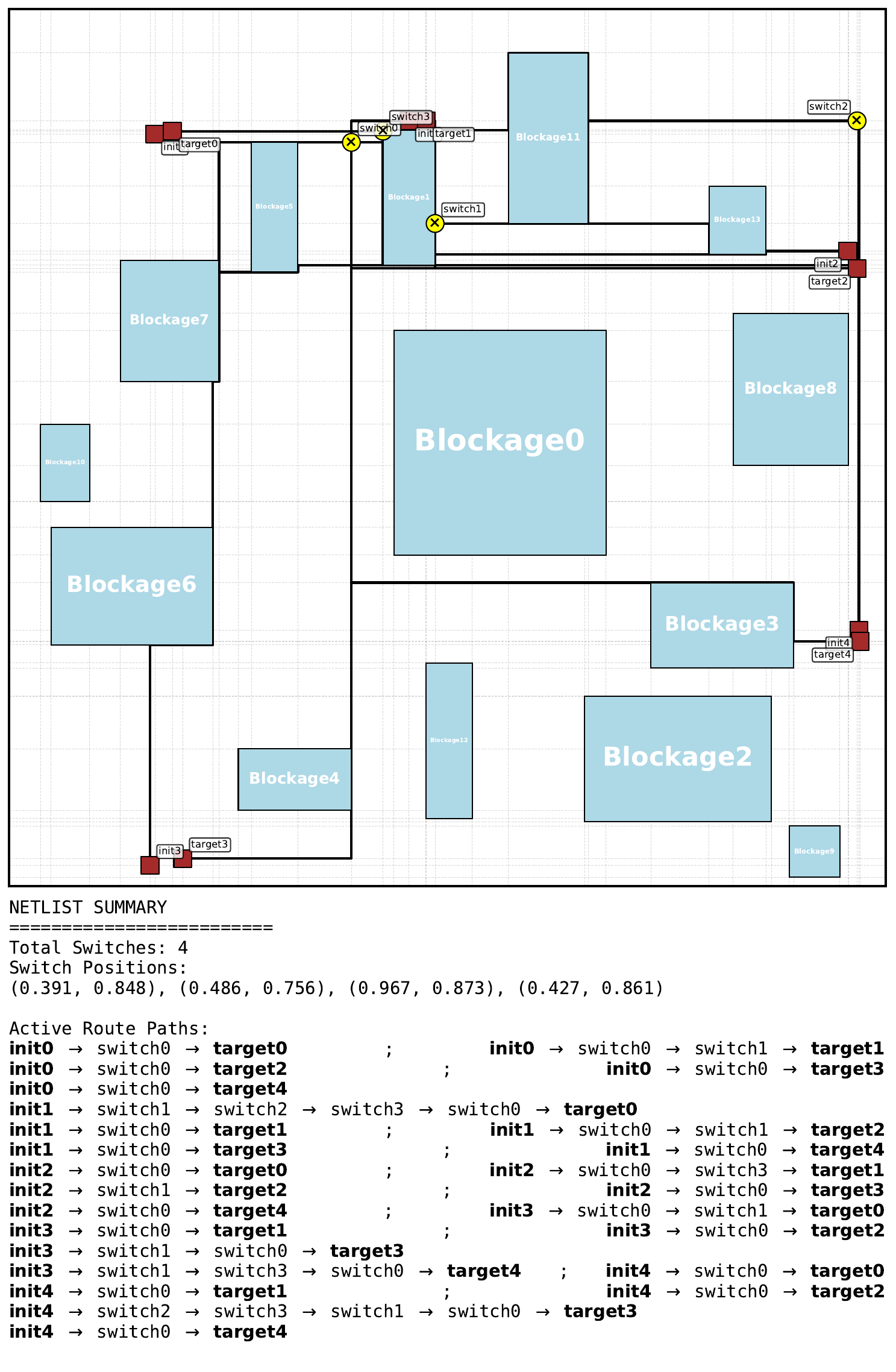}
        \caption*{Random search}
    \end{subfigure}
    \hfill
    \begin{subfigure}[t]{0.31\linewidth}
        \vspace{0pt}
        \centering
        \includegraphics[width=\linewidth]{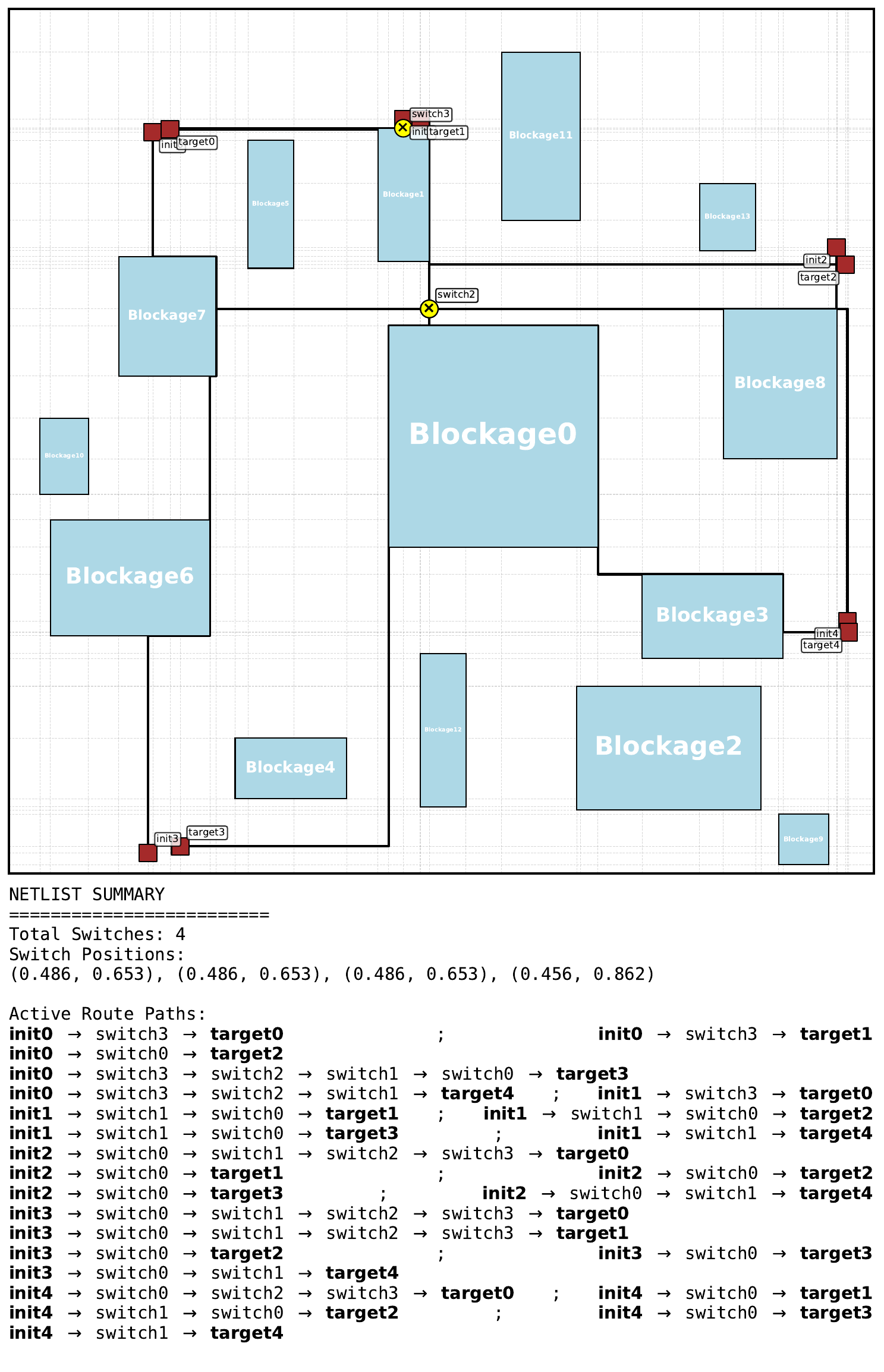}
        \caption*{Genetic algorithm}
    \end{subfigure}
    \\[0.6em]
    \begin{subfigure}[t]{0.31\linewidth}
        \vspace{0pt}
        \centering
        \includegraphics[width=\linewidth]{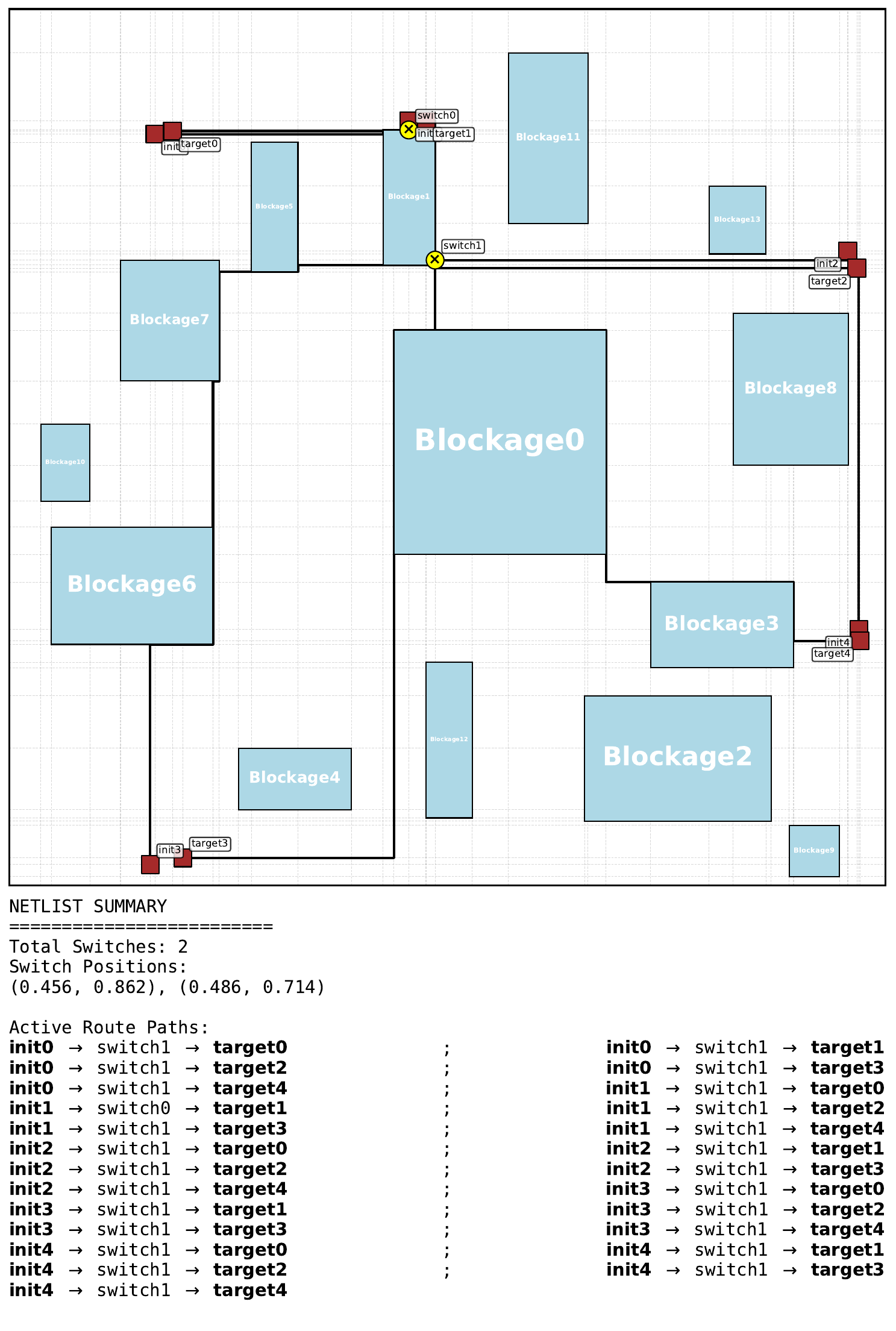}
        \caption*{PPO}
    \end{subfigure}
    \hspace{0.04\linewidth}
    \begin{subfigure}[t]{0.31\linewidth}
        \vspace{0pt}
        \centering
        \includegraphics[width=\linewidth]{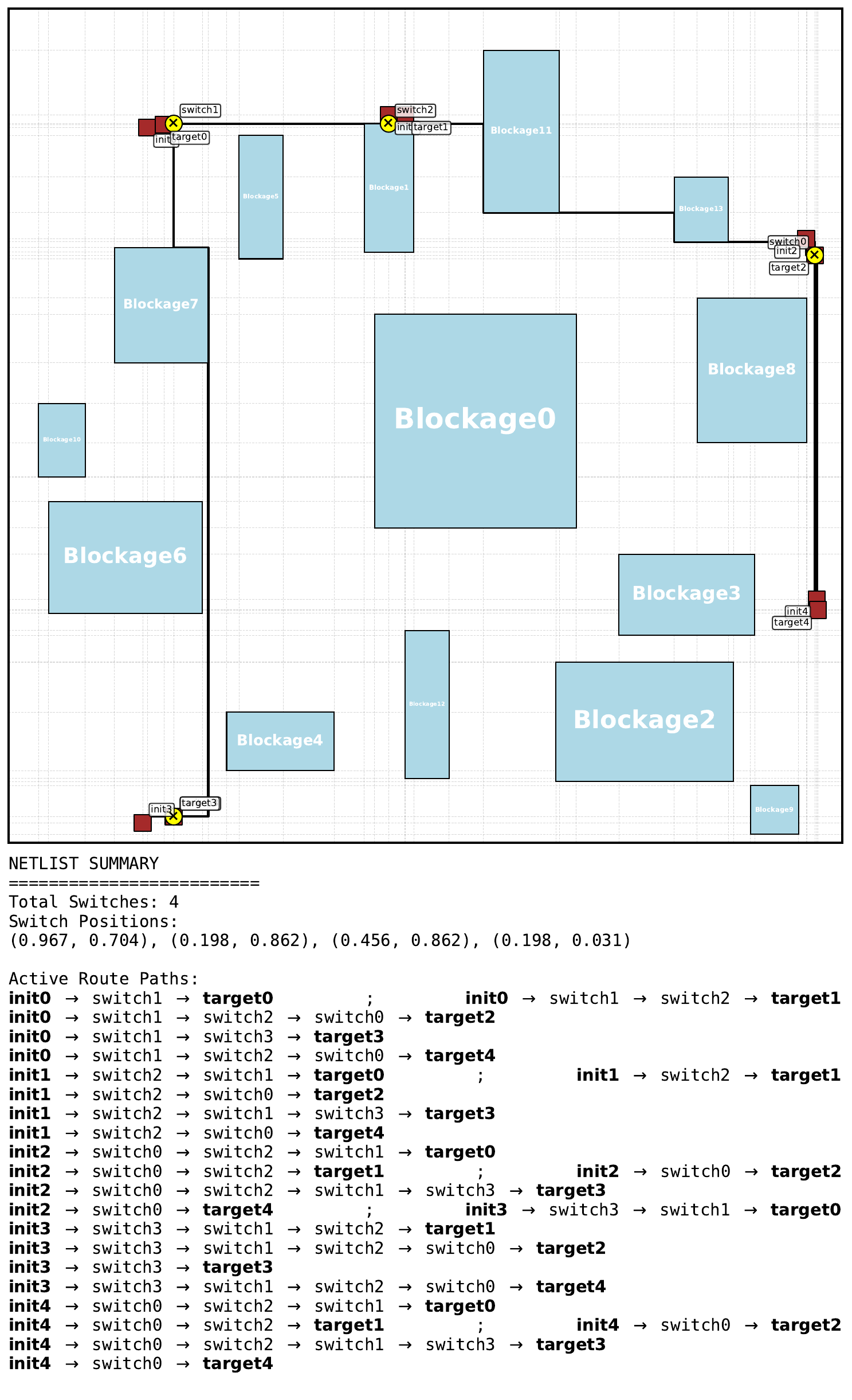}
        \caption*{MCTS}
    \end{subfigure}
\caption{Instance 19.}
\label{fig:best_pretrain_instance_19}
\end{figure*}

\begin{figure*}[h]
\centering
    \begin{subfigure}[t]{0.31\linewidth}
        \vspace{0pt}
        \centering
        \includegraphics[width=\linewidth]{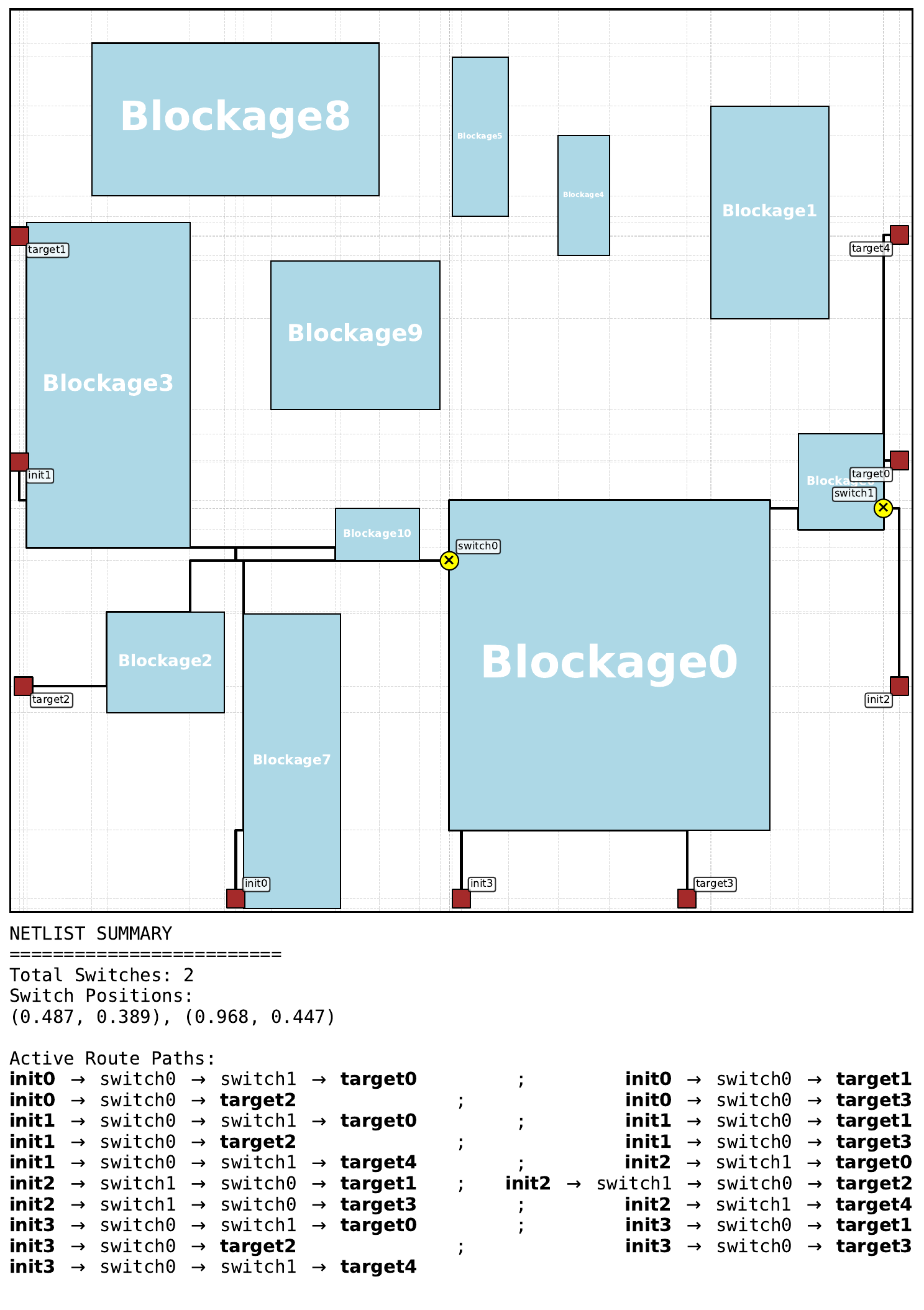}
        \caption*{Heuristic}
    \end{subfigure}
    \hfill
    \begin{subfigure}[t]{0.31\linewidth}
        \vspace{0pt}
        \centering
        \includegraphics[width=\linewidth]{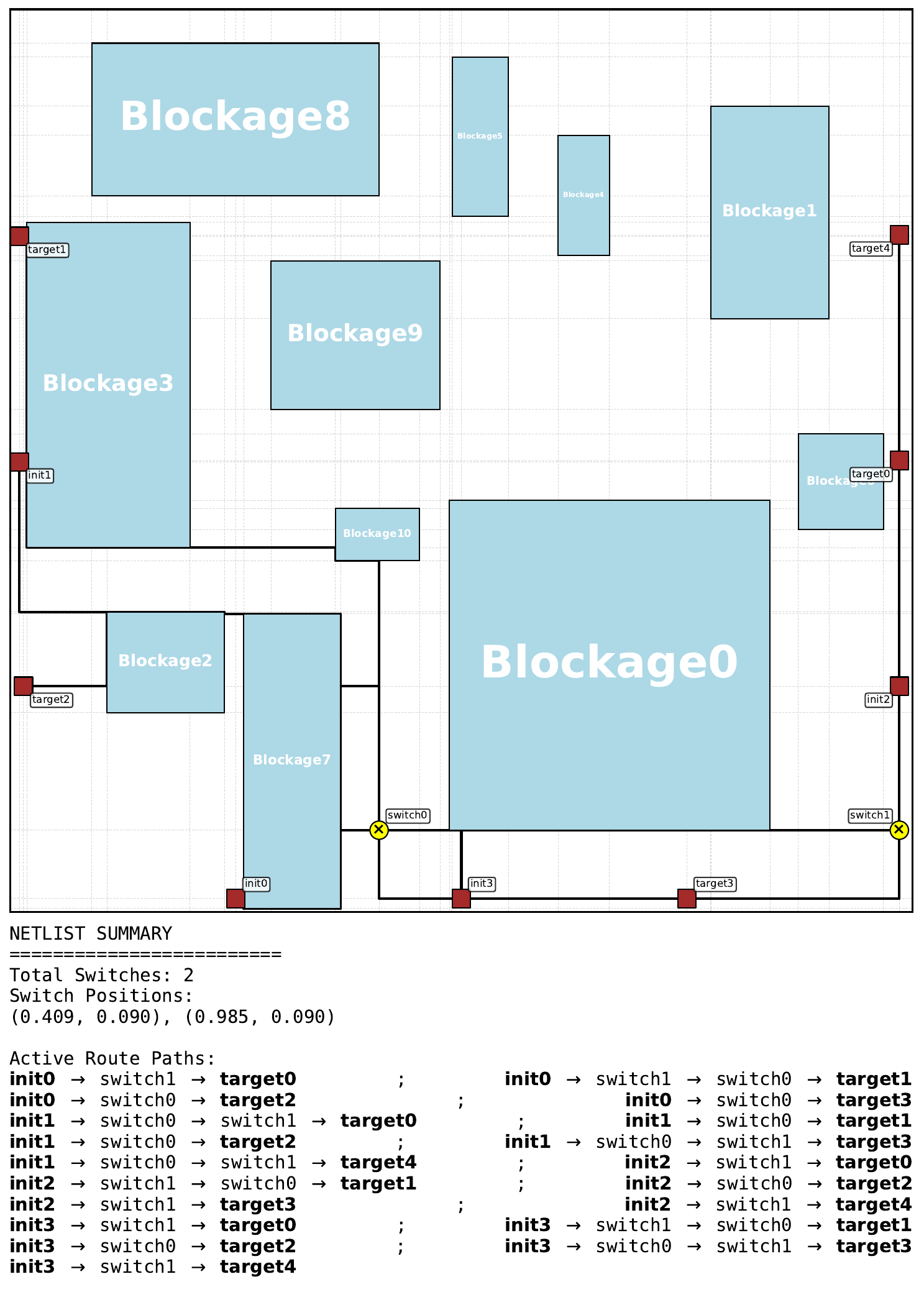}
        \caption*{Random search}
    \end{subfigure}
    \hfill
    \begin{subfigure}[t]{0.31\linewidth}
        \vspace{0pt}
        \centering
        \includegraphics[width=\linewidth]{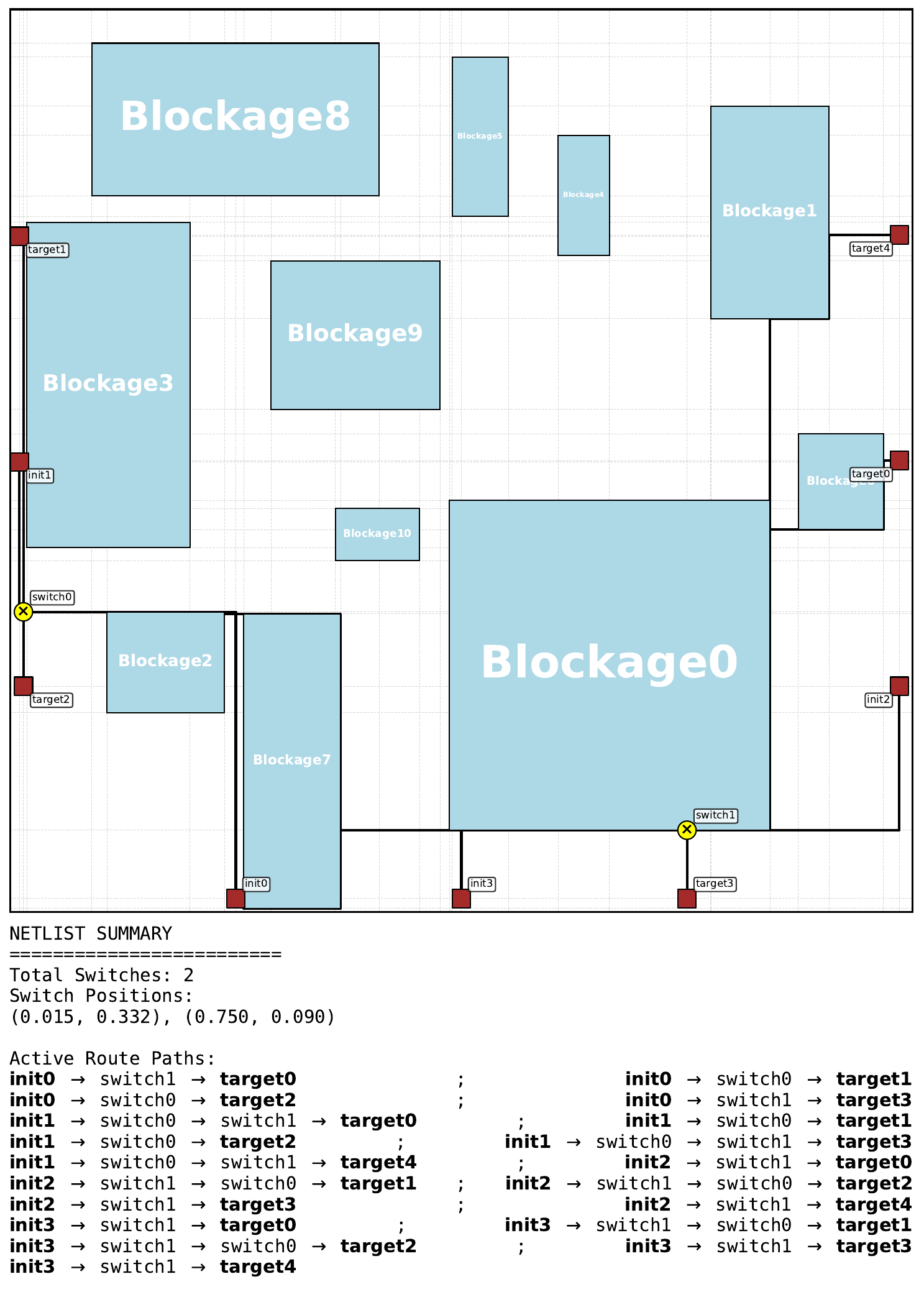}
        \caption*{Genetic algorithm}
    \end{subfigure}
    \\[0.6em]
    \begin{subfigure}[t]{0.31\linewidth}
        \vspace{0pt}
        \centering
        \includegraphics[width=\linewidth]{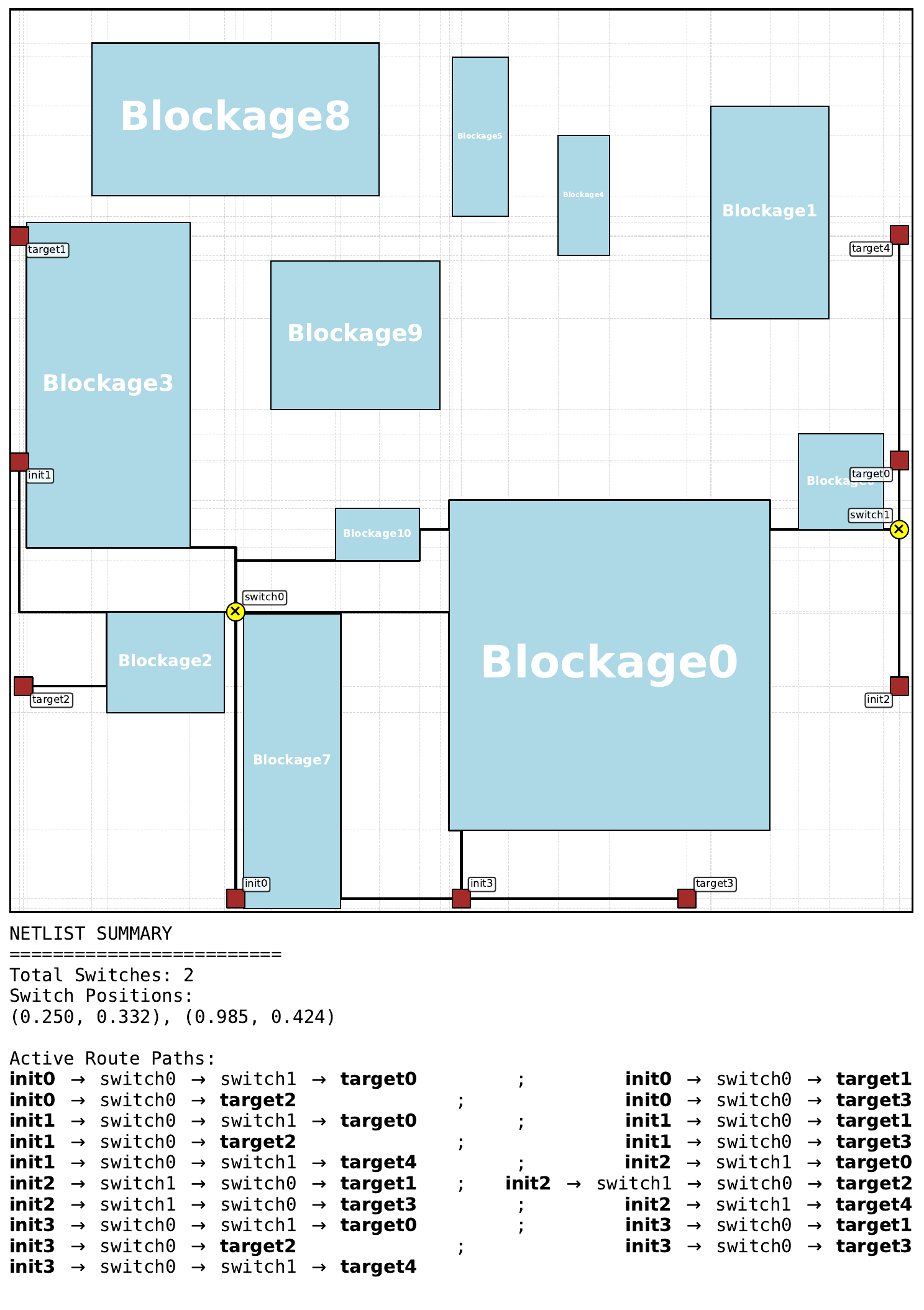}
        \caption*{PPO}
    \end{subfigure}
    \hspace{0.04\linewidth}
    \begin{subfigure}[t]{0.31\linewidth}
        \vspace{0pt}
        \centering
        \includegraphics[width=\linewidth]{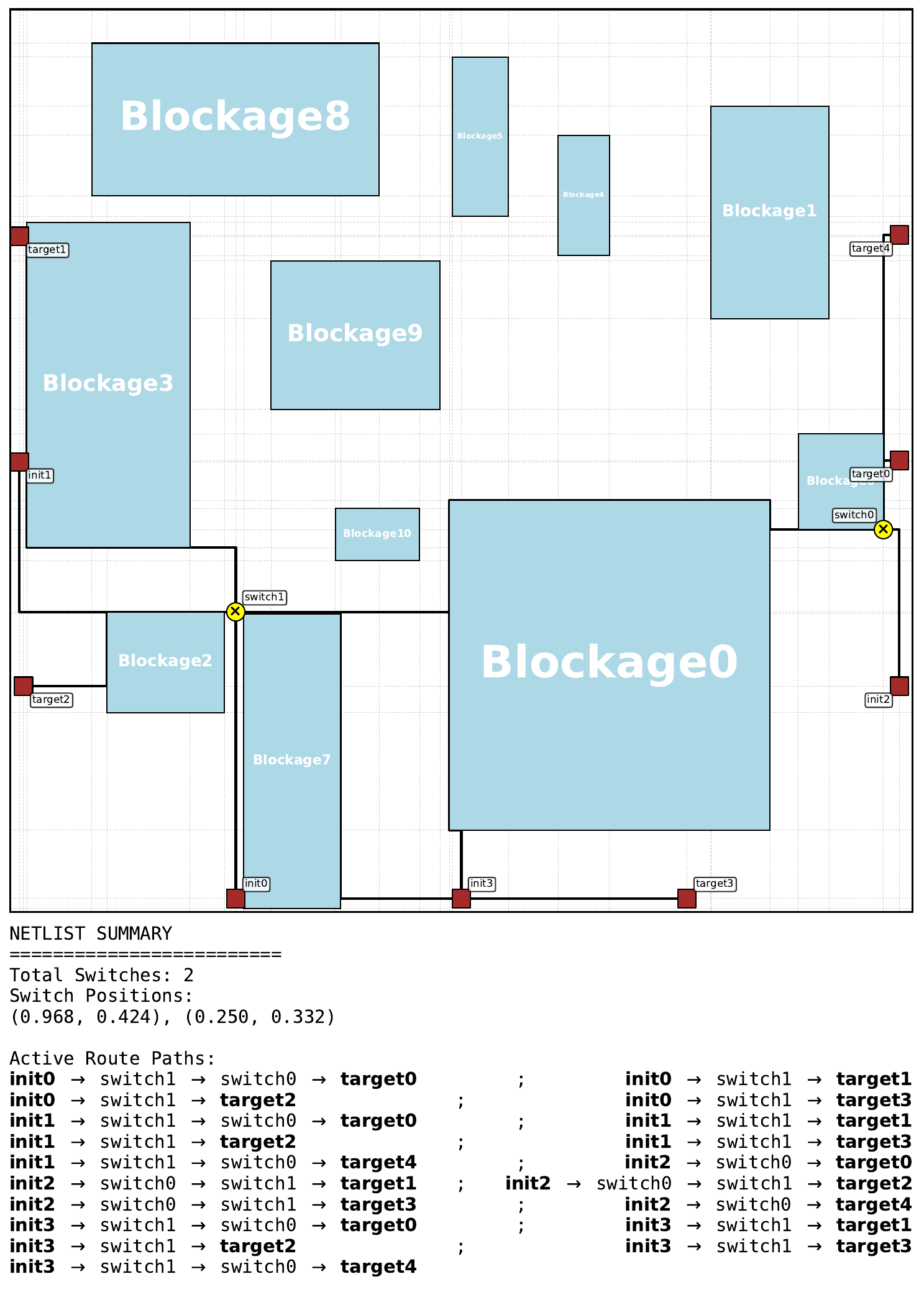}
        \caption*{MCTS}
    \end{subfigure}
\caption{Instance 20.}
\label{fig:best_pretrain_instance_20}
\end{figure*}

\begin{figure*}[h]
\centering
    \begin{subfigure}[t]{0.31\linewidth}
        \vspace{0pt}
        \centering
        \includegraphics[width=\linewidth]{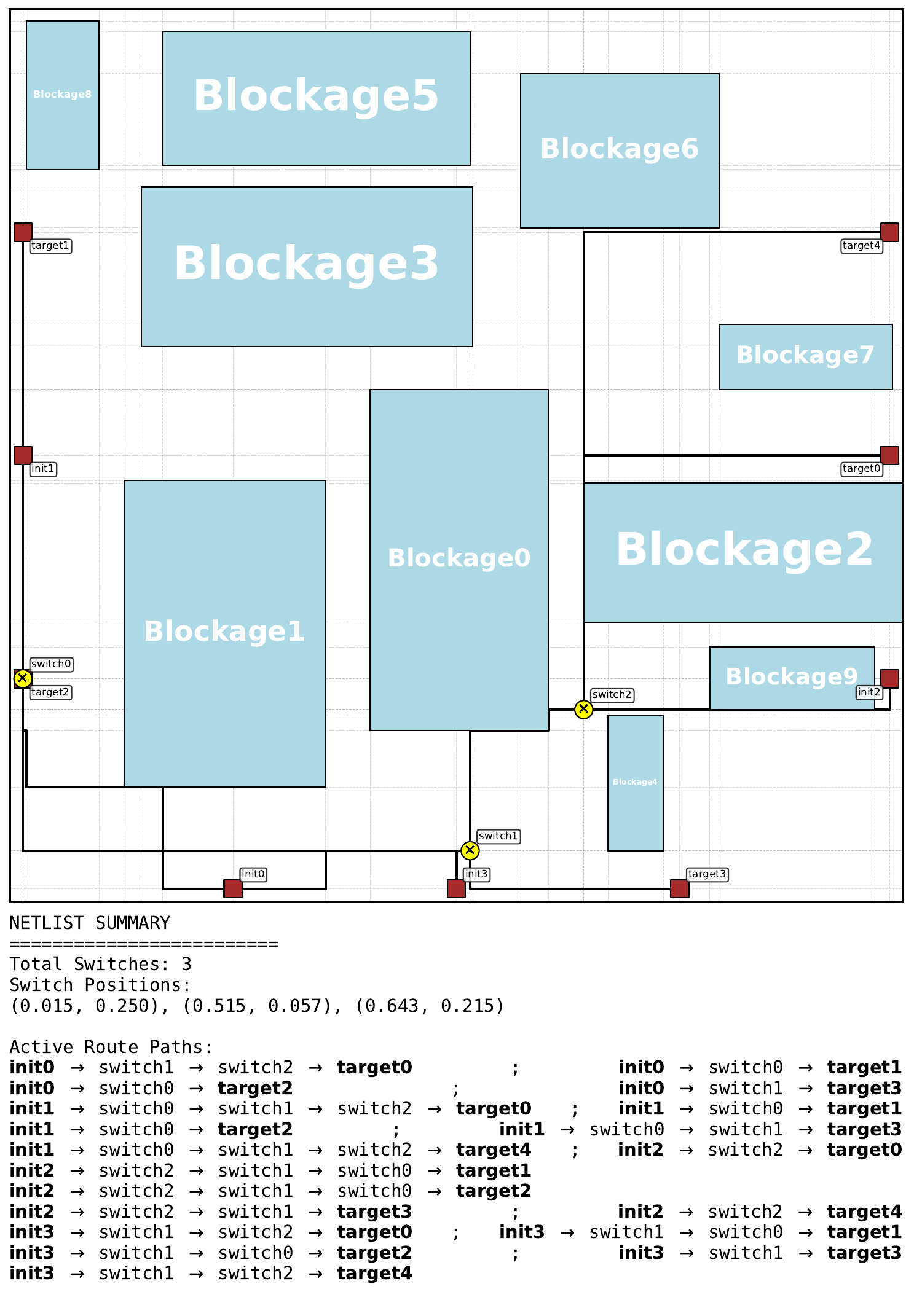}
        \caption*{Heuristic}
    \end{subfigure}
    \hfill
    \begin{subfigure}[t]{0.31\linewidth}
        \vspace{0pt}
        \centering
        \includegraphics[width=\linewidth]{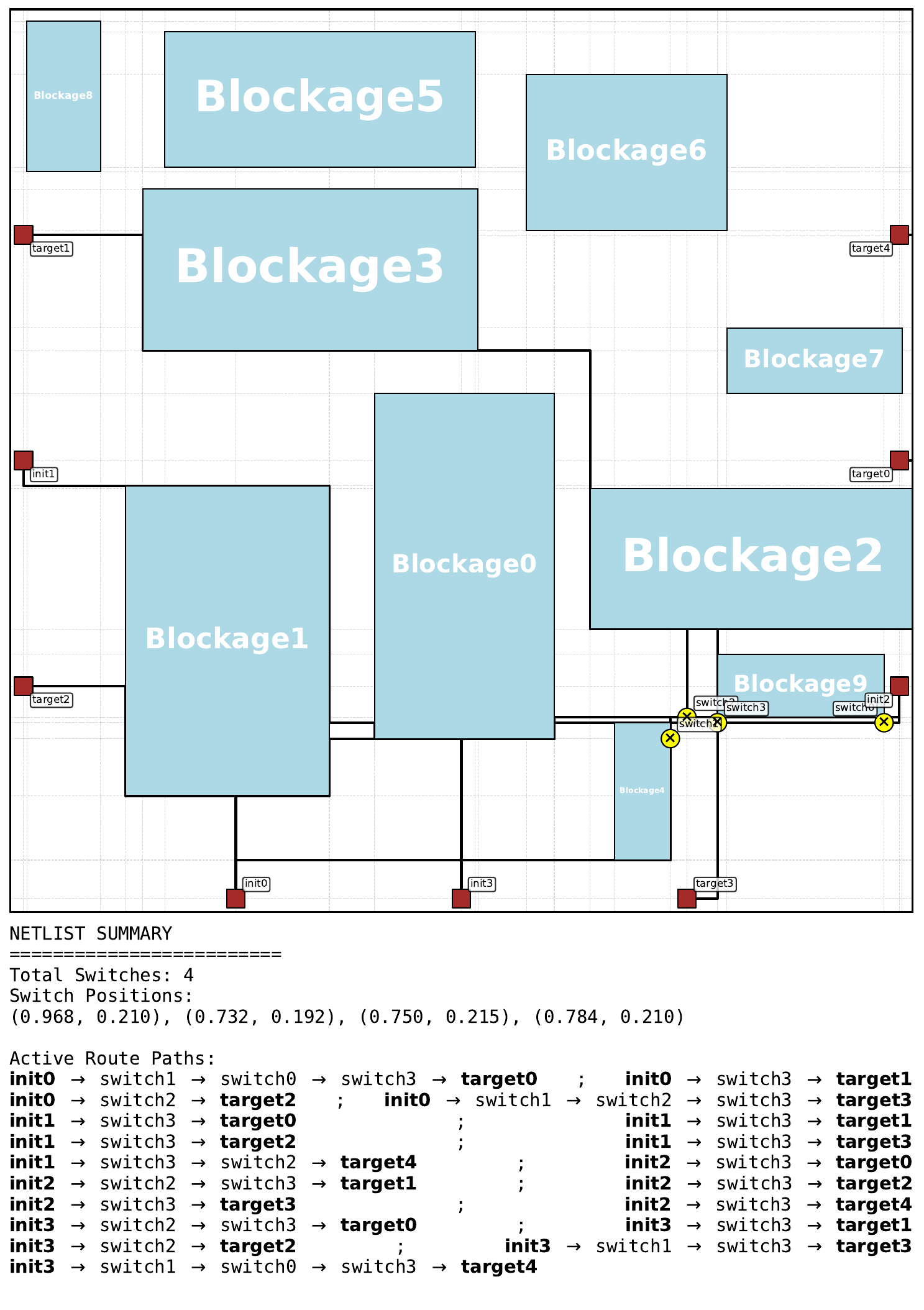}
        \caption*{Random search}
    \end{subfigure}
    \hfill
    \begin{subfigure}[t]{0.31\linewidth}
        \vspace{0pt}
        \centering
        \includegraphics[width=\linewidth]{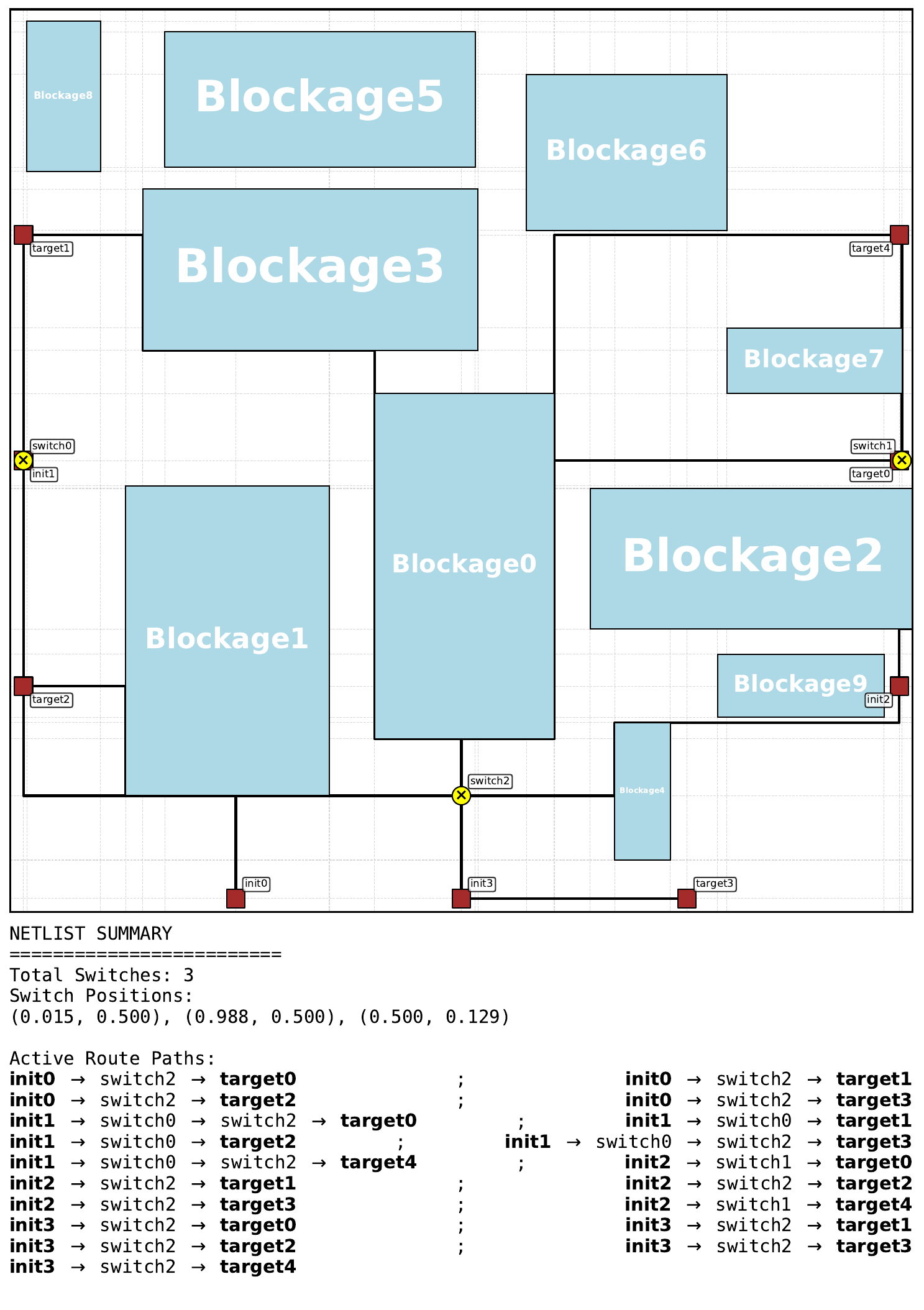}
        \caption*{Genetic algorithm}
    \end{subfigure}
    \\[0.6em]
    \begin{subfigure}[t]{0.31\linewidth}
        \vspace{0pt}
        \centering
        \includegraphics[width=\linewidth]{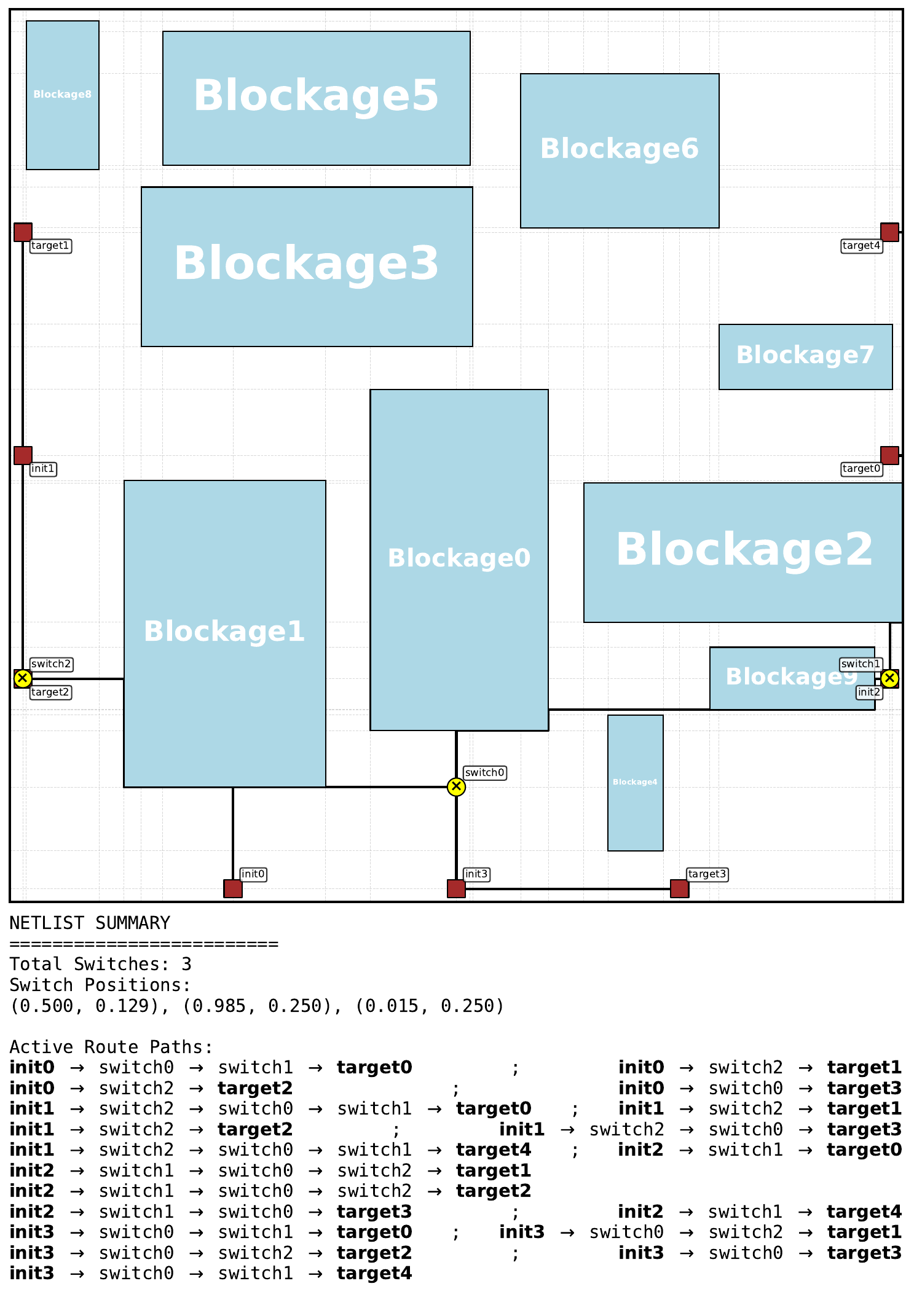}
        \caption*{PPO}
    \end{subfigure}
    \hspace{0.04\linewidth}
    \begin{subfigure}[t]{0.31\linewidth}
        \vspace{0pt}
        \centering
        \includegraphics[width=\linewidth]{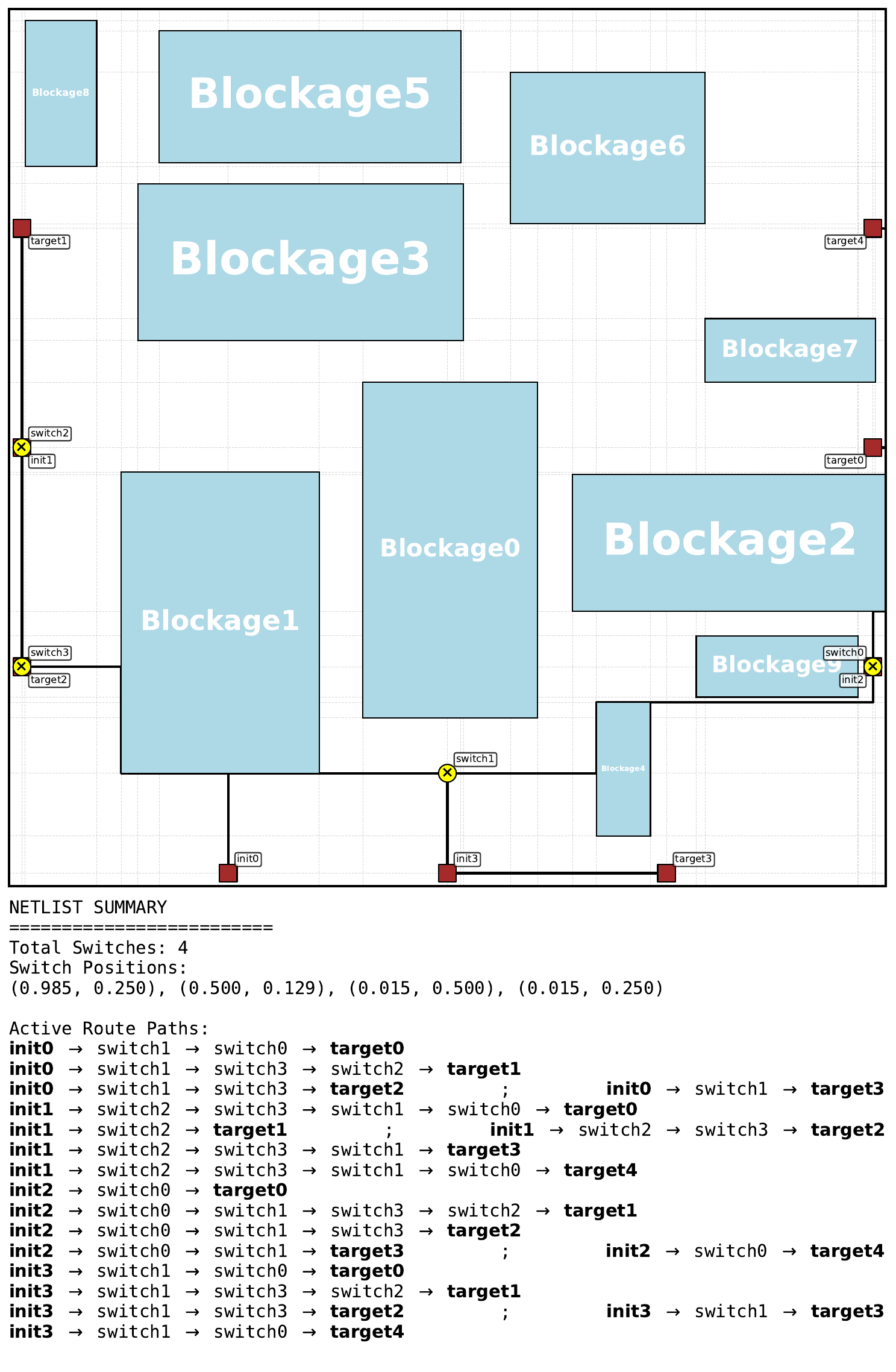}
        \caption*{MCTS}
    \end{subfigure}
\caption{Instance 21.}
\label{fig:best_pretrain_instance_21}
\end{figure*}

\begin{figure*}[h]
\centering
    \begin{subfigure}[t]{0.31\linewidth}
        \vspace{0pt}
        \centering
        \includegraphics[width=\linewidth]{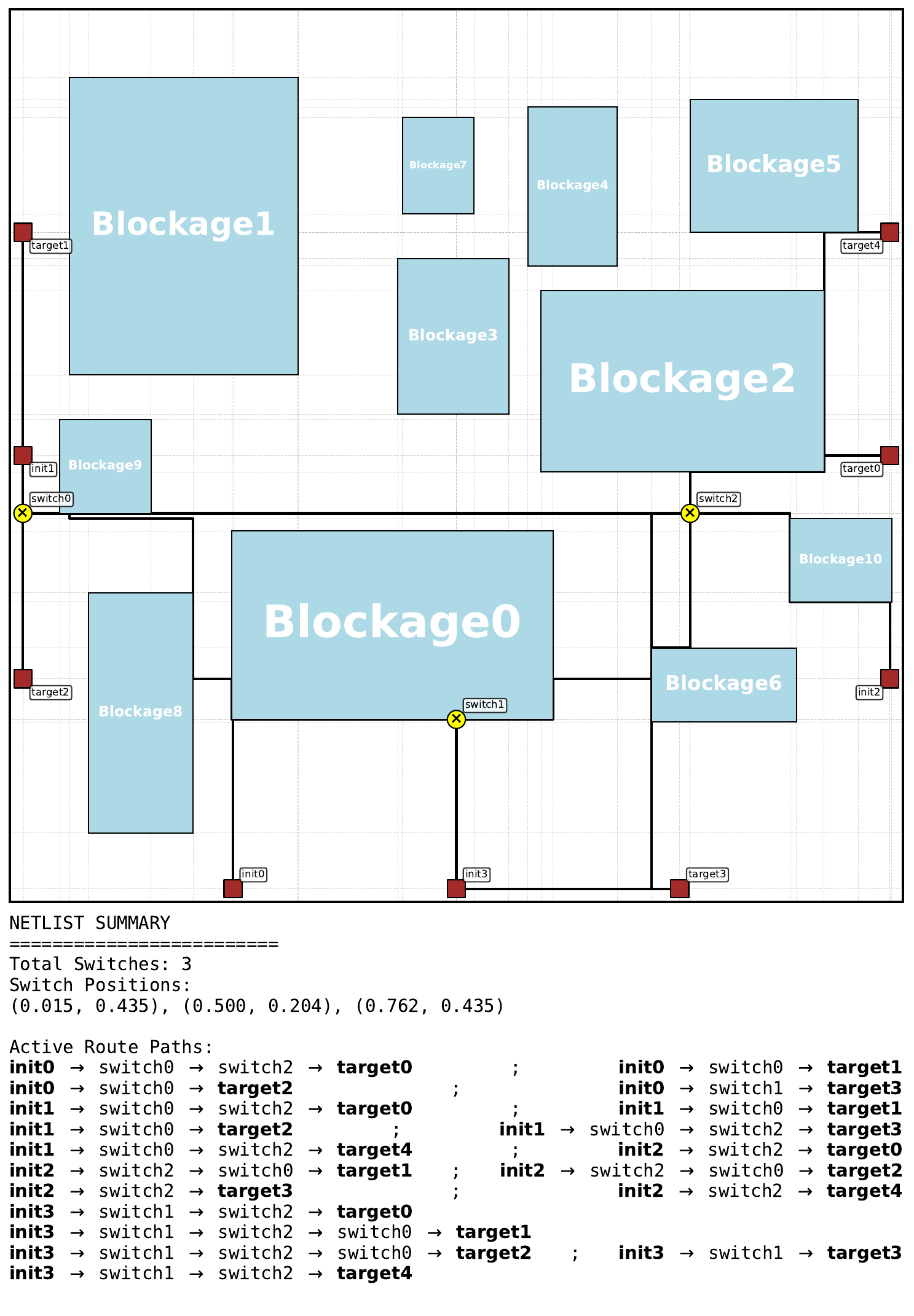}
        \caption*{Heuristic}
    \end{subfigure}
    \hfill
    \begin{subfigure}[t]{0.31\linewidth}
        \vspace{0pt}
        \centering
        \includegraphics[width=\linewidth]{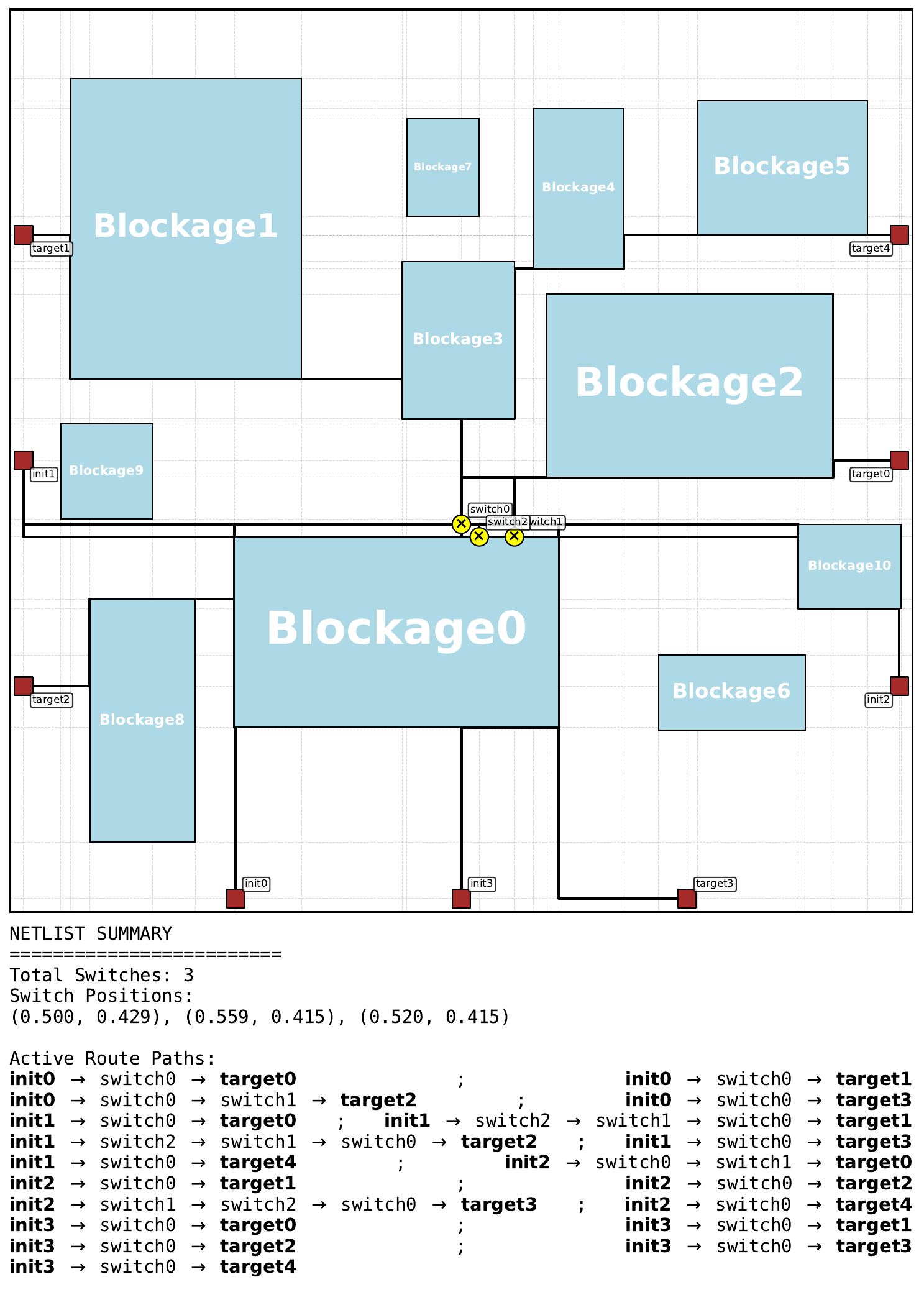}
        \caption*{Random search}
    \end{subfigure}
    \hfill
    \begin{subfigure}[t]{0.31\linewidth}
        \vspace{0pt}
        \centering
        \includegraphics[width=\linewidth]{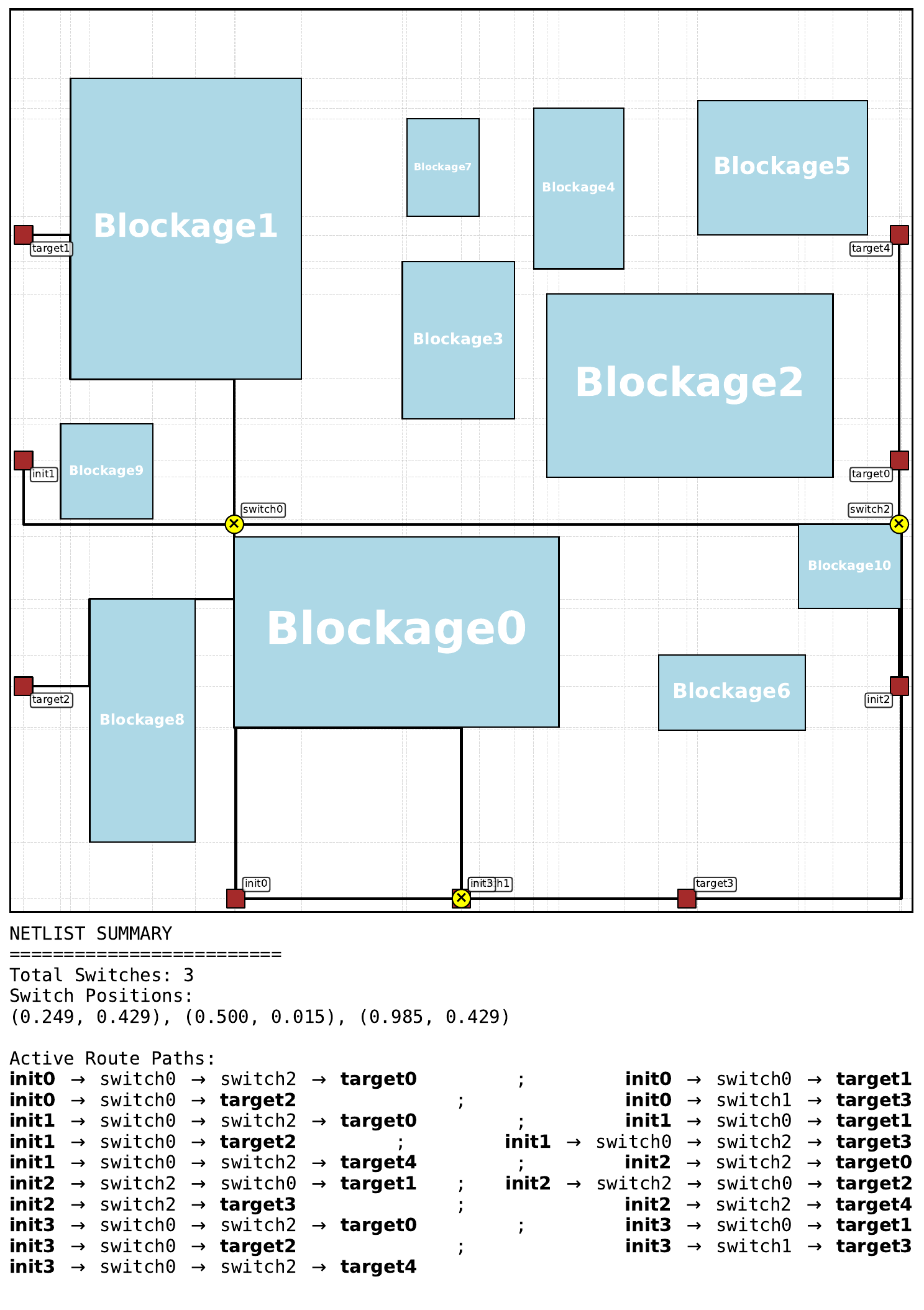}
        \caption*{Genetic algorithm}
    \end{subfigure}
    \\[0.6em]
    \begin{subfigure}[t]{0.31\linewidth}
        \vspace{0pt}
        \centering
        \includegraphics[width=\linewidth]{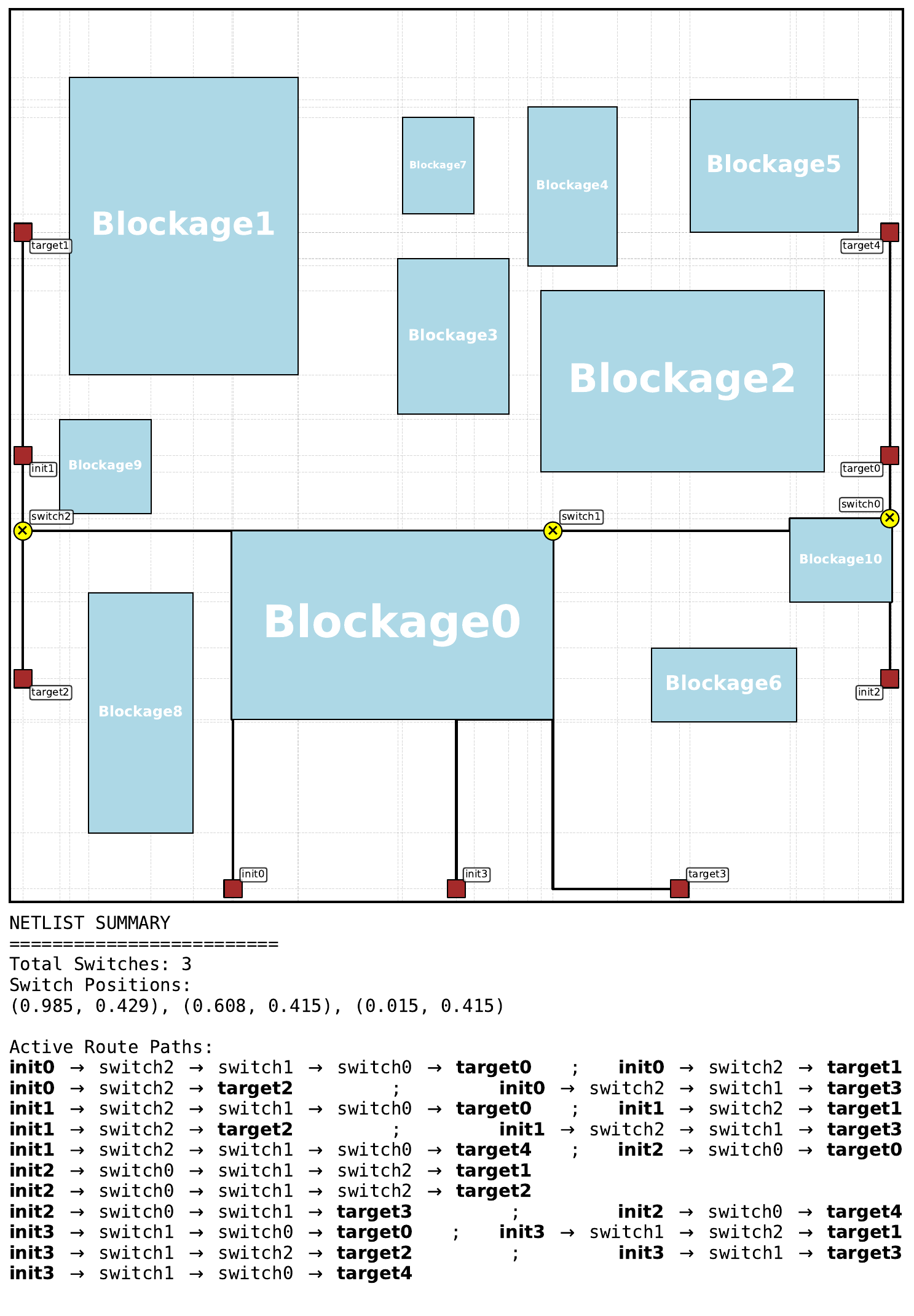}
        \caption*{PPO}
    \end{subfigure}
    \hspace{0.04\linewidth}
    \begin{subfigure}[t]{0.31\linewidth}
        \vspace{0pt}
        \centering
        \includegraphics[width=\linewidth]{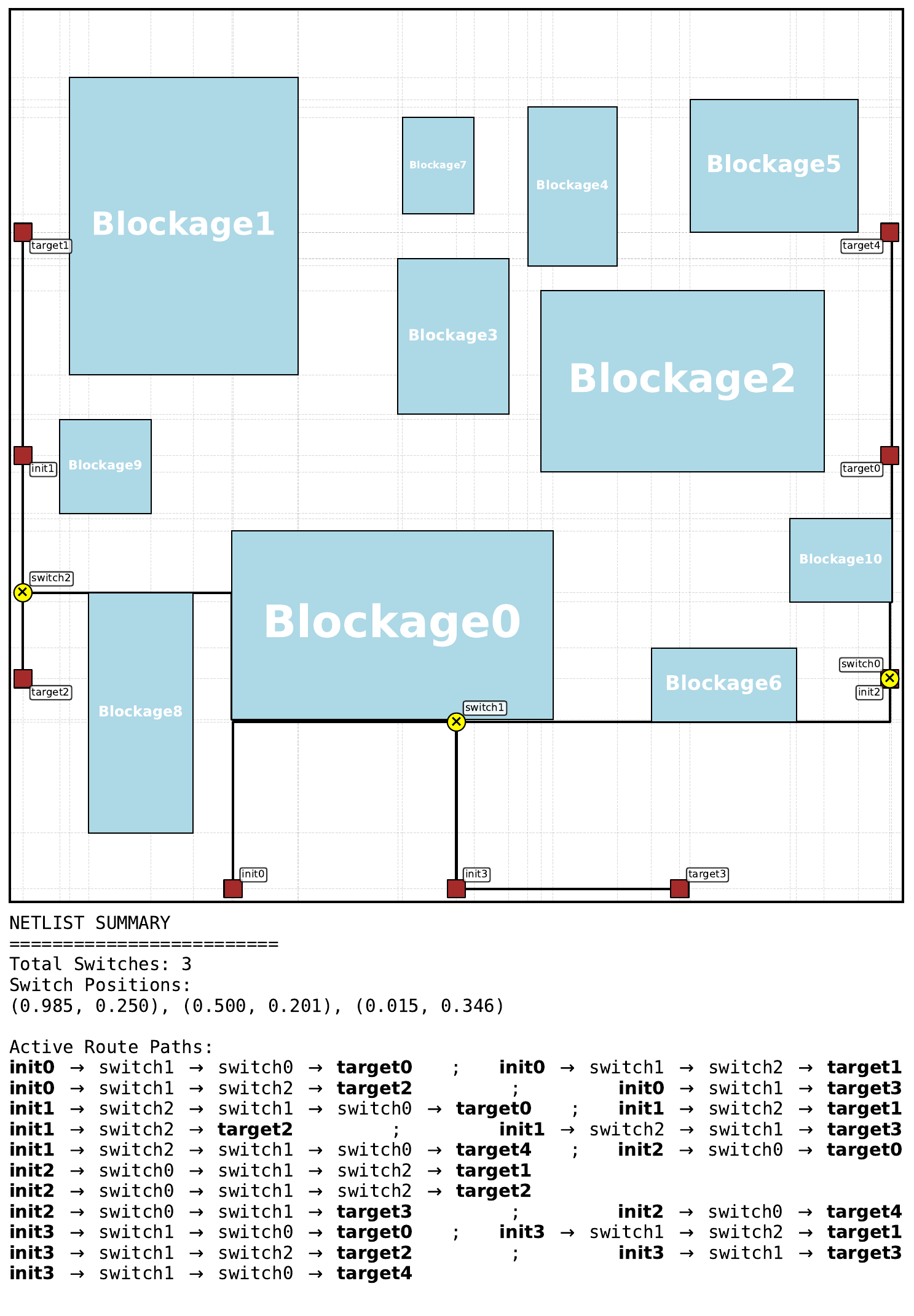}
        \caption*{MCTS}
    \end{subfigure}
\caption{Instance 22.}
\label{fig:best_pretrain_instance_22}
\end{figure*}

\begin{figure*}[h]
\centering
    \begin{subfigure}[t]{0.31\linewidth}
        \vspace{0pt}
        \centering
        \includegraphics[width=\linewidth]{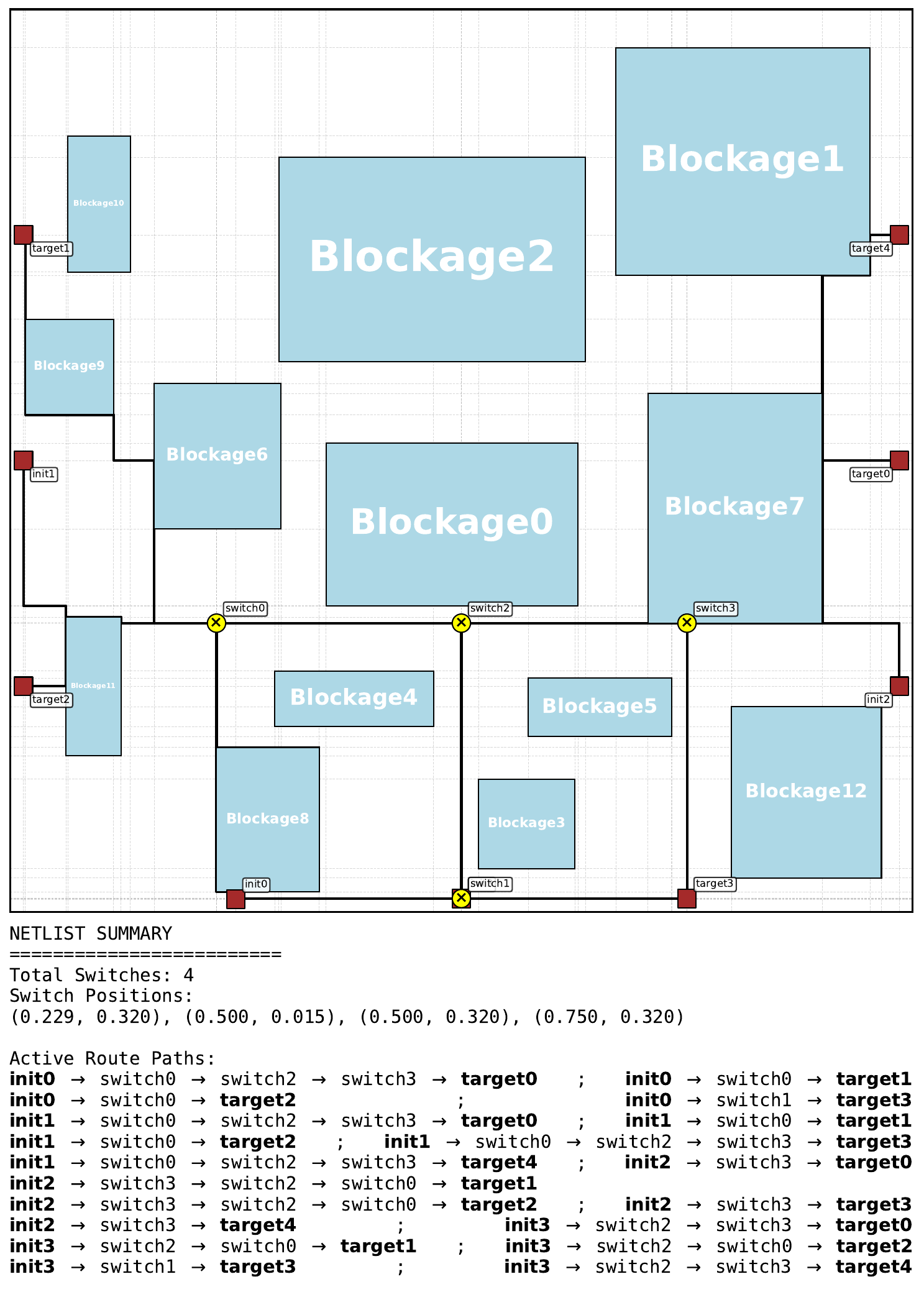}
        \caption*{Heuristic}
    \end{subfigure}
    \hfill
    \begin{subfigure}[t]{0.31\linewidth}
        \vspace{0pt}
        \centering
        \includegraphics[width=\linewidth]{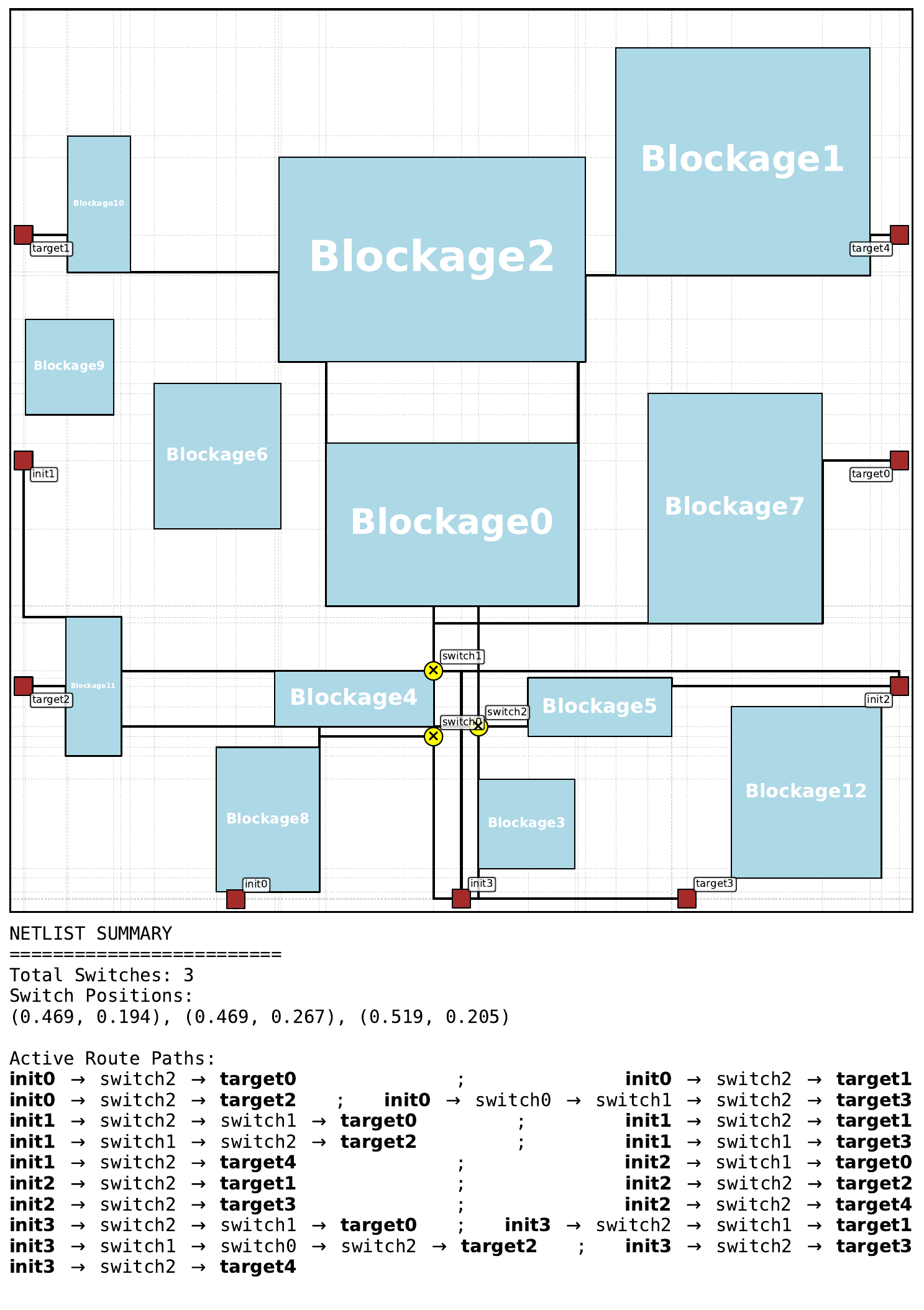}
        \caption*{Random search}
    \end{subfigure}
    \hfill
    \begin{subfigure}[t]{0.31\linewidth}
        \vspace{0pt}
        \centering
        \includegraphics[width=\linewidth]{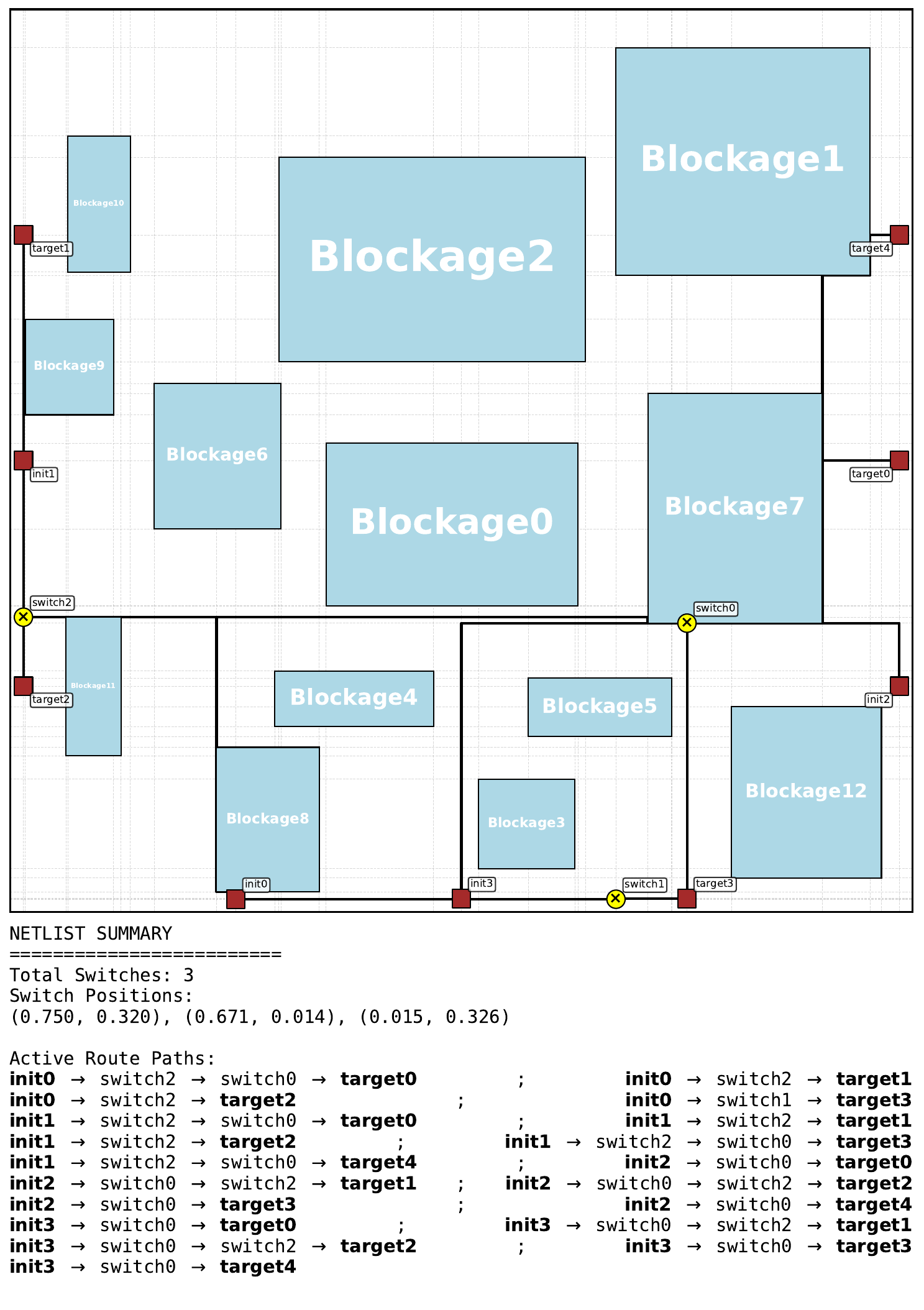}
        \caption*{Genetic algorithm}
    \end{subfigure}
    \\[0.6em]
    \begin{subfigure}[t]{0.31\linewidth}
        \vspace{0pt}
        \centering
        \includegraphics[width=\linewidth]{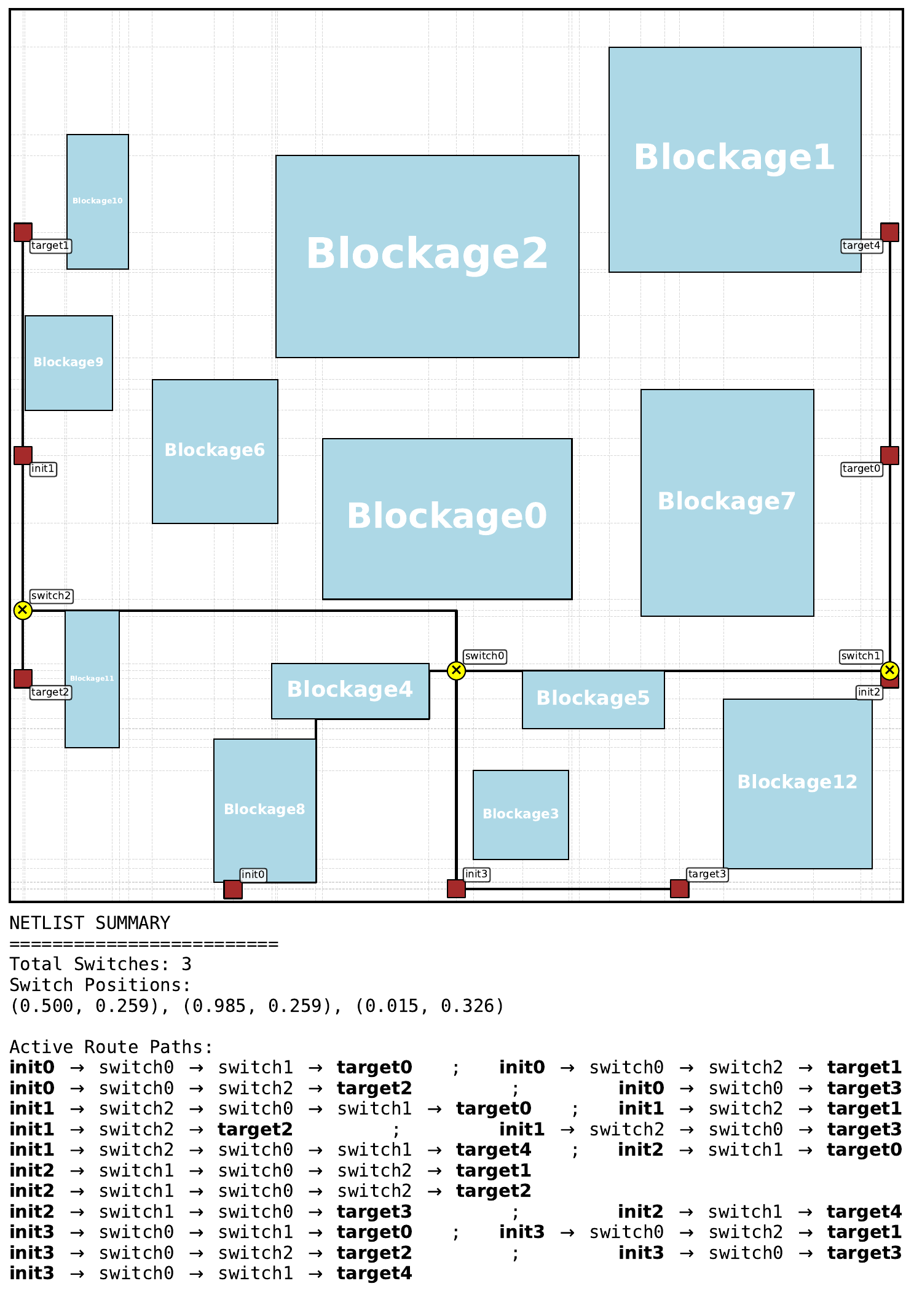}
        \caption*{PPO}
    \end{subfigure}
    \hspace{0.04\linewidth}
    \begin{subfigure}[t]{0.31\linewidth}
        \vspace{0pt}
        \centering
        \includegraphics[width=\linewidth]{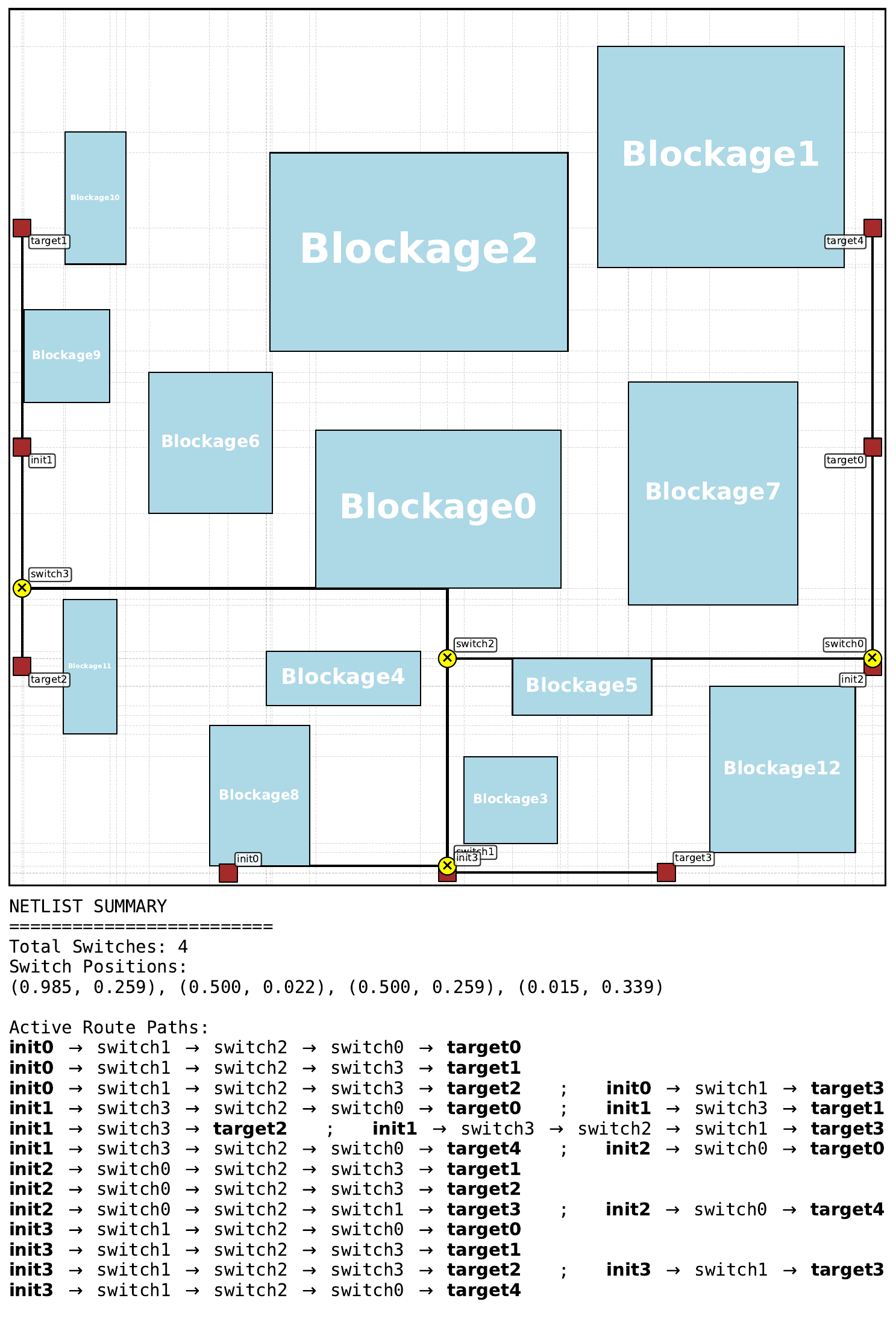}
        \caption*{MCTS}
    \end{subfigure}
\caption{Instance 23.}
\label{fig:best_pretrain_instance_23}
\end{figure*}

\begin{figure*}[h]
\centering
    \begin{subfigure}[t]{0.31\linewidth}
        \vspace{0pt}
        \centering
        \includegraphics[width=\linewidth]{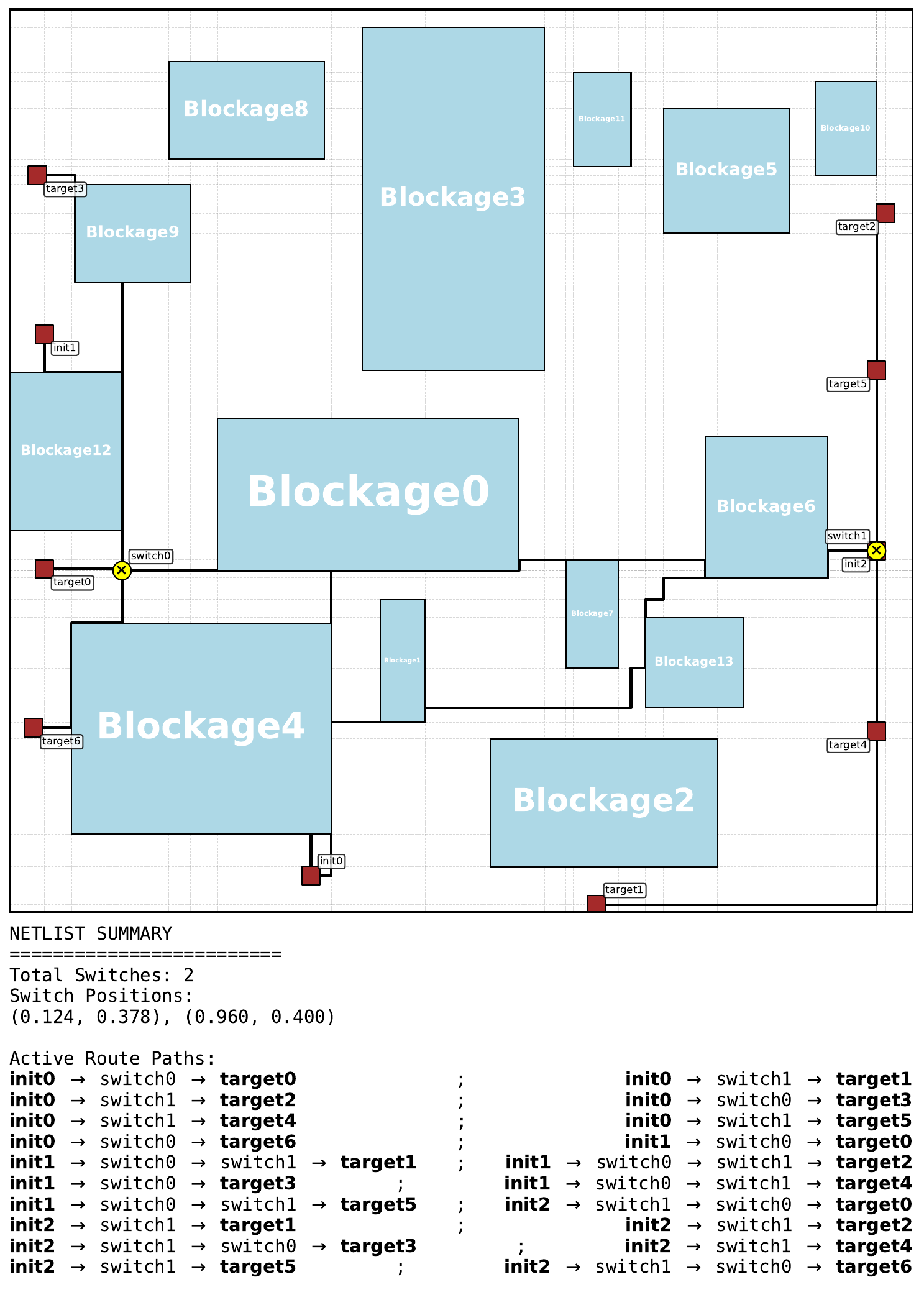}
        \caption*{Heuristic}
    \end{subfigure}
    \hfill
    \begin{subfigure}[t]{0.31\linewidth}
        \vspace{0pt}
        \centering
        \includegraphics[width=\linewidth]{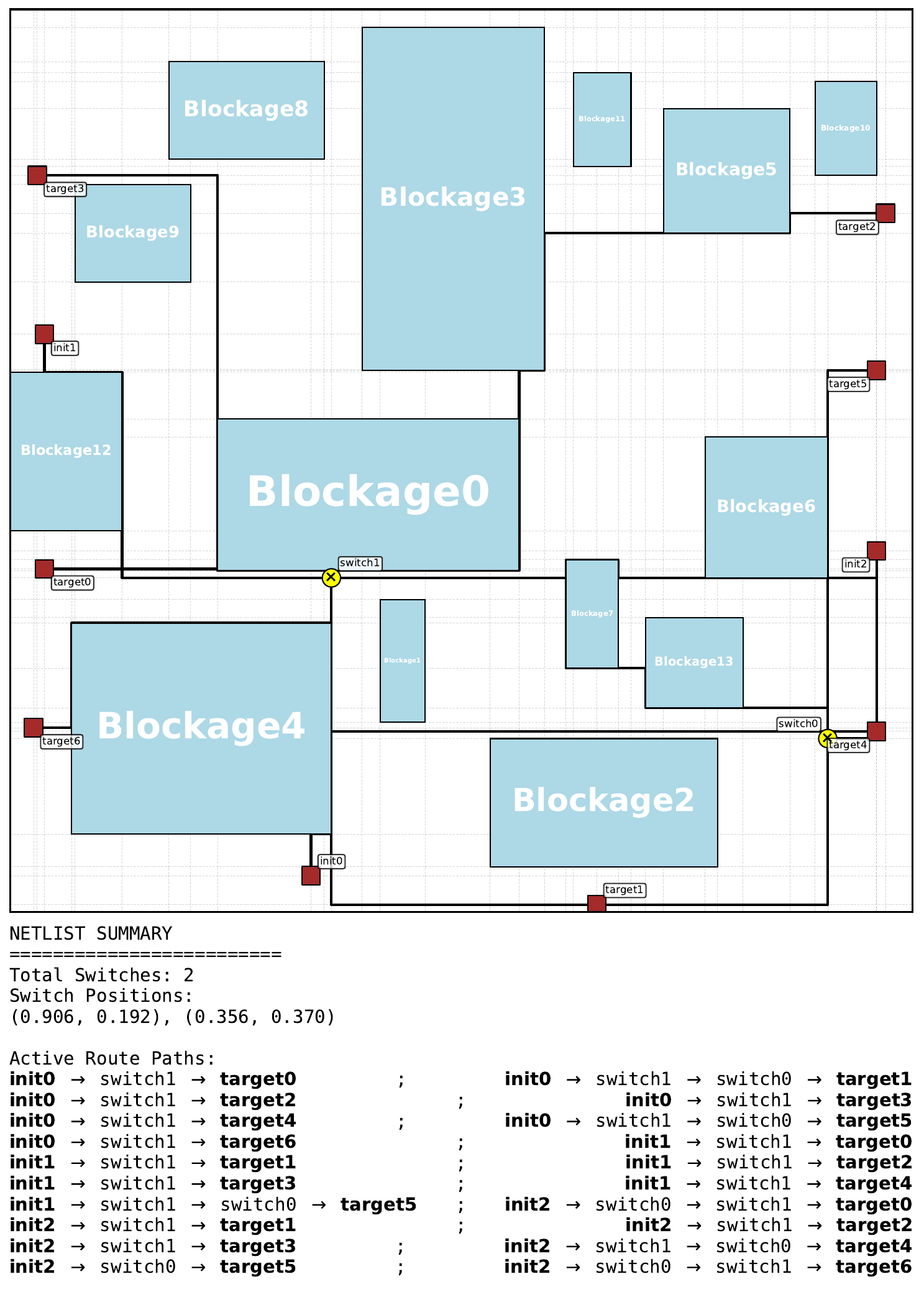}
        \caption*{Random search}
    \end{subfigure}
    \hfill
    \begin{subfigure}[t]{0.31\linewidth}
        \vspace{0pt}
        \centering
        \includegraphics[width=\linewidth]{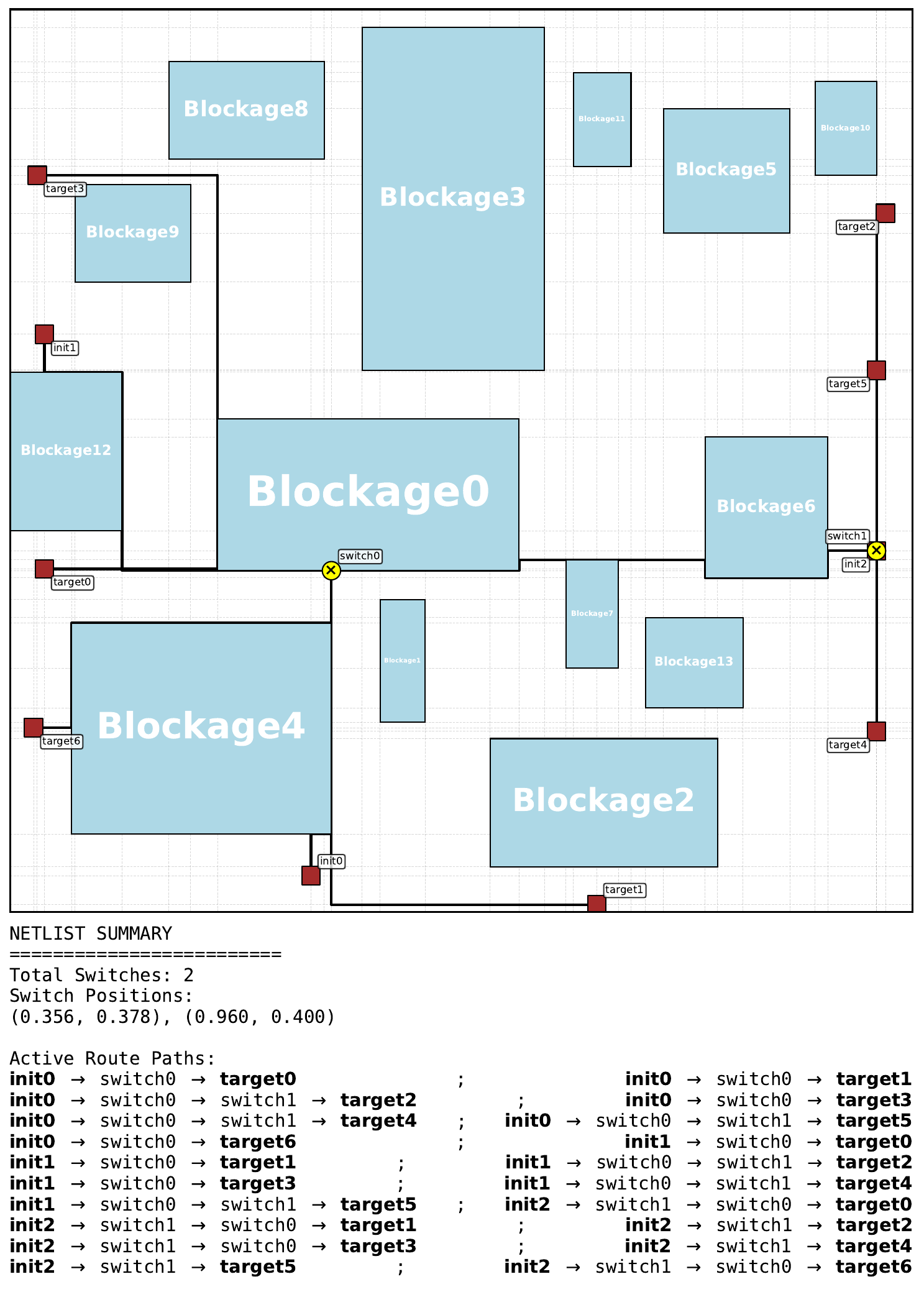}
        \caption*{Genetic algorithm}
    \end{subfigure}
    \\[0.6em]
    \begin{subfigure}[t]{0.31\linewidth}
        \vspace{0pt}
        \centering
        \includegraphics[width=\linewidth]{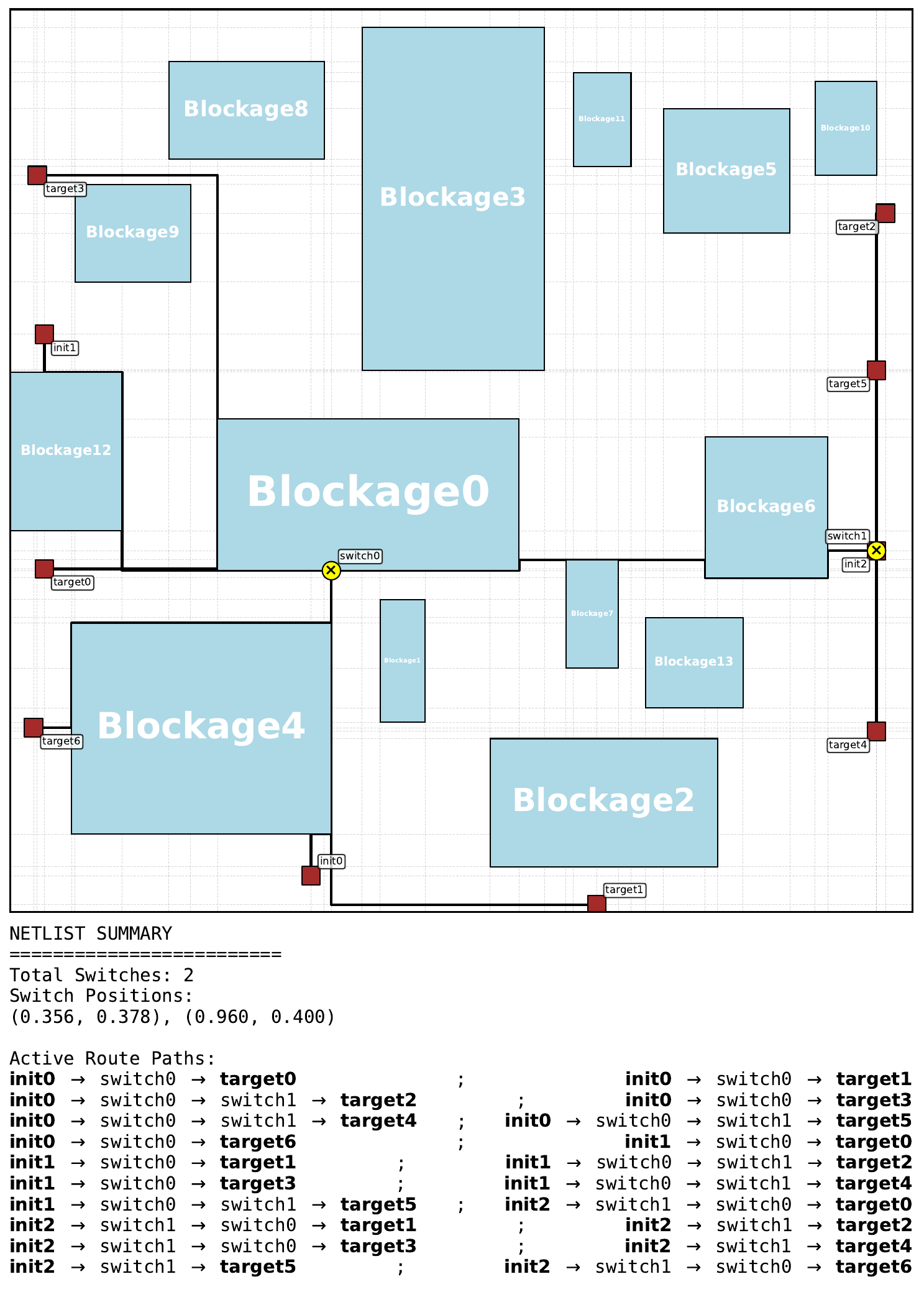}
        \caption*{PPO}
    \end{subfigure}
    \hspace{0.04\linewidth}
    \begin{subfigure}[t]{0.31\linewidth}
        \vspace{0pt}
        \centering
        \includegraphics[width=\linewidth]{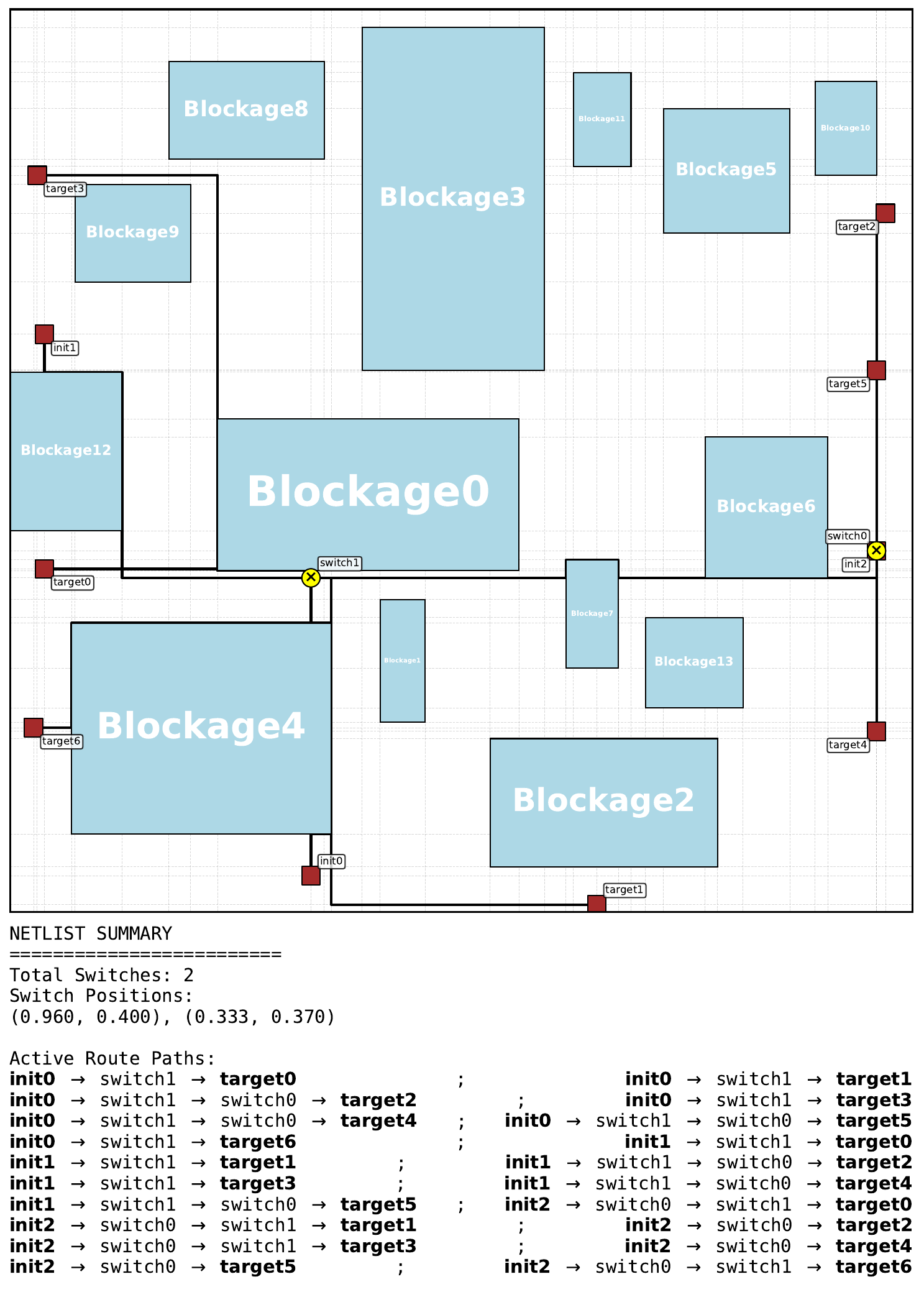}
        \caption*{MCTS}
    \end{subfigure}
\caption{Instance 24.}
\label{fig:best_pretrain_instance_24}
\end{figure*}

\clearpage

\subsection{Fine-Tuning}

\begin{figure*}[h]
\centering
    \begin{subfigure}[t]{0.48\linewidth}
        \vspace{0pt}
        \centering
        \includegraphics[width=\linewidth]{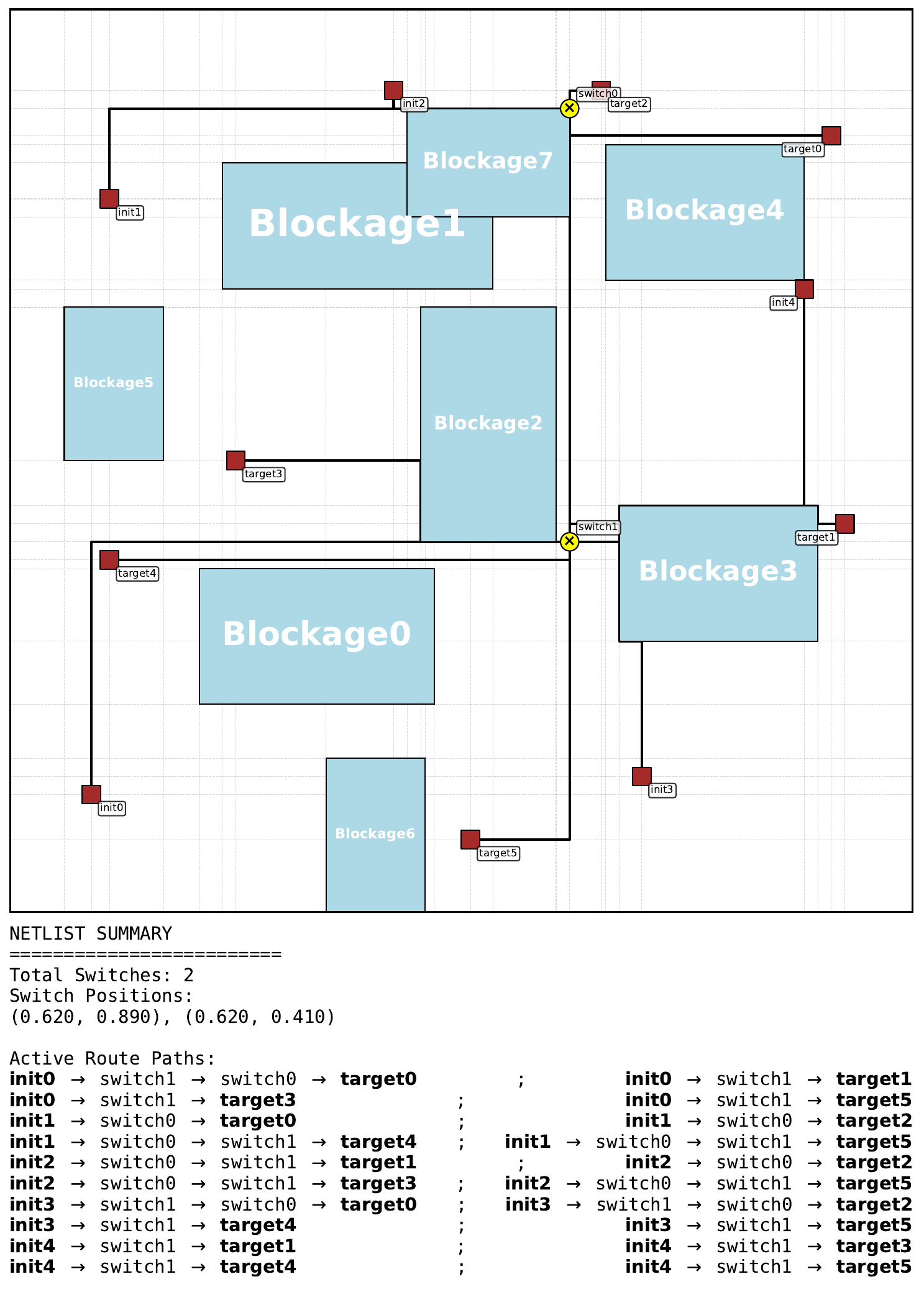}
        \caption*{PPO, pretrained}
    \end{subfigure}
    \hfill
    \begin{subfigure}[t]{0.48\linewidth}
        \vspace{0pt}
        \centering
        \includegraphics[width=\linewidth]{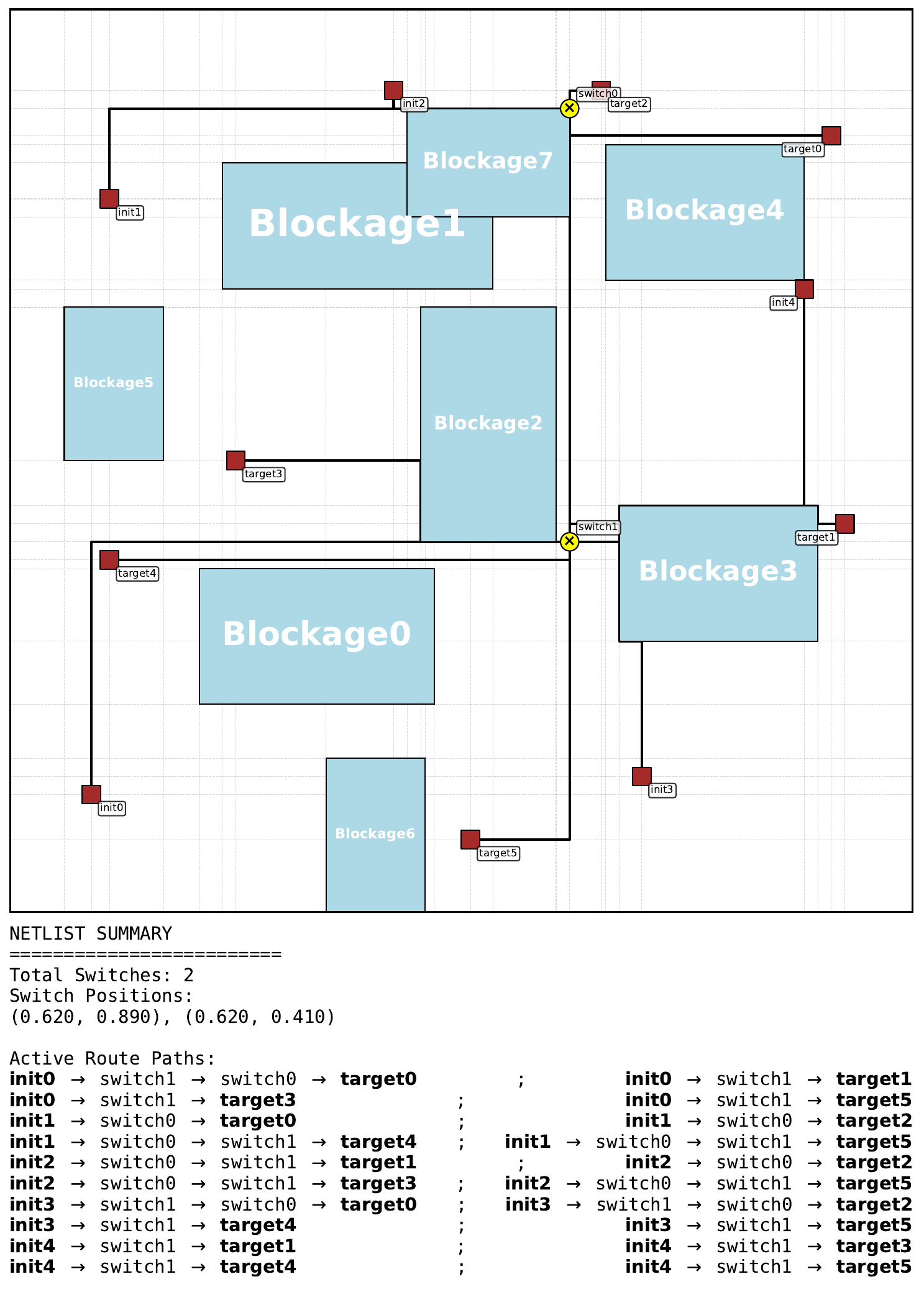}
        \caption*{PPO, from scratch}
    \end{subfigure}
    \\[0.6em]
    \begin{subfigure}[t]{0.48\linewidth}
        \vspace{0pt}
        \centering
        \includegraphics[width=\linewidth]{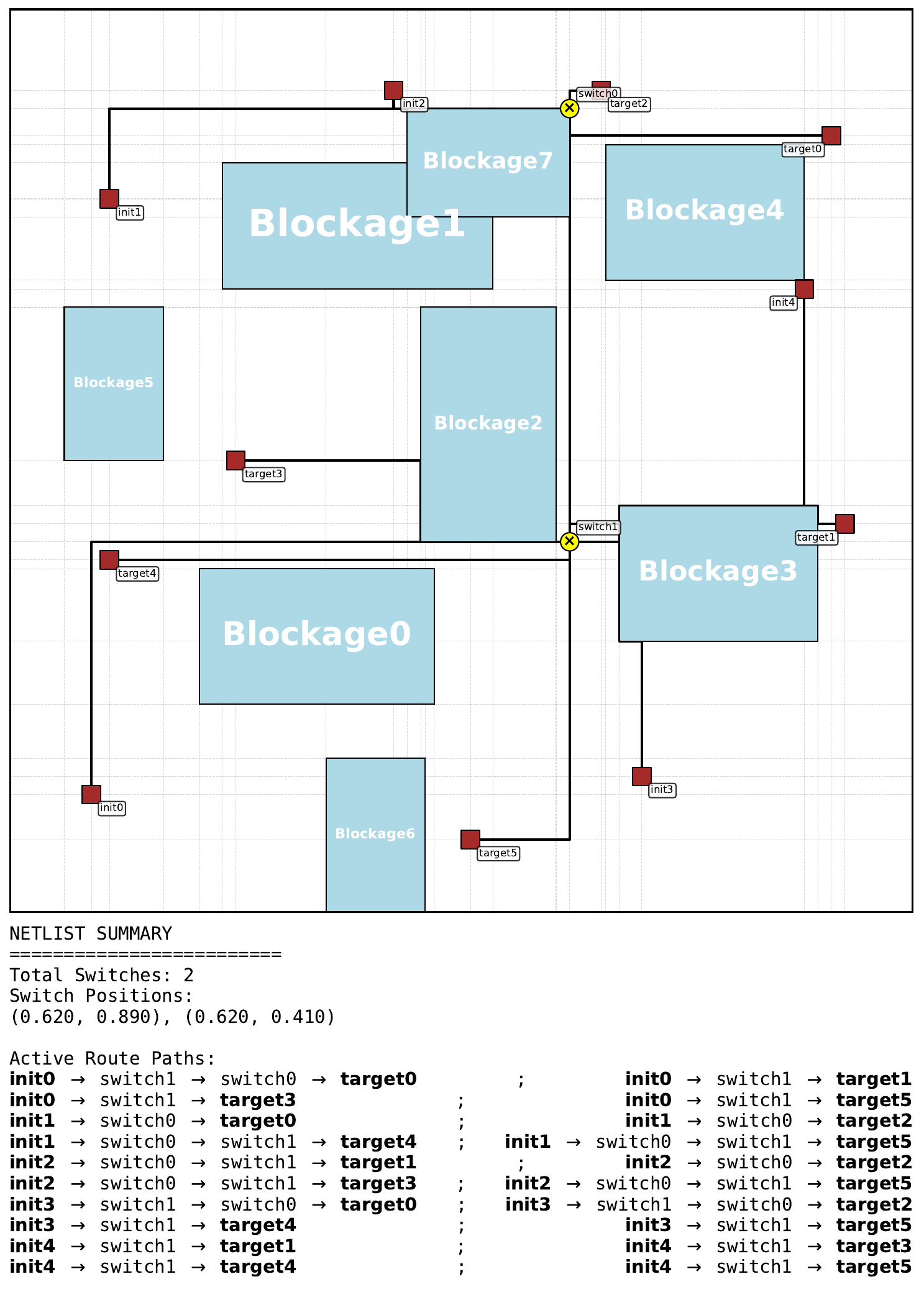}
        \caption*{MCTS, pretrained}
    \end{subfigure}
    \hfill
    \begin{subfigure}[t]{0.48\linewidth}
        \vspace{0pt}
        \centering
        \includegraphics[width=\linewidth]{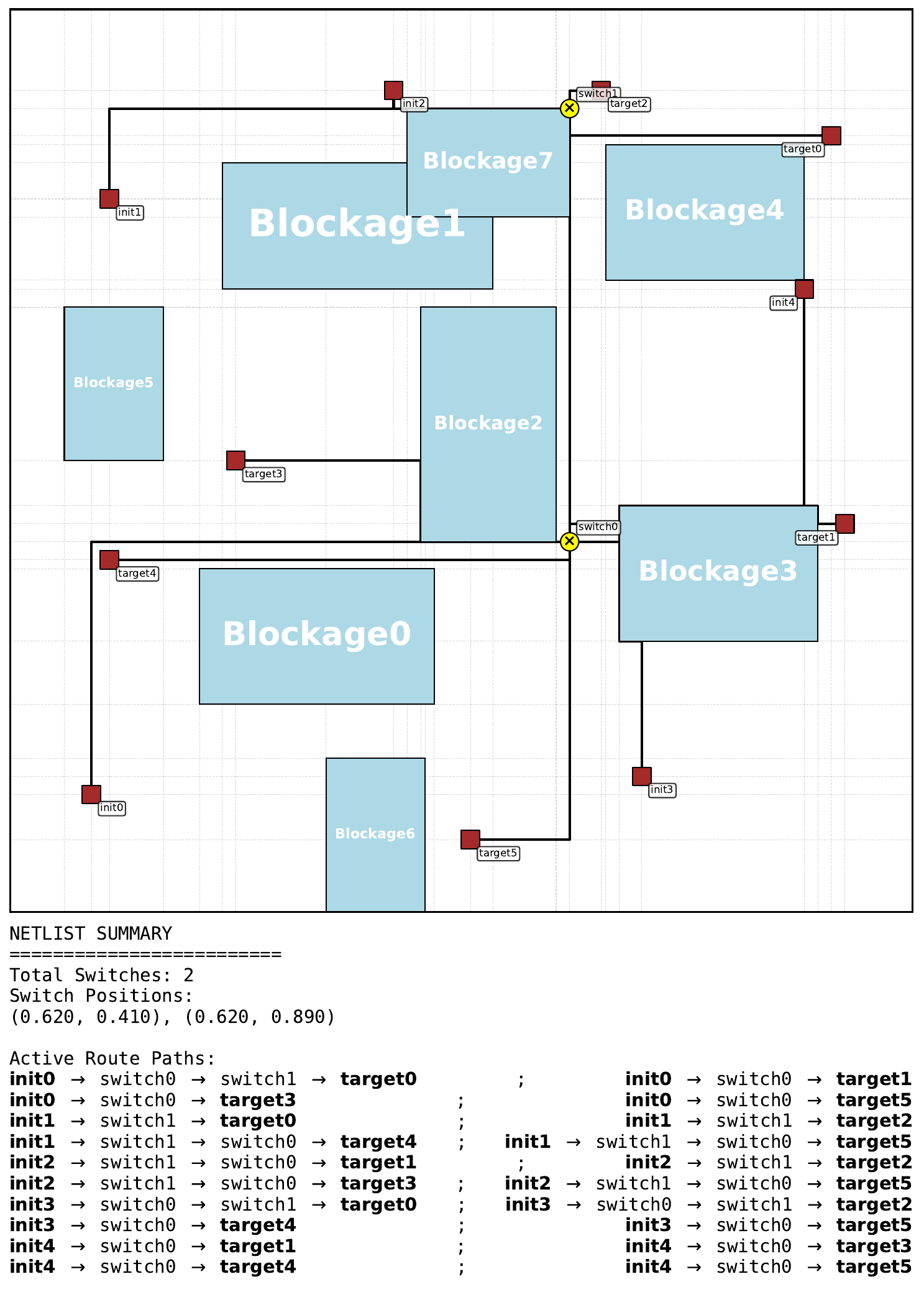}
        \caption*{MCTS, from scratch}
    \end{subfigure}
\caption{Fine-tuning instance 1.}
\label{fig:best_finetuning_instance_1}
\end{figure*}

\begin{figure*}[h]
\centering
    \begin{subfigure}[t]{0.48\linewidth}
        \vspace{0pt}
        \centering
        \includegraphics[width=\linewidth]{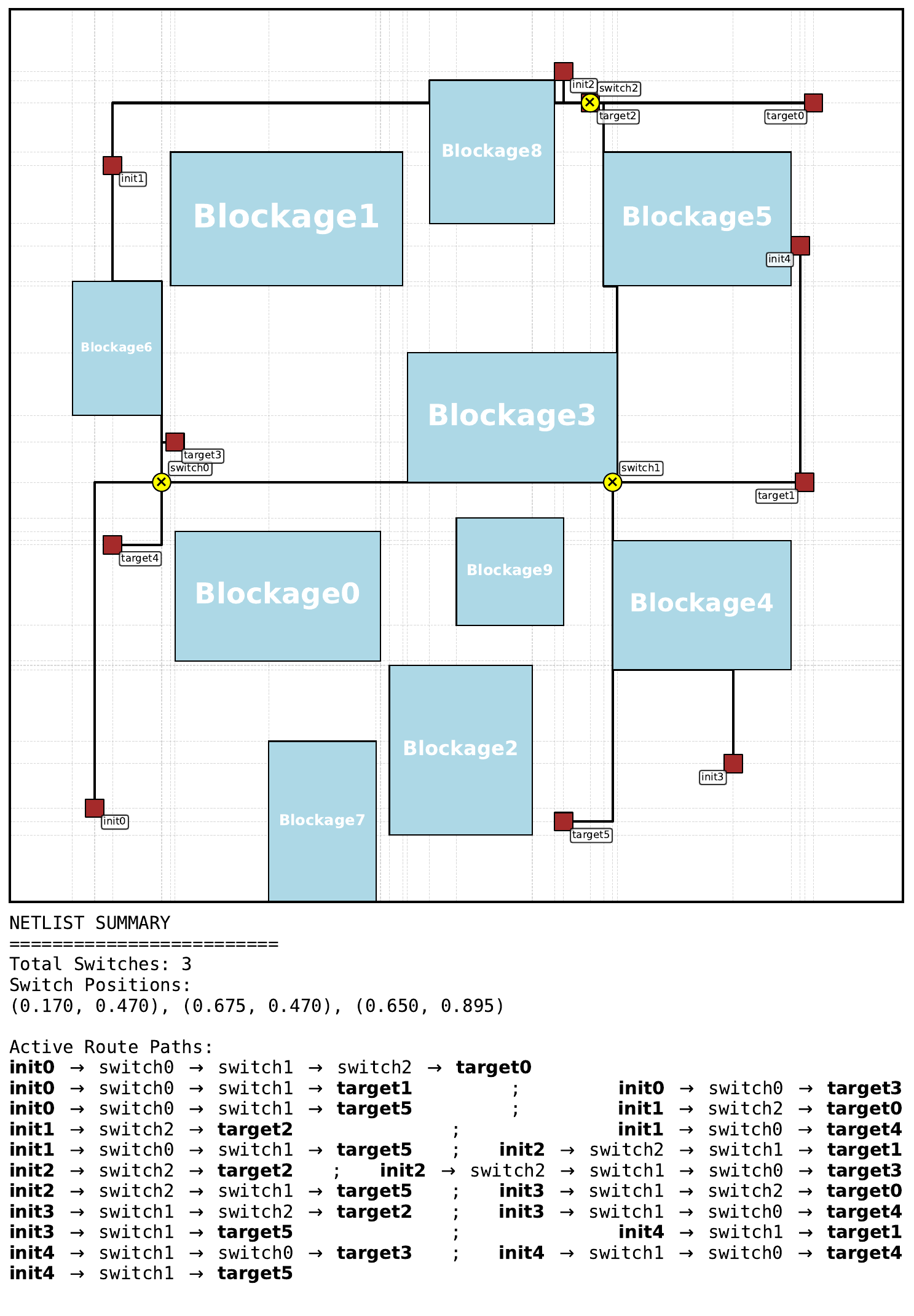}
        \caption*{PPO, pretrained}
    \end{subfigure}
    \hfill
    \begin{subfigure}[t]{0.48\linewidth}
        \vspace{0pt}
        \centering
        \includegraphics[width=\linewidth]{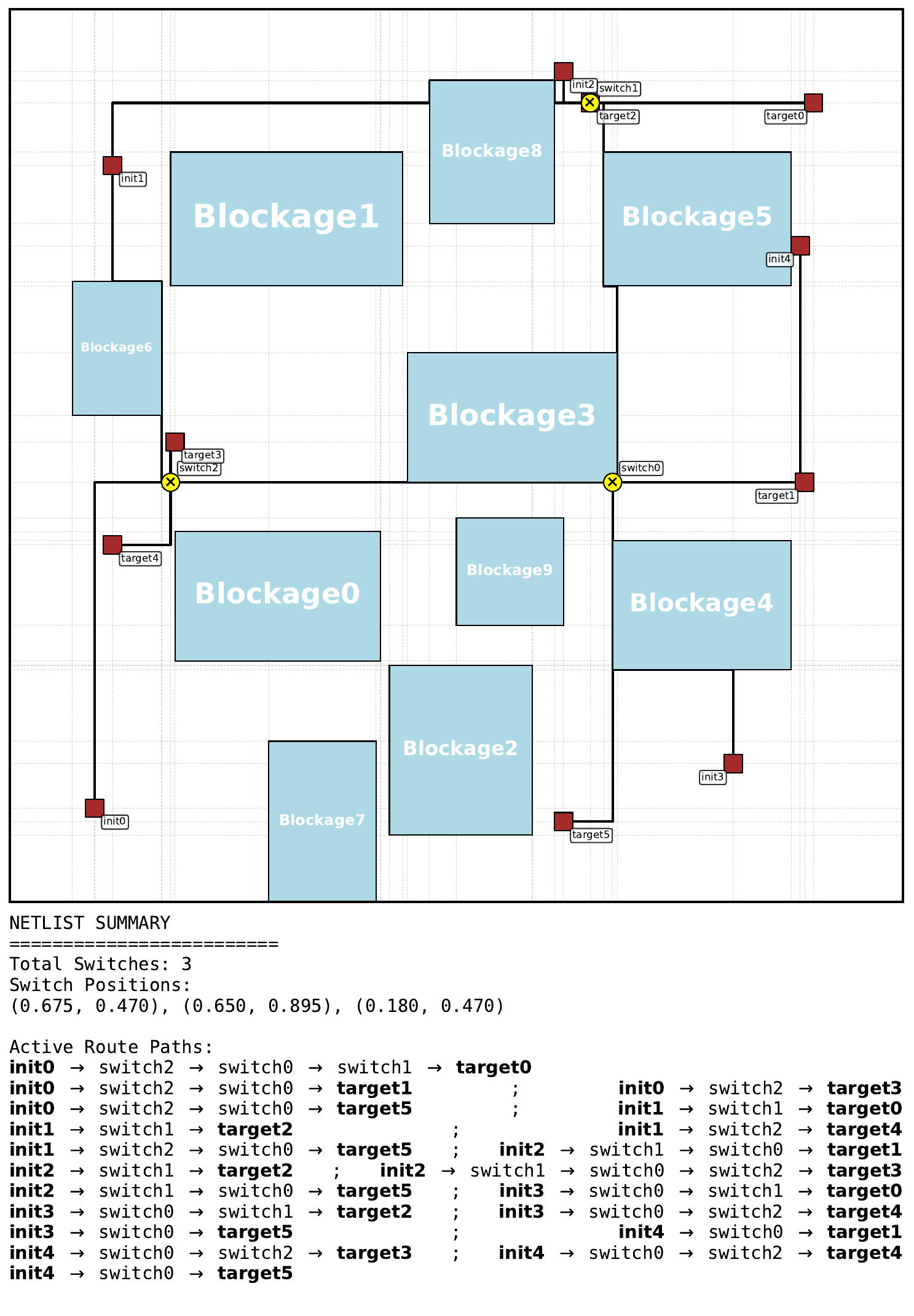}
        \caption*{PPO, from scratch}
    \end{subfigure}
    \\[0.6em]
    \begin{subfigure}[t]{0.48\linewidth}
        \vspace{0pt}
        \centering
        \includegraphics[width=\linewidth]{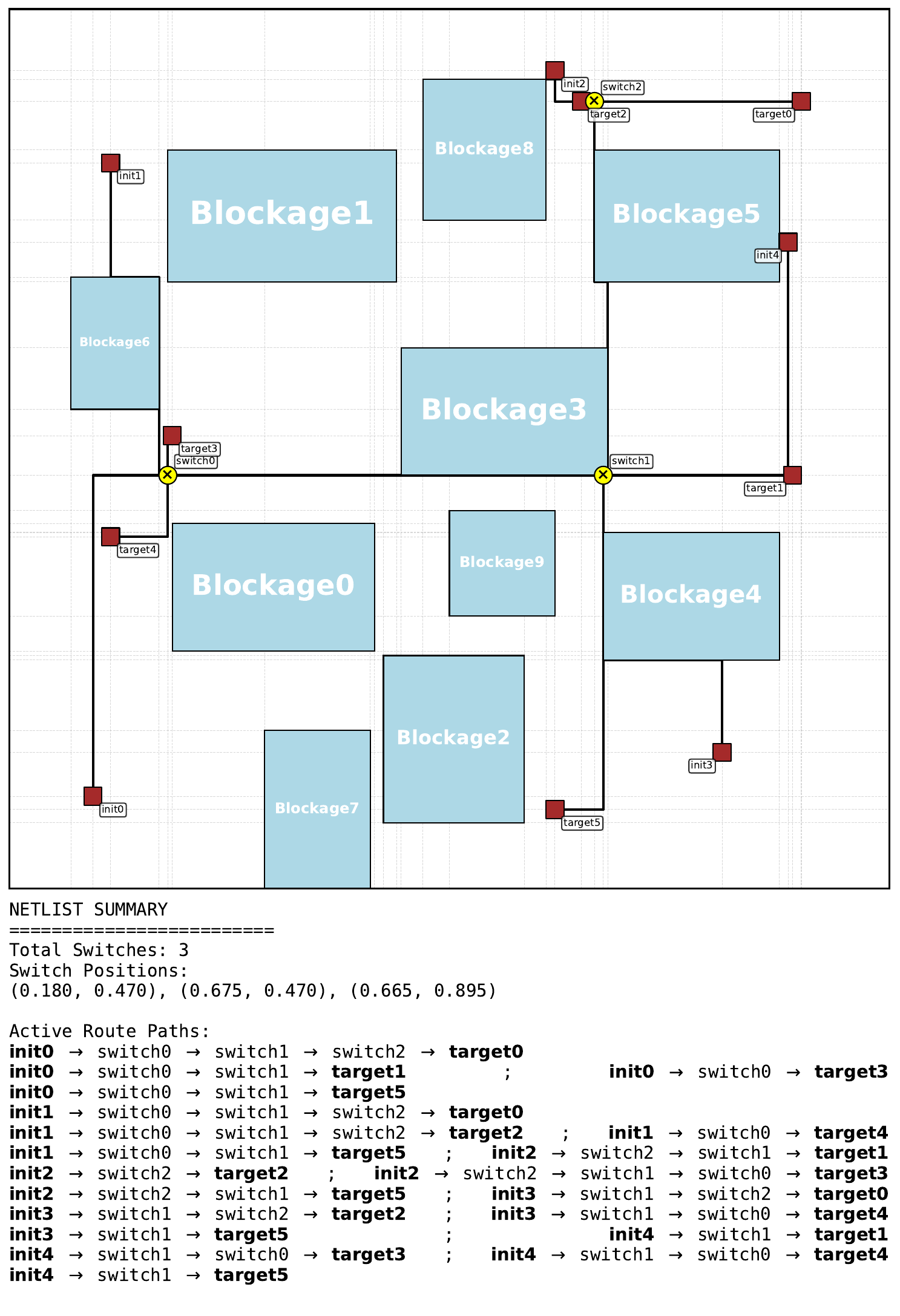}
        \caption*{MCTS, pretrained}
    \end{subfigure}
    \hfill
    \begin{subfigure}[t]{0.48\linewidth}
        \vspace{0pt}
        \centering
        \includegraphics[width=\linewidth]{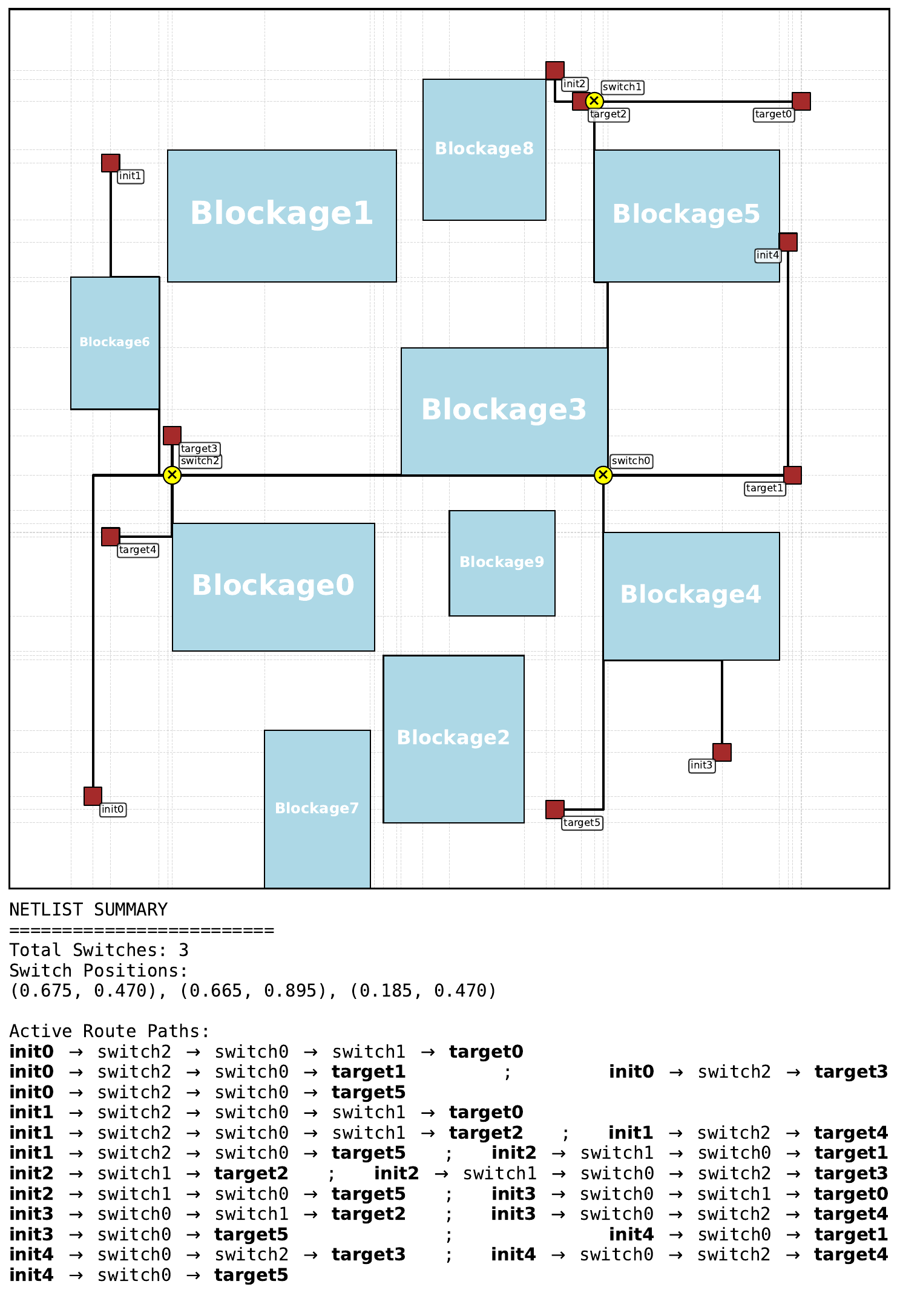}
        \caption*{MCTS, from scratch}
    \end{subfigure}
\caption{Fine-tuning instance 2.}
\label{fig:best_finetuning_instance_2}
\end{figure*}

\begin{figure*}[h]
\centering
    \begin{subfigure}[t]{0.48\linewidth}
        \vspace{0pt}
        \centering
        \includegraphics[width=\linewidth]{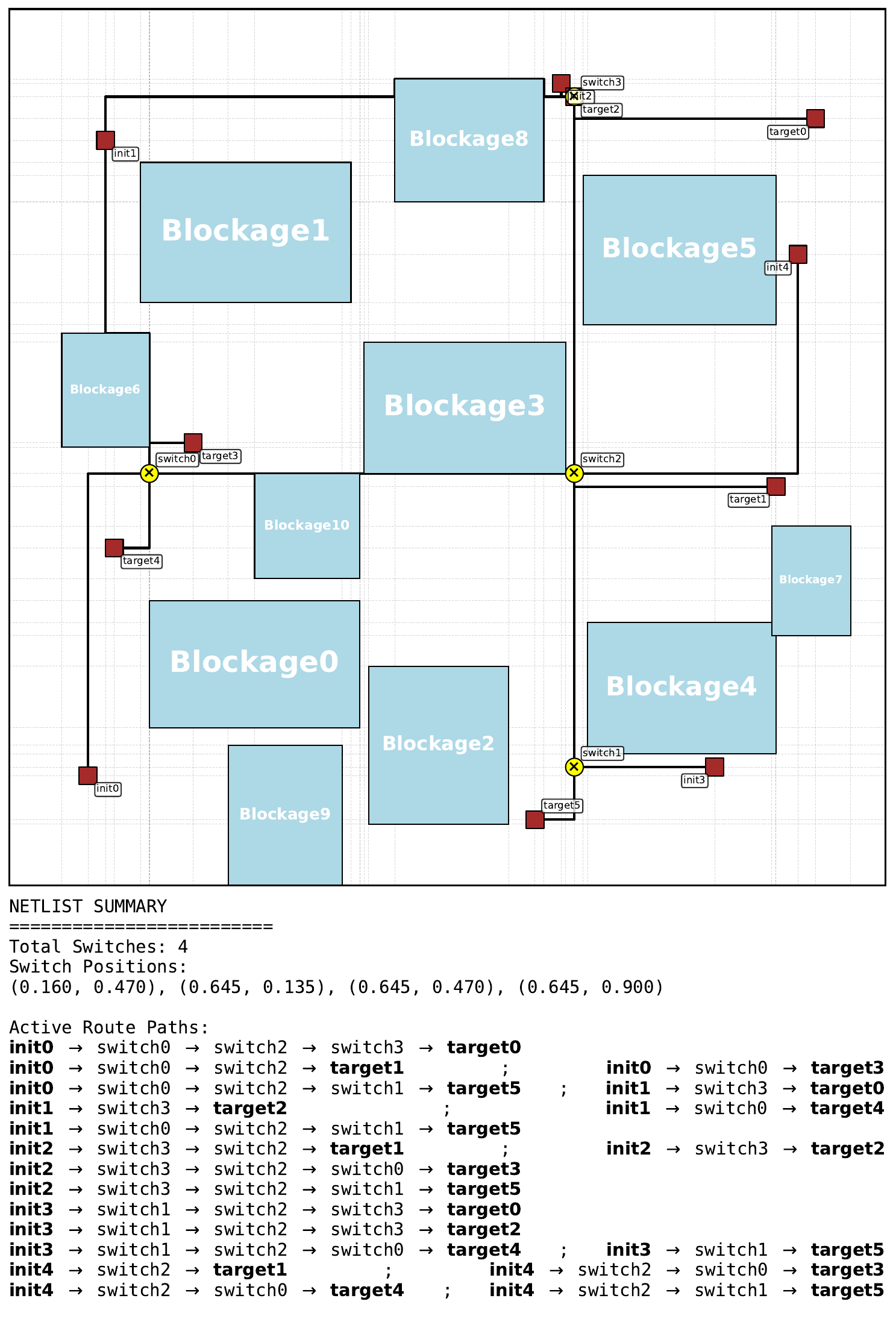}
        \caption*{PPO, pretrained}
    \end{subfigure}
    \hfill
    \begin{subfigure}[t]{0.48\linewidth}
        \vspace{0pt}
        \centering
        \includegraphics[width=\linewidth]{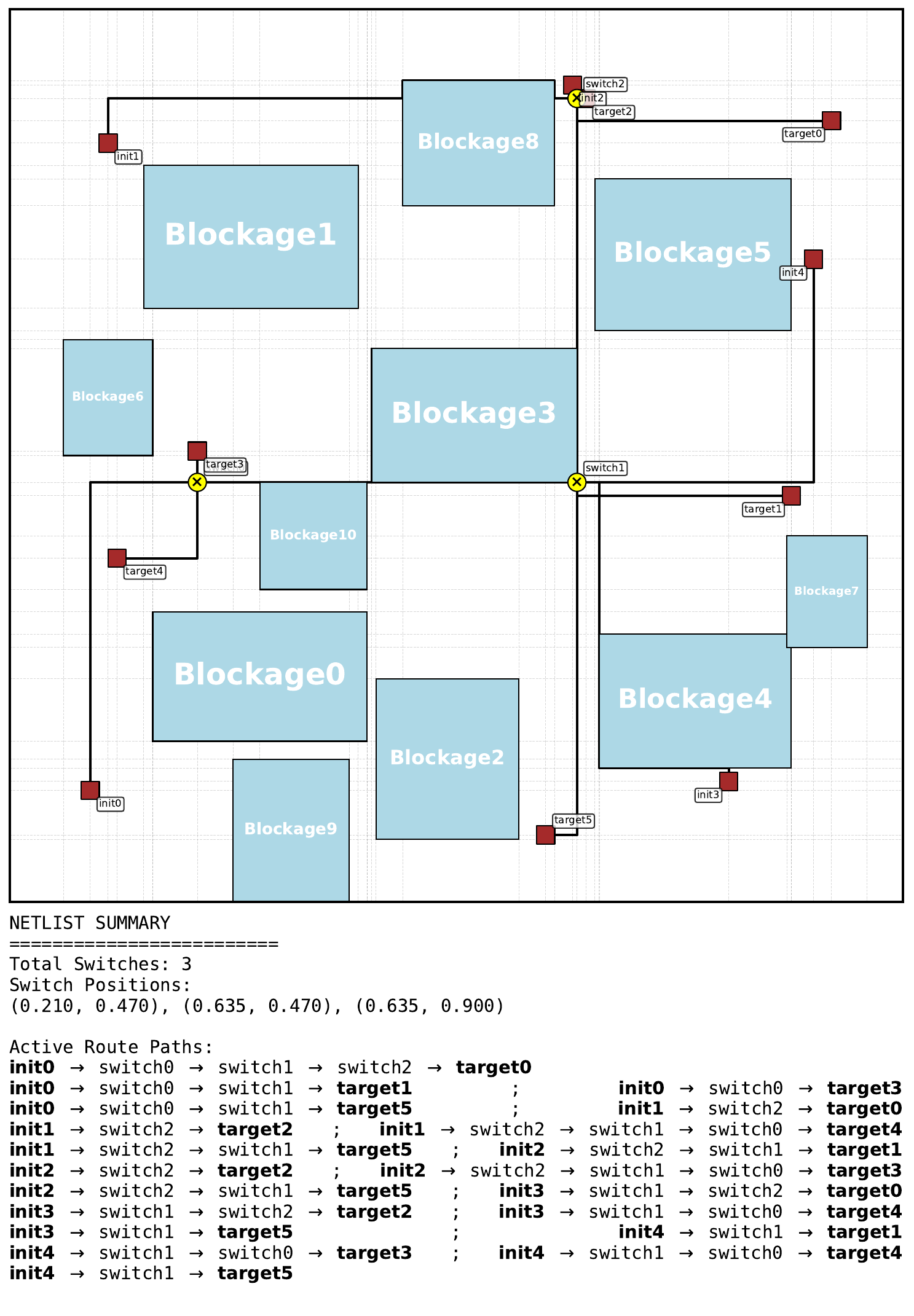}
        \caption*{PPO, from scratch}
    \end{subfigure}
    \\[0.6em]
    \begin{subfigure}[t]{0.48\linewidth}
        \vspace{0pt}
        \centering
        \includegraphics[width=\linewidth]{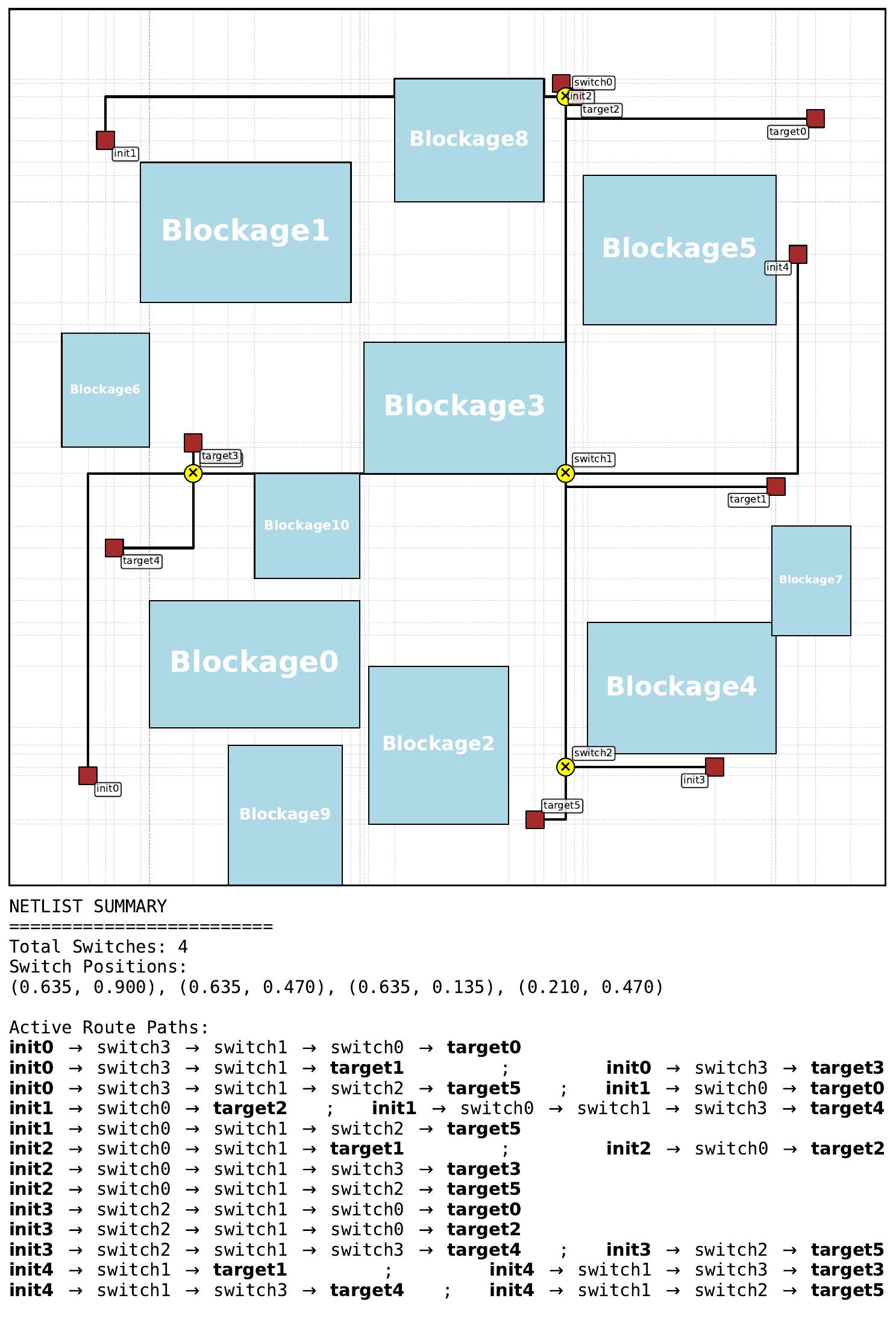}
        \caption*{MCTS, pretrained}
    \end{subfigure}
    \hfill
    \begin{subfigure}[t]{0.48\linewidth}
        \vspace{0pt}
        \centering
        \includegraphics[width=\linewidth]{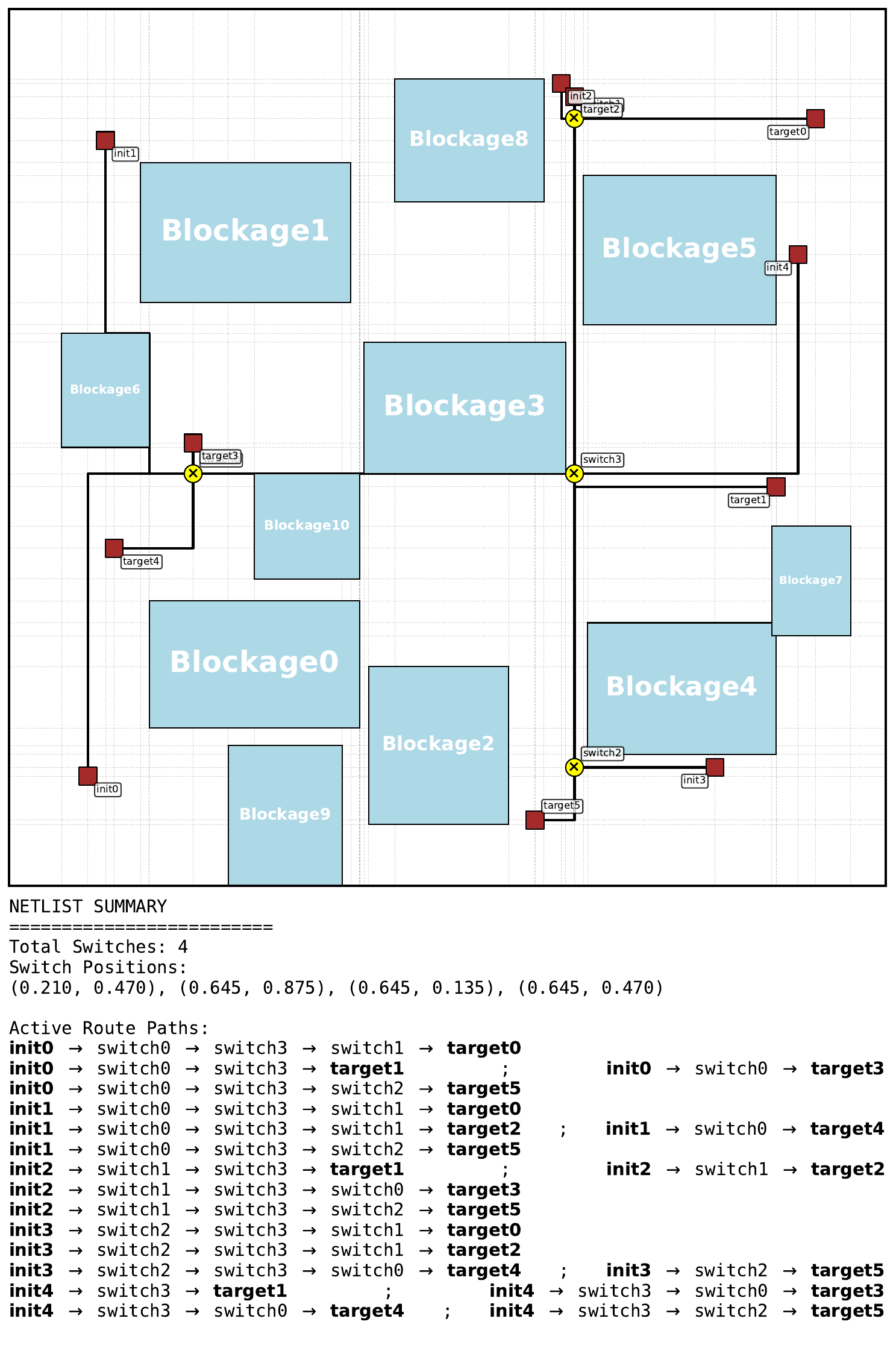}
        \caption*{MCTS, from scratch}
    \end{subfigure}
\caption{Fine-tuning instance 3.}
\label{fig:best_finetuning_instance_3}
\end{figure*}

\begin{figure*}[h]
\centering
    \begin{subfigure}[t]{0.48\linewidth}
        \vspace{0pt}
        \centering
        \includegraphics[width=\linewidth]{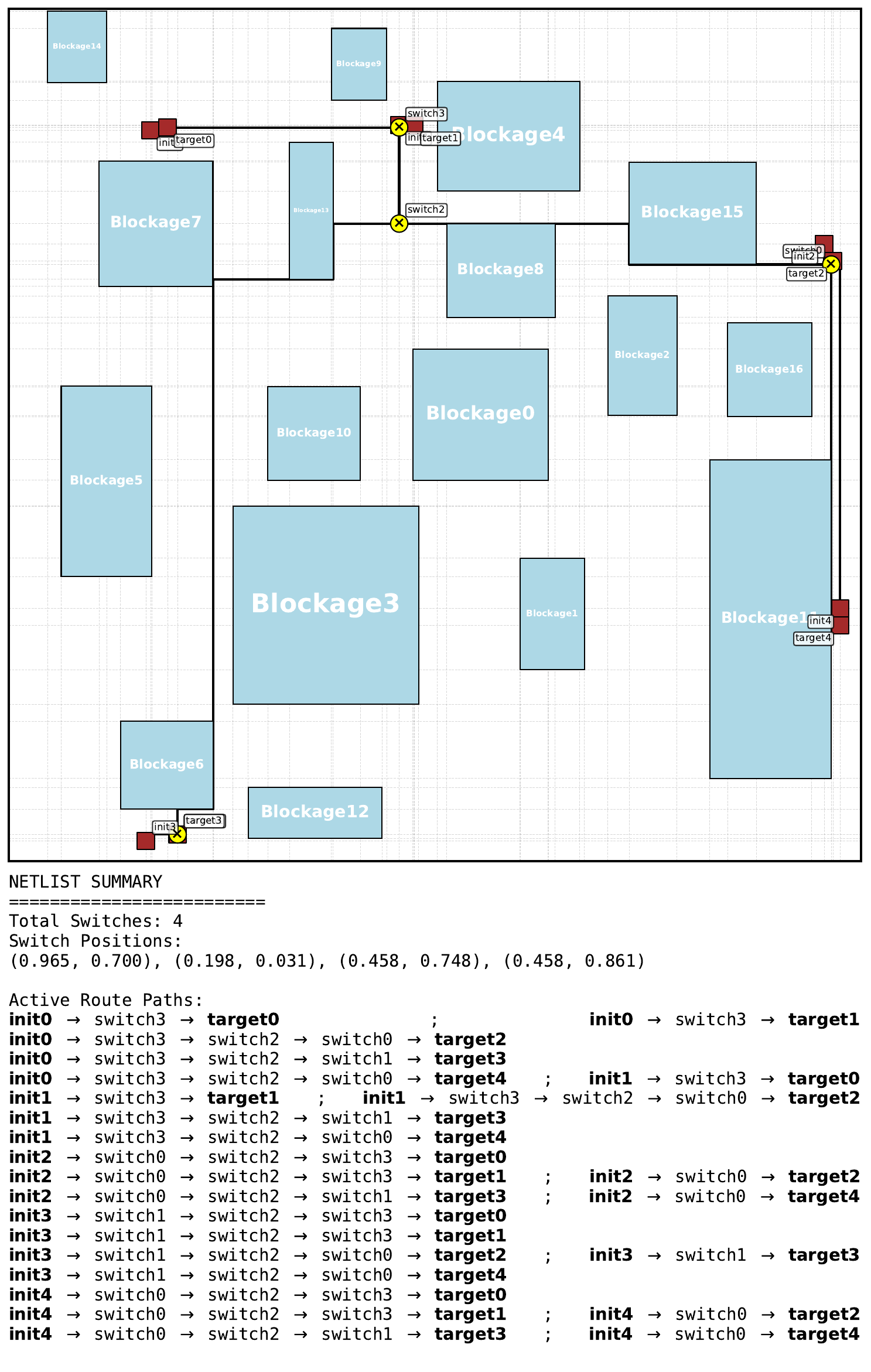}
        \caption*{PPO, pretrained}
    \end{subfigure}
    \hfill
    \begin{subfigure}[t]{0.48\linewidth}
        \vspace{0pt}
        \centering
        \includegraphics[width=\linewidth]{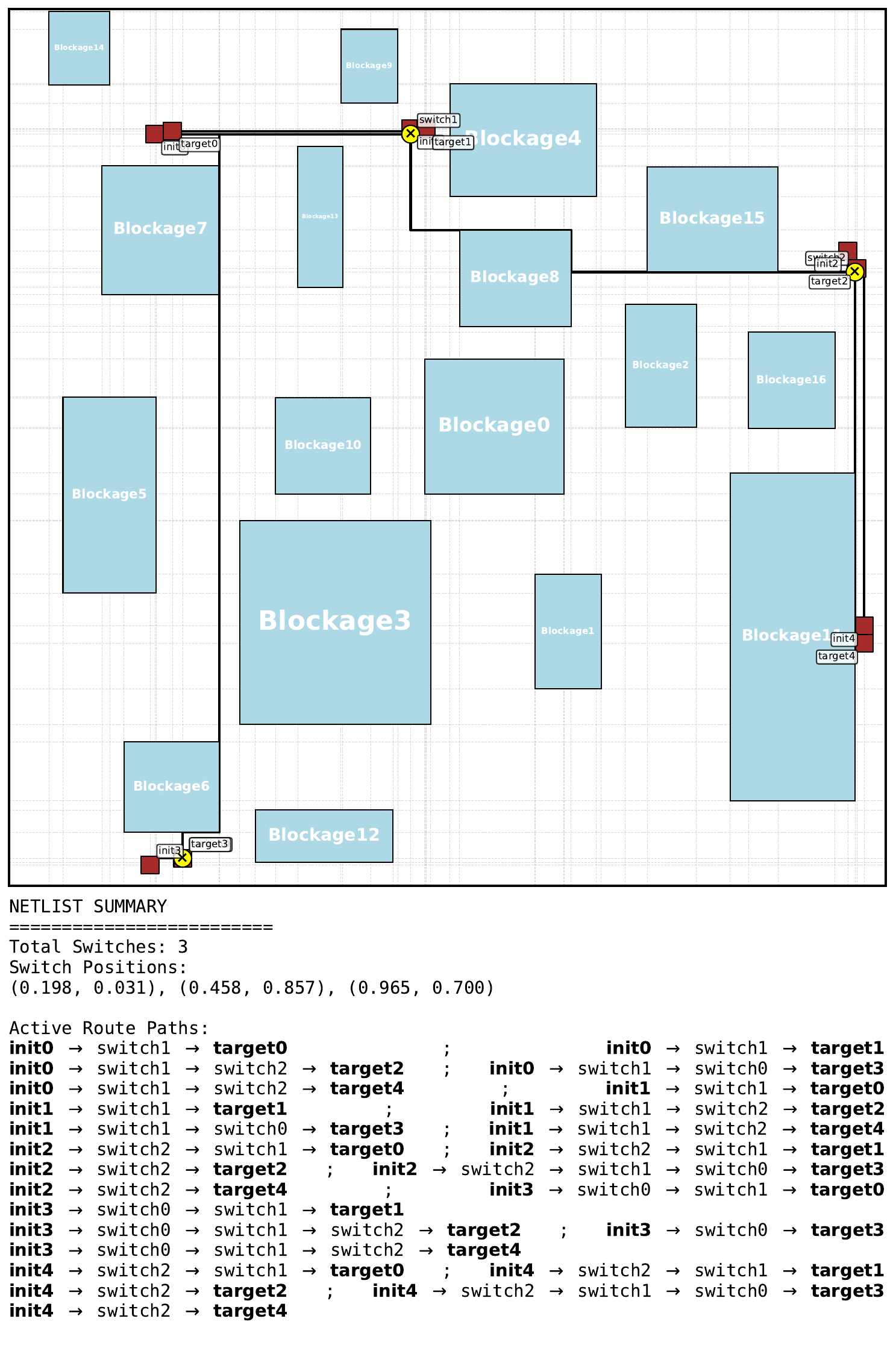}
        \caption*{PPO, from scratch}
    \end{subfigure}
    \\[0.6em]
    \begin{subfigure}[t]{0.48\linewidth}
        \vspace{0pt}
        \centering
        \includegraphics[width=\linewidth]{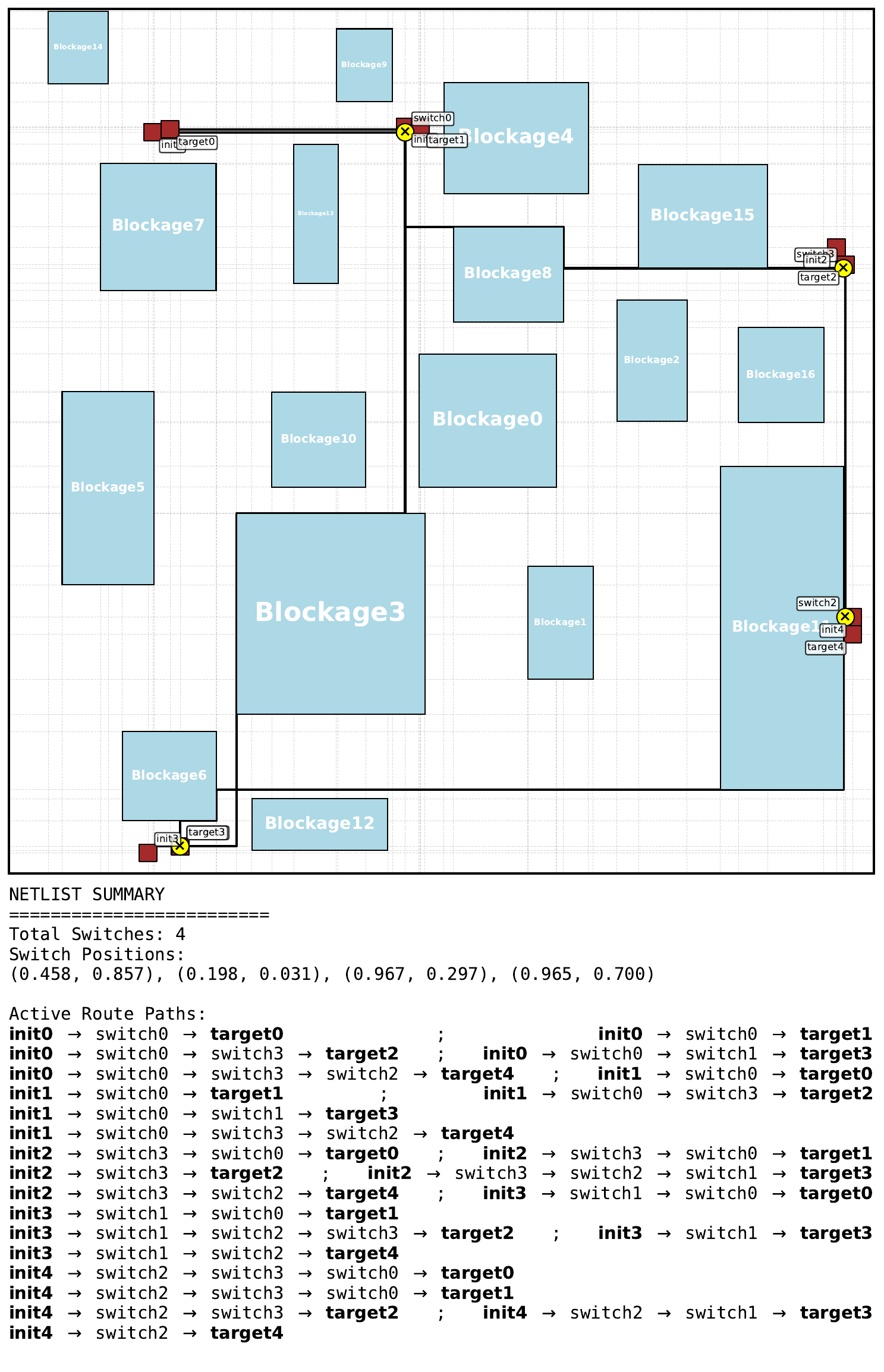}
        \caption*{MCTS, pretrained}
    \end{subfigure}
    \hfill
    \begin{subfigure}[t]{0.48\linewidth}
        \vspace{0pt}
        \centering
        \includegraphics[width=\linewidth]{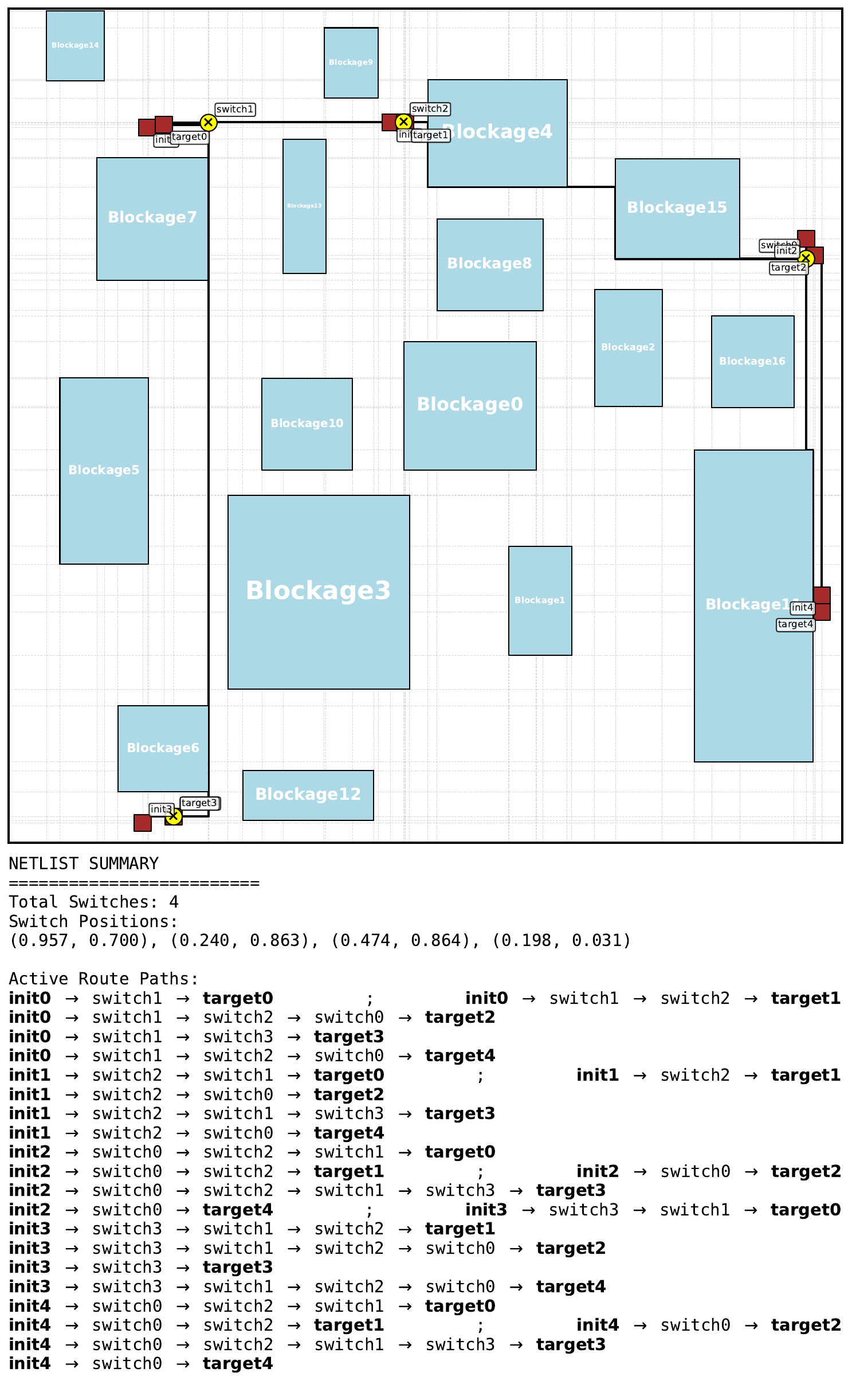}
        \caption*{MCTS, from scratch}
    \end{subfigure}
\caption{Fine-tuning instance 4.}
\label{fig:best_finetuning_instance_4}
\end{figure*}

\end{document}